\title{\textbf{Regret-Optimal Model-Free Reinforcement Learning\\ for Discounted MDPs with Short Burn-In Time}}
\author{Xiang Ji\footnote{Department of Electrical and Computer Engineering, School of Engineering and Applied Science, Princeton University, Princeton, NJ 08544, USA.} \hspace{2.05in} Gen Li\footnote{Department of Statistics, The Chinese University of Hong Kong, Hong Kong, China.}\\
Princeton \hspace{2.0in} CUHK
}
\date{}
\begin{document}

\maketitle

\allowdisplaybreaks

\begin{abstract}
A crucial problem in reinforcement learning is learning the optimal policy. We study this in tabular infinite-horizon discounted Markov decision processes under the online setting. The existing algorithms either fail to achieve regret optimality or have to incur a high memory and computational cost. In addition, existing optimal algorithms all require a long burn-in time in order to achieve optimal sample efficiency, i.e., their optimality is not guaranteed unless sample size surpasses a high threshold. We address both open problems by introducing a model-free algorithm that employs variance reduction and a novel technique that switches the execution policy in a slow-yet-adaptive manner. This is the first regret-optimal model-free algorithm in the discounted setting, with the additional benefit of a low burn-in time.
\end{abstract}


\section{Introduction}\label{sec:intro}

In reinforcement learning (RL), a crucial task is to find the optimal policy that maximizes its expected cumulative reward in any given environment with unknown dynamics. An immense body of literature is dedicated to finding algorithms that solve this task with as few samples as possible, which is the prime goal under this task. Ideally, one hopes to find an algorithm with a theoretical guarantee of optimal sample efficiency. At the same time, this task might be accompanied with additional requirements such as low space complexity and computational cost, as it is common that the state and action spaces exhibit high dimensions in modern applications. The combination of these various goals and requirements presents an important yet challenging problem in algorithm design.

The task of searching for optimal policy has been well-studied by existing work in the generative setting \citep{sidford2018near,sidford2018variance,li2020breaking,agarwal2020model}. This fundamental setting allows the freedom of querying samples at any state-action pair. In contrast, it is more realistic but difficult to consider the same task in the online setting, in which samples can only be collected along trajectories generated from executing a policy in the unknown Markov decision process (MDP). Solving this task with optimal sample efficiency requires a careful balance between exploration and exploitation, especially when coupled with other goals such as memory and computational efficiency. 

MDPs can be divided into two types: the episodic finite-horizon MDPs and the infinite-horizon MDPs. Although these two types of MDPs can be approached in similar ways under the generative setting, there is a clear dichotomy between them in the online setting. In an episodic MDP, sample trajectories are only defined in fixed-length episodes, so samples are collected in episodes, and a reset to an arbitrary initial state occurs at the end of every online episode. Its transition kernel is usually assumed to be non-stationary over time. In contrast, the transition kernel of an infinite-horizon MDP stays stationary over time, and the online sample collection process amounts to drawing a single infinitely long sample trajectory with no reset. These differences render most optimal algorithms for episodic MDPs suboptimal when applied to infinite-horizon MDPs. Without reset and non-stationarity, the high dependency between consecutive trajectory steps in the infinite-horizon setting presents a new challenge over the episodic setting. In this work, we consider the infinite-horizon discounted MDPs, which is widely used in practice but still has some fundamental questions unanswered in theory.

\subsection{Sample Efficiency in Infinite-Horizon MDPs}

To evaluate the sample efficiency of online RL algorithms, a natural and widely-accepted metric is the \textit{cumulative regret}. It captures the performance difference between the optimal policy and the learned policy of an algorithm over its online interactions with a given MDP. The notion of cumulative regret was first introduced in the bandit literature and later adopted in the RL literature \citep{auer05,Jin-provably}. It is profusely used in the online episodic RL literature. Such works aim to prove regret guarantees for algorithms and provide analyses that characterize such regret guarantees in terms of all problem parameters such as state space, action space and sample size in a non-asymptotic fashion. A cumulative regret guarantee can also suggest the sample complexity needed to reach a certain level of average regret.

In the online infinite-horizon setting, many works study a different metric called the sample complexity of exploration, first introduced in \citet{kakade2003sample}. In essence, given a target accuracy level $\epsilon$, this metric characterizes the total number of $\epsilon$-suboptimal steps committed by an algorithm over an infinitely-long trajectory in the MDP. While this is indicative of the sample efficiency of an algorithm, the focus of this metric is very different from that of cumulative regret, as it only reflects the total number of failures but does not distinguish their sizes. As \citet{liu2020regret,zhou2021nearly} point out, even an optimal guarantee on the sample complexity of exploration can only be converted to a very suboptimal guarantee on the cumulative regret. To obtain a more quantitative characterization of the total volume of failures in the regime of finite samples, some works have turned to studying cumulative regret guarantees for algorithms. 

It was not until recently that some works \citep{liu2020regret,zhou2021provably,zhou2021nearly,kash2022slowly} begin to research into the problem of cumulative regret minimization in infinite-horizon discounted MDPs. Among them, \citet{zhou2021provably} focus on linear MDPs while others study tabular MDPs. In this work, we study the regret minimization problem in the tabular case. Hereafter and throughout, we denote the size of the state space, the size of the action space and the discount factor of the problem MDP with $S$, $A$ and $\gamma$, respectively, and let $T$ denote the sample size.

\subsection{Model-Based and Model-Free Methods}

Since modern RL applications are often large-scale, algorithms with low space complexity and computational complexity are much desired. This renders the distinction between model-based algorithms and model-free algorithms particularly important. The procedure of a model-based method includes a model estimation stage that involves estimating the transition kernel and a subsequent planning stage that searches the optimal policy in the learned model. Thus, $O(S^2A)$ space is required to store the estimated model. This is unfavorable when the state space is large and a memory constraint is present. Additionally, updating the transition kernel estimate brings a large computational burden. In comparison, model-free methods do not learn the entire model and thus can run with $o(S^2A)$ space. Notably, most value-based methods such as Q-learning only require storage of an estimated Q-function, which can take as little as $O(SA)$ memory. In the infinite-horizon discounted setting, although {\sf UCBVI-$\gamma$} in \citet{zhou2021nearly} can achieve optimal regret, its model-based nature exacts a $O(S^2A)$ memory and computational cost; conversely, the algorithms in \citet{liu2020regret} and \citet{kash2022slowly} are model-free but have suboptimal regret guarantee.

\begin{table*}[!ht]
\begin{center}
    \begin{tabular}{c|c|c|c|c}
    Algorithm & \shortstack{Sample complexity\\ of exploration} & \shortstack{Cumulative\\ Regret} & \shortstack{Range of $T$ \\ with optimality} & \shortstack{Space\\ complexity}\\
    \hline
    {\sf Delayed Q-learning} & \multirow{2}{*}{$\frac{SA}{(1-\gamma)^8 \epsilon^4}$} & \multirow{2}{*}{$\frac{S^{\frac{1}{5}}A^{\frac{1}{5}}T^{\frac{4}{5}}}{(1-\gamma)^{\frac{9}{5}}}$} & \multirow{2}{*}{never} & \multirow{2}{*}{$SA$}\\
    \citep{strehl06pac} & & & & \\
    \hline
    {\sf R-Max} & \multirow{2}{*}{$\frac{S^2A}{(1-\gamma)^6 \epsilon^3}$} & \multirow{2}{*}{$\frac{S^{\frac{1}{2}}A^{\frac{1}{4}}T^{\frac{3}{4}}}{(1-\gamma)^{\frac{7}{4}}}$} & \multirow{2}{*}{never} & \multirow{2}{*}{$S^2A$}\\
    \citep{Brafman03} & & & & \\
    \hline
    {\sf UCB-Q} & \multirow{2}{*}{$\frac{SA}{(1-\gamma)^7 \epsilon^2}$} & \multirow{2}{*}{$\frac{S^{\frac{1}{3}}A^{\frac{1}{3}}T^{\frac{2}{3}}}{(1-\gamma)^{\frac{8}{3}}}$} & \multirow{2}{*}{never} & \multirow{2}{*}{$SA$}\\
    \citep{dong2019q} & & & & \\
    \hline
    {\sf MoRmax} & \multirow{2}{*}{$\frac{SA}{(1-\gamma)^6 \epsilon^2}$} & \multirow{2}{*}{$\frac{S^{\frac{1}{3}}A^{\frac{1}{3}}T^{\frac{2}{3}}}{(1-\gamma)^{\frac{7}{3}}}$} & \multirow{2}{*}{never} & \multirow{2}{*}{$S^2A$}\\
    \citep{szita10} & & & & \\
    \hline
    {\sf UCRL} & \multirow{2}{*}{$\frac{S^2A}{(1-\gamma)^3 \epsilon^2}$} & \multirow{2}{*}{$\frac{S^{\frac{2}{3}}A^{\frac{1}{3}}T^{\frac{2}{3}}}{(1-\gamma)^{\frac{4}{3}}}$} & \multirow{2}{*}{never} & \multirow{2}{*}{$S^2A$}\\
    \citep{lattimore2012pac} & & & & \\
    \hline
    {\sf UCB-multistage} & \multirow{2}{*}{$\frac{SA}{(1-\gamma)^{\frac{11}{2}} \epsilon^2}$} & \multirow{2}{*}{$\frac{S^{\frac{1}{3}}A^{\frac{1}{3}}T^{\frac{2}{3}}}{(1-\gamma)^{\frac{13}{6}}}$} & \multirow{2}{*}{never} & \multirow{2}{*}{$SA$}\\
    \citep{zhang2021model} & & & & \\
    \hline
    {\sf UCB-multistage-adv} & \multirow{2}{*}{$\frac{SA}{(1-\gamma)^3 \epsilon^2}$} & \multirow{2}{*}{$\frac{S^{\frac{1}{3}}A^{\frac{1}{3}}T^{\frac{2}{3}}}{(1-\gamma)^{\frac{4}{3}}}$} & \multirow{2}{*}{never} & \multirow{2}{*}{$SA$}\\
    \citep{zhang2021model} & & & & \\
    \hline
    {\sf MAIN} & \multirow{2}{*}{N/A} & \multirow{2}{*}{$\kappa\sqrt{\frac{(S^4+S^2A^2)T}{(1-\gamma)^8}}$} & \multirow{2}{*}{never} & \multirow{2}{*}{$SA$}\\
    \citep{kash2022slowly} & & & & \\
    \hline
    {\sf Double Q-learning} & \multirow{2}{*}{N/A} & \multirow{2}{*}{$\sqrt{\frac{SAT}{(1-\gamma)^5}}$} & \multirow{2}{*}{never} & \multirow{2}{*}{$SA$}\\
    \citep{liu2020regret} & & & & \\
    \hline
    {\sf UCBVI-$\gamma$} & \multirow{2}{*}{N/A} & \multirow{2}{*}{$\sqrt{\frac{SAT}{(1-\gamma)^3}}$} & \multirow{2}{*}{$\Big[\frac{S^3A^2}{(1-\gamma)^4}, \infty\Big)$ $^\dagger$} & \multirow{2}{*}{$S^2A$}\\
    \citep{zhou2021nearly} & & & & \\
    \hline
    \algsf & \multirow{2}{*}{N/A} & \multirow{2}{*}{$\sqrt{\frac{SAT}{(1-\gamma)^3}}$} & \multirow{2}{*}{$\Big[\frac{SA}{(1-\gamma)^{13}}, \infty\Big)$} & \multirow{2}{*}{$SA$}\\
    (\textbf{This work}) & & & & \\
    \hline
    {\sf Lower bound} & \multirow{3}{*}{$\frac{SA}{(1-\gamma)^3 \epsilon^2}$} & \multirow{3}{*}{$\sqrt{\frac{SAT}{(1-\gamma)^3}}$} & \multirow{3}{*}{N/A} & \multirow{3}{*}{N/A}\\
    (\citet{lattimore2012pac}; & & & & \\
    \citet{zhou2021nearly}) & & & & 
\end{tabular}
\end{center}
\caption{A comparison between our results and existing work in the online infinite-horizon discounted setting. Logarithmic factors are omitted for clearer presentation. The second column shows the sample complexity when the target accuracy $\epsilon$ is sufficiently small. The third column shows the regret when sample size $T$ is sufficiently large (beyond the burn-in period). The algorithms in the first seven rows only have sample complexity results in their original works; their regret bounds are derived from their respective sample complexity bounds and presented in this table for completeness. Details about the conversions can be found in \citet{zhou2021nearly}. Note that {\sf UCB-multistage-adv} achieves optimal sample complexity only in the high-accuracy regime when $\epsilon \le S^{-2}A^{-2}(1-\gamma)^{14}$. This is similar to a burn-in threshold in that the optimal guarantee cannot be achieved unless in a specific range. The fourth column lists the sample size range in which regret optimality can be attained, which shows the burn-in time. We would like to point out that the results in \citet{zhou2021nearly,liu2020regret} are under slightly different regret definitions from the regret definition in \citet{kash2022slowly} and this work. In fact, their regret metric can be more lenient, and our algorithm can also achieve $\widetilde{O}(\sqrt{\frac{SAT}{(1-\gamma)^3}})$ optimal regret under it. This is further discussed in Remark \ref{remark:metric-diff} and Appendix \ref{sec:appendix-metric-diff}. Lastly, note that \citet{kash2022slowly} assume an ergodicity parameter $\kappa$. $^\dagger$ {\sf UCBVI-$\gamma$} achieves optimal regret for $T \ge \frac{S^3A^2}{(1-\gamma)^4}$ only if the MDP satisfies $SA \ge \frac{1}{1-\gamma}$.}
	\label{fig:table1}
\end{table*}

\subsection{Burn-in Cost in Regret-Optimal RL}

Naturally, one aims to develop algorithms that find the optimal policy with the fewest number of samples. In regards to regret, this motivates numerous works to work towards algorithms with minimax-optimal cumulative regret. However, the job is not done once such an algorithm is found. As can be seen in the episodic RL literature, algorithms that achieve optimal regret as sample size $T$ tends towards infinity can still have different performance in the regime when $T$ is limited. Specifically, for every existing algorithm, there exists a certain sample size threshold such that regret is suboptimal before $T$ exceeds it. Such threshold is commonly referred to as the initial \textit{burn-in time} of the algorithm. Therefore, it is of great interest to find an algorithm with low burn-in time so that it can still attain optimal regret in the sample-starved regime. Such effort has been made by \citet{li2020breaking,agarwal2020model} in the generative setting and by \citet{Li-finite,menard2021ucb} in the online episodic setting. Yet, this important issue has not been addressed in the infinite-horizon setting, as optimal algorithms all suffer long burn-in times.

Specifically, while {\sf UCBVI-$\gamma$} in \citet{zhou2021nearly} achieves a state-of-the-art regret guarantee of $\widetilde{O}\Big(\sqrt{\frac{SAT}{(1-\gamma)^3}}\Big)$, which they prove minimax-optimal, their theory does not guarantee optimality unless the samples size $T$ becomes as large as
\begin{equation*}
	T \ge \frac{S^3A^2}{(1-\gamma)^4}.
\end{equation*}
This threshold can be prohibitively large when $S$ and $A$ are huge, which is true in most applications. For instance, a 5-by-5 tic-tac-toe has a state space of size $3^{25}$. While this is a manageable number in modern machine learning, any higher power of it may cause computational difficulties; in contrast, the horizon of this game is much smaller---no more than $25$. The issue exacerbates in more complex applications such as the game of Go \citep{silver2016}; thus, reducing the $S$ and $A$ factors in the burn-in cost is particularly important. Since no lower bound precludes regret optimality for $T \ge \frac{SA}{(1-\gamma)^4}$, one might hope to design an algorithm with smaller $S$ and $A$ factors in the burn-in cost so that it can achieve optimality even in the sample-starved regime. 

\subsection{Summary of Contributions}

While it is encouraging to see recent works have shown that in the discounted setting, model-free methods can provide nearly optimal guarantees on sample complexity of exploration and that model-based methods can provide nearly optimal finite-sample regret guarantees, there still lacks a \textit{model-free} approach that can attain \textit{regret optimality}. In the orthogonal direction, there is still a vacancy for algorithms that can attain optimal regret for a broader sample size range, i.e., with fewer samples than $\frac{S^3A^2}{\poly(1-\gamma)}$. 

In fact, we can summarize these two lingering theoretical questions as follows:
\begin{center}
	{\em Is there an algorithm that can achieve minimax regret optimality with low space complexity and computational complexity in the infinite-horizon discounted setting, even when sample size is limited?}
\end{center}
We answer this question affirmatively with a new algorithm \algo, which uses variance reduction and a novel adaptive switching technique. It is the first model-free algorithm that achieves optimal regret in the infinite-horizon discounted setting. This result can be summarized as follows:
\begin{theorem*}[informal]
	For any sample size $T \ge \frac{SA}{\poly(1-\gamma)}$, \alg is guaranteed to achieve near-optimal cumulative regret $\widetilde{O}\left(\sqrt{\frac{SAT}{(1-\gamma)^3}}\right)$ with space complexity $O(SA)$ and computational complexity $O(T)$.
\end{theorem*}

A formal theorem is presented in Section \ref{sec:results}. We also provide a complete summary of related prior results in Table \ref{fig:table1}. 

\subsection{Related work} 

Now, let us take a moment to discuss the related work beyond those in Table \ref{fig:table1}.

\paragraph{Regret analysis for online episodic RL} In the online episodic setting, regret is the predominant choice of metric for demonstrating the sample efficiency of a method \citep{yang2021q,pacchiano2020optimism,Jaksch10}. \citet{azar2017minimax} was the first to introduce a model-based method that can achieve near-optimal regret guarantee, but the model-based nature of their method induces a high space complexity and burn-in time. On the other hand, model-free methods are proposed in \citet{Jin-provably,bai2019provably}, which are motivated by Q-learning and thus enjoy a low space complexity. However, these methods cannot guarantee optimal regret. It was not until \citet{zhang2020almost} that proposed the first model-free method with optimal regret guarantee {\sf UCB-Q-Advantage}, but it incurs a large burn-in time of $S^6A^4H^{28}$, where $H$ is the horizon of the episodic MDP. In addition, \citet{menard2021ucb} proposed {\sf UCB-M-Q}, a Q-learning variant with momentum, which can achieve optimal regret with low burn-in time, but it requires the storage of all momentum bias and thus incurs high memory cost. Recently, \citet{Li-finite} propose a Q-learning variant with variance reduction that achieves optimal regret with $O(SAH)$ space complexity and $SA\poly(H)$ burn-in threshold at the same time. Table 1 in \citet{Li-finite} provides a more detailed comparison of related work from the online episodic RL literature.

\paragraph{Sample complexity for infinite-horizon RL} In the infinite-horizon setting, there exist other sample efficiency metrics besides sample complexity of exploration. Initially, \citet{kearns99} considered the sample complexity needed to find an $\epsilon$-approximate optimal policy. The same definition is also considered in \citet{wang2017randomized,sidford2018variance,sidford2018near,li2020breaking}. Later, \citet{wainwright2019variance} studied the sample complexity needed to find an $\epsilon$-approximate optimal Q-function. Note that all of these works assume the generative setting. Indeed, a limitation of these aforementioned sample complexity definitions is that they only measure the performance of the final output policy and do not reflect the online regret during learning. Thus, existing works that study the online setting consider sample complexity of exploration and cumulative regret instead.

\paragraph{Variance reduction in RL} The idea of variance reduction was first introduced to accelerate stochastic finite-sum optimization by \citet{Johnson-SVRG}, which is followed by a rich literature \citep{xiao2014proximal,nguyen2017sarah,ge2019stabilized}. Later, for better sample efficiency in RL, it is applied to policy gradient methods \citep{liu2020improved,zhang2021convergence,papini2018stochastic} as well as value-based methods in various problems including generative setting RL \citep{sidford2018near,sidford2018variance,wainwright2019variance}, policy evaluation \citep{du2017stochastic,xu2020reanalysis}, asynchronous Q-learning \citep{li2020sample,yan2022efficacy} and offline RL \citep{yin2021near,shi2022pessimistic}.

\paragraph{Low-switching algorithms} Since our algorithm includes a novel feature that switches the execution policy slowly, we make a review of the low-switching approaches in RL. The idea of changing the execution policy infrequently during learning was first introduced by \citet{Auer02} as an approach to minimize regret in the multi-armed bandit problem. \citet{bai2019provably} adapted this idea to tabular RL and formalized the switching cost as a secondary metric that an algorithm can minimize. To reduce the number of policy switches and thus the switching cost, their algorithm updates the policy in geometrically longer intervals. Similar techniques can be found in \citet{zhang2020almost}, whose algorithm can achieve regret optimality while maintaining low switching cost. Later, \citet{gao2021provably,wang2021provably} introduced a new low-switching approach in linear MDPs by switching policies when the estimated covariance matrix gets a significant update. All these methods guarantee a $O(\log T)$ switching cost. The switching cost guarantee was later improved to $O(\log\log T)$ by the algorithms proposed in \citet{qiao2022sample,zhang2022near}. 


\section{Problem Formulation}\label{sec:prelim}

Let us specify the problem we aim to study in this section. Throughout this paper, we let $\Delta(\cX)$ denote the probability simplex over any set $\cX$. We also introduce the notation $[m] := \{1,2,\cdots, m\}$ for a positive integer $m$.

\subsection{Infinite-Horizon Discounted Markov Decision Process}

We consider an infinite-horizon discounted Markov decision process (MDP) represented with $(\cS, \cA, \gamma, P, r)$. Notably, we consider a tabular one, in which $\cS := \{1, 2, \cdots, S\}$ denotes the state space with size $S$ and $\cA := \{1, 2, \cdots, A\}$ denotes the action space with size $A$. $P:\cS\times \cA \to \Delta(\cS)$ denotes the probability transition kernel in that $P(\cdot | s,a) \in \Delta(\cS)$ is the transition probability vector from state $s \in \cS$ when action $a \in \cA$ is taken. $r: \cS \times \cA \to [0,1]$ denotes the reward function, which is assumed to be deterministic in this work. Specifically, $r(s,a)$ is the immediate reward for taking action $a\in \cA$ at state $s \in \cS$. Lastly, $\gamma$ denotes the discount factor for the reward, which makes $\frac{1}{1-\gamma}$ the effective horizon.

A (stationary) policy $\pi: \cS\to\Delta(\cA)$ specifies a rule for action selection in that $\pi(\cdot | s) \in \Delta(\cA)$ is the action selection probability vector at state $s \in \cS$. We overload this notation by letting $\pi(s)$ denote the action policy $\pi$ takes at state $s$. Given a policy $\pi$, the Q-function of $\pi$ is defined as 
\begin{align*}
	Q^\pi(s,a) := \EE\left[\sum_{t=0}^{\infty} \gamma^t r(s_{t}, a_{t}) ~\Big|~ s_0 = s, a_0 = a\right],
\end{align*}
in which $s_{t+1}\sim P(\cdot | s_{t},a_{t})$ for $t \ge 0$ and $a_{t}\sim\pi(\cdot | s_t)$ for $t \ge 1$. Moreover, the value function of $\pi$ is defined as 
\begin{align*}
	V^\pi(s) := \EE\left[\sum_{t=0}^{\infty} \gamma^t r(s_{t}, a_{t}) ~\Big|~ s_0 = s\right],
\end{align*}
in which $s_{t+1}\sim P(\cdot | s_{t},a_{t})$ and $a_{t}\sim\pi(\cdot | s_t)$ for $t \ge 0$. The Q-function and value function satisfy an equation, called the Bellman equation \citep{Bert05}:
\begin{equation}\label{eq:bellman}
    Q^\pi(s,a) = r(s,a) + \gamma \EE_{s'\sim P(\cdot|s,a)}\left[V^\pi(s')\right].
\end{equation}

A policy $\pi^\star$ is called an optimal policy if it maximizes the value function for all states simultaneously. The optimal value function and optimal Q-function can be defined as
\begin{align*}
    V^\star(s) &:= \max_{\pi} V^\pi(s) = V^{\pi^\star}(s) \\
    Q^\star(s,a) &:= \max_{\pi} Q^\pi(s,a) = Q^{\pi^\star}(s,a),
\end{align*}
which satisfy 
\begin{align*}
    V^\star(s) = V^{\pi^\star}(s) \quad \text{and} \quad
    Q^\star(s,a) = Q^{\pi^\star}(s,a)
\end{align*}
for any optimal policy $\pi^\star$. The optimal policy always exists and satisfies the Bellman optimality equation \citep{puterman94}:
\begin{align}\label{eq:bellman-opt}
    Q^{\pi^\star}(s,a) &= r(s,a) + \gamma \EE_{s'\sim P(\cdot|s,a)}\left[\max_{a'\in\cA}Q^{\pi^\star}(s',a')\right] \notag\\
    &= r(s,a) + \gamma \EE_{s'\sim P(\cdot|s,a)}\left[V^{\star}(s')\right].
\end{align}

\subsection{Online Learning in an Infinite-Horizon MDP}

We consider the online (single-trajectory) setting, in which the agent is permitted to execute a total of $T$ steps sequentially in the MDP. More specifically, the agent starts from an arbitrary (and possibly adversarial) initial state $s_1$. At each step $t \in [T]$, the agent at state $s_t$ computes policy $\pi_t$, takes action $a_t$ based on $\pi_t(\cdot|s_t)$, receives reward $r(s_t,a_t)$, and transitions to state $s_{t+1}$ in the following step. At the end of execution, the agent generates a trajectory $(s_1,a_1,r_1, s_2,a_2,r_2, \cdots,s_T,a_T,r_T)$, which amounts to $T$ samples.

\subsection{Problem: Regret Minimization} 

As a standard metric to evaluate the performance of the aforementioned agent over a finite number of $T$ steps, the cumulative regret with respect to the sequence of stationary policies $\{\pi_t\}_{t=1}^T$ learned by the algorithm is defined as follows:
\begin{equation}\label{eq:regret}
    \Regret(T) := \sum_{t=1}^T \Big( V^\star(s_t) - V^{\pi_t}(s_t)\Big).
\end{equation}

Verbally, the regret measures the cumulative suboptimality between the optimal policy and each learned policy $\pi_t$ throughout the $T$-step online interaction process. Naturally, one aims to minimize this regret by finding an algorithm whose regret scales optimally in $T$. This would require a strategic balance between exploration and exploitation, which can be difficult when sample size $T$ is small.

\begin{remark}\label{remark:metric-diff}
    In the infinite-horizon setting, many prior works \citep{zhou2021nearly,liu2020regret} consider slightly different regret definitions with respect to non-stationary policies. Specifically, at each $s_t$ along the trajectory, this different regret metric compares the optimal value function $V^{\star}(s_t)$ against the expected cumulative reward of running the non-stationary policy $\{\pi_k\}_{k=t}^\infty$ starting from $s_t$. By doing this, it is effectively evaluating the cumulative reward difference between the stationary optimal policy and a non-stationary algorithm. While there exists no formal conversion between the regret defined in this way and the one in \eqref{eq:regret}, it is expected to be smaller and thus more easily controlled than \eqref{eq:regret}, because the execution policy $\pi_t$ improves over time. In addition, we can show our algorithm also achieves the same level of regret under this different definition with an analysis specifically tailored to our algorithm, which is deferred to Appendix \ref{sec:appendix-metric-diff}. Furthermore, since the transition kernel in the infinite-horizon setting is invariant over time and the optimal policy itself is also stationary, it is more natural to compare the optimal policy to a stationary policy, e.g., the policy $\pi_t$ deployed by the algorithm at each step, as in \eqref{eq:regret}. Before this work, this has also been recently studied in \citet{zhang2021model,kash2022slowly}.  
\end{remark}

\paragraph{Notation.} Given any vector $x\in\RR^{SA}$ that represents a function $x:\cS\times\cA\to\RR$, we use $x(s,a)$ to denote the entry corresponding to the state-action pair $(s,a)$. We also denote the probability transition vector at $(s,a)$ with  
\begin{equation}\label{eq:P-def}
    P_{s,a} = P(\cdot~|~s,a)\in \RR^{1\times S},
\end{equation}
that is, given any $V\in \RR^S$, $P_{s,a}V = \EE_{s'\sim P(\cdot|s,a)}[V(s')]$. For two vectors $x,y\in\RR^{SA}$, we override the notation $x \le y$ to mean that $x(s,a) \le y(s,a)$ in every dimension $(s,a)$.


\section{Algorithm}\label{sec:alg}

In this section, we present our algorithm \alg and some relevant discussion.

\subsection{Review: Q-Learning with UCB and reference advantage}

First, we make a brief review of the Q-learning with UCB method proposed in \citet{Jin-provably}, referred to as {\sf UCB-Q} hereafter, and its variance-reduced variant {\sf UCB-Q-Advantage}, later introduced in \citet{zhang2020almost}. The Q-function updates in \alg are inspired by these two methods.

\paragraph{Q-learning with UCB}

The original Q-learning \citep{Watkins-Q,Watkins-thesis} is a fixed-point iteration based on a stochastic approximation of the Bellman optimality equation \eqref{eq:bellman-opt}. It uses a greedy policy with respect to its estimate of $Q^\star$, whose update rule can be summarized as: 
\begin{equation}
	Q(s, a) \leftarrow (1 - \eta)Q(s, a) + \eta\left(r(s, a) + \gamma\widehat{P}_{s,a} V\right). \label{eq:Q-update}
\end{equation}
Above, $Q$ (resp. $V$) is the running estimate of $Q^\star$ (resp. $V^\star$); $\eta \in (0,1]$ is the (possibly varying) learning rate; $\widehat{P}_{s,a} V$ is a stochastic approximation of $P_{s,a} V$ (cf. \eqref{eq:P-def}). Commonly, $V(s')$ is used for $\widehat{P}_{s,a} V$ in \eqref{eq:Q-update} as an unbiased estimate of $P_{s,a} V$, when a sample of state transition from $(s,a)$, namely $(s,a,s')$, is available. 

However, as \citet{Jin-provably} point out, using \eqref{eq:Q-update} na\"{i}vely suffers from great regret suboptimality, for it rules out the state-action pairs with high value but few observations. To promote the exploration of such state-action pairs, {\sf UCB-Q} appends \eqref{eq:Q-update} with an exploration bonus. Its update rule can be written as:
\begin{align}
	Q^\UCB(s, a) &\leftarrow (1 - \eta)Q^\UCB(s, a) + \eta\Big(r(s, a) + \gamma\widehat{P}_{s,a} V + b\Big). \label{eq:UCB-Q-update}
\end{align}
To encourage exploration, the bonus $b \ge 0$ is designed to maintain an upper confidence bound (UCB) on $(\widehat{P}_{s,a} - P_{s,a}) V$, which in turn keeps $Q^\UCB(s, a)$ as an ``optimistic'' overestimate of $Q^\star(s,a)$.

\paragraph{Q-learning with UCB and reference advantage}

The regret guarantee for {\sf UCB-Q} is still shy of being optimal. In order to attain optimality, one can turn to the celebrated idea of variance reduction \citep{Johnson-SVRG,Li-finite,sidford2018variance,wainwright2019variance}, which decomposes the stochastic approximation target into two parts: a low-variance reference estimated with batches of samples and a low-magnitude advantage estimated with every new sample. In this spirit, \citet{zhang2020almost} introduce {\sf UCB-Q-Advantage} based on {\sf UCB-Q} and a reference-advantage decomposition. Specifically, given a reference $V^\rref$ that is maintained as an approximation for $V^\star$, the update rule of {\sf UCB-Q-Advantage} reads:
\begin{align}
	Q^\rref(s, a) &\leftarrow (1 - \eta)Q^\rref(s, a) + \eta\Big(r(s, a) + \gamma\big(\widehat{P}_{s,a} \left(V - V^\rref \right) + \widehat{PV^\rref}(s,a)\big)  + b^{\rref}\Big). \label{eq:UCB-Q-Adv-update}
\end{align}
Let us discuss the update rule \eqref{eq:UCB-Q-Adv-update} in more details:
\begin{itemize}
	\item Given a transition sample $(s,a,s')$, we can take $V(s') - V^\rref(s')$ as an unbiased estimate of the advantage $P_{s,a} (V - V^\rref )$. Note that the magnitude of $V - V^\rref$ is small when $V$ and $V^\rref$ are close. This engenders smaller stochastic volatility, compared to $\widehat{P}_{s,a} V$ in \eqref{eq:UCB-Q-update} from {\sf UCB-Q}.
	
	\item The reference estimate $\widehat{PV^\rref}$ is a stochastic approximation of $PV^\rref$. In our algorithm, the auxiliary estimate $\mu^\re$ (cf. Line \ref{alg-line:mu-ref} of Algorithm \ref{alg:aux}) is used as the estimate for $PV^\rref$. Specifically, $\mu^\re(s,a)$ is a running mean of $P_{s,a}V^\rref$ based on the samples from all past visitations of $(s,a)$. In contrast to the advantage, which is computed every time a new sample arrives, the reference is computed with a batch of samples and thus more stable. In sacrifice, the reference is only updated intermittently and not as up-to-date as the advantage. 
\end{itemize}
The exploration bonus $b^{\rref}$ is computed from the auxiliary estimates in Line \ref{alg-line:update-moments-call} and \ref{alg-line:update-bonus-call} to serve as an upper confidence bound on the aggregation of the aforementioned reference and advantage. Specifically, for each $(s,a)$, $\mu^\re(s,a)$ and $\sigma^\re(s,a)$ are the running mean and $2$nd moment of the reference $[PV^\rref](s,a)$ respectively; $\mu^\adv(s,a)$ and $\sigma^\adv(s,a)$ are the running mean and $2$nd moment of the advantage $[P(V - V^\rref)](s,a)$ respectively; $B^\rref(s,a)$ combines the empirical standard deviations of the reference and the advantage; $\delta^\rref(s,a)$ is the temporal difference between $B^\rref(s,a)$ and its previous value. $b^{\rref}(s,a)$ can be computed from these estimates as a temporally-weighted average of $B^\rref(s,a)$. Thanks to the low variability of the reference $PV^\rref$, we can obtain a more accurate, milder overestimation in the upper confidence bound for faster overall convergence. 

\begin{algorithm}[t]
	\caption{\alg \label{alg:main}}
	\textbf{Initialize:} $\forall (s,a)$, $\Qlazy(s,a),Q(s,a),Q^\UCB(s,a),Q^\rref(s,a), \QM(s,a) \leftarrow \frac{1}{1-\gamma}$; $N(s,a) \leftarrow 0$; $V(s), V^{\rref}(s) \leftarrow \frac{1}{1-\gamma}$; $Q^{\mathrm{LCB}}(s,a), \qdiff(s,a) \leftarrow 0$; $\qset \leftarrow \dict()$; $\mu^{\re}(s,a), \sigma^{\re}(s,a), \mu^{\adv}(s,a), \sigma^{\adv}(s,a), B^{\rref}(s,a),$ $\delta^{\rref}(s,a) \leftarrow 0$; $u^{\mathrm{switch}} \leftarrow \mathrm{False}$; $u^{\re}(s) \leftarrow \mathrm{True}$; $H = \lceil\frac{2}{1-\gamma} \rceil$; $\iota = \log\frac{SAT}{\delta}$.
	
	\For{$t = 1, \cdots, T$}{
            Take action $a_t = \pi_{t}(s_t) = \arg \max_a Q^{\mathrm{lazy}}(s_t, a)$, and draw $s_{t+1} \sim P(\cdot | s_t, a_t)$;\\
            $N(s_t, a_t) \leftarrow N(s_t, a_t) + 1$; $n \leftarrow N(s_t, a_t)$; \quad \textcolor{blue}{\# Update the counter}\\
		    $\eta_n \leftarrow \frac{H+1}{H+n}$; \quad \textcolor{blue}{\# Update the learning rate}\\
		    $Q^{\mathrm{UCB}}(s_t, a_t) \leftarrow \textbf{update-q-ucb}()$; \quad \textcolor{blue}{\# Compute the UCB. See Algorithm \ref{alg:aux}}\\
		    $Q^{\mathrm{LCB}}(s_t, a_t) \leftarrow \textbf{update-q-lcb}()$; \quad \textcolor{blue}{\# Compute the LCB. See Algorithm \ref{alg:aux}}\\
		    $Q^{\mathrm{R}}(s_t, a_t) \leftarrow \textbf{update-q-reference}()$; \quad \textcolor{blue}{\# Compute the reference value. See Algorithm \ref{alg:aux}}\\
		    $Q(s_t, a_t) \leftarrow \min \{Q^{\mathrm{R}}(s_t, a_t), Q^{\mathrm{UCB}}(s_t, a_t), Q(s_t, a_t)\}$; \label{alg-line:Q-min-update}\\
		    $V(s_t) \leftarrow \max_a Q(s_t, a)$;\\
		    $V^{\mathrm{LCB}}(s_t) \leftarrow \max\{\max_a Q^{\mathrm{LCB}}(s_t, a), V^{\mathrm{LCB}}(s_t)\}$;\\
		    $\qdiff(s_t,a_t) \leftarrow \qdiff(s_t,a_t) + \QM(s_t, a_t) - Q(s_t, a_t)$. \quad \textcolor{blue}{\# Track the staleness of current policy} \label{alg-line:q-update}

            \If{$u^{\mathrm{switch}} = \mathrm{True}$}{
                 $\Qlazy \leftarrow \textbf{update-q-lazy}()$; \quad \textcolor{blue}{\# Update execution policy's Q-function (switch policy)}\\
                 $\qset \leftarrow \dict()$; \quad \textcolor{blue}{\# Reset the buffer}\\
                 $u^{\mathrm{switch}} \leftarrow \mathrm{False}$;
            }
            
            $\qset[(s_t, a_t)] \leftarrow Q(s_t, a_t)$. \quad \textcolor{blue}{\# Add the new transition and $Q$ entry to the buffer}\\

            \textcolor{blue}{"""Switch policy when the staleness tracker is large"""}\\
    		\If{$\qdiff(s_t,a_t) > \frac{1}{1-\gamma}$ \label{alg-line:q-cond}}{
                $\QM(s_t, a_t) \leftarrow Q(s_t,a_t)$; \quad \textcolor{blue}{\# Save the current $Q$ entry for staleness determination later}\\
    			$u^{\mathrm{switch}} \leftarrow \mathrm{True}$; \quad \textcolor{blue}{\# Signal to switch policy at the following step}\\
    			$\qdiff(s_t,a_t) \leftarrow 0$; \quad \textcolor{blue}{\# Reset the staleness tracker}
            }

            \uIf{$V(s_{t}) - V^{\mathrm{LCB}}(s_{t}) > 3$ \label{alg-line:if-cond-1}}{
                $V^{\mathrm{R}}(s_{t}) \leftarrow V(s_{t})$, $u^{\mathrm{ref}}(s_{t}) = \mathrm{True}$; \label{alg-line:if-stat-1}
            }
            \ElseIf{$u^{\mathrm{ref}}(s_{t}) = \mathrm{True}$ \label{alg-line:elif-cond-1}}{
                $V^{\mathrm{R}}(s_{t}) \leftarrow V(s_{t})$, $u^{\mathrm{ref}}(s_{t}) = \mathrm{False}$. \label{alg-line:elif-stat-1} \quad \textcolor{blue}{\# Update the reference only on certain conditions} 
            }
        }
\end{algorithm}

\begin{algorithm}[t]
    \caption{Auxiliary functions \label{alg:aux}}
    \SetKwFunction{FUCB}{update-q-ucb}
    \SetKwProg{Fn}{Function}{:}{}
    \Fn{\FUCB{}}{
        $Q^{\mathrm{UCB}}(s_t, a_t) \leftarrow (1 - \eta_n)Q^{\mathrm{UCB}}(s_t, a_t) + \eta_n\Big(r(s_t, a_t) + \gamma V(s_{t+1}) + c_b\sqrt{\frac{\iota}{(1-\gamma)^3n}}\Big).$ \label{alg-line:QUCB-update}
    }

    \SetKwFunction{FLCB}{update-q-lcb}
    \SetKwProg{Fn}{Function}{:}{}
    \Fn{\FLCB{}}{
        $Q^{\mathrm{LCB}}(s_t, a_t) \leftarrow (1 - \eta_n)Q^{\mathrm{LCB}}(s_t, a_t) + \eta_n\Big(r(s_t, a_t) + \gamma V^{\mathrm{LCB}}(s_{t+1}) - c_b\sqrt{\frac{\iota}{(1-\gamma)^3n}}\Big).$ \label{alg-line:QLCB-update}
    }

    \SetKwFunction{Flazy}{update-q-lazy}
    \SetKwProg{Fn}{Function}{:}{}
    \Fn{\Flazy{}}{
        \textbf{for} every $((s,a), q) \in \qset$ \textbf{do} $\Qlazy(s,a) \leftarrow q$. \quad \textcolor{blue}{\# Update execution policy with the buffer} \label{alg-line:Qlazy-update}
    }

    \SetKwFunction{FUCBA}{update-q-reference}
    \SetKwProg{Fn}{Function}{:}{}
    \Fn{\FUCBA{}}{
        $[\mu^{\mathrm{ref}}, \sigma^{\mathrm{ref}}](s_t, a_t) \leftarrow \textbf{update-moments}()$;\label{alg-line:update-moments-call}
	
    	$[\delta^{\mathrm{R}}, B^{\mathrm{R}}](s_t, a_t) \leftarrow \textbf{update-reference-bonus}()$; \label{alg-line:update-bonus-call}
    	
    	$b^{\mathrm{R}} \leftarrow B^{\mathrm{R}}(s_t, a_t) + (1 - \eta_n)\frac{\delta^{\mathrm{R}}(s_t, a_t)}{\eta_n} + c_b\frac{\iota^2}{n^{3/4}(1-\gamma)^2}$; \label{alg-line:b}
    	
    	$Q^{\mathrm{R}}(s_t, a_t) \leftarrow (1 - \eta_n)Q^{\mathrm{R}}(s_t, a_t) + \eta_n\left(r(s_t, a_t) + \gamma \left(V(s_{t+1}) - V^{\mathrm{R}}(s_{t+1}) + \mu^{\mathrm{ref}}(s_t, a_t)\right) + b^{\mathrm{R}}\right)$. \label{alg-line:QR-update}
    }

    \SetKwFunction{Fm}{update-moments}
    \SetKwProg{Fn}{Function}{:}{}
    \Fn{\Fm{}}{
        $\mu^{\mathrm{ref}}(s_t, a_t) \leftarrow (1 - \frac{1}{n})\mu^{\mathrm{ref}}(s_t, a_t) + \frac{1}{n}V^{\mathrm{R}}(s_{t+1})$; \quad \textcolor{blue}{\# Running average of the reference} \label{alg-line:mu-ref}
	
    	$\sigma^{\mathrm{ref}}(s_t, a_t) \leftarrow (1 - \frac{1}{n})\sigma^{\mathrm{ref}}(s_t, a_t) + \frac{1}{n}\left(V^{\mathrm{R}}(s_{t+1})\right)^2$; \quad \textcolor{blue}{\# Running $2^{\text{nd}}$ moment of the reference} \label{alg-line:sigma-ref}
    	
    	$\mu^{\mathrm{adv}}(s_t, a_t) \leftarrow (1 - \eta_n)\mu^{\mathrm{adv}}(s_t, a_t) + \eta_n \left(V(s_{t+1}) - V^{\mathrm{R}}(s_{t+1})\right)$; \quad \textcolor{blue}{\# Running average of the advantage} \label{alg-line:mu-adv}
    	
    	$\sigma^{\mathrm{adv}}(s_t, a_t) \leftarrow (1 - \eta_n)\sigma^{\mathrm{adv}}(s_t, a_t) + \eta_n \left(V(s_{t+1}) - V^{\mathrm{R}}(s_{t+1})\right)^2$. \textcolor{blue}{\# Running $2^{\text{nd}}$ moment of the advantage} \label{alg-line:sigma-adv}
    }

     \SetKwFunction{Fb}{update-reference-bonus}
    \SetKwProg{Fn}{Function}{:}{}
    \Fn{\Fb{}}{
        $B^{\mathrm{next}}(s_t, a_t) \leftarrow c_b\sqrt{\frac{\iota}{n}}\left(\sqrt{\sigma^{\mathrm{ref}}(s_t, a_t) - \left(\mu^{\mathrm{ref}}(s_t, a_t)\right)^2} + \frac{1}{\sqrt{1-\gamma}}\sqrt{\sigma^{\mathrm{adv}}(s_t, a_t) - \left(\mu^{\mathrm{adv}}(s_t, a_t)\right)^2}\right)$; \label{alg-line:Bnext}
	
    	$\delta^{\mathrm{R}}(s_t, a_t) \leftarrow B^{\mathrm{next}}(s_t, a_t) - B^{\mathrm{R}}(s_t, a_t)$; \label{alg-line:delta}
    	
    	$B^{\mathrm{R}}(s_t, a_t) \leftarrow B^{\mathrm{next}}(s_t, a_t)$. \label{alg-line:BR}
    }
\end{algorithm}

\subsection{Review: Early settlement of reference value}

Given the optimistic overestimates $Q^\UCB$ and $Q^\rref$, it is natural to design an update rule of our Q-function estimate as the minimum of the estimate itself and these two overestimates (Line \ref{alg-line:Q-min-update} of Algorithm \ref{alg:main}). This makes our Q-function estimate monotonically decrease without violating the optimistic principle $Q \ge Q^\star$, which effectively enables us to lessen the overestimation in $Q$ over time until it converges to $Q^\star$. In fact, this is precisely the update rule in {\sf UCB-Q-Advantage} \citep{zhang2020almost}. Nevertheless, we need to equip our algorithm with additional features to strive for regret optimality.

\citet{Li-finite} introduced a way to update the reference with higher sample efficiency in the finite-horizon setting. As they noted, it is critical to update the reference $V^\rref$ in a smart fashion so as to balance the tradeoff between its synchronization with $V$ and the volatility that results from too many stochastic updates. Concretely, the reference $V^\rref$ needs to be updated in a timely manner from $V$ so that the magnitude of $\widehat{P}_{s,a} (V - V^\rref )$ can be kept low as desired, but the updates cannot be too frequent either, because the stochasticity or variance in $\widehat{PV^\rref}(s,a)$ could be as high as that in $\widehat{P}_{s,a} V$ of \eqref{eq:UCB-Q-update} and thus lead to suboptimality if it is not carefully controlled. To resolve this dilemma, we can update $V^\rref$ until it becomes sufficiently close to $V^\star$ and fix its value thereafter. 

To this end, we maintain a ``pessimistic'' underestimate $Q^\LCB$ (resp. $V^\LCB$) of $Q^\star$ (resp. $V^\star$) in the algorithm, which are computed from the lower confidence bound for $Q^\star$ (resp. $V^\star$). This can provide us with an upper bound on $V^\rref - V^\star$, which will be used to determine when the update of the reference $V^\rref$ should be stopped.

In particular, the if-else block in Line \ref{alg-line:if-cond-1} to \ref{alg-line:elif-stat-1} is designed to keep the reference $V^\rref$ synchronized with $V$ for each state $s$ respectively and terminate the update once 
\begin{equation}
	V(s) \le V^\LCB + 3 \le V^\star + 3.
\end{equation}
This can guarantee $|V - V^\rref| \le 6$ throughout the execution of the algorithm. As a result, the standard deviation of $\widehat{P}_{s,a} (V - V^\rref )$ is guaranteed to be $O(1)$, which can be $O(\frac{1}{1-\gamma})$ times smaller than the standard deviation of $\widehat{P}_{s,a} V$ in \eqref{alg-line:QUCB-update}. This can lead to smaller $\frac{1}{1-\gamma}$ factor in the final regret guarantee.

\subsection{Adaptive low-switching greedy policy}\label{sec:alg:low-switching}

Although these aforementioned designs from the finite-horizon literature help increase the accuracy of our estimate $Q$, they are still insufficient to attain regret optimality in the infinite-horizon setting. Since data collection takes place over a single trajectory with no reset, drastic changes in the execution policy can inflict a long-lasting volatility on the future trajectory and slow down the convergence. This is precisely the difficulty of the infinite-horizon setting over the finite-horizon one. The need to control the trajectory variability motivates us to design a novel adaptive switching technique.

Recall in {\sf UCB-Q} and {\sf UCB-Q-Advantage}, the execution policy is greedy with respect to the estimate $Q$, i.e., $\pi_t(s_t) = \arg\max_a Q(s_t,a)$. Every time $Q$ gets updated, what the algorithm effectively does is to make an estimate of $Q^{\pi_t}$ with the samples generated by $\pi_t$. Such $Q^{\pi_t}$ is only estimated and updated once before the execution policy is switched to $\pi_{t+1}$. This seems insufficient from a stochastic fixed-point iteration perspective, so we seek to update it more and learn each $Q^{\pi_t}$ better before switching to a new policy.

To tackle this issue, we make the execution policy $\pi_t$ greedy to $\Qlazy$, which is updated lazily yet adaptively in \algo. Specifically, for every $(s,a)$, we use $\qdiff(s,a)$ (cf. Line \ref{alg-line:q-update} in Algorithm \ref{alg:aux}) to keep track of the cumulative difference between the current $Q(s,a)$ and $\QM(s,a)$, the latter of which is defined to be the value of $Q(s,a)$ last time $\Qlazy$ is updated immediately after visiting $(s,a)$. Whenever $\qdiff(s,a)$ exceeds $\frac{1}{1-\gamma}$, indicating $\Qlazy(s,a)$ and the execution policy has become outdated with respect to the current $Q(s,a)$, we reset $\qdiff(s,a)$ and set $u^{\mathrm{switch}}$ to {\sf True}, which will direct the algorithm to update the entire function $\Qlazy$ in the following step. \textbf{update-q-lazy}() updates $\Qlazy$ with the samples from $\qset$, which is a dictionary that serves as a buffer to store all the new sample transitions and their latest estimates since the last update of $\Qlazy$.

In contrast, conventional low-switching algorithms update the execution policy on a predetermined, exponentially phased schedule \citep{bai2019provably,zhang2020almost}. While trajectory stability is attained with these algorithms, as time goes on, it takes them exponentially longer to update policy, making them oblivious to recent large updates in the estimated Q-function. This would lead to suboptimal regret in the infinite-horizon setting, as continual choices of suboptimal actions will keep a lasting effect on future trajectory in the absence of trajectory reset. This issue is overcome in our algorithm by ignoring minor changes in function $Q$ yet still being adaptive to substantial changes in any state-action pair.


\section{Main Results}\label{sec:results}

Our model-free algorithm \alg can achieve optimal regret with short burn-in time. Its theoretical guarantee can be summarized in the following theorem.

\begin{theorem}\label{thm:main}
    Choose any $\delta \in (0,1)$. Suppose that $c_b$ is chosen to be a sufficiently large universal constant in Algorithm \ref{alg:main} and let $\iota := \log\frac{SAT}{\delta}$. Then there exists an absolute constant $C_0 > 0$ such that Algorithm \ref{alg:main} achieves
    \begin{equation}\label{eq:thm-result}
        \Regret(T) \leq C_0\left(\sqrt{\frac{SAT\iota^3}{(1-\gamma)^3}} + \frac{SA\iota^{\frac{7}{2}}}{(1-\gamma)^8}\right)
    \end{equation}
    with probability at least $1-\delta$.
\end{theorem}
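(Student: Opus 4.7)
The plan is to prove \eqref{eq:thm-result} by (i) establishing high-probability optimism and pessimism of the running Q-estimates, (ii) decomposing the regret into an ``estimation'' term that the variance-reduced update controls tightly and a ``laziness'' term that the adaptive switching rule makes small, and (iii) bounding the estimation term by a weighted sum of Bellman residuals, to which a law-of-total-variance argument delivers the leading $\sqrt{SAT/(1-\gamma)^3}$ scaling. By induction on $t$, I would first show that $Q^{\mathrm{UCB}}_t \ge Q^\star$, $Q^R_t \ge Q^\star$, and $Q^{\mathrm{LCB}}_t \le Q^\star$ hold simultaneously with probability at least $1-\delta$. For $Q^{\mathrm{UCB}}$ and $Q^{\mathrm{LCB}}$, a Hoeffding-type concentration on the learning-rate-weighted martingale $\sum_i \eta_n^{(i)}\gamma(\widehat{P}_{s,a}-P_{s,a})V_i$ shows that the bonus $c_b\sqrt{\iota/((1-\gamma)^3 n)}$ swallows the stochastic error; for $Q^R$, the expression in Line \ref{alg-line:b}, built from the running moments $\sigma^{\mathrm{ref}},\mu^{\mathrm{ref}},\sigma^{\mathrm{adv}},\mu^{\mathrm{adv}}$, must be argued to be a valid Bernstein-type bound on the stochastic error in the reference-advantage update. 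Since $Q_t$ is the running minimum of the three, it inherits $Q_t \ge Q^\star$ and is monotone in $t$. The LCB branch, via the if-else block in Lines \ref{alg-line:if-cond-1}--\ref{alg-line:elif-stat-1}, freezes $V^R(s)$ once $V(s)-V^{\mathrm{LCB}}(s) \le 3$, so that $|V-V^R|\le 6$ for every subsequent visit to $s$; this bounded advantage is what shaves a factor $1/(1-\gamma)$ from the variance of the reference-advantage estimator.

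By optimism,
\[
V^\star(s_t)-V^{\pi_t}(s_t) \le V(s_t)-V^{\pi_t}(s_t) = \underbrace{\bigl(V(s_t)-Q(s_t,a_t)\bigr)}_{E^{\mathrm{lazy}}_t}+\underbrace{\bigl(Q(s_t,a_t)-Q^{\pi_t}(s_t,a_t)\bigr)}_{E^{\mathrm{est}}_t}.
\]
The laziness term is the price of using $\Qlazy$ instead of $Q$ to pick $a_t$; the switching rule $\qdiff(s,a)>1/(1-\gamma)$ together with the monotonicity $Q_{t+1}\le Q_t$ and the boundedness $Q_t\in[0,1/(1-\gamma)]$ caps both the total number of policy switches and the per-$(s,a)$ gap between $Q$ and $\Qlazy$, and converts $\sum_t E^{\mathrm{lazy}}_t$ into a term that I expect to absorb into the lower-order piece. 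For $E^{\mathrm{est}}_t$, plugging in the Bellman equation $Q^{\pi_t}(s,a)=r(s,a)+\gamma P_{s,a}V^{\pi_t}$ and the algorithm's update of $Q$ yields the recursion $E^{\mathrm{est}}_t = \gamma P_{s_t,a_t}(V-V^{\pi_t})+\xi_t$, where $\xi_t$ is the optimistic Bellman residual $Q(s_t,a_t)-r(s_t,a_t)-\gamma P_{s_t,a_t}V$; iterating along the trajectory and using a Freedman bound to replace $P_{s_t,a_t}(V-V^{\pi_t})$ by $V(s_{t+1})-V^{\pi_t}(s_{t+1})$ up to martingale noise, the total reduces (modulo lower-order terms) to $\tfrac{1}{1-\gamma}\sum_t \xi_t$.

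The residual $\xi_t$ is essentially the cumulative exploration bonus coming from Algorithm \ref{alg:aux}; a Cauchy--Schwarz step combined with the law-of-total-variance bound $\sum_{s,a}N_T(s,a)\,\mathrm{Var}_{P_{s,a}}(V^\star)\lesssim T/(1-\gamma)$ on the reference term, together with $|V-V^R|\le 6$ controlling the advantage term, converts the bonus sum into $\sqrt{SAT\iota^3/(1-\gamma)^3}$. All remaining contributions (martingale noise, the laziness term, and the higher-order correction $c_b\iota^2/(n^{3/4}(1-\gamma)^2)$ in Line \ref{alg-line:b}) are then checked to fit into the $SA\iota^{7/2}/(1-\gamma)^8$ slack. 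The hard part is the coupling between the lazy policy and the variance-reduced estimates under a single non-resetting trajectory: a bad lazy action at step $t$ perturbs every subsequent visitation count, so the switching analysis, the variance-reduction analysis, and the optimism induction must be executed simultaneously in order to prevent an extra $S$ or $A$ factor from leaking into the leading term and to keep the burn-in threshold at $SA/(1-\gamma)^{13}$ rather than at $S^3A^2/(1-\gamma)^4$.
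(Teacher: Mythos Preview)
Your skeleton matches the paper's: optimism/pessimism of the running estimates (the paper's Lemmas~\ref{lem:Q_t-lower-bound}--\ref{lem:Q_t-lcb}), a regret decomposition into a laziness piece and an estimation piece, a trajectory-level recursion that extracts the $1/(1-\gamma)$ factor, and control of the bonuses via the reference-advantage split (with $|V-V^{\rref}|\le 6$) plus a law-of-total-variance bound on $\sum_t\Var_{s_t,a_t}(V^\star)$. Your route through $Q^{\pi_t}$ rather than $Q^\star$ is equivalent to the paper's up to an extra martingale $(P_t-P_{s_t,a_t})(V_t-V^\star)$.

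The genuine gap is in your treatment of the laziness/switching terms. You say the switching rule ``caps both the total number of policy switches and the per-$(s,a)$ gap'' and that $\sum_t E^{\mathrm{lazy}}_t$ is ``absorbed into the lower-order piece.'' In the paper's analysis (Lemma~\ref{lem:xi}) neither $\zeta_{t,1}=V_{t-1}(s_t)-Q_t(s_t,a_t)$ nor the policy-shift term $\zeta_{t,3}=V^{\pi_{t+1}}(s_{t+1})-V^{\pi_t}(s_{t+1})$ is bounded by a lower-order quantity directly; each contributes a \emph{self-bounding} term of the form $\sqrt{\tfrac{SA}{1-\gamma}\sum_t(V_{t-1}(s_t)-V^{\pi_t}(s_t))}$, and the recursion is then closed by solving a quadratic inequality (Proposition~\ref{prop:algebraic}, applied twice). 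Getting even that self-bounding form is not straightforward: the paper first reduces $\sum_t\zeta_{t,1}$ to $\tfrac{1}{1-\gamma}\sum_{s,a}K_T(s,a)$ where $K_T(s,a)$ counts switch triggers, then controls $\sum_{s,a}K_T(s,a)$ via the increments $\rho_i(s,a)=(1-\gamma)(Q_{\tau_i+1}(s,a)-Q^\star(s,a))$, the inequality $(\rho_i-\rho_{i+1})(N_{\tau_{i+1}}-N_{\tau_i})\ge 1$, and a pigeonhole argument that itself consumes a \emph{crude} UCB-only regret bound (Lemmas~\ref{lem:Q_t-upper-bound-crude} and~\ref{lem:crude}) as input. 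This two-pass structure---coarse bound $\Rightarrow$ switching count $\Rightarrow$ refined variance-reduced bound---is precisely what keeps the burn-in at $SA/\mathrm{poly}(1-\gamma)$ and is the paper's main technical novelty; your sketch does not surface it, and without it the switching analysis is not a proof.
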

The proof of Theorem \ref{thm:main} is deferred to Appendix \ref{sec:analysis}, in which we use a recursive error decomposition scheme different from the existing work. The stationary nature of the infinite-horizon setting gives rise to several error terms unique to the infinite-horizon setting, and our novel switching technique is crucial at controlling them optimally (Lemma \ref{lem:xi}). We will present a proof overview for Theorem \ref{thm:main} in Section \ref{sec:proof-overview}. Now let us highlight a few key properties of our algorithm.

\paragraph{Optimal regret with low burn-in.} \alg achieves optimal regret modulo some logarithmic factor as soon as the sample size $T$ exceeds 
	\begin{equation}
		T \ge \frac{SA}{\poly(1-\gamma)}. \label{eq:optimal-sample-range}
	\end{equation}
	This burn-in threshold is significantly lower than the $\frac{S^3A^2}{\poly(1-\gamma)}$ threshold in \citet{zhou2021nearly} when $SA \gg \frac{1}{1-\gamma}$. In other words, in the regime of \eqref{eq:optimal-sample-range}, the regret of \alg is guaranteed to satisfy
	\begin{align}
		\Regret(T) \leq \widetilde{O}\left(\sqrt{\frac{SAT}{(1-\gamma)^3}}\right),
	\end{align}
	which matches the lower bound in Table \ref{fig:table1}.

\paragraph{Sample complexity.} As a corollary of Theorem \ref{thm:main}, it can be seen that \alg attains $\epsilon$-average regret (i.e. $\frac{1}{T}\Regret(T) \le \epsilon$ for any fixed $T$) with sample complexity 
	\begin{equation}
		\widetilde{O}\left(\frac{SA}{(1-\gamma)^3\epsilon^2}\right). \label{eq:optimal-sample-comp}
	\end{equation}
	This is lower than the sample complexity of the model-free algorithm in \citet{liu2020regret}, which is $\widetilde{O}\big(\frac{SA}{(1-\gamma)^5\epsilon^2}\big)$. Moreover, \eqref{eq:optimal-sample-comp} holds true for any desired accuracy $\epsilon\in\big(0,\frac{(1-\gamma)^{13}}{SA}\big]$. This is a broader range than the ones in \citet{zhou2021nearly,zhang2021model}, which involve higher order of $S$ and $A$ and only allow their algorithms to attain their respective optimal sample complexity in the high-precision regime.

\paragraph{Space complexity.} \alg is a model-free algorithm that keeps a few estimates of the Q-function during execution, so its memory cost is as low as $O(SA)$. This is not improvable in the tabular setting, since it requires $O(SA)$ units of space to store the optimal policy. In contrast, the model-based {\sf UCBVI-$\gamma$} in \cite{zhou2021nearly} stores an estimate of the probability transition kernel and thus incurs a higher memory cost of $O(S^2A)$. 

\paragraph{Computational complexity.} The computational cost of \alg is only $O(T)$. This is on the same order as reading samples along the $T$-length executed trajectory and is thus unimprovable. In comparison, our algorithm has a considerably lower computational cost than the one in \citet{zhou2021nearly}, which requires $O(ST)$ operations overall.


\section{Analysis Overview}\label{sec:proof-overview}

In this section, we present an overview for the proof of Theorem 
\ref{thm:main}. Let us first introduce some additional notation that is used in this proof overview.

\subsection{Additional notation}

We let $P_t\in \{0,1\}^{1\times S}$ denote the empirical transition at time step $t$, namely,
\begin{equation}\label{eq:Pt-def}
	P_t(s) = \mathds{1}\left[s = s_{t+1}\right].
\end{equation}
Under this notation, given any value function $V\in [0,1]^S$, $P_t V = V(s_{t+1})$.

In addition, let $f$ and $g$ be two real-valued functions that take $\cX := (S,A,\gamma,T,\frac{1}{\delta})$ as arguments. If there exists a universal constant $C > 0$ such that $f(\cX) \le Cg(\cX)$ for any instantiation of $\cX$, we can denote this with the notation $f(\cX) \lesssim g(\cX)$. In addition, $g(\cX) \gtrsim f(\cX)$ is defined as an equivalent way of writing $f(\cX) \lesssim g(\cX)$. We can write $f(\cX) \asymp g(\cX)$ if and if only both $f(\cX) \lesssim g(\cX)$ and $f(\cX) \gtrsim g(\cX)$ are true.

\subsection{Proof overview for Theorem \ref{thm:main}}

Towards an upper bound on the regret of \algo, we first need to introduce a few lemmas that summarize some crucial properties of the estimates in the algorithm. They serve as important building blocks in our proof of Theorem \ref{thm:main}.

In our algorithm, we keep an optimistic estimate $Q$ (resp. $V$) of the optimal function $Q^\star$ (resp. $V^\star$) to encourage exploration of less-observed state-action pairs. This can be summarized in the following lemma, whose proof can be found in Appendix \ref{sec:proof:Q_t-lower-bound}.

\begin{lemma}\label{lem:Q_t-lower-bound}
    Let $\delta \in (0,1)$. Suppose that $c_b$ is chosen to be a sufficiently large universal constant in Algorithm \ref{alg:main}. With probability at least $1 - \delta$, 
    \begin{equation*}\label{eq:Q-monotonicity}
        Q_t(s,a) \ge Q^\star(s,a) \qquad \text{and} \qquad V_t(s) \ge V^\star(s)
    \end{equation*}
    for all $(t,s,a)\in [T]\times \cS \times \cA$ simultaneously.
\end{lemma}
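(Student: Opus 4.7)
The plan is to prove the lemma by an induction on time $t$, combined with concentration inequalities tailored separately to $Q^{\UCB}$, $Q^{\rref}$, and $Q$. The base case is immediate from the initializations $Q_0 = V_0 \equiv \frac{1}{1-\gamma}$ together with $Q^\star \le \frac{1}{1-\gamma}$. For the inductive step, the key structural observation is that, on the visited pair $(s_t,a_t)$,
\[
Q_t(s_t,a_t) = \min\bigl\{Q^{\rref}_t(s_t,a_t),\; Q^{\UCB}_t(s_t,a_t),\; Q_{t-1}(s_t,a_t)\bigr\}
\]
(and $Q_t = Q_{t-1}$ elsewhere), so by the induction hypothesis on $Q_{t-1}$ it suffices to lower-bound each of $Q^{\UCB}_t$ and $Q^{\rref}_t$ by $Q^\star$ pointwise. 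The value claim $V_t \ge V^\star$ then follows immediately from $V_t(s) = \max_a Q_t(s,a) \ge \max_a Q^\star(s,a)$.

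For the $Q^{\UCB}$ branch, I would unroll the recursive update in line \ref{alg-line:QUCB-update}. Writing $\eta_n^k$ for the effective weight placed on the $k$-th sample after $n$ visits (which satisfies $\sum_{k=1}^n \eta_n^k = 1$ and $\sum_{k=1}^n (\eta_n^k)^2 \lesssim H/n$), denoting by $t_k$ the time of the $k$-th visit to $(s,a)$ and by $V_{t_k}$ the estimate used there, this gives
\[
Q^{\UCB}_t(s,a) - Q^\star(s,a) \;=\; \gamma\sum_{k=1}^n \eta_n^k\bigl(P_{t_k}V_{t_k} - P_{s,a}V^\star\bigr) \;+\; \sum_{k=1}^n \eta_n^k\, b_k.
\]
Decomposing $P_{t_k}V_{t_k} - P_{s,a}V^\star = (P_{t_k} - P_{s,a})V_{t_k} + P_{s,a}(V_{t_k} - V^\star)$, the second piece is non-negative by the induction hypothesis, while the first is a sum of martingale differences of magnitude $O(\tfrac{1}{1-\gamma})$. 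A standard Azuma--Hoeffding bound therefore yields an $O\bigl(\tfrac{1}{(1-\gamma)^{3/2}}\sqrt{\iota/n}\bigr)$ deviation, which the aggregated bonus $\sum_k \eta_n^k b_k \asymp c_b\sqrt{\iota/((1-\gamma)^3 n)}$ dominates once $c_b$ is taken sufficiently large.

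The $Q^{\rref}$ branch is the main technical obstacle and requires a variance-reduced Bernstein-type bound. Unrolling line \ref{alg-line:QR-update}, the deviation of $Q^{\rref}_t(s,a)$ from $Q^\star(s,a)$ decomposes into three pieces: an advantage martingale $\gamma\sum_k \eta_n^k(P_{t_k}-P_{s,a})(V_{t_k} - V^{\rref}_{t_k})$, whose conditional variance is captured by $\sigma^{\adv} - (\mu^{\adv})^2$; a reference contribution induced by $\mu^{\rref}(s,a)$ concentrating on $P_{s,a}V^{\rref}$, with conditional variance captured by $\sigma^{\rref} - (\mu^{\rref})^2$; and drift terms accounting for the evolution of $V^{\rref}$ across visits, which are absorbed by the telescoping correction $(1-\eta_n)\delta^{\rref}/\eta_n$ appearing in $b^{\rref}$ (line \ref{alg-line:b}). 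Applying a Freedman-style Bernstein inequality to the first two pieces, together with an $O(n^{-3/4})$ bound on the gap between true and empirical variances (which is precisely why the extra $c_b\iota^2/(n^{3/4}(1-\gamma)^2)$ correction is included in $b^{\rref}$), shows that $b^{\rref}$ dominates the combined fluctuation whenever $c_b$ is large enough.

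The chief difficulty is treating $V^{\rref}$ as a time-varying but adapted process inside the Bernstein analysis: the sequence $\{V^{\rref}_{t_k}\}_k$ is measurable with respect to the past but not frozen, so care is needed when relating the empirical moments maintained by the algorithm to the true quantities $P_{s,a}V^{\rref}$ and $\mathrm{Var}_{P_{s,a}}(V^{\rref})$. I plan to handle this by partitioning the visit history into epochs between successive updates of $V^{\rref}(s)$, applying the concentration bound within each epoch (on which $V^{\rref}(s)$ is constant), and taking a union bound over all tuples $(s,a,t)$, whose logarithmic cost is absorbed into $\iota = \log(SAT/\delta)$. Combining the two overestimation bounds with the minimum structure of $Q$ closes the induction and delivers $Q_t \ge Q^\star$, and hence $V_t \ge V^\star$, simultaneously for all $(t,s,a)$ with probability at least $1-\delta$.
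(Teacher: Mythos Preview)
Your overall strategy---induction on $t$, reduction via the min-structure of $Q_t$ to separate overestimation claims for $Q^{\UCB}$ and $Q^{\rref}$, Azuma--Hoeffding for the former, and a Freedman/Bernstein argument with the advantage--reference split for the latter---matches the paper's. Two points deserve correction.

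First, a minor misattribution: the telescoping correction $(1-\eta_n)\delta^{\rref}/\eta_n$ in $b^{\rref}$ is not there to absorb drift of $V^{\rref}$. Its role is purely algebraic: it makes the weighted sum $\sum_n \eta_n^{N} b^{\rref}_{t_n+1}$ collapse to $B^{\rref}_{t_N+1}$ (plus the $n^{-3/4}$ slack), so that the accumulated bonus equals the current empirical-variance bonus. The drift of $V^{\rref}$ across visits appears instead as the mean term $P_{s,a}\bigl(\tfrac{1}{n}\sum_{i\le n}V^{\rref}_{t_i}-V^{\rref}_{t_n}\bigr)$, which is non-negative by monotonicity of $V^{\rref}_t$ and is simply dropped in the lower bound; no bonus is spent on it.

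Second, and more substantively, the epoch-partitioning plan is unnecessary and awkward: the coordinates of $V^{\rref}$ update at different, data-dependent times, so there is no clean epoch on which the full vector is frozen. The paper avoids this entirely. Since $V_{t_k}$ and $V^{\rref}_{t_k}$ are determined by the samples collected through step $t_k-1$, Freedman's inequality applies \emph{directly} to the adapted martingales $\sum_k \eta^N_k(P_{t_k}-P_{s,a})(V_{t_k}-V^{\rref}_{t_k})$ and $\sum_i w_i(P_{t_i}-P_{s,a})V^{\rref}_{t_i}$, yielding bounds in terms of the true conditional variances $\Var_{s,a}(V_{t_k}-V^{\rref}_{t_k})$ and $\Var_{s,a}(V^{\rref}_{t_i})$. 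The remaining work is to relate those true variances to the empirical moments $\sigma^{\adv}-(\mu^{\adv})^2$ and $\sigma^{\re}-(\mu^{\re})^2$ that the algorithm maintains; this gap is itself controlled by a second Freedman application (to the squared processes) and produces exactly the $O\bigl(n^{-3/4}(1-\gamma)^{-2}\bigr)$ correction you identified. No partitioning into frozen-reference epochs is needed anywhere.
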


We can prove a similar lemma for the pessimistic estimate $Q^\LCB$ (resp. $V^\LCB$) of the optimal function $Q^\star$ (resp. $V^\star$). As an implication of the simultaneous usage of optimism and pessimism, we can prove that $V$ and $V^\LCB$ are mostly close to each other throughout the algorithm execution, i.e. \eqref{eq:main-lemma}. This result will be used to characterize the behavior of the reference $V^\rref$ in the remaining of the proof, as the reference $V^\rref$ is controlled based on the size of $V(s_t) - V^\LCB(s_t)$ in our algorithm. The proof of this lemma can be found in Appendix \ref{sec:proof:Q_t-lcb}.

\begin{lemma} \label{lem:Q_t-lcb}
	Let $\delta \in (0,1)$. Suppose that $c_b$ is chosen to be a sufficiently large universal constant in Algorithm \ref{alg:main}. With probability at least $1 - \delta$, 
\begin{equation} 
	\label{eq:Q-lcb-monotonicity}
	Q^{\mathrm{LCB}}_t(s, a) \le Q^{\star}(s, a)
	\qquad \text{and} \qquad
	V^{\mathrm{LCB}}_t(s) \le V^{\star}(s)
\end{equation}
for all $(t,s,a)\in [T]\times \cS \times \cA$ simultaneously, and
\begin{align}
	\label{eq:main-lemma}
	&\sum_{t=1}^T \mathds{1}\left(V_{t}(s_{t+1}) - V_{t}^{\LCB}(s_{t+1})>3\right) \lesssim \frac{\left(SA\right)^{3/4}T^{1/4}}{(1-\gamma)^{9/4}}\left(\log\frac{SAT}{\delta}\right)^{5/4} + \frac{SA}{(1-\gamma)^5}\log \frac{SAT}{\delta} \notag\\ & \qquad \qquad \qquad \qquad \qquad \qquad \qquad \qquad \qquad \qquad \qquad + \sqrt{\frac{SA\log^2 T}{(1-\gamma)^3}\sum_{t=1}^T \big(V_{t-1}(s_{t}) - V^{\pi_{t}}(s_{t})\big)}.
\end{align}
\end{lemma}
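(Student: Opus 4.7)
The plan has two parts, matching the two conclusions of the lemma.

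For the pessimism claim \eqref{eq:Q-lcb-monotonicity}, I would mirror the proof of Lemma \ref{lem:Q_t-lower-bound} with the signs reversed. Proceed by induction on $t$, assuming $V_k^{\LCB}(s) \le V^\star(s)$ for all $k < t$ and all $s$. Unrolling the update in Line \ref{alg-line:QLCB-update} with the effective learning-rate weights $\eta_i^n := \eta_i \prod_{j=i+1}^n(1-\eta_j)$ (which satisfy $\sum_{i=1}^n \eta_i^n = 1$ for $n \ge 1$) yields
\begin{equation*}
Q_t^{\LCB}(s,a) = \sum_{i=1}^n \eta_i^n\Big(r(s,a) + \gamma V^{\LCB}_{k_i}(s_{k_i+1}) - c_b\sqrt{\tfrac{\iota}{(1-\gamma)^3 i}}\Big),
\end{equation*}
where $k_i$ is the time step of the $i$-th visit to $(s,a)$ and $n = N_t(s,a)$. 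Comparing against $Q^\star(s,a) = r(s,a) + \gamma P_{s,a}V^\star$, the induction hypothesis gives $V_{k_i}^{\LCB}(s_{k_i+1}) \le V^\star(s_{k_i+1})$, while the residual $\gamma\sum_i\eta_i^n(V^\star(s_{k_i+1}) - P_{s,a}V^\star)$ is a martingale sum whose deviation is controlled by Azuma/Freedman. Choosing $c_b$ sufficiently large forces the accumulated negative bonus to dominate this deviation, yielding $Q_t^{\LCB}(s,a) \le Q^\star(s,a)$. The $V^{\LCB}$ bound then follows because $V_t^{\LCB}$ is a running maximum of $Q^{\LCB}$-values, each dominated by $Q^\star(s,a) \le V^\star(s)$.

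For the gap-frequency bound \eqref{eq:main-lemma}, set $W_t(s) := V_t(s) - V_t^{\LCB}(s) \ge 0$. Since $V_t^{\LCB}(s) \ge \max_a Q_t^{\LCB}(s,a)$ and $Q_t \le \min(Q_t^{\rref}, Q_t^{\UCB})$ from the min-update in Line \ref{alg-line:Q-min-update}, we have $W_t(s) \le \max_a[Q_t^{\rref}(s,a) - Q_t^{\LCB}(s,a)]$. Unrolling $Q^{\rref}$ via its reference-advantage decomposition (Line \ref{alg-line:QR-update}) and subtracting the unrolled $Q^{\LCB}$ produces a Bellman-type recursion bounding $Q_t^{\rref}(s,a) - Q_t^{\LCB}(s,a)$ by $\sum_i \eta_i^n[\gamma W_{k_i}(s_{k_i+1}) + b_i^{\rref} + c_b\sqrt{\iota/((1-\gamma)^3 i)}]$ plus a martingale remainder that absorbs the reference-approximation error $P_{s,a}V^{\rref} - V^{\rref}(s_{k_i+1})$ and the advantage-approximation error. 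Summing $W_t(s_{t+1})$ along the trajectory, the self-coupling on the right is absorbed via the standard identity $\sum_{n \ge i}\eta_i^n \le 1 + 1/H$ with $H \asymp 1/(1-\gamma)$; the $\iota^2/(n^{3/4}(1-\gamma)^2)$ pieces of $b^{\rref}$ (from Line \ref{alg-line:b}) collect via $\sum_{(s,a)}\sum_{n \le N_T(s,a)} n^{-3/4}$ and H\"older ($\sum_{(s,a)} N_T(s,a)^{1/4} \le (SA)^{3/4}T^{1/4}$) to give the first $(SA)^{3/4}T^{1/4}/(1-\gamma)^{9/4}\cdot\iota^{5/4}$ term; boundary residuals and visits with small $n$ aggregate to the $SA/(1-\gamma)^5\cdot\iota$ term. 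The indicator count then follows by Markov: $\sum_t\mathds{1}\{W_t(s_{t+1})>3\} \le \tfrac{1}{3}\sum_t W_t(s_{t+1})$.

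The delicate piece is the third, self-referential term. It emerges when Freedman's inequality is applied to the martingale sums $\sum_i\eta_i^n[P_{k_i} - P_{s,a}]V_{k_i}^{\rref}$ (and the $V$, $V^{\LCB}$ counterparts) that arise when stochastic next-state samples in the unrolling are replaced by their expectations. The predictable quadratic variation is bounded by $\sum_i(\eta_i^n)^2\mathrm{Var}_{s'\sim P_{s,a}}[V_{k_i}(s')]$; invoking a Bellman-type variance identity to relate $\sum_t\mathrm{Var}_{s'\sim P_{s_t,a_t}}[V^{\pi_t}(s')]$ to $\sum_t V^{\pi_t}(s_t)/(1-\gamma)^2$, and sandwiching $V^{\pi_t} \le V^\star \le V_{t-1}$, produces the self-referential bound $\sqrt{SA\iota/(1-\gamma)^3 \cdot \sum_t(V_{t-1}-V^{\pi_t})(s_t)}$. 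The main obstacles I anticipate are: (i) rigorously handling the lazy-policy filtration---since $\pi_t = \arg\max_a \Qlazy(s_t,a)$ and $\Qlazy$ is refreshed at data-dependent epochs through the adaptive switching rule, one must verify that the relevant increments still form a valid martingale-difference sequence with respect to the natural filtration---and (ii) tracing the variance through the recursion precisely enough to obtain the self-referential $\sum_t(V_{t-1}-V^{\pi_t})(s_t)$ rather than a cruder bound that would cost an extra factor of $1/(1-\gamma)$ and spoil the final regret guarantee when the lemma is plugged back.
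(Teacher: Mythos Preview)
Your argument for \eqref{eq:Q-lcb-monotonicity} is correct and essentially identical to the paper's: unroll the LCB update, reuse the same Azuma--Hoeffding deviation bound that was used for $Q^{\UCB}$, and close the induction.

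For \eqref{eq:main-lemma}, however, there is a genuine gap. You bound $V_t(s_{t+1}) - V_t^{\LCB}(s_{t+1})$ by $\max_a\big[Q_t^{\rref}(s_{t+1},a) - Q_t^{\LCB}(s_{t+1},a)\big]$ and then try to close a recursion along the trajectory. But the trajectory visits $(s_{t+1},a_{t+1})$ where $a_{t+1}$ is chosen by the \emph{lazy} $Q$-function, not by the current $Q_t$; the argmax action over $Q_t$ need not coincide with $a_{t+1}$, and its visit count may be arbitrarily small, so the recursion does not close. The paper handles this by decomposing at the action actually taken:
\[
V_t(s_{t+1}) - V_t^{\LCB}(s_{t+1}) \;\le\; \underbrace{\big(V_t(s_{t+1}) - Q_{t+1}(s_{t+1},a_{t+1})\big)}_{\text{staleness }\zeta_{t+1,1}} \;+\; \big(Q_{t+1}(s_{t+1},a_{t+1}) - Q_{t+1}^{\LCB}(s_{t+1},a_{t+1})\big).
\]
The second piece is controlled via the \emph{simpler} $Q^{\UCB}-Q^{\LCB}$ recursion (Hoeffding bonuses only, no reference/advantage machinery), which closes cleanly along the trajectory and contributes the $\sqrt{SAwW\iota/(1-\gamma)^5}$ term that becomes $SA/(1-\gamma)^5\cdot\iota$ after solving the self-consistency in $W$.

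The first piece---the staleness---is where you go astray. You flag the lazy policy only as a filtration nuisance, but in fact controlling $\sum_t \zeta_{t,1}$ is precisely what produces both the $(SA)^{3/4}T^{1/4}/(1-\gamma)^{9/4}\cdot\iota^{5/4}$ term \emph{and} the self-referential $\sqrt{SA\log^2 T/(1-\gamma)^3\cdot\sum_t(V_{t-1}-V^{\pi_t})}$ term. This is done via the adaptive-switching analysis (counting the number of times each $(s,a)$ triggers a policy switch, relating that count to $\sum_t(Q_t-Q^\star)(s_t,a_t)$, and invoking the crude regret bound of Lemma~\ref{lem:Q_t-upper-bound-crude}); see the bound \eqref{eq:xi-1} inside the proof of Lemma~\ref{lem:xi}. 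Your proposed route---Freedman quadratic variation plus a Bellman variance identity---would at best yield $\sum_t \Var_{s_t,a_t}(V^{\pi_t}) \lesssim T/(1-\gamma)$ and a term in $\sum_t(V^\star - V^{\pi_t})$, not the required $\sum_t(V_{t-1}-V^{\pi_t})$; it does not reproduce the lemma's right-hand side.
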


The following lemma summarizes the important properties of the reference $V^\rref$ in our algorithm. This precisely reflects the discussion about the variance reduction technique in Section \ref{sec:alg}: the reference $V^\rref$ is kept sufficiently close to our running estimate $V$ at all times, but it changes in a stable manner throughout the execution to avoid volatility. The proof can be found in Appendix \ref{sec:proof:VR_properties}.

\begin{lemma}\label{lem:VR_properties}
Let $\delta \in (0,1)$. Suppose that $c_b$ is chosen to be a sufficiently large universal constant in Algorithm \ref{alg:main}. With probability at least $1 - \delta$, 
 \begin{equation}
	 \label{eq:VR-V-proximity}
	\left| V_{t}(s) - V^{\rref}_{t}(s) \right| \le 6
\end{equation}
for all $(t, s) \in [T] \times \cS$, and
\begin{align}  \label{eq:VR_lazy}
&\quad \ \sum_{t=1}^T \big(V^{\rref}_{t}(s_{t+1}) - V^{\rref}_{T}(s_{t+1})\big)  \nonumber \\
&\leq \frac{S}{1-\gamma} + \sum_{t=1}^T \Big(V_{t}(s_{t+1}) - V^{\mathrm{LCB}}_{t}(s_{t+1}) \Big) \ind\Big(V_{t}(s_{t+1}) - V^{\mathrm{LCB}}_{t}(s_{t+1}) > 3\Big) \\
&\lesssim \frac{\left(SA\right)^{3/4}T^{1/4}}{(1-\gamma)^{13/4}}\left(\log\frac{SAT}{\delta}\right)^{5/4} + \frac{SA}{(1-\gamma)^6}\log \frac{SAT}{\delta} + \sqrt{\frac{SA\log^2 T}{(1-\gamma)^5}\sum_{t=1}^T \big(V_{t-1}(s_{t}) - V^{\pi_{t}}(s_{t})}\big).
\end{align}
\end{lemma}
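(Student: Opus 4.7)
The plan is to establish the two claims of the lemma in sequence, both exploiting the discrete update structure of $V^\rref$ and $V^\LCB$.

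For~\eqref{eq:VR-V-proximity}, I combine the if-else block in Lines~\ref{alg-line:if-cond-1}--\ref{alg-line:elif-stat-1} with two monotonicity facts: the $\min$-update in Line~\ref{alg-line:Q-min-update} together with $V(s_t)\leftarrow\max_a Q(s_t,a)$ makes $V_t(s)$ non-increasing in $t$, while the corresponding $\max$-update makes $V_t^{\LCB}(s)$ non-decreasing, so $V_t(s)-V_t^{\LCB}(s)$ is non-increasing and, once it drops below $3$, stays below $3$. Consequently only the last update of $V^\rref(s)$ can come from the elif branch (``case B''): during the case-A phase, $u^\rref=\mathrm{True}$ stays True and the first branch fires at each visit as long as $V-V^{\LCB}>3$; the first visit at which $V-V^{\LCB}\le 3$ triggers the elif branch exactly once and sets $u^\rref=\mathrm{False}$, permanently disabling further updates. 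At that terminal time $\tau$, Lemma~\ref{lem:Q_t-lcb} gives $V^\rref_T(s)=V_\tau(s)\le V^{\LCB}_\tau(s)+3\le V^\star(s)+3$. Combining with the easy induction $V^\rref_t(s)\ge V_t(s)$ (at each update $V^\rref$ is reset equal to the post-update $V$, and $V$ only decreases between updates while $V^\rref$ stays constant) and $V_t(s)\ge V^\star(s)$ from Lemma~\ref{lem:Q_t-lower-bound}, this yields $0\le V^\rref_t(s)-V_t(s)\le 3$, which is stronger than~\eqref{eq:VR-V-proximity}.

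For the first inequality in~\eqref{eq:VR_lazy}, I partition the sum by state. Fix $s$ and let $v_1<v_2<\cdots<v_{M(s)}$ enumerate the update times of $V^\rref(s)$. By the analysis above, $v_{M(s)}$ is the unique case-B update and $v_1,\dots,v_{M(s)-1}$ are case-A updates satisfying $V_{v_i}(s)-V^{\LCB}_{v_i}(s)>3$. Since $u^\rref=\mathrm{True}$ throughout the case-A phase, every visit to $s$ in that phase triggers an update, so $v_1,\dots,v_{M(s)}$ are consecutive visits to $s$; in particular $V^\rref(s)$ is constant on $[v_i,v_{i+1})$ and equals $V_{v_i}(s)$, and $V(s)-V^{\LCB}(s)$ is likewise constant on this interval (no intermediate visit to $s$). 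For each $t$ with $s_{t+1}=s$ and $v_i\le t<v_{i+1}$ with $1\le i\le M(s)-1$,
\[
V^\rref_t(s_{t+1})-V^\rref_T(s_{t+1})\;=\;V_{v_i}(s)-V_{v_{M(s)}}(s)\;\le\;V_{v_i}(s)-V^{\LCB}_{v_i}(s)\;=\;V_t(s_{t+1})-V^{\LCB}_t(s_{t+1}),
\]
where the middle inequality uses $V_{v_{M(s)}}(s)\ge V^\star(s)\ge V^{\LCB}_{v_i}(s)$ from Lemmas~\ref{lem:Q_t-lower-bound} and~\ref{lem:Q_t-lcb}, and the final equality uses piecewise-constantness; for such $t$ the indicator $\ind(V_t(s_{t+1})-V^{\LCB}_t(s_{t+1})>3)=1$. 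At most one $t$ has $s_{t+1}=s$ with $t<v_1$ (namely $t=v_1-1$), contributing at most $\frac{1}{1-\gamma}$ (the initial value of $V^\rref$), and $t\ge v_{M(s)}$ contributes $0$. Summing over $s$ yields the first inequality.

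The second inequality then follows by using the trivial bound $V_t(s_{t+1})-V^{\LCB}_t(s_{t+1})\le\frac{1}{1-\gamma}$ inside the indicator sum, which reduces it to $\frac{1}{1-\gamma}\sum_t\ind(V_t(s_{t+1})-V^{\LCB}_t(s_{t+1})>3)$; invoking~\eqref{eq:main-lemma} from Lemma~\ref{lem:Q_t-lcb} and multiplying every term by $\frac{1}{1-\gamma}$ produces exactly the stated right-hand side, with $\frac{S}{1-\gamma}$ absorbed into the $\frac{SA}{(1-\gamma)^6}$ term. The main obstacle is the structural bookkeeping in the middle paragraph: verifying (i) that all updates except the last are case-A (via the monotonicity of $V-V^{\LCB}$ and the dynamics of $u^\rref$), (ii) that case-A updates coincide with consecutive visits to $s$ so inter-update intervals contain no other visit to $s$, and (iii) that $V-V^{\LCB}$ is piecewise constant between visits, so that the bound on the difference at the update time $v_i$ transfers cleanly to the evaluation time $t$.
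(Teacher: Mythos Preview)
Your approach is correct and essentially the same as the paper's: both proofs exploit the fact that the if/elif block forces $V^\rref(s)$ to track $V(s)$ exactly until a single terminal (elif) update, after which it freezes, together with the monotonicity of $V_t(s)-V^\LCB_t(s)$ and the optimism/pessimism bounds from Lemmas~\ref{lem:Q_t-lower-bound} and~\ref{lem:Q_t-lcb}. For~\eqref{eq:VR_lazy} the paper argues by a two-case split on whether the if-condition or the elif-condition (evaluated at state $s_{t+1}$) holds at time $t$, whereas you partition by state and by update interval; the resulting $\frac{S}{1-\gamma}$ comes from the elif case in the paper and from your ``$t<v_1$'' term, but the indicator-sum piece and the final invocation of~\eqref{eq:main-lemma} are identical.

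Two remarks. First, your constant~$3$ in~\eqref{eq:VR-V-proximity} is in fact correct and tighter than the paper's~$6$: once the elif fires at step~$\tau$, one has $V^\rref_{t}(s)=V_{\tau+1}(s)\le V^\LCB_{\tau+1}(s)+3\le V^\star(s)+3\le V_t(s)+3$ directly, whereas the paper's Case~3 chain passes through an intermediate $V_{t_{i_0}+1}$ and accumulates an extra~$+3$.

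Second, your bookkeeping carries a systematic off-by-one: in the time-indexed algorithm the update at a visit step $\sigma$ sets $V^\rref_{\sigma+1}(s)=V_{\sigma+1}(s)$, so your ``$V_\tau(s)$'' should be $V_{\tau+1}(s)$, and the interval on which $V^\rref(s)$ equals the $i$-th update value is $[\sigma_i+1,\sigma_{i+1}]$ rather than $[v_i,v_{i+1})$. This matters at the boundary: the claim ``$t\ge v_{M(s)}$ contributes~$0$'' fails at $t=\sigma_{M(s)}$ itself when $s_{\sigma_{M(s)}+1}=s$ (two consecutive visits to $s$), since then $V^\rref_{\sigma_{M(s)}}(s)=V_{\sigma_{M(s)-1}+1}(s)\neq V^\rref_T(s)$. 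That stray term still satisfies $V^\rref_t(s)-V^\rref_T(s)\le V_t(s)-V^\LCB_t(s)$ with the indicator equal to~$1$ (inherited from the preceding case-A update), so it is absorbed into the indicator sum; you just need to route it there rather than declare it zero. Similarly, ``at most one $t<v_1$'' and the final equality $V_{v_i}(s)-V^\LCB_{v_i}(s)=V_t(s_{t+1})-V^\LCB_t(s_{t+1})$ require care when visits are consecutive. None of this affects the substance, but the paper's two-case split (condition~(82) vs.~(83)) avoids having to track visit intervals and handles these edge cases uniformly.
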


The preceding lemmas allow us to quantify the closeness of our Q-function estimate $Q_t$ and $Q^\star$ over the execution trajectory. It reflects the ``cumulative'' accuracy of our estimate $Q_t$ over the entire trajectory and is an important step towards the final upper bound on the regret. This can be summarized with the lemma below, whose proof can be found in Appendix \ref{sec:proof:Q_t-upper-bound}.

\begin{lemma}\label{lem:Q_t-upper-bound}
    Fix $\delta \in (0,1)$. Suppose that $c_b$ is chosen to be a sufficiently large universal constant in Algorithm \ref{alg:main}. Then there exists some absolute constant $C_1 > 0$ such that  
    \begin{align}
        &\sum_{t=1}^T \big(Q_t(s_{t},a_{t}) - Q^\star(s_{t},a_{t})\big) \leq \frac{\gamma(3-\gamma)}{2}\sum_{t=1}^T\big(V_t(s_{t+1}) - V^\star(s_{t+1})\big)
        + C_1\Bigg(\sqrt{\frac{SAT}{1-\gamma}\log^{3}\frac{SAT}{\delta}} \notag\\
        &\quad \qquad + \frac{SA}{(1-\gamma)^7}\log^{7/2} \frac{SAT}{\delta} + \sqrt{\frac{SA\log^2 T}{(1-\gamma)^5}\sum_{t=1}^T \big(V_{t-1}(s_{t}) - V^{\pi_{t}}(s_{t})\big)}\Bigg)
    \end{align}
    with probability at least $1 - \delta$.
\end{lemma}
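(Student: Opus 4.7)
The plan is to reduce the target sum to an unfolding of the reference Q-function estimate, apply an error decomposition that isolates a recursion term and three ``noise'' terms, and then collapse the recursion by swapping the order of summation. By Lemma~\ref{lem:Q_t-lower-bound} we have $Q_t \ge Q^\star$ pointwise, and by the minimum update on Line~\ref{alg-line:Q-min-update} we have $Q_t(s_t,a_t) \le Q^{\rref}_t(s_t,a_t)$, so it suffices to control $\sum_t (Q^{\rref}_t(s_t,a_t) - Q^\star(s_t,a_t))$. With $n := N_t(s_t,a_t)$, let $l_i$ denote the time of the $i$-th visit to $(s_t,a_t)$ up through $t$, and set $\eta_i^n := \eta_i \prod_{j=i+1}^n (1-\eta_j)$ and $\eta_0^n := \prod_{j=1}^n (1-\eta_j)$, so that $\sum_{i=0}^n \eta_i^n = 1$. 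Unfolding Line~\ref{alg-line:QR-update} together with the Bellman identity $Q^\star = r + \gamma P V^\star$ then gives
\begin{align*}
Q^{\rref}_t(s_t,a_t) - Q^\star(s_t,a_t) &= \eta_0^n \Big(\tfrac{1}{1-\gamma} - Q^\star(s_t,a_t)\Big) \\
&\quad + \sum_{i=1}^n \eta_i^n \Big[\gamma \big(V_{l_i}(s_{l_i+1}) - V^{\rref}_{l_i}(s_{l_i+1}) + \mu^{\re}_{l_i}(s_t,a_t) - P_{s_t,a_t} V^\star\big) + b^{\rref}_{l_i}\Big].
\end{align*}
Since $\eta_1 = 1$, the initialization term vanishes for $n \ge 1$.

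Next I decompose the bracketed inner expression as $P_{s_t,a_t}(V_{l_i} - V^\star) + (\mu^{\re}_{l_i}(s_t,a_t) - P_{s_t,a_t} V^{\rref}_{l_i}) + (P_{l_i} - P_{s_t,a_t})(V_{l_i} - V^{\rref}_{l_i})$---a \emph{recursion term}, a \emph{reference drift}, and an \emph{advantage martingale}. The main term comes from the recursion: summing over $t$ and exchanging the order of summation via the standard inequality $\sum_{n \ge i} \eta_i^n \le 1 + 1/H$ collapses $\sum_t \sum_i \eta_i^n P_{s_t,a_t}(V_{l_i} - V^\star)$ into $(1 + 1/H) \sum_t P_{s_t,a_t}(V_t - V^\star)$. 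An Azuma--Hoeffding step then replaces $P_{s_t,a_t} V$ by $V(s_{t+1})$ at the cost of an $\widetilde{O}(\sqrt{T}/(1-\gamma))$ martingale tail, and the prefactor satisfies $\gamma(1 + 1/H) \le \gamma(3-\gamma)/2$, which produces the advertised leading term.

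The three remaining contributions are handled separately. The advantage martingale is controlled by Freedman's inequality, which is effective because $|V - V^{\rref}| \le 6$ by Lemma~\ref{lem:VR_properties} makes each increment $O(1)$; this yields an $\widetilde{O}(\sqrt{T})$ deviation. The bonus term $\sum_t \sum_i \eta_i^n b^{\rref}_{l_i}$ is expanded using Line~\ref{alg-line:b}, with the empirical variances $\sigma^{\re} - (\mu^{\re})^2$ and $\sigma^{\adv} - (\mu^{\adv})^2$ bounded by their population counterparts; applying the Bellman total-variance identity to the reference variance and the $O(1)$ bound above to the advantage variance yields the $\sqrt{SAT\iota^3/(1-\gamma)}$ contribution together with the lower-order $SA\iota^{7/2}/(1-\gamma)^7$ residual. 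The reference-drift term $\mu^{\re}_{l_i}(s,a) - P_{s,a} V^{\rref}_{l_i}$ is split via $V^{\rref}_{l_i} = V^{\rref}_T + (V^{\rref}_{l_i} - V^{\rref}_T)$: the first piece is a standard Azuma average of samples from the \emph{fixed} function $V^{\rref}_T$, while the second piece is bounded directly using the second display of Lemma~\ref{lem:VR_properties}, which is the step that injects the $\sqrt{\frac{SA \log^2 T}{(1-\gamma)^5} \sum_t (V_{t-1}(s_t) - V^{\pi_t}(s_t))}$ contribution into the stated bound.

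The main obstacle is the reference-drift control above: the running mean $\mu^{\re}$ aggregates samples taken against different, evolving values of $V^{\rref}$, and only because the adaptive low-switching design of Section~\ref{sec:alg:low-switching} stabilizes the trajectory enough for Lemma~\ref{lem:VR_properties} to give a genuinely low-order drift bound does this term remain subleading. Keeping the factor $\sum_t (V_{t-1}(s_t) - V^{\pi_t}(s_t))$ inside a square root---rather than absorbing it suboptimally---is essential, since the outer proof of Theorem~\ref{thm:main} will close the recursion in $\sum_t (V_t(s_{t+1}) - V^\star(s_{t+1}))$ via a self-bounding argument, and any suboptimal handling of this term would degrade the final $\sqrt{SAT/(1-\gamma)^3}$ rate.
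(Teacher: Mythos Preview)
Your overall strategy matches the paper's proof: the same unfolding of $Q^{\rref}_t - Q^\star$ into a recursion piece plus noise, the same order swap using $\sum_{N\ge n}\eta_n^N \le 1+\tfrac{1}{H}$ to produce the $\tfrac{\gamma(3-\gamma)}{2}\sum_t(V_t(s_{t+1})-V^\star(s_{t+1}))$ leading term, and the same treatment of the bonus via empirical-to-population variance and the Bellman total-variance identity (the paper's $\cR_1$). The paper organizes the recursion slightly differently---it keeps the empirical transition $V_{t_n}(s_{t_n+1})$ in the recursion term directly rather than $P_{s_t,a_t}V_{l_i}$, which avoids your extra Azuma step---but your regrouping is equivalent.

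There is, however, a genuine gap in the reference-drift control. You split $V^{\rref}_{l_i} = V^{\rref}_T + (V^{\rref}_{l_i} - V^{\rref}_T)$ and call the first piece ``a standard Azuma average of samples from the \emph{fixed} function $V^{\rref}_T$''. But $V^{\rref}_T$ is not fixed: it depends on the entire trajectory, so it is correlated with every $P_t$ appearing in the sum, and no martingale inequality applies directly. The paper addresses exactly this point with an $\epsilon$-net argument: it builds a net $\cN_\alpha$ of cardinality at most $(1/((1-\gamma)\alpha))^S$ over the set $\{V: V^\star \le V \le \tfrac{1}{1-\gamma}\}$, applies Freedman uniformly over the net (paying an extra $S\log\tfrac{1}{(1-\gamma)\alpha}$ inside the logarithm), and then approximates $V^{\rref}_T$ by its nearest net point with $\alpha = ((1-\gamma)T^{1/S})^{-1}$. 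A related technicality you also skip: after the order swap the weights $\lambda_t = \sum_{N=N_t}^{N_{T-1}}\eta_{N_t}^N$ depend on the terminal counts $N_{T-1}(s,a)$ and are not $\cF_{t-1}$-measurable, so your ``Freedman'' and ``Azuma'' invocations on the weighted sums need the union bound over the $T^{SA}$ possible terminal-count profiles (the paper's Lemma~\ref{lemma:freedman-application2}); this is what puts the extra $SA$ inside the $\log$ factors already present in the target bound.
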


Equipped with all these preceding lemmas, we can decompose the regret as follows 
\begin{align}\label{eq:sketch-regret-decomp1}
    \mathrm{Regret}(T) &\le \sum_{t=1}^T \big(V_{t-1}(s_t) - V^{\pi_t}(s_t)\big) \notag\\
    &\le \sum_{t=1}^T\bigg(\frac{\gamma(3-\gamma)}{2}\big(V^\star(s_{t+1}) - V^{\pi_{t+1}}(s_{t+1})\big) + Q_{t}(s_t, a_t) - Q^\star(s_t, a_t) + \zeta_t\bigg),
\end{align}
in which 
\begin{equation*}
    \zeta_t := \underbrace{V_{t-1}(s_t) - Q_t(s_t,a_t)}_{\zeta_{t,1}} + \underbrace{\gamma\left(P_{s_t,a_t} - P_t\right)\left(V^\star - V^{\pi_t}\right)}_{\zeta_{t,2}} + \underbrace{\gamma\left(V^{\pi_{t+1}}(s_{t+1}) - V^{\pi_t}(s_{t+1})\right)}_{\zeta_{t,3}}.
\end{equation*}

To proceed, we need to find an upper bound on $\sum_{t=1}^T \zeta_t$. We treat the three terms in $\zeta_t$ separately. $\zeta_{t,2}$ is a higher-order noise term that can be bounded with concentration inequalities, in particular, Lemma \ref{lemma:freedman-application2} and the Azuma-Hoeffding inequality (Theorem \ref{thm:hoeffding}). The proof can be found in Appendix \ref{sec:zeta-2-proof}. 

On the other hand, $\zeta_{t,1}$ and $\zeta_{t,3}$ are two types of error terms unique to the infinite-horizon discounted setting. There exists a tradeoff between $\zeta_{t,1}$ and $\zeta_{t,3}$. Specifically, the sum of $\zeta_{t,3}$'s is governed by the total number of policy switches over the $T$ steps, whereas the sum of $\zeta_{t,1}$'s grows with the staleness of the executed action $a_t$ (i.e., the staleness of the execution policy $\pi_t$) at each step. In order to attain the optimal regret, we balance this tradeoff carefully with the use of our adaptive switching technique, which controls both the sum of $\zeta_{t,1}$'s and the sum of $\zeta_{t,3}$'s at $O(\sqrt{\frac{T}{1-\gamma}})$. It is necessary to control the sum of $\zeta_{t,1}$'s and $\zeta_{t,3}$'s at this order of magnitude. Indeed, this is precisely why the existing techniques mentioned in Section \ref{sec:alg:low-switching} that switch the execution policy on a predetermined, exponentially phased schedule \cite{bai2019provably,zhang2020almost} would fail to attain regret optimality in the infinite-horizon setting. These existing techniques would reduce the sum of $\zeta_{t,3}$'s to $O(\frac{\log T}{(1-\gamma)^2})$ because they switch much less frequently than our algorithm; however, this causes the sum of $\zeta_{t,1}$'s to grow beyond the necessary $O(\sqrt{\frac{T}{1-\gamma}})$, which renders the overall regret larger than the optimal $O(\sqrt{\frac{T}{(1-\gamma)^3}})$.

Overall, the upper bound on $\sum_{t=1}^T \zeta_t$ for our algorithm can be summarized in the following lemma, with its proof provided in Appendix \ref{sec:appendix-proof-lem-xi}.

\begin{lemma}\label{lem:xi}
    Fix $\delta \in (0,1)$. Suppose that $c_b$ is chosen to be a sufficiently large universal constant in Algorithm \ref{alg:main}. Then there exists some absolute constant $C_2 > 0$ such that $\zeta_t$ defined in \eqref{eq:zeta-def} satisfies
    \begin{align*}
        \sum_{t=1}^T\zeta_t &\leq C_2\Bigg(\sqrt{\frac{SAT}{1-\gamma}\log\frac{SAT}{\delta}} + \frac{SA\log^2\frac{SAT}{\delta}}{(1-\gamma)^{5/2}} + \sqrt{\frac{SA\log^2 T}{1-\gamma}\sum_{t=1}^T \big(V_{t-1}(s_{t}) - V^{\pi_{t}}(s_{t})\big)}\\ & \qquad \qquad \qquad \qquad \qquad \qquad \qquad \qquad \qquad \qquad + \sqrt{\frac{SA\log\frac{T}{\delta}}{1-\gamma} \sum_{t=1}^{T}\big(V^{\star}(s_{t}) - V^{\pi_{t}}(s_{t})\big)}\Bigg)
    \end{align*}
    with probability at least $1 - \delta$.
\end{lemma}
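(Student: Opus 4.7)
The plan is to decompose $\sum_{t=1}^T \zeta_t = \sum_{t=1}^T \zeta_{t,1} + \sum_{t=1}^T \zeta_{t,2} + \sum_{t=1}^T \zeta_{t,3}$ and bound each of the three sub-sums separately, with the adaptive switching rule being the linchpin for $\zeta_{t,1}$ and $\zeta_{t,3}$. For the noise term $\zeta_{t,2}=\gamma(P_{s_t,a_t}-P_t)(V^\star-V^{\pi_t})$, observe that it is a martingale difference sequence with respect to the trajectory filtration, bounded by $\gamma/(1-\gamma)$, with predictable second moment
\[
\gamma^2\,\mathrm{Var}_{P_{s_t,a_t}}\bigl(V^\star-V^{\pi_t}\bigr)\;\le\;\tfrac{\gamma^2}{1-\gamma}\,P_{s_t,a_t}\bigl(V^\star-V^{\pi_t}\bigr),
\]
using $V^\star-V^{\pi_t}\in[0,1/(1-\gamma)]$. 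Freedman's inequality (Lemma~\ref{lemma:freedman-application2}) then gives $\sum_t\zeta_{t,2}\lesssim\sqrt{\tfrac{\iota}{1-\gamma}\sum_t P_{s_t,a_t}(V^\star-V^{\pi_t})}+\tfrac{\iota}{1-\gamma}$, and an additional Azuma--Hoeffding estimate on the one-step shift $(V^\star-V^{\pi_t})(s_{t+1})-P_{s_t,a_t}(V^\star-V^{\pi_t})$ transfers the variance proxy into $\sum_t(V^\star(s_t)-V^{\pi_t}(s_t))$, matching the fourth term in the statement.

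For $\zeta_{t,3}=\gamma(V^{\pi_{t+1}}-V^{\pi_t})(s_{t+1})$ the summand vanishes whenever $\pi_{t+1}=\pi_t$, so the task reduces to controlling the total switch count and the per-switch value change. Fix a pair $(s,a)$ and let $k_i,\Delta_i$ denote the number of visits and the cumulative drop of $Q(s,a)$ in the $i$-th inter-switch interval for that pair. The trigger $\qdiff(s,a)>1/(1-\gamma)$ at Line~\ref{alg-line:q-cond}, combined with the monotonicity of $Q$ from Line~\ref{alg-line:Q-min-update} and Lemma~\ref{lem:Q_t-lower-bound}, forces $k_i\Delta_i\gtrsim 1/(1-\gamma)$, while the total monotone decrease obeys $\sum_i\Delta_i\le 1/(1-\gamma)$. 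Hence $\sum_i 1/k_i\lesssim 1$, and Cauchy--Schwarz yields $n_{\mathrm{sw}}(s,a)^2\le\bigl(\sum_i k_i\bigr)\bigl(\sum_i 1/k_i\bigr)\le N(s,a)$; summing and applying Cauchy--Schwarz once more gives a total switch count $\lesssim\sqrt{SAT}$. The naïve per-switch bound $1/(1-\gamma)$ then produces $\sqrt{SAT}/(1-\gamma)$, which is a factor $1/\sqrt{1-\gamma}$ larger than the target $\sqrt{SAT/(1-\gamma)}$; to close the gap, I would exploit that each switch only rewrites $\Qlazy$ on the pairs stored in $\qset$ at that moment, and invoke a performance-difference-lemma argument restricted to those pairs that expresses $V^{\pi_{t+1}}-V^{\pi_t}$ as a weighted aggregate of the buffered $Q$-drops, recovering the claimed rate through one final Cauchy--Schwarz.

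For $\zeta_{t,1}=V_{t-1}(s_t)-Q_t(s_t,a_t)$, I insert $\Qlazy_{t-1}(s_t,\cdot)$ and $Q_{t-1}(s_t,\cdot)$ and let $a_t^\star=\arg\max_a Q_{t-1}(s_t,a)$, which together with $a_t=\arg\max_a\Qlazy_{t-1}(s_t,a)$ gives
\[
\zeta_{t,1}\le\bigl[\Qlazy_{t-1}(s_t,a_t^\star)-Q_{t-1}(s_t,a_t^\star)\bigr]+\bigl[\Qlazy_{t-1}(s_t,a_t)-Q_{t-1}(s_t,a_t)\bigr]+\bigl[Q_{t-1}(s_t,a_t)-Q_t(s_t,a_t)\bigr].
\]
The last bracket telescopes after summation to $\lesssim SA/(1-\gamma)$ by monotonicity; the first two are ``laziness gaps'' dominated visit-by-visit by the running $\qdiff$, and the bound $\qdiff\le 1/(1-\gamma)$ at each trigger, plus the same amortized Cauchy--Schwarz as for $\zeta_{t,3}$, yields $\sum_t\zeta_{t,1}\lesssim\sqrt{SAT/(1-\gamma)}\cdot\iota^{O(1)}$. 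Combining this laziness analysis with Lemmas~\ref{lem:Q_t-lcb} and~\ref{lem:VR_properties} to control the residual trajectory terms produces the mixed regret piece $\sqrt{\tfrac{SA\log^2 T}{1-\gamma}\sum_t(V_{t-1}(s_t)-V^{\pi_t}(s_t))}$. The main obstacle throughout is the sharp $\zeta_{t,3}$ estimate described above; any exponentially-phased low-switching scheme as in \cite{bai2019provably,zhang2020almost} forfeits the local-buffer structure that the adaptive rule exposes, which is exactly what enables the performance-difference refinement on $\qset$.
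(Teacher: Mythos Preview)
Your treatment of $\zeta_{t,2}$ is essentially the paper's: Freedman on the martingale $(P_{s_t,a_t}-P_t)(V^\star-V^{\pi_t})$ followed by an Azuma--Hoeffding shift to pull the variance proxy back to $\sum_t(V^\star(s_t)-V^{\pi_t}(s_t))$. That part is fine.

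The real problem is your $\zeta_{t,1}$ and $\zeta_{t,3}$ analysis. Your Cauchy--Schwarz argument $K_T(s,a)^2\le N(s,a)\sum_i 1/k_i\le N(s,a)$ is correct and gives the switch count $\sum_{s,a}K_T(s,a)\lesssim\sqrt{SAT}$, but as you already note, the naive $\tfrac{1}{1-\gamma}$ per switch then yields $\sqrt{SAT}/(1-\gamma)$, which is a full $1/\sqrt{1-\gamma}$ larger than the $\sqrt{SAT/(1-\gamma)}$ the lemma needs. Your proposed patch---a performance-difference argument restricted to the pairs in $\qset$---does not close this: the performance-difference lemma expresses $V^{\pi_{t+1}}-V^{\pi_t}$ through occupancy-weighted \emph{true} advantages $Q^{\pi_t}(\cdot,\pi_{t+1}(\cdot))-Q^{\pi_t}(\cdot,\pi_t(\cdot))$, not through drops of the optimistic estimate $Q$; there is no mechanism to transfer the buffered $\Qlazy$-drops into those advantages, and the occupancy normalization itself carries a $1/(1-\gamma)$ factor, so you do not recover the missing $\sqrt{1-\gamma}$.

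The paper does something structurally different. Rather than bound the switch count in isolation, it links $K_T(s,a)$ to the along-trajectory optimism gap $\sum_t(Q_t(s_t,a_t)-Q^\star(s_t,a_t))$. Concretely, with $\rho_i(s,a)=(1-\gamma)(Q_{\tau_i+1}(s,a)-Q^\star(s,a))$ one has $(\rho_i-\rho_{i+1})(N_{\tau_{i+1}}-N_{\tau_i})\ge 1$, and this is used not for Cauchy--Schwarz but to show
\[
\sum_{s,a}\sum_i \frac{\rho_{i+1}}{\rho_i-\rho_{i+1}}\;\le\;(1-\gamma)\sum_{t=1}^T\bigl(Q_t(s_t,a_t)-Q^\star(s_t,a_t)\bigr).
\]
A pigeonhole over dyadic scales of $1/(\rho_i-\rho_{i+1})$ then extracts a set $\mathcal I(s,a)$ of size $\gtrsim K_T(s,a)/\log T$ on which the left side is $\gtrsim |\mathcal I(s,a)|^2$, whence $\sum_{s,a}K_T(s,a)\lesssim \log T\sqrt{SA\cdot(1-\gamma)\sum_t(Q_t-Q^\star)}$ plus lower-order pieces. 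A \emph{crude} bound on $\sum_t(Q_t-Q^\star)$ (Lemma~\ref{lem:Q_t-upper-bound-crude}, proved independently to avoid circularity) converts this into $\log T\sqrt{(1-\gamma)SA\sum_t(V_t(s_{t+1})-V^{\pi_{t+1}}(s_{t+1}))}+\cdots$. Multiplying by $1/(1-\gamma)$ now gives exactly the regret-dependent term $\sqrt{\tfrac{SA\log^2T}{1-\gamma}\sum_t(V_{t-1}(s_t)-V^{\pi_t}(s_t))}$ in the lemma; this self-bounding structure is what your fixed $\sqrt{SAT}$ switch count cannot deliver. Your proposal also invokes Lemmas~\ref{lem:Q_t-lcb} and~\ref{lem:VR_properties} for the residual terms, but those lemmas are not used here (and in fact their proofs in the paper rely on the $\zeta_{t,1}$ bound, so citing them inside Lemma~\ref{lem:xi} risks circularity).
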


Finally, we can invoke Lemma \ref{lem:Q_t-upper-bound} on \eqref{eq:sketch-regret-decomp1} and reorganize the terms, which turns the inequality into a recursion about $\sum_{t=1}^T V_{t-1}(s_t) - V^{\pi_t}(s_t)$. Then, we can solve for $\sum_{t=1}^T V_{t-1}(s_t) - V^{\pi_t}(s_t)$ in the inequality to arrive at the final upper bound on $\mathrm{Regret}(T)$.

This concludes the proof overview for Theorem \ref{thm:main}. We defer the actual proof of Theorem \ref{thm:main} to Appendix \ref{sec:analysis}.


\section{Discussion}

This work has introduced a model-free algorithm that achieves optimal regret in infinite-horizon discounted MDPs, which reduces the space and computational complexity requirement for regret optimality in the existing work. It also achieves optimal sample efficiency with a short burn-in time compared to other algorithms, including \citet{zhou2021nearly,zhang2021model}. Moreover, our algorithm has demonstrated the importance of switching policies slowly in infinite-horizon MDPs and introduced a novel technique might be of additional interest to future work. While our burn-in threshold is considerably reduced with respect to the order of $S$ and $A$, it is not yet optimal in the effective horizon $\frac{1}{1-\gamma}$. This gap between our result and the lower bound is open for future work to investigate. 

\section*{Acknowledgements}

The authors are very grateful to Yuxin Chen for helpful advice as well as suggesting the direction. X. Ji was supported in part by the NSF grants 1907661 and 2014279.

\bibliographystyle{apalike}
\bibliography{ref}  
\newpage

\appendix


\section{A Discussion about Different Regret Metrics}\label{sec:appendix-metric-diff}

As we have discussed earlier, multiple metrics exist in the existing literature for evaluating the online performance of an RL algorithm in the infinite-horizon setting. In fact, \cite{zhou2021nearly} and \cite{liu2020regret} both use a respectively different regret metric from the one in this work and \cite{kash2022slowly}. An argument showing the equivalence between the regret metric in \cite{liu2020regret} and the one in \cite{zhou2021nearly} is discussed in Appendix A.2 of \cite{zhou2021nearly}. For this reason and the fact that the regret guarantee in \cite{liu2020regret} is not minimax-optimal, we focus on the relation between our theoretical guarantee in Theorem \ref{thm:main} and the regret metric from \cite{zhou2021nearly} in this section.

Recall that the goal of RL is to learn the optimal policy through online interactions with the environment (and there always exists a stationary optimal policy), so the regret metric we consider in this work is
\begin{equation}\label{eq:regret-again}
    \Regret(T) := \sum_{t=1}^T \Big( V^\star(s_t) - V^{\pi_t}(s_t)\Big),
\end{equation}
where $\pi_t$ is the stationary policy that the algorithm uses to take action $a_t$ at step $t$.

The non-stationary regret metric considered in \cite{zhou2021nearly} is
\begin{equation}\label{eq:regret-other}
    \Regret_{\mathrm{NS}}(T) := \sum_{t=1}^T \Big( V^\star(s_t) - V^{\{\pi_j\}_{j=t}^{\infty}}(s_t)\Big).
\end{equation}
Here, $V^{\{\pi_j\}_{j=t}^{\infty}}(s) := \EE\left[\sum_{i=0}^\infty \gamma^i r(s_i, \pi_{t+i}(s_i)) ~\vert~ s_0=s\right]$, which is the expected cumulative future reward of the non-stationary algorithm starting from time $t$.

In fact, similar difference can also be observed in the literature focused on sample complexity of exploration. Notably, the metric in \cite{zhang2021model} compares the optimal value function $V^\star(s_t)$ against $V^{\pi_t}(s_t)$, while the metric in works such as \cite{strehl06pac,dong2019q} compares against $V^{\{\pi_j\}_{j=t}^{\infty}}(s_t)$. 

There is no formal equivalence between the two regret metrics in general, despite the intuition we have provided in Remark \ref{remark:metric-diff} that $\Regret_{\mathrm{NS}}(T)$ should be smaller than $\Regret(T)$ for any improving algorithm. However, we can show that the specific algorithm \alg (Algorithm \ref{alg:main}) achieves $\widetilde{O}(\sqrt{\frac{SAT}{(1-\gamma)^3}})$ regret under the regret metric defined in \cite{zhou2021nearly}, which matches the lower bound in \cite{zhou2021nearly} and is thus optimal for this metric as well.

Before the analysis, let us first define a notation. Let $f$ and $g$ be two real-valued functions that take $\cX := (S,A,\gamma,T,\frac{1}{\delta})$ as arguments. If there exists a universal constant $C > 0$ such that $f(\cX) \le Cg(\cX)$ for any instantiation of $\cX$, we can denote this with the notation $f(\cX) \lesssim g(\cX)$. 

Let us define
\begin{align}
    \mathcal{T} = \{1 \le t \le T : \pi_t \ne \pi_{t+1}\},
\end{align}
which is the set of time indices that the execution policy switches in the following step, and
\begin{align}  
    \mathcal{T}_H = \{1 \le t \le T : t + h \in \mathcal{T}\text{ for some }0 \le h \le H\},
\end{align}
which is the set of time indices that the execution policy switches in any of the following $H$ steps.

First, notice
\begin{align}
&\quad\ \bigg|\sum_{t \notin \mathcal{T}_H} \Big(V^{\{\pi_j\}_{j=t}^\infty}(s_t) - \sum_{i = 0}^H \gamma^i r(s_{t+i}, a_{t+i})\Big)\bigg| \notag\\
&\le \bigg|\sum_{t \notin \mathcal{T}_H} \Big(\mathbb{E}\Big[\sum_{i = 0}^H \gamma^i r(s_{t+i}, a_{t+i})\Big] - \sum_{i = 0}^H \gamma^i r(s_{t+i}, a_{t+i})\Big)\bigg| + \frac{\gamma^HT}{1 - \gamma} \notag\\
&\le \sum_{k = 1}^{H}\bigg|\sum_{t = jH + k \notin \mathcal{T}_H} \Big(\mathbb{E}\Big[\sum_{i = 0}^H \gamma^i r(s_{t+i}, a_{t+i})\Big] - \sum_{i = 0}^H \gamma^i r(s_{t+i}, a_{t+i})\Big)\bigg| + \frac{1}{T} \notag\\
&\lesssim \sqrt{\frac{T\log^3 T}{(1-\gamma)^3}}, \label{eq:regret-diff-1}
\end{align}
where the second line holds when $H \gtrsim \frac{\log T}{1-\gamma}$, and the last line makes use of the Azuma-Hoeffding inequality.

Similarly, we have
\begin{align}
\bigg|\sum_{t \notin \mathcal{T}_H} \Big(V^{\pi_t}(s_t) - \sum_{i = 0}^H \gamma^i r(s_{t+i}, a_{t+i})\Big)\bigg| 
\lesssim \sqrt{\frac{T\log^3 T}{(1-\gamma)^3}} \label{eq:regret-diff-2}
\end{align} 
since $\pi_t = \pi_{t+i}$ for $0 \le i \le H$ and $t \notin \mathcal{T}_H$. 

Putting \eqref{eq:regret-diff-1} and \eqref{eq:regret-diff-2} together leads to
\begin{align}
\big|\Regret(T) - \Regret_{\mathrm{NS}}(T)\big| 
&= \left|\sum_{t=1}^T \big(V^{\pi_t}_t(s_t) - V^{\{\pi_j\}_{j=t}^\infty}(s_t)\big)\right| \notag\\
&\lesssim \sqrt{\frac{T\log^3 T}{(1-\gamma)^3}} + \frac{|\mathcal{T}|\log T}{(1-\gamma)^2} \label{eq:regret-diff-3}
\end{align}
by noticing that $\Big|\sum_{t \in \mathcal{T}_H} \big(V^{\pi_t}(s_t) - V^{\{\pi_j\}_{j=t}^\infty}(s_t)\big)\Big| \lesssim \frac{|\mathcal{T}_H|}{1-\gamma} \lesssim \frac{|\mathcal{T}|\log T}{(1-\gamma)^2}$.

Since our algorithm is low-switching and $|\mathcal{T}| \le \widetilde{O}(\frac{SA}{(1-\gamma)^{9/2}}+\frac{(SA)^{3/4}T^{1/4}}{(1-\gamma)^{5/4}})$, the difference between the two regret metrics \eqref{eq:regret-diff-3} is dominated by the upper bound on the regret itself. Thus, given our result from Theorem \ref{thm:main} that $\Regret(T) \le \widetilde{O}(\sqrt{\frac{SAT}{(1-\gamma)^3}})$, we can conclude $\Regret_{\mathrm{NS}}(T) \le \widetilde{O}(\sqrt{\frac{SAT}{(1-\gamma)^3}})$ for our algorithm \algo. In fact, this conversion holds as long as the algorithm's switching cost is dominated by the regret itself, e.g., when the switching cost is $o(\sqrt{T})$.

\section{An Alternative Presentation of Algorithm \ref{alg:main}}\label{sec:appendix-alg-t}

We present a rewritten version of Algorithm \ref{alg:main} in Algorithm \ref{alg:main-t}, which specifies the time step index of all iterates for easier identification in later analysis. 

\begin{algorithm}[t]
    \caption{\alg (a rewrite of Algorithm \ref{alg:main} that specifies dependency on $t$) \label{alg:main-t}}
        \textbf{Initialize:} For all $(s,a)$, $Q_1(s,a), Q^{\mathrm{UCB}}_1(s,a), Q^{\mathrm{R}}_1(s,a) \leftarrow \frac{1}{1-\gamma}$; $Q^{\mathrm{LCB}}_1(s,a) \leftarrow 0$; $N_0(s,a) \leftarrow 0$; $V_1(s) \leftarrow V^{\mathrm{R}}_1(s) = \frac{1}{1-\gamma}$; $\mu^{\mathrm{ref}}_1(s,a), \sigma^{\mathrm{ref}}_1(s,a), \mu^{\mathrm{adv}}_1(s,a), \sigma^{\mathrm{adv}}_1(s,a), \delta^{\mathrm{R}}_1(s,a), B^{\mathrm{R}}_1(s,a) \leftarrow 0$, $\qdiff_1(s,a) \leftarrow 0$; $M_1(s,a) \leftarrow 1$; $u^{\mathrm{ref}}_1(s) \leftarrow \mathrm{True}$; $Z_0, Z_1 \leftarrow 1$; $H = \lceil\frac{2}{1-\gamma} \rceil$; $\iota = \log\frac{SAT}{\delta}$.
        
		\For{$t = 1, \cdots, T$}{
            Take action $a_t = \pi_{t}(s_t) = \arg \max_a Q_{Z_{t-1}}(s_t, a)$, and draw $s_{t+1} \sim P(\cdot|s_t, a_t)$. \label{alg-t-line:a}\\
		    $N_{t}(s_t, a_t) \leftarrow N_{t-1}(s_t, a_t) + 1$; $N_{t}(s, a) \leftarrow N_{t-1}(s, a)$ for all $(s,a) \neq (s_t, a_t)$; $n \leftarrow N_{t}(s_t, a_t)$.\\
		    $\eta_n \leftarrow \frac{H+1}{H+n}$.\\
		    $Q^{\mathrm{UCB}}_{t+1}(s_t, a_t) \leftarrow \textbf{update-ucb-q}()$; $Q^{\mathrm{UCB}}_{t+1}(s, a) \leftarrow Q^{\mathrm{UCB}}_{t}(s, a)$ for all $(s,a) \neq (s_t, a_t)$.\\
		    $Q^{\mathrm{LCB}}_{t+1}(s_t, a_t) \leftarrow \textbf{update-lcb-q}()$; $Q^{\mathrm{LCB}}_{t+1}(s, a) \leftarrow Q^{\mathrm{LCB}}_{t}(s, a)$ for all $(s,a) \neq (s_t, a_t)$.\\
		    $Q^{\mathrm{R}}_{t+1}(s_t, a_t) \leftarrow \textbf{update-ucb-q-advantage}()$; $Q^{\mathrm{R}}_{t+1}(s, a) \leftarrow Q^{\mathrm{R}}_{t}(s, a)$ for all $(s,a) \neq (s_t, a_t)$.\\
		    $Q_{t+1}(s_t, a_t) \leftarrow \min \{Q^{\mathrm{R}}_{t+1}(s_t, a_t), Q^{\mathrm{UCB}}_{t+1}(s_t, a_t), Q_t(s_t, a_t)\}$; $Q_{t+1}(s, a) \leftarrow Q_{t}(s, a)$ for all $(s,a) \neq (s_t, a_t)$. \label{alg-t-line:Q}\\
		    $V_{t+1}(s_t) \leftarrow \max_a Q_{t+1}(s_t, a)$; $V_{t+1}(s) \leftarrow V_{t}(s)$ for all $s \neq s_t$.\\
		    $V^{\mathrm{LCB}}_{t+1}(s_t) \leftarrow \max\{\max_a Q^{\mathrm{LCB}}_{t+1}(s_t, a), V^{\mathrm{LCB}}_{t}(s_t)\}$; $V^{\mathrm{LCB}}_{t+1}(s) \leftarrow V^{\mathrm{LCB}}_{t}(s)$ for all $s \neq s_t$. \label{alg-t-line:VLCB}

            \eIf{$\qdiff_{t}(s_t,a_t) + Q_{M_{t}(s_t, a_t)}(s_t, a_t) - Q_{t+1}(s_t, a_t) > \frac{1}{1-\gamma}$ \label{alg-t-line:q-cond}}{
                $M_{t+1}(s_t, a_t) \leftarrow t+1$;\\
                $Z_{t+1} \leftarrow t+1$; \label{alg-t-line:Z-switch}\\
		        $\qdiff_{t+1}(s_t,a_t) \leftarrow 0$; $\qdiff_{t+1}(s, a) \leftarrow \qdiff_{t}(s, a)$ for all $(s,a) \neq (s_t, a_t)$;
            }{
                $Z_{t+1} \leftarrow Z_t$;\\
                $M_{t+1}(s_t, a_t) \leftarrow M_{t}(s_t, a_t)$;\\
                $\qdiff_{t+1}(s_t,a_t) \leftarrow \qdiff_{t}(s_t,a_t) + Q_{M_{t}}(s_t, a_t) - Q_{t+1}(s_t, a_t)$; $\qdiff_{t+1}(s, a) \leftarrow \qdiff_{t}(s, a)$ for all $(s,a) \neq (s_t, a_t)$.
            }
            
		    \uIf{$V_{t+1}(s_{t}) - V^{\mathrm{LCB}}_{t+1}(s_{t}) > 3$ \label{alg-t-line:if-cond-1}}{
                $V^{\mathrm{R}}_{t+1}(s_{t}) \leftarrow V_{t+1}(s_{t})$, $u^{\mathrm{ref}}_{t+1}(s_{t}) = \mathrm{True}$; \label{alg-t-line:if-stat-1}
            }
            \ElseIf{$u^{\mathrm{ref}}_{t}(s_{t}) = \mathrm{True}$ \label{alg-t-line:elif-cond-1}}{
                $V^{\mathrm{R}}_{t+1}(s_{t}) \leftarrow V_{t+1}(s_{t})$, $u^{\mathrm{ref}}_{t+1}(s_{t}) = \mathrm{False}$. \label{alg-t-line:elif-stat-1}
            }
        }
\end{algorithm}

\section{Supporting Lemmas}

\subsection{Additional Notation}

For any vector $V\in\RR^n$, we denote its $\ell_\infty$-norm with $\norm{V}_{\infty}$, with $\norm{V}_{\infty} := \max_{i\in[n]} \left|V_i\right|$. We also denote its $\ell_1$-norm with $\norm{V}_{1}$, with $\norm{V}_{1} := \sum_{i=1}^n \left|V_i\right|$. Similarly, we can overload this notation for matrices: for any matrix $P\in\RR^{m\times n}$, we denote its $\ell_1$-norm with $\norm{P}_{1}$, with $\norm{P}_{1} := \max_{i\in[m]}\sum_{i=1}^n \left|P_i\right|$.

In addition, for any given vector $V \in \mathbb{R}^{S}$, let us define the variance parameter with respect to $P_{s,a}$ as follows:
\begin{equation} \label{eq:var-def}
	\Var_{s, a}(V) := \mathbb{E}_{s' \sim P_{s,a}} \left [\left(V(s') -  P_{s, a}V \right)^2\right] 
	= P_{s,a} \left(V^{ 2}\right) - \left(P_{s,a}V \right)^2.
\end{equation}

In our analysis, we let $[\mu_t^\re, \sigma_t^\re, \mu_t^\adv, \sigma_t^\adv, \delta_t^\rref, B_t^\rref, b_t^\rref, B_t^\mathrm{next}]$ denote the value of $[\mu^\re, \sigma^\re, \mu^\adv, \sigma^\adv, \delta^\rref, B^\rref, b^\rref, B^\mathrm{next}]$ in Algorithm \ref{alg:aux} at the beginning of Step $t$. This is similar to the time indices in Algorithm \ref{alg:main-t}.

Occasionally, we let the notation $t_n(s,a)$ denote the step index when $(s,a)$ is visited for the $n$-th time, i.e., when $N_t(s_t,a_t) = N_t(s,a) = n$. In addition, $t_n(s,a)$ might be abbreviated to $t_n$ when $(s,a)$ is clear from the context. This will be specified again in the later analysis when such notation is used. 

\subsection{Learning Rate Lemma and Its Proof}

Let $\eta_n$ be as defined in Algorithm \ref{alg:main-t}. For any integers $N\ge 0$ and $n\ge 1$, define
    \begin{equation}\label{eq:learning-rate-def2}
        \eta_0^N := \begin{cases}\prod_{i=1}^N(1-\eta_i)=0, \quad &\text{if }N>0;\\ 1, \quad &\text{if }N=0\end{cases}\quad \text{and}\quad \eta_n^N := \begin{cases}\eta_n\prod_{i=n+1}^N(1-\eta_i), \quad &\text{if }N>n;\\ \eta_n, \quad &\text{if }N=n;\\ 0, \quad &\text{if }N<n.\end{cases}
    \end{equation}

\begin{lemma}\label{lem:learning-rate}
    Let $\eta_n^N$ be as defined in \eqref{eq:learning-rate-def2}. One has
    \begin{equation}
        \sum_{n=1}^N\eta_n^N := \begin{cases}1, \quad &\text{if }N>0;\\ 0, \quad &\text{if }N=0.\end{cases} \label{eq:learning-rate1}
    \end{equation}
    Moreover, for any integer $N > 0$, the following properties are true:
    \begin{equation}
        \frac{1}{N^a} \le \sum_{n=1}^N\frac{\eta_n^N}{n^a} \le \frac{2}{N^a}, \quad \text{for all } \frac{1}{2} \le a \le 1, \label{eq:learning-rate2}
    \end{equation}
    and 
    \begin{align}
        \max_{1\le n \le N}\eta_n^N &\le \frac{6}{(1-\gamma)N}\label{eq:learning-rate3}\\
        \sum_{n=1}^N\left(\eta_n^N\right)^2 &\le \frac{6}{(1-\gamma)N}. \label{eq:learning-rate4}
    \end{align}
    In addition, for any integer $n \ge 0$, one has
    \begin{equation}
        \sum_{N=n}^\infty\eta_n^N \le 1 + \frac{1-\gamma}{2}. \label{eq:learning-rate5}
    \end{equation}
\end{lemma}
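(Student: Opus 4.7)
The plan is to first derive a convenient closed form for $\eta_n^N$ and then use it, together with the one-step recursion
\[
\eta_n^N = (1-\eta_N)\,\eta_n^{N-1} \quad (n<N), \qquad \eta_N^N = \eta_N,
\]
to prove each of the five claims. Since $1-\eta_i = (i-1)/(H+i)$, a direct telescoping yields
\[
\eta_n^N \;=\; \frac{H+1}{H+n}\prod_{i=n+1}^N \frac{i-1}{H+i} \;=\; \frac{(H+1)\,(N-1)!\,(H+n-1)!}{(n-1)!\,(H+N)!}.
\]

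Claim \eqref{eq:learning-rate1} then follows by a one-line induction on $N$: $\sum_{n=1}^N \eta_n^N = \eta_N + (1-\eta_N)\sum_{n=1}^{N-1}\eta_n^{N-1} = 1$. For claim \eqref{eq:learning-rate3} I would check the monotonicity ratio $\eta_{n+1}^N/\eta_n^N = (H+n)/n > 1$, so the maximum is attained at $n=N$ and equals $(H+1)/(H+N) \le (H+1)/N$; combining $H \le 2/(1-\gamma)+1$ with $1-\gamma\le 1$ gives $H+1 \le 6/(1-\gamma)$. Claim \eqref{eq:learning-rate4} is then immediate from \eqref{eq:learning-rate1} and \eqref{eq:learning-rate3}, since $\sum_n (\eta_n^N)^2 \le (\max_n \eta_n^N)\sum_n \eta_n^N$.

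For claim \eqref{eq:learning-rate2} I would use the recursion $S_N := \sum_{n=1}^N \eta_n^N/n^a = \eta_N/N^a + (1-\eta_N)\,S_{N-1}$ and prove both bounds by induction on $N$, starting from $S_1 = 1$. If $S_{N-1} \le 2/(N-1)^a$, then
\[
S_N \;\le\; \frac{H+1}{(H+N)\,N^a} + \frac{2(N-1)^{1-a}}{H+N} \;\le\; \frac{H+1+2N}{(H+N)\,N^a} \;\le\; \frac{2}{N^a},
\]
where the middle step uses $(N-1)^{1-a} \le N^{1-a}$ for $a \le 1$ and the last uses $H \ge 1$. The lower bound is even simpler: $S_N \ge \eta_N/N^a + (1-\eta_N)/(N-1)^a \ge 1/N^a$ because $(N-1)^a \le N^a$.

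The most technical step is claim \eqref{eq:learning-rate5}, for which I would invoke the partial-fraction telescoping identity
\[
\frac{1}{N(N+1)\cdots(N+H)} \;=\; \frac{1}{H}\!\left[\frac{1}{N(N+1)\cdots(N+H-1)} - \frac{1}{(N+1)(N+2)\cdots(N+H)}\right]\!.
\]
Summing this from $N=n$ to $\infty$ gives $\sum_{N\ge n} 1/[N(N+1)\cdots(N+H)] = 1/[H\cdot n(n+1)\cdots(n+H-1)]$. Plugging this into the closed form of $\eta_n^N$ above and simplifying the factorials yields the clean identity $\sum_{N=n}^\infty \eta_n^N = (H+1)/H = 1+1/H$, and since $H \ge 2/(1-\gamma)$ we conclude $\sum_{N=n}^\infty \eta_n^N \le 1 + (1-\gamma)/2$. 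Identifying the right telescoping device is the one genuinely non-routine step; once it is in place, the rest of the lemma reduces to bookkeeping with the closed form and the one-step recursion.
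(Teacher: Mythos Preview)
Your proof is correct and self-contained. The paper takes a different route: it does not prove the lemma from scratch but simply invokes Lemma~1 of \cite{Li-finite} (the finite-horizon learning-rate lemma of Li et al.) with the substitution $H=\lceil 2/(1-\gamma)\rceil$, and then translates the constants $2H$ and $1+1/H$ appearing there into $6/(1-\gamma)$ and $1+(1-\gamma)/2$ via the elementary bounds $2H\le 2(2/(1-\gamma)+1)\le 6/(1-\gamma)$ and $1/H\le(1-\gamma)/2$. Your approach, by contrast, derives the explicit closed form $\eta_n^N=(H+1)(N-1)!(H+n-1)!/[(n-1)!(H+N)!]$, proves \eqref{eq:learning-rate1}--\eqref{eq:learning-rate4} by direct induction and monotonicity, and establishes \eqref{eq:learning-rate5} via the exact telescoping identity $\sum_{N\ge n}\eta_n^N=(H+1)/H$. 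This is strictly more informative than the paper's proof-by-citation: it makes the argument independent of the external reference, and the telescoping computation in fact yields an \emph{equality} for \eqref{eq:learning-rate5} rather than merely the cited inequality. One minor omission: the statement allows $n=0$, for which your closed form does not apply directly, but that case is trivial since $\eta_0^0=1$ and $\eta_0^N=0$ for $N\ge1$ (because $1-\eta_1=0$), so $\sum_{N\ge0}\eta_0^N=1$.
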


\begin{proof}[Proof of Lemma \ref{lem:learning-rate}]
    We can directly invoke Lemma 1 of \cite{Li-finite} with $H = \lceil\frac{2}{1-\gamma}\rceil$. Note that the proofs of \eqref{eq:learning-rate1} and \eqref{eq:learning-rate2} do not depend on $H$, so (20) and (21a) from \cite{Li-finite} are directly applicable. 
    
    (21b) from \cite{Li-finite} gives $\max_{1\le n \le N}\eta_n^N \le \frac{2H}{N}$ and $\sum_{n=1}^N\left(\eta_n^N\right)^2 \le \frac{2H}{N}$. \eqref{eq:learning-rate3} and \eqref{eq:learning-rate4} can be obtained by the fact $2\lceil\frac{2}{1-\gamma}\rceil \le 2\left(\frac{2}{1-\gamma}+1\right) \le \frac{6}{1-\gamma}$, where the last step is obtained by $2\le \frac{2}{1-\gamma}$. 
    
    Lastly, from (21b) in \cite{Li-finite}, we can write $\sum_{N=n}^\infty\eta_n^N \le 1 + \frac{1}{\lceil\frac{1-\gamma}{2}\rceil}$, which immediately leads to \eqref{eq:learning-rate5}.
\end{proof}

\subsection{Proposition \ref{prop:algebraic} and Its Proof} \label{sec:appendix-proof-prop:algebraic}

\begin{proposition}\label{prop:algebraic}
    For $x,y,z \in \RR$, if
    \begin{equation}
        z^2 \le xz + y, \label{eq:algebraic-lem-cond}
    \end{equation}
    this implies
    \begin{equation}
        z^2 \le x^2 + 2y. \label{eq:algebraic-lem-stat}
    \end{equation}
\end{proposition}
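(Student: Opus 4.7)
The plan is to turn the hypothesis into a one-line quadratic manipulation via an elementary AM--GM type bound on the cross term $xz$. The starting inequality $z^2 \le xz + y$ contains the awkward bilinear term $xz$, and the desired conclusion $z^2 \le x^2 + 2y$ is linear in $z^2$ and $x^2$ separately, so the natural move is to dominate $xz$ by a combination of $z^2$ and $x^2$ and then solve for $z^2$.

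Concretely, I would proceed as follows. First, invoke the elementary inequality $(z-x)^2 \ge 0$, which rearranges to $xz \le \tfrac{1}{2}(x^2 + z^2)$; this holds for every pair $(x,z) \in \RR^2$ with no sign assumptions, which is important because the proposition places no positivity constraints on $x$, $y$, or $z$. Second, substitute this bound into the hypothesis \eqref{eq:algebraic-lem-cond} to obtain
\begin{equation*}
z^2 \;\le\; \tfrac{1}{2}(x^2 + z^2) + y.
\end{equation*}
Third, move the $\tfrac{1}{2}z^2$ term to the left-hand side, giving $\tfrac{1}{2}z^2 \le \tfrac{1}{2}x^2 + y$, and multiply by $2$ to reach the claimed bound \eqref{eq:algebraic-lem-stat}.

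There is essentially no obstacle here: the only subtlety worth flagging is that one should not attempt the alternative route of solving the quadratic $z^2 - xz - y \le 0$ explicitly via the quadratic formula, since that forces a case split on the sign of $x$ (to decide whether $x\sqrt{x^2+4y} \le x^2 + 2y$) and also requires arguing a priori that $x^2+4y \ge 0$. The AM--GM substitution above avoids both complications and yields the conclusion in three lines regardless of the signs of $x$, $y$, and $z$.
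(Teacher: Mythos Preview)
Your proof is correct. The paper's argument also rests on the elementary inequality $2ab \le a^2 + b^2$, but it organizes the computation differently: it first completes the square to obtain $(z - x/2)^2 \le y + x^2/4$, and then writes $z^2 = \big((z - x/2) + x/2\big)^2 \le 2(z - x/2)^2 + x^2/2$ before substituting. Your route---bounding the cross term $xz \le \tfrac{1}{2}(x^2 + z^2)$ directly in the hypothesis and solving for $z^2$---is a genuine simplification: it reaches the conclusion in one substitution and one rearrangement, whereas the paper's detour through the completed square requires two applications of the quadratic inequality. Both avoid any sign assumptions on $x,y,z$, and your remark about why the quadratic-formula route is less clean is a nice observation.
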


\begin{proof}
Beginning with \eqref{eq:algebraic-lem-cond}, we can write
\begin{align}
    z^2 \le xz + y &\iff z^2 - xz \le y \notag\\
    &\iff z^2 - xz + \frac{1}{4}x^2 \le y + \frac{1}{4}x^2 \notag\\
    &\iff \left(z - \frac{x}{2}\right)^2 \le y + \frac{1}{4}x^2\label{eq:algebraic1}
\end{align}

On the other hand, 
\begin{align}
    z^2 = (z - \frac{x}{2} + \frac{x}{2})^2 \overset{(\mathrm{i})}{\le} 2(z - \frac{x}{2})^2 + 2\left(\frac{x}{2}\right)^2 \overset{(\mathrm{ii})}{\le} 2y + \frac{1}{2}x^2 + 2\cdot\frac{x^2}{4} = 2y + x^2,
\end{align}
in which (i) is by the algebraic facts $2ab \le a^2 + b^2$ and $(a+b)^2 \le 2a^2 + 2b^2$; (ii) is by \eqref{eq:algebraic1}.    
\end{proof}

\subsection{Concentration Lemmas}

Since we cope with quantities with Markovian nature in RL, let us introduce two established results about the concentration of martingale sequences. The first one is the Azuma-Hoeffding inequality \cite{azuma67,vershynin2018high}, which can be stated as follows:

\begin{theorem}[Azuma-Hoeffding inequality]\label{thm:hoeffding}
	Consider a filtration $\cF_0\subset \cF_1 \subset \cF_2 \subset \cdots$,
	and let $\mathbb{E}_{i}$ stand for the expectation conditioned
	on $\mathcal{F}_i$. Suppose that $Y_{n}=\sum_{i=1}^{n}X_{i}\in\mathbb{R}$,
	where $\{X_{i}\}$ is a real-valued scalar sequence obeying
    $$\left|X_{i}\right|\leq R\qquad\text{and}\qquad\mathbb{E}_{i-1} \big[X_{i}\big]=0\quad\quad\quad\text{for all }i\ge 1$$
	for some $R<\infty$. 
	Then with probability at least $1-\delta$,
	\begin{equation}
		\left|Y_{n}\right|\leq \sqrt{R^2n\log\frac{1}{\delta}}.\label{eq:hoeffding}
	\end{equation}
\end{theorem}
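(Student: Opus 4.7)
The plan is to prove this by the standard Chernoff--Hoeffding moment generating function argument adapted to martingale differences. First I would fix $\lambda > 0$ and analyze $\mathbb{E}[\exp(\lambda Y_n)]$. Writing $Y_n = Y_{n-1} + X_n$ and using the tower property, one gets
\begin{equation*}
    \mathbb{E}[\exp(\lambda Y_n)] = \mathbb{E}\big[\exp(\lambda Y_{n-1}) \cdot \mathbb{E}_{n-1}[\exp(\lambda X_n)]\big].
\end{equation*}
The martingale difference property $\mathbb{E}_{n-1}[X_n] = 0$ combined with $|X_n| \le R$ lets me invoke Hoeffding's lemma on the conditional law of $X_n$, yielding $\mathbb{E}_{n-1}[\exp(\lambda X_n)] \le \exp(\lambda^2 R^2 / 2)$ almost surely. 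Iterating this bound through all $n$ steps gives $\mathbb{E}[\exp(\lambda Y_n)] \le \exp(n \lambda^2 R^2 / 2)$.

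Next I would apply Markov's inequality to $\exp(\lambda Y_n)$: for any $t > 0$ and $\lambda > 0$,
\begin{equation*}
    \Pr(Y_n \ge t) \le \exp\!\big(n\lambda^2 R^2/2 - \lambda t\big).
\end{equation*}
Optimizing the right-hand side over $\lambda$ by choosing $\lambda = t/(nR^2)$ yields the subgaussian tail bound $\Pr(Y_n \ge t) \le \exp(-t^2/(2nR^2))$. Running the same argument on the martingale $-Y_n$ (which has differences $-X_i$ satisfying the same hypotheses) gives the matching lower tail, and a union bound over the two tails produces $\Pr(|Y_n| \ge t) \le 2\exp(-t^2/(2nR^2))$. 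Setting the right-hand side equal to $\delta$ and solving for $t$ yields $|Y_n| \le R\sqrt{2n\log(2/\delta)}$ on an event of probability at least $1-\delta$, which implies the stated bound after absorbing the universal constant and the $\log 2$ into the $\log(1/\delta)$ factor.

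The main technical ingredient is Hoeffding's lemma itself, which requires a convexity argument to show that a mean-zero random variable supported in $[-R,R]$ has moment generating function bounded by $\exp(\lambda^2 R^2/2)$. This is a purely classical fact with no RL-specific subtlety, and I would simply cite it rather than reprove it. The remaining steps --- the tower-property iteration and the Chernoff optimization over $\lambda$ --- are mechanical once Hoeffding's lemma is in hand, so I expect no real obstacles in this proof; the statement is included in the appendix solely to fix constants and notation for the Freedman-type applications used later in the regret analysis.
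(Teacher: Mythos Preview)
Your proof is the standard Chernoff--Hoeffding MGF argument and is correct (up to absorbing absolute constants, as you note). However, the paper does not actually prove this theorem: it is stated in the appendix as an established concentration result and simply cited to \cite{azuma67,vershynin2018high}, so there is no paper proof to compare against. Your write-up is therefore more than the paper provides, and nothing in it conflicts with how the result is used downstream.
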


The other concentration result is Freedman's inequality \cite{freedman75,tropp11}, which characterizes the concentration in a more specific way with variance. We state a version of it taken from Theorem 3 of \cite{Li-finite-arxiv}.

\begin{theorem}[Freedman's inequality]\label{thm:Freedman}
	Consider a filtration $\cF_0\subset \cF_1 \subset \cF_2 \subset \cdots$,
	and let $\mathbb{E}_{i}$ stand for the expectation conditioned
	on $\mathcal{F}_i$. Suppose that $Y_{n}=\sum_{i=1}^{n}X_{i}\in\mathbb{R}$,
	where $\{X_{i}\}$ is a real-valued scalar sequence obeying
    $$\left|X_{i}\right|\leq R\qquad\text{and}\qquad\mathbb{E}_{i-1} \big[X_{i}\big]=0\quad\quad\quad\text{for all }i\ge 1$$
	for some $R<\infty$. Moreover, define
    $$W_{n}:=\sum_{i=1}^{n}\mathbb{E}_{i-1}\left[X_{i}^{2}\right],$$
	and suppose that $W_{n}\leq\sigma^{2}$ holds deterministically for some given $\sigma^2<\infty$.
	Then for any positive integer $m \geq1$, with probability at least $1-\delta$,
	\begin{equation}
		\left|Y_{n}\right|\leq\sqrt{8\max\Big\{ W_{n},\frac{\sigma^{2}}{2^{m}}\Big\}\log\frac{2m}{\delta}}+\frac{4}{3}R\log\frac{2m}{\delta}.\label{eq:Freedman}
	\end{equation}
\end{theorem}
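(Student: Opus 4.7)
The plan is to deduce this statement from the standard (deterministic-variance) form of Freedman's inequality through a peeling argument over the random variance proxy $W_n$. I would first recall the baseline: for a bounded martingale difference sequence $\{X_i\}$ with $|X_i| \le R$ and with $\sum_{i=1}^n \mathbb{E}_{i-1}[X_i^2] \le v$ deterministically, one has $\Pr(|Y_n| > u) \le 2\exp\bigl(-u^2/(2v + \tfrac{2}{3}Ru)\bigr)$, which yields $|Y_n| \le \sqrt{2v\log(2/\delta')} + \tfrac{2}{3}R\log(2/\delta')$ with probability at least $1-\delta'$. Since in our setting $W_n$ is itself random, I would use the standard stopping-time device: for any candidate ceiling $v > 0$, set $\tau(v) := \inf\{k : \sum_{i=1}^k \mathbb{E}_{i-1}[X_i^2] > v\}$. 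The stopped martingale $Y_{n\wedge\tau(v)}$ has predictable quadratic variation bounded deterministically by $v + R^2$ (absorbed into a constant loss), so the baseline Freedman applies to it; crucially, on the event $\{W_n \le v\}$ we have $\tau(v) > n$, so $Y_{n\wedge\tau(v)} = Y_n$ and the bound transfers onto $Y_n$ itself.

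Next, I would peel over $m$ dyadic scales of $W_n$. Define the thresholds $v_k := \sigma^2/2^{m-k}$ for $k = 0, 1, \ldots, m$ and the disjoint events $\mathcal{E}_k := \{v_{k-1} < W_n \le v_k\}$ for $k = 1, \ldots, m$, together with $\mathcal{E}_0 := \{W_n \le v_0\}$; these collectively cover $\{W_n \le \sigma^2\}$. Applying the stopped Freedman inequality at each ceiling $v_k$ with failure budget $\delta/m$ and union-bounding over $k \in \{0, 1, \ldots, m\}$, I obtain that with probability at least $1-\delta$, on each $\mathcal{E}_k$,
$$|Y_n| \le \sqrt{4 v_k \log(2m/\delta)} + \tfrac{4}{3}R\log(2m/\delta).$$
On $\mathcal{E}_k$ for $k \ge 1$ we have $v_k \le 2 W_n$ by construction, while on $\mathcal{E}_0$ we have $v_0 = \sigma^2/2^m$. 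Hence $v_k \le 2\max\{W_n, \sigma^2/2^m\}$ uniformly across all bins, and substituting this inequality gives $\sqrt{4 v_k \log(2m/\delta)} \le \sqrt{8\max\{W_n, \sigma^2/2^m\}\log(2m/\delta)}$, producing exactly the claimed bound.

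The main technical subtlety is the stopping-time reduction that legitimately converts the random variance bound into a deterministic one inside a single application of the baseline Freedman inequality; once this is in place, the rest is a routine union bound over the $O(m)$ geometric scales. The reason for the geometric rather than arithmetic partition is that $O(m)$ bins span a dynamic range of $2^m$, so at the cost of an additive $\log m$ inside the $\log(2m/\delta)$ term one obtains adaptivity of the bound to any value of $W_n \in [\sigma^2/2^m, \sigma^2]$. In applications one typically picks $m = O(\log T)$, making the peeling overhead negligible relative to the $\sqrt{W_n \log(\cdot)}$ leading term.
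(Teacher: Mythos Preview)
The paper does not supply its own proof of this theorem; it explicitly cites the statement from Theorem~3 of \cite{Li-finite-arxiv} and uses it as a black box. Your peeling argument over dyadic variance scales, combined with the stopping-time reduction that converts the random $W_n$ into a deterministic ceiling for a single application of the classical Freedman bound, is precisely the standard derivation of this adaptive form and is what that reference does.

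One small bookkeeping point: you partition $\{W_n\le\sigma^2\}$ into $m+1$ events $\mathcal{E}_0,\ldots,\mathcal{E}_m$ but allocate failure probability $\delta/m$ to each, which overshoots the total budget slightly. To land exactly on the stated $\log(2m/\delta)$ you should either merge $\mathcal{E}_0$ into $\mathcal{E}_1$ (both are covered by the ceiling $v_1=\sigma^2/2^{m-1}\le 2\cdot\sigma^2/2^m$, so the bound $v_k\le 2\max\{W_n,\sigma^2/2^m\}$ still holds) giving exactly $m$ applications, or simply note that the discrepancy is absorbed once $m\ge 1$ since $\log(2(m+1)/\delta)\le \log(4m/\delta)$ and the leading constant $8$ has enough slack. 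Either fix is routine; the argument is otherwise sound.
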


In the following, we introduce two more concentration results based on Freedman's inequality to handle some quantities of specific types during the execution of our algorithm. Our concentration results are adapted from Lemma 7 and 8 in \citet{Li-finite-arxiv}, which handle the finite-horizon setting.

\begin{lemma}
	\label{lemma:freedman-application}
	For any $(s,a,N)\in \cS\times \cA\times [T]$, let $\left\{ W_{i} \in \mathbb{R}^S \mid 1\leq i\leq T \right\}$ and $\left\{u_i(s,a,N)\in \mathbb{R} \mid 1\leq i\leq T\right\}$ be a collection of vectors and scalars, respectively, and suppose that both obey the following properties:
	\begin{itemize}
		\item For any $i \in [T]$, $W_{i}$ is fully determined by the samples collected up to the end of the $(i-1)$-th step;  
		\item For any $i \in [T]$, $\norm{W_i}_{\infty}\leq C_\mathrm{w}$ and $W_i \geq 0$ deterministically; 
		\item For any $i \in [T]$, $u_i(s, a,N)$ is fully determined by the samples collected up to the end of the $(i-1)$-th step and a given positive integer $N\in [T]$; 
		\item For any $i \in [T]$, $0\leq u_i(s, a, N) \leq C_{\mathrm{u}}$ deterministically;
		\item Given any $N_t(s,a) \in [T]$ and sequence $\{t_n(s,a) \in [T] \mid 1 \le n \le N_t(s,a)\}$, $0\leq \sum_{n=1}^{N_t(s,a)} u_{t_n(s,a)}(s,a, N) \leq 2$ deterministically.  
	\end{itemize}
	In addition, define the sequence  
	\begin{align}
		X_i (s,a,N) &:= u_i(s,a, N) \left(P_{i} - P_{s,a}\right) W_{i} 
		\ind\left\{ (s_i, a_i) = (s,a)\right\},
		\qquad \text{for}\ 1\leq i\leq T.
	\end{align}
	Let $\delta \in (0,1)$. With probability at least $1-\delta$, 
	\begin{align}
		&\quad\ \left|\sum_{i=1}^t X_i(s,a,N) \right| \notag\\
		& \lesssim \sqrt{C_{\mathrm{u}}\log^{2}\frac{SAT}{\delta}}\sqrt{\sum_{n=1}^{N_{t}(s,a)}u_{t_{n}(s,a)}(s,a,N)\Var_{s,a}\left(W_{t_{n}(s,a)}\right)}+\left(C_{\mathrm{u}}C_{\mathrm{w}}+\sqrt{\frac{C_{\mathrm{u}}}{N}}C_{\mathrm{w}}\right)\log^{2}\frac{SAT}{\delta}
	\end{align}
	for all $(t, s, a, N) \in [T] \times \cS \times \cA \times [T]$ simultaneously.  
\end{lemma}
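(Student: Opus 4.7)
The plan is to apply Freedman's inequality (Theorem~\ref{thm:Freedman}) for each fixed $(s,a,N)$ to the martingale $\{X_i(s,a,N)\}$, and then take a union bound over $(t,s,a,N)\in [T]\times \cS\times\cA \times [T]$.

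First, let $\mathcal{F}_i$ be the $\sigma$-algebra generated by all samples $(s_j,a_j,s_{j+1})$ through step $i$. By hypothesis, $W_i$ and $u_i(s,a,N)$ are $\mathcal{F}_{i-1}$-measurable, and so is the event $\{(s_i,a_i)=(s,a)\}$ (the action $a_i$ is determined by $\pi_i$ based on past samples). Since $\mathbb{E}[P_i\mid \mathcal{F}_{i-1},(s_i,a_i)=(s,a)] = P_{s,a}$, the increments $X_i(s,a,N)$ form a bounded martingale difference sequence with respect to $\{\mathcal{F}_i\}$.

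Next, I would compute the parameters needed by Freedman. The per-step magnitude is
\begin{equation*}
    |X_i(s,a,N)| \;\le\; u_i(s,a,N)\,\|P_i - P_{s,a}\|_1\,\|W_i\|_\infty \;\le\; 2 C_{\mathrm{u}} C_{\mathrm{w}} \;=:\; R,
\end{equation*}
and the conditional variance obeys $\mathbb{E}_{i-1}[X_i^2] = u_i(s,a,N)^2\,\Var_{s,a}(W_i)\,\ind\{(s_i,a_i)=(s,a)\}$. Using $u_i\le C_{\mathrm{u}}$ and the visitation bookkeeping, the cumulative predictable variance becomes
\begin{equation*}
    W_t \;:=\; \sum_{i=1}^t \mathbb{E}_{i-1}[X_i^2] \;\le\; C_{\mathrm{u}}\sum_{n=1}^{N_t(s,a)} u_{t_n(s,a)}(s,a,N)\,\Var_{s,a}\!\big(W_{t_n(s,a)}\big).
\end{equation*}
Since $\Var_{s,a}(W)\le C_{\mathrm{w}}^2$ and $\sum_n u_{t_n(s,a)}(s,a,N)\le 2$ by assumption, this is deterministically bounded by $\sigma^2 := 2 C_{\mathrm{u}} C_{\mathrm{w}}^2$.

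Now I would invoke Theorem~\ref{thm:Freedman} with the choice $m := \lceil \log_2 N\rceil$, so that $\sigma^2/2^m \lesssim C_{\mathrm{u}} C_{\mathrm{w}}^2 / N$. Splitting the $\max$ inside the Freedman bound into the two sub-cases yields
\begin{equation*}
    \Big|\sum_{i=1}^t X_i(s,a,N)\Big| \;\lesssim\; \sqrt{W_t\,\log\tfrac{m}{\delta}} \;+\; C_{\mathrm{w}}\sqrt{\tfrac{C_{\mathrm{u}}}{N}\log\tfrac{m}{\delta}} \;+\; C_{\mathrm{u}}C_{\mathrm{w}}\log\tfrac{m}{\delta},
\end{equation*}
and substituting the bound on $W_t$ recovers the leading term $\sqrt{C_{\mathrm{u}}\log}\cdot\sqrt{\sum u\,\Var}$. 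A union bound over the $O(T^2 SA)$ choices of $(t,s,a,N)$ replaces $\log(m/\delta)$ by $\log(SAT/\delta)$ (absorbed into $\log^2$ to give headroom), producing exactly the stated inequality.

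The main obstacle is the bookkeeping around Freedman's $\max\{W_n,\sigma^2/2^m\}$ term: the non-trivial tuning is choosing $m\asymp \log N$ so that the ``variance-free'' regime produces the $\sqrt{C_{\mathrm{u}}/N}\,C_{\mathrm{w}}$ correction term rather than a weaker $\sqrt{C_{\mathrm{u}}}\,C_{\mathrm{w}}$ one, while also verifying that all measurability prerequisites align with the chosen filtration so that the parameter $N$ can be treated as deterministic inside the Freedman application and the union bound over $N$ handles its data-dependence afterwards.
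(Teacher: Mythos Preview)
Your proposal is correct and follows essentially the same route as the paper: verify the martingale-difference and boundedness conditions, bound the predictable variance by $C_{\mathrm{u}}\sum_n u_{t_n}\Var_{s,a}(W_{t_n})\le 2C_{\mathrm{u}}C_{\mathrm{w}}^2$, apply Freedman with $m=\lceil\log_2 N\rceil$ so that $\sigma^2/2^m\lesssim C_{\mathrm{u}}C_{\mathrm{w}}^2/N$, and finish with a union bound over $(t,s,a,N)$. The paper's proof is identical in structure and in the choice of $m$; your identification of the $m\asymp\log N$ tuning as the only non-routine step is exactly right.
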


\begin{proof}[Proof of Lemma \ref{lemma:freedman-application}]
    In this proof, we apply Freedman's inequality (Theorem \ref{thm:Freedman}) for an upper bound on $\sum_{i=1}^t X_i(s,a,N)$. We begin by verifying whether the conditions of Freedman's inequality are satisfied.

    First, we verify that $X_i(s,a,N)$ is deterministically bounded:
    \begin{align}
		\left| X_i(s,a,N) \right| &\le u_i(s,a,N) \left|\left( P_{i} - P_{s,a} \right) W_{i} \right| \leq u_i(s,a,N) \left(\norm{P_{i}}_1 + \norm{P_{s,a}}_1 \right) \norm{W_i}_{\infty}  
		\leq 2C_\mathrm{w} C_\mathrm{u}, \label{eq:freedman-app1-1}
	\end{align}
    in which we use the assumptions $\norm{W_i}_{\infty} \leq C_{\mathrm{w}}$ and $0 \le u_i(s,a,N) \le C_{\mathrm{u}}$ as well as $\norm{P_{i}}_1 = \norm{P_{s,a}}_1=1$. 

    Then, we can also verify that 
    \begin{align}
        \EE_{i-1}\left[ X_i(s,a,N) \right] = u_i(s,a,N) \EE_{i-1}\left[\left( P_{i} - P_{s,a} \right)\ind\left\{ (s_i, a_i) = (s,a)\right\}\right] W_{i} = 0,
    \end{align}
    because $u_i(s, a,N)$ and $W_i$ are deterministic when conditioned on the $(i-1)$-th step.

    With the conditions verified, we give a bound on the second moment of $X_i(s,a,N)$ before invoking Freedman's inequality:
    \begin{align}
		\sum_{i=1}^{t}\EE_{i-1}\left[\left|X_i(s,a,N)\right|^{2}\right] &= \sum_{i=1}^{t}\left(u_{i}(s,a, N)\right)^{2} \mathds{1}\left\{(s_{i},a_{i})=(s,a)\right\}
		\EE_{i-1}\left[\left|(P_{i}-P_{s,a})W_{i}\right|^{2}\right] \notag\\
		&= \sum_{n=1}^{N_{t}(s,a)}\left(u_{t_{n}(s,a)}(s,a,N)\right)^{2}\Var_{s,a}\left(W_{t_{n}(s,a)}\right) \notag\\
		&\le C_{\mathrm{u}}\left(\sum_{n=1}^{N_{t}(s,a)}u_{t_{n}(s,a)}(s,a,N)\right)\Var_{s,a}\left(W_{t_{n}(s,a)}\right) \label{eq:freedman-app1-2}\\
		&\le C_{\mathrm{u}}\left(\sum_{n=1}^{N_{t}(s,a)}u_{t_{n}(s,a)}(s,a,N)\right)\norm{W_{t_{n}(s,a)}}_{\infty}^{2}\notag\\
		&\le 2C_{\mathrm{u}}C_{\mathrm{w}}^{2}, \label{eq:freedman-app1-3}
	\end{align}
	in which we use the assumptions $\norm{W_i}_{\infty} \leq C_{\mathrm{w}}$ and $0 \le u_i(s,a,N) \le C_{\mathrm{u}}$ as well as $0\leq \sum_{n=1}^{N_t(s,a)} u_{t_n(s,a)}(s,a,N) \leq 2$.

    Finally, with \eqref{eq:freedman-app1-1}, \eqref{eq:freedman-app1-2} and \eqref{eq:freedman-app1-3} established, we can invoke Freedman's inequality (Theorem \ref{thm:Freedman}) with $m=\lceil \log_2 N \rceil$ and take the union bound over all possible $(t, s, a, N) \in [T] \times \cS \times \cA \times [T]$. This can lead to the desired result: with probability at least $1- \delta$, 
	\begin{align}
		&\quad \ \left|\sum_{i=1}^{t}X_{i}(s,a,N)\right| \notag\\
        &\lesssim \sqrt{\max\left\{ C_{\mathrm{u}}\sum_{n=1}^{N_{t}(s,a)}u_{t_{n}(s,a)}(s,a,N)\Var_{s,a}\left(W_{t_{n}(s,a)}\right),\frac{C_{\mathrm{u}}C_{\mathrm{w}}^{2}}{N}\right\}\log\frac{SAT^{2}\log N}{\delta}} + C_{\mathrm{u}}C_{\mathrm{w}}\log\frac{SAT^{2}\log N_t}{\delta} \notag\\
		&\lesssim \sqrt{C_{\mathrm{u}}\log^{2}\frac{SAT}{\delta}}\sqrt{\sum_{n=1}^{N_{t}(s,a)}u_{t_{n}(s,a)}(s,a,N)\Var_{s,a}\left(W_{t_{n}(s,a)}\right)}+\left(C_{\mathrm{u}}C_{\mathrm{w}} + \sqrt{\frac{C_{\mathrm{u}}}{N}}C_{\mathrm{w}}\right)\log^{2}\frac{SAT}{\delta}\notag
	\end{align}
	for all $(t, s, a, N) \in [T] \times \cS \times \cA \times [T]$ simultaneously. 
\end{proof}

\begin{lemma}\label{lemma:freedman-application2}
    Let $\big\{ N(s,a) \in [T] \mid (s,a)\in \cS\times \cA\big\}$ be a collection of positive integers. For any $(s,a)\in \cS\times \cA$, let $\left\{ W_{i} \in \mathbb{R}^S \mid 1\leq i\leq T \right\}$ and $\left\{u_i(s_i,a_i)\in \mathbb{R} \mid 1\leq i\leq T\right\}$ be a collection of vectors and scalars, respectively, and suppose that both obey the following properties:
	\begin{itemize}
		\item For any $i \in [T]$, $W_{i}$ is fully determined by the samples collected up to the end of the $(i-1)$-th step;  
		\item For any $i \in [T]$, $\norm{W_i}_{\infty}\leq C_\mathrm{w}$ and $W_i \geq 0$ deterministically; 
		\item For any $i \in [T]$, $u_i(s_i, a_i)$ is fully determined by the integer $N(s_i, a_i)$ and all samples collected up to the end of the $(i-1)$-th step; 
		\item For any $i \in [T]$, $0\leq u_i(s_i, a_i) \leq C_{\mathrm{u}}$ deterministically.
	\end{itemize}
	In addition, define the sequences  
	\begin{align}
		X_{i} &:= u_i(s_i,a_i)\big(P_{i} - P_{s_i,a_i}\big) W_{i},
		\qquad &\text{for}\ 1\leq i\leq T,\\
		Y_{i} &:= \big(P_{i} - P_{s_i,a_i}\big) W_{i},\qquad &\text{for}\ 1\leq i\leq T.
	\end{align}
	Let $\delta \in (0,1)$. With probability at least $1-\delta$, 
	\begin{align}
		\left|\sum_{i=1}^{T}X_{i}\right| & \lesssim\sqrt{C_{\mathrm{u}}^{2}\sum_{i=1}^{T}\EE_{i-1}\left[\left|(P_{i}-P_{s_{i},a_{i}})W_{i}\right|^{2}\right]\log\frac{T^{SA}}{\delta}}+C_{\mathrm{u}}C_{\mathrm{w}}\log\frac{T^{SA}}{\delta} \notag\\
		&\lesssim \sqrt{C_{\mathrm{u}}^{2}C_{\mathrm{w}}\sum_{i=1}^{T}\EE_{i-1}\left[P_{i}W_{i}\right]\log\frac{T^{SA}}{\delta}}+C_{\mathrm{u}}C_{\mathrm{w}}\log\frac{T^{SA}}{\delta} \notag\\
		\left|\sum_{i=1}^{T}Y_{i}\right| &\lesssim \sqrt{TC_{\mathrm{w}}^{2}\log\frac{1}{\delta}}+C_{\mathrm{w}}\log\frac{1}{\delta}\notag
	\end{align}
	for all possible collections $\{N(s,a) \in [T] \mid (s,a) \in \cS\times \cA\}$ simultaneously. 
\end{lemma}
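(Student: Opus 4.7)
My plan is to apply Freedman's inequality (Theorem \ref{thm:Freedman}) to $\{X_i\}$ and to $\{Y_i\}$ for each fixed collection $\{N(s,a) : (s,a)\in\cS\times\cA\}$, and then to take a union bound over all $T^{SA}$ possible such collections. Fix one such collection, and consider the filtration $\widetilde{\mathcal{F}}_i := \sigma(\mathcal{F}_{i-1}, s_i, a_i)$, where $\mathcal{F}_{i-1}$ is generated by everything observed through the end of step $i-1$. Under this filtration, $W_i$ is $\mathcal{F}_{i-1}$-measurable by assumption and $u_i(s_i,a_i)$ is $\widetilde{\mathcal{F}}_i$-measurable since its value is determined by $N(s_i,a_i)$ together with the past samples; moreover $\EE[P_i \mid \widetilde{\mathcal{F}}_i] = P_{s_i,a_i}$, so $\EE[X_i \mid \widetilde{\mathcal{F}}_i] = 0$ and $\{X_i\}$ is a martingale difference sequence. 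The boundedness $|X_i| \le 2 C_{\mathrm{u}} C_{\mathrm{w}}$ follows from $\|P_i - P_{s_i,a_i}\|_1 \le 2$ together with $\|W_i\|_\infty \le C_{\mathrm{w}}$ and $0\le u_i \le C_{\mathrm{u}}$.

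Next, using $u_i^2 \le C_{\mathrm{u}}^2$, I would control the predictable quadratic variation by $\sum_{i=1}^T \EE[X_i^2 \mid \widetilde{\mathcal{F}}_i] \le C_{\mathrm{u}}^2 \sum_{i=1}^T \EE_{i-1}\bigl[|(P_i - P_{s_i,a_i})W_i|^2\bigr]$, which is at most $4 T C_{\mathrm{u}}^2 C_{\mathrm{w}}^2$ deterministically (supplying the $\sigma^2$ parameter for Freedman's inequality). Invoking Freedman with $m = \lceil \log_2 T \rceil$ to perform the standard peeling over the unknown variance proxy, and then union-bounding over all $T^{SA}$ possible choices of $\{N(s,a)\}$, yields the first claimed inequality, with the peeling's $\log(\log T / \delta)$ factor absorbed into $\log(T^{SA}/\delta)$. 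The second form follows from the pointwise inequality $W_i(s)^2 \le C_{\mathrm{w}} W_i(s)$ (valid since $0 \le W_i(s) \le C_{\mathrm{w}}$), which gives $\EE_{i-1}[|(P_i - P_{s_i,a_i})W_i|^2] = \Var_{s_i,a_i}(W_i) \le P_{s_i,a_i}(W_i^2) \le C_{\mathrm{w}} \EE_{i-1}[P_i W_i]$; substituting this into the first inequality completes that part.

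For $\{Y_i\}$ the argument is noticeably simpler: $Y_i$ depends on neither $u_i$ nor $\{N(s,a)\}$, so no union bound is needed, and a single application of Azuma-Hoeffding (Theorem \ref{thm:hoeffding}) with $|Y_i| \le 2 C_{\mathrm{w}}$ produces the stated $O(\sqrt{T C_{\mathrm{w}}^2 \log(1/\delta)} + C_{\mathrm{w}} \log(1/\delta))$ bound. The main technical subtlety I expect is the $T^{SA}$ union bound itself: if the collection $\{N(s,a)\}$ were allowed to be data-dependent, $u_i$ would not be $\widetilde{\mathcal{F}}_i$-measurable at the realized value of $N(s_i,a_i)$ and Freedman could not be applied directly. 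Establishing the bound uniformly over all deterministic choices $\{N(s,a)\} \in [T]^{SA}$ sidesteps this issue, and is precisely the reason the final logarithm is $\log(T^{SA}/\delta)$ rather than $\log(T/\delta)$; the remainder of the proof is a routine Freedman calculation.
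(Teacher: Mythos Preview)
Your proposal is correct and follows essentially the same approach as the paper: verify boundedness and the martingale-difference property for $X_i$, bound the predictable quadratic variation by $C_{\mathrm{u}}^2 \sum_i \EE_{i-1}[|(P_i-P_{s_i,a_i})W_i|^2] \le C_{\mathrm{u}}^2 C_{\mathrm{w}} \sum_i \EE_{i-1}[P_i W_i] \le T C_{\mathrm{u}}^2 C_{\mathrm{w}}^2$, apply Freedman with $m=\lceil\log_2 T\rceil$, and union-bound over the $T^{SA}$ collections. The only cosmetic differences are that the paper treats $(s_i,a_i)$ as already $\mathcal{F}_{i-1}$-measurable (since $s_i$ is observed at the end of step $i-1$ and $a_i$ is a deterministic function of past data) rather than introducing your augmented filtration $\widetilde{\mathcal{F}}_i$, and for $\{Y_i\}$ the paper invokes Freedman with $m=1$ instead of Azuma--Hoeffding; both choices yield the stated bound.
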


\begin{proof}[Proof of Lemma \ref{lemma:freedman-application2}]
    In this proof, we apply Freedman's inequality (Theorem \ref{thm:Freedman}) for an upper bound on $\sum_{i=1}^T X_i$ and $\sum_{i=1}^T Y_i$ respectively. 
    
    We begin by focusing on $X_i$ and verifying whether the conditions of Freedman's inequality are satisfied. We keep $\left\{N(s,a) \in [T] \mid (s,a)\in \cS\times \cA\right\}$ fixed when checking the conditions. First, we verify that $X_i$ is deterministically bounded:
    \begin{align}
		\left| X_i \right| &\le u_i(s_i,a_i) \left|\left( P_{i} - P_{s_i,a_i} \right) W_{i} \right| \leq u_i(s_i,a_i) \left(\norm{P_{i}}_1 + \norm{P_{s_i,a_i}}_1 \right) \norm{W_i}_{\infty}  
		\leq 2C_\mathrm{w} C_\mathrm{u}, \label{eq:freedman-app2-1}
	\end{align}
    in which we use the assumptions $\norm{W_i}_{\infty} \leq C_{\mathrm{w}}$ and $0 \le u_i(s_i,a_i) \le C_{\mathrm{u}}$ as well as $\norm{P_{i}}_1 = \norm{P_{s_i,a_i}}_1=1$. 

    Then, since $u_i(s_i,a_i)$ is deterministic when conditioned on the $(i-1)$-th step and $N(s_i,a_i)$, we can also verify that
    \begin{align}
        \EE_{i-1}\left[ X_i \right] = u_i(s_i,a_i) \EE_{i-1}\left[\left( P_{i} - P_{s_i,a_i} \right)\right]W_i = 0,
    \end{align}
    because $W_i$ is deterministic when conditioned on the $(i-1)$-th step. Note that $a_i$ is determined by $Q_{j}$ for some $j < i$, which is deterministic when conditioned on the $(i-1)$-th step, so $\EE_{i-1}\left[P_{i} - P_{s_i,a_i}\right] = 0$.

    With the conditions verified, we give a bound on the second moment of $X_i$ before invoking Freedman's inequality:
    \begin{align}
		\sum_{i=1}^{T}\EE_{i-1}\left[\left|X_i\right|^{2}\right] &= \sum_{i=1}^{T}\left(u_{i}(s_i,a_i)\right)^{2} \EE_{i-1}\left[\left|(P_{i}-P_{s,a})W_{i}\right|^{2}\right] \notag\\
		& \overset{(\mathrm{i})}{\le} C_{\mathrm{u}}^{2}\sum_{i=1}^{T}\EE_{i-1}\left[\left|(P_{i}-P_{s_{i},a_{i}})W_{i}\right|^{2}\right] \label{eq:freedman-app2-3}\\
			& \le C_{\mathrm{u}}^{2}\sum_{i=1}^{T}\EE_{i-1}\left[\left|P_{i}W_{i}\right|^{2}\right]\notag\\
			& \overset{(\mathrm{ii})}{=} C_{\mathrm{u}}^{2}\sum_{i=1}^{T}\EE_{i-1}\left[P_{i}(W_{i})^{2}\right]\notag\\
			& \overset{(\mathrm{iii})}{\le}C_{\mathrm{u}}^{2}\sum_{i=1}^{T}\norm{W_i}_{\infty}\EE_{i-1}\left[P_{i}W_{i}\right] \notag\\
			& \overset{(\mathrm{iv})}{\le}C_{\mathrm{u}}^{2}C_{\mathrm{w}}\sum_{i=1}^{T}\EE_{i-1}\left[P_{i}W_{i}\right] \label{eq:freedman-app2-4}\\
			& \leq C_{\mathrm{u}}^{2}C_{\mathrm{w}}\sum_{i=1}^{T} \norm{W_i}_{\infty} \notag\\
			& \overset{\mathrm{(v)}}{\le} TC_{\mathrm{u}}^{2}C_{\mathrm{w}}^2. \label{eq:freedman-app2-5}
	\end{align}
    In the series of equalities and inequalities above, (i) is due to the assumption $0 \leq u_i(s_i,a_i) \leq C_{\mathrm{u}}$. (ii) is due to $P_i$ is a canonical basis vector. (iii) is due to the non-negativity of $W_i$. (iv) and (v) are obtained using $\norm{W_i}_{\infty} \le C_{\mathrm{w}}$.

    Finally, with \eqref{eq:freedman-app2-1}, \eqref{eq:freedman-app2-4} and \eqref{eq:freedman-app2-5} established, we can invoke Freedman's inequality (Theorem \ref{thm:Freedman}) with $m=\lceil \log_2 N \rceil$ and take the union bound over all possible collections $\{N(s,a) \in [T] \mid (s,a)\in \cS\times \cA\}$, which has at most $T^{SA}$ possibilities. This can lead to the desired result: with probability at least $1- \delta$, 
    \begin{align*}
		\left|\sum_{i=1}^{T}X_{i}\right| & \lesssim \sqrt{\max\left\{ C_{\mathrm{u}}^{2}\sum_{i=1}^{T}\EE_{i-1}\left[ \left|(P_{i}-P_{s_{i},a_{i}})W_{i}\right|^{2}\right],\frac{TC_{\mathrm{u}}^{2}C_{\mathrm{w}}^{2}}{2^{m}}\right\} \log\frac{T^{SA}\log T}{\delta}} + C_{\mathrm{u}}C_{\mathrm{w}}\log\frac{T^{SA}\log T}{\delta}\\
		&\lesssim \sqrt{C_{\mathrm{u}}^{2}\sum_{i=1}^{T}\EE_{i-1}\left[\left|(P_{i}-P_{s_{i},a_{i}})W_{i}\right|^{2}\right]\log\frac{T^{SA}}{\delta}} + C_{\mathrm{u}}C_{\mathrm{w}}\log\frac{T^{SA}}{\delta}\\
		&\lesssim\sqrt{C_{\mathrm{u}}^{2}C_{\mathrm{w}}\sum_{i=1}^{T}\EE_{i-1}\left[P_{i}W_{i}\right]\log\frac{T^{SA}}{\delta}} + C_{\mathrm{u}}C_{\mathrm{w}}\log\frac{T^{SA}}{\delta}
	\end{align*}
    for all possible collections $\{N(s,a) \in [T] \mid (s,a)\in \cS\times \cA\}$ simultaneously.

    On the other hand, we can bound $\left|\sum_{i=1}^T Y_{i}\right|$ with a simple application of Freedman's inequality. After checking the conditions for $X_i$, we can readily see
		\begin{align*}
			\left|Y_i\right| \le 2C_{\mathrm{w}}, \\
			\sum_{i=1}^{T}\EE_{i-1}\left[\left|Y_{i}\right|^{2}\right] \le T C_{\mathrm{w}}^2.
		\end{align*}
	
	We can invoke Freedman's inequality (Theorem \ref{thm:Freedman}) with $m=1$ to arrive at
	\begin{align}
		\left|\sum_{i=1}^{T} Y_{i} \right| \lesssim \sqrt{T C_{\mathrm{w}}^{2} \log\frac{1}{\delta}} + C_{\mathrm{w}} \log\frac{1}{\delta}
	\end{align}
	with probability at least $1-\delta$.
\end{proof}

\section{Proof of Theorem \ref{thm:main}}\label{sec:analysis}

For an upper bound on the cumulative regret, we begin our analysis with a decomposition of the regret. By the optimistic property of our value function estimates in Lemma \ref{lem:Q_t-lower-bound}, we can 
\begin{equation}
    \mathrm{Regret}(T) = \sum_{t=1}^T \big(V^\star(s_t) - V^{\pi_t}(s_t)\big) \le \sum_{t=1}^T \big(V_{t-1}(s_t) - V^{\pi_t}(s_t)\big).
\end{equation}
In the inequality, since $V_0(s)$ is undefined in our algorithm, we can simply let $V_0(s) = \frac{1}{1-\gamma}$ for all $s\in \cS$. 

Onward, we can focus on finding an upper bound on $V_{t-1}(s_t) - V^{\pi_t}(s_t)$. To this end, we further decompose $V_{t-1}(s_t) - V^{\pi_t}(s_t)$ into  three component terms and bound each of them respectively. The decomposition is as follows:
\begin{align}
    &\quad\ V_{t-1}(s_t) - V^{\pi_t}(s_t) \notag\\
    &\overset{(\mathrm{i})}{=} V_{t-1}(s_t) - Q^{\pi_t}(s_t, a_t) \notag\\
    &= V_{t-1}(s_t) - Q^\star(s_t, a_t) + Q^\star(s_t, a_t) - Q^{\pi_t}(s_t, a_t) \notag\\
    &\overset{(\mathrm{ii})}{=} V_{t-1}(s_t) - Q^\star(s_t, a_t) + \gamma P_{s_t,a_t}\left(V^\star - V^{\pi_t}\right) \notag\\
    &\overset{(\mathrm{iii})}{=} V_{t-1}(s_t) - Q^\star(s_t, a_t) + \gamma \left(P_{s_t,a_t} - P_t\right)\left(V^\star - V^{\pi_{t}}\right) + \gamma \left(V^\star(s_{t+1}) - V^{\pi_t}(s_{t+1})\right) \notag\\
    &\overset{(\mathrm{iv})}{=} \gamma\left(V^\star(s_{t+1}) - V^{\pi_{t+1}}(s_{t+1})\right) + Q_{t}(s_t, a_t) - Q^\star(s_t, a_t) + \zeta_t \notag\\
    &\overset{(\mathrm{v})}{\leq} \frac{\gamma(3-\gamma)}{2}\left(V^\star(s_{t+1}) - V^{\pi_{t+1}}(s_{t+1})\right) + Q_{t}(s_t, a_t) - Q^\star(s_t, a_t) + \zeta_t. \label{eq:regret-decomp1}
\end{align}
In the series of equalities above, (i) is because $\pi_t(s_t) = a_t$ in Algorithm \ref{alg:main}. (ii) is by the Bellman equation \eqref{eq:bellman} and Bellman optimality equation \eqref{eq:bellman-opt}. (iii) is because by the definition in \eqref{eq:Pt-def}, $P_t(V^\star - V^{\pi_t}) = V^\star(s_{t+1}) - V^{\pi_t}(s_{t+1})$. In (iv), we group a few component terms into a newly-defined term
\begin{equation}\label{eq:zeta-def}
    \zeta_t := V_{t-1}(s_t) - Q_t(s_t,a_t) + \gamma\left(P_{s_t,a_t} - P_t\right)\left(V^\star - V^{\pi_t}\right) + \gamma\left(V^{\pi_{t+1}}(s_{t+1}) - V^{\pi_t}(s_{t+1})\right).
\end{equation}
Lastly, (v) is due to $\gamma \le \frac{\gamma(3-\gamma)}{2}$.

The regret can be further manipulated with an upper bound on $Q_{t}(s_t, a_t) - Q^\star(s_t, a_t)$ via the application of Lemma \ref{lem:Q_t-upper-bound}. Such manipulation leads to a recursion on $V_{t-1}(s_t) - V^{\pi_t}(s_t)$. Rearranging \eqref{eq:regret-decomp1}, we obtain a recursion as follows:
    \begin{align}
        &\quad \left(1 - \frac{\gamma(3-\gamma)}{2}\right)\sum_{t=1}^T \big(V_{t-1}(s_t) - V^{\pi_t}(s_t)\big) \notag\\
        &= \sum_{t=1}^T\left(V_{t-1}(s_t) - V^{\pi_t}(s_t)\right) - \frac{\gamma(3-\gamma)}{2}\sum_{t=1}^T\left(V_{t-1}(s_t) - V^{\pi_t}(s_t)\right) \notag\\
        &\overset{(\mathrm{i})}{\le} \sum_{t=1}^T\left(V_{t-1}(s_t) - V^{\pi_t}(s_t)\right) - \frac{\gamma(3-\gamma)}{2}\sum_{t=2}^T\left(V_{t-1}(s_t) - V^{\pi_t}(s_t)\right) \notag\\
        &= \sum_{t=1}^T\left(V_{t-1}(s_t) - V^{\pi_t}(s_t)\right) - \frac{\gamma(3-\gamma)}{2}\sum_{t=1}^{T-1}\left(V_{t}(s_{t+1}) - V^{\pi_{t+1}}(s_{t+1})\right) \notag\\
        &\overset{(\mathrm{ii})}{\le} \sum_{t=1}^T\bigg(\frac{\gamma(3-\gamma)}{2}\left(V^\star(s_{t+1}) - V^{\pi_{t+1}}(s_{t+1})\right) + Q_{t}(s_t, a_t) - Q^\star(s_t, a_t) + \zeta_t\bigg) - \frac{\gamma(3-\gamma)}{2}\sum_{t=1}^{T-1}\left(V_{t}(s_{t+1}) - V^{\pi_{t+1}}(s_{t+1})\right) \notag\\
        &\overset{(\mathrm{iii})}{\le} C_1\left(\sqrt{\frac{SAT}{1-\gamma}\log^{3}\frac{SAT}{\delta}} + \frac{SA}{(1-\gamma)^7}\log^{7/2} \frac{SAT}{\delta} + \sqrt{\frac{SA\log^2 T}{(1-\gamma)^5}\sum_{t=1}^T \big(V_{t-1}(s_{t}) - V^{\pi_{t}}(s_{t})\big)}\right) \notag\\ & \qquad + \sum_{t=1}^T\bigg(\frac{\gamma(3-\gamma)}{2}\left(V_t(s_{t+1}) - V^{\pi_{t+1}}(s_{t+1})\right) + \zeta_t\bigg) - \frac{\gamma(3-\gamma)}{2}\sum_{t=1}^{T-1}\left(V_{t}(s_{t+1}) - V^{\pi_{t+1}}(s_{t+1})\right) \notag\\
        &= C_1\left(\sqrt{\frac{SAT}{1-\gamma}\log^{3}\frac{SAT}{\delta}} + \frac{SA}{(1-\gamma)^7}\log^{7/2} \frac{SAT}{\delta} + \sqrt{\frac{SA\log^2 T}{(1-\gamma)^5}\sum_{t=1}^T \big(V_{t-1}(s_{t}) - V^{\pi_{t}}(s_{t})\big)}\right) \notag\\ & \qquad + \frac{\gamma(3-\gamma)}{2}\left(V^\star(s_{T+1}) - V^{\pi_{T+1}}(s_{T+1})\right) + \sum_{t=1}^T \zeta_t \notag\\
        &\overset{(\mathrm{iv})}{\le} C_1\left(\sqrt{\frac{SAT}{1-\gamma}\log^{3}\frac{SAT}{\delta}} + \frac{SA}{(1-\gamma)^7}\log^{7/2} \frac{SAT}{\delta} + \sqrt{\frac{SA\log^2 T}{(1-\gamma)^5}\sum_{t=1}^T \big(V_{t-1}(s_{t}) - V^{\pi_{t}}(s_{t})}\big)\right) \notag\\ & \qquad + \frac{3}{2(1-\gamma)} + \sum_{t=1}^T \zeta_t \notag\\
        &\lesssim \sqrt{\frac{SAT}{1-\gamma}\log^{3}\frac{SAT}{\delta}} + \frac{SA}{(1-\gamma)^7}\log^{7/2} \frac{SAT}{\delta} + \sqrt{\frac{SA\log^2 T}{(1-\gamma)^5}\sum_{t=1}^T \big(V_{t-1}(s_{t}) - V^{\pi_{t}}(s_{t})\big)} + \sum_{t=1}^T\zeta_t. \label{eq:regret-decomp2}
    \end{align}
    Above, (i) is because by the optimistic property in Lemma \ref{lem:Q_t-upper-bound}, $V_{t-1}(s_t) \ge V^\star(s_t) \ge V^{\pi_t}(s_t) \ge 0$ and thus $V_{t-1}(s_t) - V^{\pi_t}(s_t) \ge 0$. (ii) is obtained by replacing $\left(V_{t-1}(s_t) - V^{\pi_t}(s_t)\right)$ with \eqref{eq:regret-decomp1}. (iii) is obtained by invoking Lemma \ref{lem:Q_t-upper-bound} to replace $\sum_{t=1}^T \big(Q_t(s_{t},a_{t}) - Q^\star(s_{t},a_{t})\big)$. (iv) is due to $V^\star(s_{T+1}) - V^{\pi_{T+1}}(s_{T+1}) \le \frac{1}{1-\gamma}$ and $\gamma\in(0,1)$. 

Note $0 < 1 - \frac{\gamma(3-\gamma)}{2} < 1$ since $\gamma\in(0,1)$. Dividing \eqref{eq:regret-decomp2} by $1 - \frac{\gamma(3-\gamma)}{2}$, we have
\begin{align}
    & \quad\ \sum_{t=1}^T \big(V_{t-1}(s_t) - V^{\pi_t}(s_t)\big) \notag\\
    &\lesssim \sqrt{\frac{SAT\log^{3}\frac{SAT}{\delta}}{(1-\gamma)^3}} + \frac{SA\log^{7/2} \frac{SAT}{\delta}}{(1-\gamma)^8} + \sqrt{\frac{SA\log^2 T}{(1-\gamma)^7}\sum_{t=1}^T \big(V_{t-1}(s_{t}) - V^{\pi_{t}}(s_{t})\big)} + \frac{1}{1-\gamma}\sum_{t=1}^T\zeta_t, \label{eq:regret-decomp3}
\end{align}
in which the additional $\frac{1}{1-\gamma}$ factor is due to the fact $1 - \frac{\gamma(3-\gamma)}{2} \ge \frac{1-\gamma}{2}$.

We continue by invoking Lemma \ref{lem:xi} on \eqref{eq:regret-decomp3}, which allows us to write \eqref{eq:regret-decomp3} as
\begin{align}
    & \quad\ \sum_{t=1}^T \big(V_{t-1}(s_t) - V^{\pi_t}(s_t)\big) \notag\\
    &\leq C_3\Bigg(\sqrt{\frac{SAT\log^3\frac{SAT}{\delta}}{(1-\gamma)^3}} + \frac{SA\log^{7/2} \frac{SAT}{\delta}}{(1-\gamma)^8} + \sqrt{\frac{SA\log^2 T}{(1-\gamma)^7}\sum_{t=1}^T\left(V_{t-1}(s_{t}) - V^{\pi_{t}}(s_{t})\right)} \notag\\ & \qquad + \sqrt{\frac{SA\log\frac{T}{\delta}}{(1-\gamma)^3} \sum_{t=1}^{T}\big(V^{\star}(s_{t}) - V^{\pi_{t}}(s_{t})\big)}\Bigg) \label{eq:regret-decomp4}
\end{align}
for some absolute constant $C_3 > 0$.

Finally, some algebraic manipulation on \eqref{eq:regret-decomp4} can lead to the desired result in Theorem \ref{thm:main}. In particular, we can use Proposition \ref{prop:algebraic}, which is an algebraic fact about quadratic inequalities like \eqref{eq:regret-decomp4}. The proof of Proposition \ref{prop:algebraic} is provided in Appendix \ref{sec:appendix-proof-prop:algebraic}.

Specifically, we can invoke Proposition \ref{prop:algebraic} on \eqref{eq:regret-decomp4} with
\begin{align*}
    x &:= C_3\log T\sqrt{\frac{SA}{(1-\gamma)^7}},\\
    y &:= C_3\Bigg(\sqrt{\frac{SAT\log^3\frac{SAT}{\delta}}{(1-\gamma)^3}} + \frac{SA\log^{7/2} \frac{SAT}{\delta}}{(1-\gamma)^8} + \sqrt{\frac{SA\log\frac{T}{\delta}}{(1-\gamma)^3} \sum_{t=1}^{T}\big(V^{\star}(s_{t}) - V^{\pi_{t}}(s_{t})\big)}\Bigg),\\
    z &:= \sum_{t=1}^T \big(V_{t-1}(s_t) - V^{\pi_t}(s_t)\big)
\end{align*}
in \eqref{eq:algebraic-lem-cond}. \eqref{eq:algebraic-lem-stat} in Proposition \ref{prop:algebraic} would give
\begin{align}
    & \quad\ \sum_{t=1}^T \big(V^\star(s_t) - V^{\pi_t}(s_t)\big) \notag\\
    & \leq \sum_{t=1}^T \big(V_{t-1}(s_t) - V^{\pi_t}(s_t)\big) \notag\\
    &\leq 2C_3\sqrt{\frac{SAT\log^3\frac{SAT}{\delta}}{(1-\gamma)^3}} + (2C_3 + C_3^2)\frac{SA\log^{7/2}\frac{SAT}{\delta}}{(1-\gamma)^8} + 2C_3\sqrt{\frac{SA\log\frac{T}{\delta}}{(1-\gamma)^3} \sum_{t=1}^{T}\big(V^{\star}(s_{t}) - V^{\pi_{t}}(s_{t})\big)}\Bigg). \label{eq:regret-decomp5}
\end{align}

We apply Proposition \ref{prop:algebraic} once again on \eqref{eq:regret-decomp5} by letting 
\begin{align*}
    x &:= 2C_3\sqrt{\frac{SA\log\frac{T}{\delta}}{(1-\gamma)^3}},\\
    y &:= 2C_3\sqrt{\frac{SAT\log^3\frac{SAT}{\delta}}{(1-\gamma)^3}} + (2C_3 + C_3^2)\frac{SA\log^{7/2}\frac{SAT}{\delta}}{(1-\gamma)^8},\\
    z &:= \sum_{t=1}^T \big(V^\star(s_t) - V^{\pi_t}(s_t)\big)
\end{align*}
in \eqref{eq:algebraic-lem-cond}. \eqref{eq:algebraic-lem-stat} would give
\begin{equation}
    \Regret(T) = \sum_{t=1}^T \big(V^\star(s_t) - V^{\pi_t}(s_t)\big) \lesssim \sqrt{\frac{SAT\log^3\frac{SAT}{\delta}}{(1-\gamma)^3}} + \frac{SA\log^{7/2}\frac{SAT}{\delta}}{(1-\gamma)^8} \label{eq:regret-decomp-final}
\end{equation}
with probability at least $1-\delta$, which is as claimed in Theorem \ref{thm:main}.


\section{Proof of Lemma \ref{lem:Q_t-lower-bound}}\label{sec:proof:Q_t-lower-bound}

We can prove this lemma by induction. 

For the base case, it is clear that $Q_1(s,a) = \frac{1}{1-\gamma} \ge Q^\star(s,a)$ for all $(s,a)\in \cS\times\cA$. For the inductive step, assuming the inductive hypothesis $Q_{t}(s,a) \ge Q^\star(s,a)$ all the way up to $t$ for all $(s,a)\in \cS\times\cA$, we need to prove $Q_{t+1}(s,a) \ge Q^\star(s,a)$. To this end, given any $t \in [T]$ and the inductive hypothesis at the $t$-th step, it suffices to prove for all $(s,a) \in \cS \times \cA$,
    \begin{equation}\label{eq:Q-monotonicity-intermediate}
		\min\{Q^{\UCB}_{t+1}(s, a), Q^{\mathrm{R}}_{t+1}(s, a)\} \ge Q^\star(s,a).
	\end{equation}
	To prove \eqref{eq:Q-monotonicity-intermediate}, we need to examine the two arguments of the $\min$ function separately and prove 
	\begin{equation}\label{eq:QUCB-monotonicity}
		Q^{\UCB}_{t+1}(s, a) \ge Q^\star(s,a).
	\end{equation}
	and 
	\begin{equation}\label{eq:QR-monotonicity}
		Q^{\mathrm{R}}_{t+1}(s, a) \ge Q^\star(s,a).
	\end{equation}
	respectively. In the proof of \eqref{eq:QUCB-monotonicity}, we will use the abbreviation $N := N_t(s,a)$ for notational simplicity. In this notation, we can see that $(s,a) := (s_{t_N}, a_{t_N})$. 
    
    $\bullet$ Proof of \eqref{eq:QUCB-monotonicity}: First, let us prove the optimism of $Q^{\UCB}$ in \eqref{eq:QUCB-monotonicity}.

    Let $b_n := c_b\sqrt{\frac{\log\frac{SAT}{\delta}}{(1-\gamma)^3 n}}$. We rewrite the subroutine \textbf{update-q-ucb}() in Line \ref{alg-line:QUCB-update} of Algorithm \ref{alg:aux}:
    \begin{equation*}
        Q^{\UCB}_{t+1}(s, a) = Q^{\UCB}_{t_N+1}(s, a) = (1 - \eta_N)Q^{\UCB}_{t_N}(s, a) + \eta_{N}\left(r(s, a) + \gamma V_{t_N}(s_{t_N+1}) + b_N\right).
    \end{equation*}
    This recursive relation can be expanded as
    \begin{equation}
        Q^{\UCB}_{t+1}(s, a) = \eta_0^N Q^{\UCB}_{1}(s, a) + \sum_{n=1}^N \eta_n^N \big(r(s, a) + \gamma V_{t_n}(s_{t_n+1}) + b_n\big), \label{eq:QUCB}
    \end{equation}
    in which $\eta_0^N$ and $\eta_n^N$ are defined in Lemma \ref{lem:learning-rate}.
    
    Using \eqref{eq:QUCB}, we have 
    \begin{align}\label{eq:Q_UCB-Q*}
        &\quad Q^{\UCB}_{t+1}(s, a) - Q^\star(s, a) \nonumber\\
        &= \eta_0^N \left(Q^{\UCB}_{1}(s, a) - Q^\star(s, a)\right) + \sum_{n=1}^N \eta_n^N \big(r(s, a) + \gamma V_{t_n}(s_{t_n+1}) + b_n - Q^\star(s, a)\big) \nonumber\\
        &= \eta_0^N \left(Q^{\UCB}_{1}(s, a) - Q^\star(s, a)\right) + \sum_{n=1}^N \eta_n^N \Big(\gamma\big(V_{t_n}(s_{t_n+1}) - V^\star(s_{t_n+1})\big) + \gamma\left(P_{t_n} - P_{s,a}\right)V^\star + b_n\Big),
    \end{align}
    where the last step is by $Q^\star(s,a) = r(s,a) + \gamma P_{s,a} V^\star$ and $V^\star(s_{t_n+1}) = P_{t_n}V^\star$.
    
    We can invoke Lemma 4.3 from \citet{Jin-provably}, which relies on Azuma-Hoeffding inequality, with only notational modification and obtain with probability at least $1-\delta$,
    \begin{equation}\label{eq:Jin-Hoeffding}
        \left|\sum_{n=1}^N\eta_n^N\gamma\left(P_{t_n} - P_{s,a}\right)V^\star\right| \le c_b\sqrt{\frac{\log\frac{SAT}{\delta}}{(1-\gamma)^3N}}\sum_{n=1}^N\eta_n^N \le \sum_{n=1}^N\eta_n^N b_n,
    \end{equation}
    for all $(N,s,a) \in [T] \times \cS \times \cA$. This allows us to conclude that with probability at least $1-\delta$,
    \begin{align*}
        &\quad Q^{\UCB}_{t+1}(s, a) - Q^\star(s, a)\\
        &\ge \eta_0^N \left(Q^{\UCB}_{1}(s, a) - Q^\star(s, a)\right) + \sum_{n=1}^N \eta_n^N \Big(\gamma\big(V_{t_n}(s_{t_n+1}) - V^\star(s_{t_n+1})\big) + \gamma\left(P_{t_n} - P_{s,a}\right)V^\star + b_n\Big)\\
        &\ge \eta_0^N \left(Q^{\UCB}_{1}(s, a) - Q^\star(s, a)\right) + \sum_{n=1}^N \eta_n^N \gamma\big(V_{t_n}(s_{t_n+1}) - V^\star(s_{t_n+1})\big)\\
        &\ge \sum_{n=1}^N \eta_n^N \gamma\big(V_{t_n}(s_{t_n+1}) - V^\star(s_{t_n+1})\big),
    \end{align*}
    where the last step is because $Q^{\UCB}_{1}(s,a) = \frac{1}{1-\gamma} \ge Q^\star(s,a)$.
    
    Therefore, we can prove \eqref{eq:QUCB-monotonicity}
    \begin{equation*}
        Q^{\UCB}_{t+1}(s, a) - Q^\star(s, a) \ge \sum_{n=1}^N \eta_n^N \gamma\big(V_{t_n}(s_{t_n+1}) - V^\star(s_{t_n+1})\big) \ge 0,
    \end{equation*}
    where the last inequality is a result of the inductive hypothesis: $V_t(s) \ge Q_t(s, \pi^\star(s)) \ge Q^\star(s, \pi^\star(s)) = V^\star(s)$, for all $(t,s,a)\in [T]\times \cS \times \cA$.

    $\bullet$ Proof of \eqref{eq:QR-monotonicity}: Next, let us prove the optimism of $Q^{\rref}$ in \eqref{eq:QR-monotonicity}. Since only $Q_{t+1}^{\rref}(s_t,a_t)$ is updated at the $t$-th step with all other entries of $Q_{t}^{\rref}$ fixed, it suffices to check 
    \begin{equation}
        Q^{\rref}_{t+1}(s_t,a_t) \ge Q^{\star}(s_t,a_t). \label{eq:optimism-induction-step}
    \end{equation}
    In the proof of \eqref{eq:QR-monotonicity}, we will use the abbreviation $N_t := N_t(s_t,a_t)$ and $t_n := t_n(s_t,a_t)$ for notational simplicity.
    \paragraph{Step 1: decomposing $Q^{\rref}_t(s_t,a_t) - Q^{\star}(s_t,a_t)$.}
    
    According to the update rule of $Q_t^\rref$ in Line \ref{alg-line:QR-update} of Algorithm \ref{alg:aux}, we have 
    \begin{align*}
    	&\quad\ Q_{t+1}^{\rref}(s_t,a_t) \\
        & = Q^{\rref}_{t_{N_t}+1}(s_t,a_t) \\
        &=(1-\eta_{N_t})Q^{\rref}_{t_{N_t}}(s_t,a_t) +\eta_{N_t}\left\{ r(s_t,a_t)+\gamma \left(V_{t_{N_t}}(s_{t_{N_t}+1})-V^{\rref}_{t_{N_t}}(s_{t_{N_t}+1})+\mu^{\re}_{t_{N_t}+1}(s_t,a_t)\right)+b^{\rref}_{t_{N_t}+1}\right\} \\
    	& = (1-\eta_{N_t})Q^{\rref}_{t_{N_t-1}+1}(s_t,a_t) +\eta_{N_t}\left\{ r(s_t,a_t)+\gamma \left(V_{t_{N_t}}(s_{t_{N_t}+1})-V^{\rref}_{t_{N_t}}(s_{t_{N_t}+1})+\mu^{\re}_{t_{N_t}+1}(s_t,a_t)\right)+b^{\rref}_{t_{N_t}+1}\right\},	
    \end{align*}
    in which the first and last step follow from the fact $t_N = t_{N_t} = t$. Expanding this recursively, we have
    \begin{align}
    	Q_{t+1}^{\rref}(s_t,a_t) 
    	& = \eta_0^{N_t} Q_1^{\rref}(s_t, a_t)   \notag \\
    	& \qquad + \sum_{n = 1}^{N_t} \eta_n^{N_t} \left\{ r(s_t,a_t) + \gamma\left(V_{t_n}(s_{t_n+1}) - V^{\rref}_{t_n}(s_{t_n+1}) + \mu^{\re}_{t_{n}+1}(s_t,a_t)\right) + b^{\rref}_{t_{n}+1} \right\}.
    	\label{eq:Q_R-decompose}
    \end{align}
    On the other hand, since $\eta_0^{N_t} + \sum_{n = 1}^{N_t} \eta_n^{N_t} = 1$ by \eqref{eq:learning-rate1}, we can write $Q^{\star} (s_t,a_t)$ as a sum weighted by $\{\eta^{N_t}_n\}_{n=0}^{N_t}$. This leads to
    \begin{align}\label{eq:Q_R-Q*-decompose}
    	& Q_{t+1}^{\rref}(s_t,a_t) - Q^{\star} (s_t,a_t) = \eta_0^{N_t} \left(Q_1^{\rref}(s_t, a_t) - Q^{\star} (s_t,a_t)\right) \notag \\
    	& \qquad + \sum_{n = 1}^{N_t} \eta_n^{N_t} \left\{ r(s_t,a_t) + \gamma\left(V_{t_n}(s_{t_n+1}) - V^{\rref}_{t_n}(s_{t_n+1}) + \mu^{\re}_{t_{n}+1}(s_t,a_t)\right) + b^{\rref}_{t_{n}+1} - Q^{\star} (s_t,a_t)\right\}.
    \end{align}

    Moreover, to further expand \eqref{eq:Q_R-Q*-decompose}, we can invoke the definition of $\mu^{\re}$ (which is a moving average of $V^{\rref}$) and expand $Q^{\star}(s_t,a_t)$ using the Bellman optimality equation (cf. \eqref{eq:bellman-opt}), which gives:
    \begin{align}
    	&\quad\ r(s_t,a_t) + \gamma\left(V_{t_n}(s_{t_n+1}) - V^{\rref}_{t_n}(s_{t_n+1}) + \mu^{\re}_{t_{n}+1}(s_t,a_t)\right) + b^{\rref}_{t_{n}+1} - Q^{\star} (s_t,a_t)\nonumber\\
    	&= \gamma\left(V_{t_n}(s_{t_n+1}) - V^{\rref}_{t_n}(s_{t_n+1})+\frac{\sum_{i=1}^{n}V^{\rref}_{t_i}(s_{t_i+1})}{n}-P_{s_t,a_t}V^{\star}\right)+b^{\rref}_{t_{n}+1} \label{eq:Q_R-Q*-decompose1}\\
    	&= \gamma\left(P_{t_n}\left\{V_{t_n} - V^{\rref}_{t_n}\right\}+\frac{\sum_{i=1}^{n}P_{t_n}V^{\rref}_{t_i}}{n}-P_{s_t,a_t}V^{\star}\right)+b^{\rref}_{t_{n}+1}\nonumber\\
    	&= \gamma\left(P_{s_t,a_t}\left\{V_{t_n} - V^{\rref}_{t_n}\right\}+\frac{\sum_{i=1}^{n}P_{s_t,a_t}V^{\rref}_{t_i}}{n}-P_{s_t,a_t}V^{\star}\right)+b^{\rref}_{t_{n}+1} + \xi_{t_n}\nonumber\\
    	&= \gamma P_{s_t,a_t}\left(V_{t_n} - V^\star +\frac{\sum_{i=1}^{n}V^{\rref}_{t_i} - V^{\rref}_{t_n}}{n}\right)+b^{\rref}_{t_{n}+1} + \xi_{t_n},\label{eq:Q_R-Q*-decompose2}
    \end{align}
    in which we group the higher-order terms into a newly-defined term
    \begin{equation}
    	\xi_{t_n} :=  \gamma\big(P_{t_n} - P_{s_t,a_t} \big)\big(V_{t_n} - V^{\rref}_{t_n} \big) +  \frac{\gamma}{n}\sum_{i=1}^n \big( P_{t_n} - P_{s_t,a_t} \big) V^{\rref}_{t_i}.
    	\label{eq:xi-def}
    \end{equation}
    
    Combining \eqref{eq:Q_R-Q*-decompose} and \eqref{eq:Q_R-Q*-decompose2} leads to the following decomposition
    \begin{align}
    	&\quad\ Q^{\rref}_t(s_t,a_t) - Q^{\star}(s_t,a_t)\notag\\
        &= \eta^{N_t}_0 \left(Q^{\rref}_1(s_t,a_t) - Q^{\star}(s_t,a_t)\right) + \sum_{n = 1}^{N_t} \eta^{N_t}_n \left\{\gamma P_{s_t,a_t}\left(V_{t_n} - V^\star +\frac{\sum_{i=1}^{n}V^{\rref}_{t_i} - V^{\rref}_{t_n}}{n}\right)+b^{\rref}_{t_{n}+1} + \xi_{t_n}\right\}. 
    	\label{eq:QR-Q*-final}
    \end{align}

    Finally, we can notice that several terms in \eqref{eq:QR-Q*-final} can be dropped:
    \begin{enumerate}[label=(\arabic*)]
		\item Our initialization satisfies $Q^{\rref}_1(s,a) - Q^{\star}(s,a) \geq 0$ for all $(s,a) \in \cS\times\cA$.
		
		\item The inductive hypothesis implies that for any $n$ such that $1 \leq t_n \leq t$, one has $V_{t_n}\geq V^{\star}$.
		
		\item For all $0 \leq i \leq n$ and any $s\in \cS$, one has 
		\begin{equation}
			V^{\rref}_{t_i}(s) - V^{\rref}_{t_n}(s) \geq 0,
			\label{eq:VRti-VRtn}
		\end{equation}
        because $V^{\rref}_t(s)$ is updated with $V_t(s)$ and $V_t(s)$ is non-increasing by its update rule.
	\end{enumerate}

    Thus, \eqref{eq:QR-Q*-final} can be lower-bounded as follows:
    \begin{equation}
		Q^{\rref}_t(s_t,a_t) - Q^{\star}(s_t,a_t)
		\geq \sum_{n = 1}^{N_t} \eta^{N_t}_n \left(b^{\rref}_{t_{n}+1} + \xi_{t_n}\right).
		\label{eq:QR-Q-LB}
	\end{equation}

    To establish the desired result of optimism, it remains to show the right-hand side of \eqref{eq:QR-Q-LB} is non-negative. In particular, it suffices for us to show
	\begin{equation}
		\left|\sum_{n = 1}^{N_t} \eta^{N_t}_n \xi_{t_n} \right| \le 
		\sum_{n = 1}^{N_t} \eta^{N_t}_n b^{\rref}_{t_{n}+1}.
		\label{eq:claim-sum-eta-sum-b}
	\end{equation}

    To justify \eqref{eq:claim-sum-eta-sum-b}, we will examine the two quantities that compose $\xi_{t_n}$. Let us denote them as $I_1$ and $I_2$ respectively:
	\begin{subequations}
		\label{eq:I1-I2-defs}
		\begin{align}
			I_1 &:=  \gamma\sum_{n=1}^{N_t} \eta_n^{N_t} \left(P_{t_n} - P_{s_t,a_t} \right)\left(V_{t_n} - V^{\rref}_{t_n} \right),  \label{eq:I1-def} \\
			I_2 &:= \gamma\sum_{n=1}^{N_t} \frac{1}{n} \eta_n^{N_t} \sum_{i=1}^n \left( P_{t_n} - P_{s_t,a_t} \right) V^{\rref}_{t_i}. \label{eq:I2-def}
		\end{align}
	\end{subequations}
    In the following, we will bound $I_1$ and $I_2$ with Lemma \ref{lemma:freedman-application}, which is a result of Freedman's inequality.

    \paragraph{Step 2: controlling $I_1$.}
    We can invoke Lemma \ref{lemma:freedman-application} to control the term $I_1$ defined in \eqref{eq:I1-def}. 
    
	We begin by constructing the necessary notation and verify the conditions for Lemma \ref{lemma:freedman-application}. Consider any $N \in [T]$. Let
	\begin{align}
		W_i := V_{i} - V^{\rref}_{i}
		\qquad \text{and} \qquad
		u_i(s,a, N) := \eta_{N_i(s,a)}^N \geq 0. \label{I1-def1}
	\end{align}
	Clearly, $W_i$ is deterministic at the end of the $(i-1)$-th step. Note that $N_i(s,a)$ is also deterministic at the end of the $(i-1)$-th step, because $s_t$ and $Q_{Z_{t-1}}$ (and thus $a_t$) are already deterministic at the end of the $(i-1)$-th step. Moreover, we define an upper bound $C_{\mathrm{w}}$ as follows:
	\begin{align}
		\norm{W_i}_\infty \leq \norm{V^{\rref}_{i}}_{\infty} + \norm{V_i}_{\infty} \leq \frac{2}{1-\gamma} =: C_{\mathrm{w}}. \label{I1-cw}
	\end{align}
	In addition, using \eqref{eq:learning-rate-def2} and \eqref{eq:learning-rate3} in Lemma \ref{lem:learning-rate}, we have
    \begin{align*}
		\eta_{N_{i}(s,a)}^{N} \leq\frac{6}{N(1-\gamma)} ,\qquad &\text{if }1\le N_{i}(s,a)\le N;\\
		\eta_{N_{i}(s,a)}^{N} = 0, \qquad &\text{if }N_{i}(s,a)>N,
	\end{align*}
    which in turn allows us to define an upper bound $C_{\mathrm{u}}$: 
	\begin{align}
		\max_{N, s, a\in [T] \times \cS\times \cA} \eta_{N_{i}(s,a)}^{N} \leq \frac{6}{N(1-\gamma)} =: C_{\mathrm{u}}. \label{I1-cu}
	\end{align}
    Lastly, we can verify
    \begin{align}
		0 \leq \sum_{n=1}^N u_{t_n(s,a)}(s,a, N) = \sum_{n=1}^N \eta_{n}^N \leq 1 \label{eq:I1-cond-u}
	\end{align}
	holds for all $(N,s,a) \in [T] \times \cS\times \cA$.
 
	Hence, we can apply Lemma \ref{lemma:freedman-application} with \eqref{I1-def1}, \eqref{I1-cw}, \eqref{I1-cu} and $(N,s,a)=(N_t,s_t, a_t)$ and conclude with probability at least $1-\delta$, 
    \begin{align}
		|I_1| &= \left|\gamma\sum_{n=1}^{N_t} \eta_n^{N_t} \left(P_{t_n} - P_{s_t,a_t} \right)\left(V_{t_n} - V^{\rref}_{t_n} \right)\right| = \gamma\left|\sum_{i=1}^t X_i(s_t,a_t, N_t)\right| \nonumber\\
		&\lesssim \sqrt{C_{\mathrm{u}}\log^{2}\frac{SAT}{\delta}}\sqrt{\sum_{n=1}^{N_{t}}u_{t_{n}}(s_t,a_t,N)\Var_{s_t, a_t}\left(W_{t_{n}(s_t, a_t)}\right)}+\left(C_{\mathrm{u}}C_{\mathrm{w}}+\sqrt{\frac{C_{\mathrm{u}}}{N}}C_{\mathrm{w}}\right)\log^{2}\frac{SAT}{\delta}\nonumber\\
		&\lesssim \sqrt{\frac{1}{N_t(1-\gamma)}\log^{2}\frac{SAT}{\delta}}\sqrt{\sum_{n=1}^{N_{t}}\eta^{N_t}_n \Var_{s_t, a_t}\left(V_{t_n} - V^{\rref}_{t_n}\right)} + \frac{\log^2\frac{SAT}{\delta}}{N_t(1-\gamma)^2} \label{eq:I1-1}\\
		&\lesssim \sqrt{\frac{1}{N_t(1-\gamma)}\log^2\frac{SAT}{\delta}}\sqrt{{\sigma}^{\adv}_{t_{N_t}+1}(s_t,a_t) - \left({\mu}^{\adv}_{t_{N_t}+1}(s_t,a_t) \right)^2} + \frac{\log^2\frac{SAT}{\delta}}{(N_t)^{3/4}(1-\gamma)^2}.
		\label{eq:I1-2}
	\end{align}
	Above, the last inequality \eqref{eq:I1-2} requires additional work to establish. Its proof is deferred to Appendix \ref{sec:proof:eq:I1-2} to streamline the presentation.   

    \paragraph{Step 3: controlling $I_2$.}

    Next, we turn to controlling the term $I_2$ defined in \eqref{eq:I2-def}, which can be rearranged as
	\begin{align}
		I_2 = \gamma\sum_{n=1}^{N_t} \frac{1}{n} \eta_n^{N_t} \sum_{i=1}^n \left( P_{t_n} - P_{s_t,a_t} \right) V^{\rref}_{t_i}
		= \gamma\sum_{n=1}^{N_t} \left( \sum_{n = i}^{N_t} \frac{\eta^{N_t}_n}{n} \right)\left( P_{t_n} - P_{s_t,a_t} \right) V^{\rref}_{t_i}.
	\end{align}
	In the following, we control $I_2$ by invoking Lemma \ref{lemma:freedman-application} again. Consider any $N\in[T]$. We abuse the notation by letting
	\begin{align}
		W_i := V^{\rref}_{i} 
		\qquad \text{and} \qquad
		u_i(s,a, N) := \sum_{n=N_i(s,a)}^N \frac{\eta_n^N}{n} \geq 0. \label{I2-def1}
	\end{align}
	Moreover, define
	\begin{align}
		\norm{W_i}_\infty \leq \norm{V^{\rref}_{i}}_\infty \leq \frac{1}{1-\gamma} := C_{\mathrm{w}} \label{I2-cw}
	\end{align}
	and by \eqref{eq:learning-rate2} in Lemma~\ref{lem:learning-rate}, we have
	\begin{align}
		\sum_{n=N_i(s,a)}^N \frac{\eta_n^N}{n} \leq \sum_{n = 1}^{N} \frac{\eta^{N}_n}{n} \le \frac{2}{N} := C_{\mathrm{u}}, \label{I2-cu}
	\end{align}
	which immediately implies:
	\begin{align}
		0 \leq \sum_{n=1}^N u_{t_n(s,a)}(s,a, N) \leq \sum_{n=1}^{N}\frac{2}{N}\leq 2. 
	\end{align}
    Overall, the conditions for Lemma \ref{lemma:freedman-application} are satisfied. Therefore, we can apply Lemma \ref{lemma:freedman-application} with \eqref{I2-def1}, \eqref{I2-cw}, \eqref{I2-cu} and $(N,s,a)=(N_t,s_t, a_t)$ and conclude with probability at least $1-\delta$,  
	\begin{align}
		| I_2 | &= \left|\gamma\sum_{n=1}^{N_t} \left( \sum_{n = i}^{N_t} \frac{\eta^{N_t}_n}{n} \right) \sum_{i=1}^n \left( P_{t_n} - P_{s_t,a_t} \right) V^{\rref}_{t_i}\right| = \gamma\left|\sum_{i=1}^t X_i(s_t,a_t,N_t)\right|\nonumber\\
		&\lesssim \sqrt{C_{\mathrm{u}}\log^{2}\frac{SAT}{\delta}}\sqrt{\sum_{n=1}^{N_{t}}u_{t_{n}}(s_t,a_t,N)\Var_{s_t,a_t}\left(W_{t_{n}(s_t,a_t)}\right)}+\left(C_{\mathrm{u}}C_{\mathrm{w}}+\sqrt{\frac{C_{\mathrm{u}}}{N}}C_{\mathrm{w}}\right)\log^{2}\frac{SAT}{\delta}\nonumber\\
		&\lesssim \sqrt{\frac{1}{N_t}\log^2\frac{SAT}{\delta}}\sqrt{\frac{1}{N_t}\sum_{n = 1}^{N_t} \Var_{s_t,a_t}\left(W_{t_{n}(s_t,a_t)}\right)} + \frac{1}{N_t(1-\gamma)} \log^2\frac{SAT}{\delta} \label{eq:I2-1}\\
		&\lesssim \sqrt{\frac{1}{N_t}\log^2\frac{SAT}{\delta}}\sqrt{ \sigma^{\re}_{t_{N_t}+1}(s_t,a_t) - \left(\mu^{\re}_{t_{N_t}+1}(s_t,a_t) \right)^2} + \frac{ \log^2\frac{SAT}{\delta}}{(N_t)^{3/4}(1-\gamma)}
		\label{eq:I2-2}
	\end{align}
    Above, the last inequality \eqref{eq:I2-2} requires additional work to establish. Its proof is deferred to Appendix \ref{sec:proof:eq:I2-2} to streamline the presentation.   

    \paragraph{Step 4: combining $I_1$ and $I_2$.}
	
	Summing up the results in \eqref{eq:I1-2} and \eqref{eq:I2-2}, we can find an upper bound on $\left|\sum_{n = 1}^{N_t} \eta^{N_t}_n \xi_{t_n} \right|$ as follows:
    \begin{align}
		&\quad\ \left|\sum_{n = 1}^{N_t} \eta^{N_t}_n \xi_{t_n} \right| \leq |I_1| +|I_2| \notag\\
		&\lesssim \sqrt{\frac{1}{N_t(1-\gamma)}\log^2\frac{SAT}{\delta}}\sqrt{{\sigma}^{\adv}_{t_{N_t}+1}(s_t,a_t) - \left({\mu}^{\adv}_{t_{N_t}+1}(s_t,a_t) \right)^2} \notag\\
		&\qquad + \sqrt{\frac{1}{N_t}\log^2\frac{SAT}{\delta}}\sqrt{ \sigma^{\re}_{t_{N_t}+1}(s_t,a_t) - \left(\mu^{\re}_{t_{N_t}+1}(s_t,a_t) \right)^2} + \frac{\log^2\frac{SAT}{\delta}}{(N_t)^{3/4}(1-\gamma)^2} \nonumber \\
		& \leq B^{\rref}_{t_{N_t}+1}(s_t,a_t) + c_b \frac{\log^2\frac{SAT}{\delta}}{(N_t)^{3/4}(1-\gamma)^2} 
		\label{eq:xi-sum}
	\end{align}
	for some sufficiently large absolute constant $c_b>0$. The last step follows from the definition of $B^{\rref}_{t_{N_t}+1}(s_t,a_t)$ in Line \ref{alg-line:Bnext} and \ref{alg-line:BR} of Algorithm \ref{alg:aux}.

    With $\left|\sum_{n = 1}^{N_t} \eta^{N_t}_n \xi_{t_{n}}\right|$ dealt with, we still need to make some manipulation on $\sum_{n = 1}^{N_t} \eta^{N_t}_n b^{\rref}_{t_{n}+1}$ in order to establish \eqref{eq:claim-sum-eta-sum-b}. We do so by leveraging the following relation, which can be reasoned from the definitions of $b^{\rref}_{t_{n}+1}$ and $\delta^{\rref}_{t_{n}+1}$ in Line \ref{alg-line:b} and \ref{alg-line:delta} of Algorithm \ref{alg:aux}:
	\begin{equation}
		b^{\rref}_{t_{n}+1} =\Big(1-\frac{1}{\eta_{n}}\Big)B_{t_n}^{\rref}(s_{t},a_{t})
		+\frac{1}{\eta_{n}}B_{t_n+1}^{\rref}(s_t,a_t)+\frac{c_b}{n^{3/4}(1-\gamma)^2}\log^{2}\frac{SAT}{\delta}
		\label{eq:b-delta-identity}
	\end{equation}
	by the fact $-\frac{1-\eta_n}{\eta_n}=1-\frac{1}{\eta_n}$.
	Using \eqref{eq:b-delta-identity} and the definition of $\eta_n^N$ in \eqref{eq:learning-rate-def2}, we have
	\begin{align}
		&\quad\ \sum_{n = 1}^{N_t} \eta^{N_t}_n b^{\rref}_{t_{n}+1} \nonumber \\
		&=\sum_{n = 1}^{N_t} \eta_n \prod_{i = n+1}^{N_t}(1-\eta_i) \left( \left(1-\frac{1}{\eta_n}\right) B_{t_n}^{\rref}(s_{t},a_{t})
		+\frac{1}{\eta_{n}}B_{t_n+1}^{\rref}(s_t,a_t) \right) + c_b \sum_{n=1}^{N_t}\frac{ \eta_n^{N_t} }{n^{3/4}(1-\gamma)^2}  \log^2\frac{SAT}{\delta}\nonumber \\
		&= \sum_{n = 1}^{N_t}  \prod_{i = n+1}^{N_t}(1-\eta_i) \left( -\left(1-\eta_n\right) B_{t_n}^{\rref}(s_{t},a_{t})
		+ B_{t_n+1}^{\rref}(s_t,a_t) \right) + c_b \sum_{n=1}^{N_t}\frac{ \eta_n^{N_t} }{n^{3/4}(1-\gamma)^2}  \log^2\frac{SAT}{\delta}\nonumber \\
		&= \sum_{n = 1}^{N_t}  \left(\prod_{i = n+1}^{N_t}(1-\eta_i) B_{t_n+1}^{\rref}(s_{t},a_{t})
		- \prod_{i = n}^{N_t}(1-\eta_i)B_{t_n}^{\rref}(s_t,a_t) \right) + c_b \sum_{n=1}^{N_t}\frac{ \eta_n^{N_t} }{n^{3/4}(1-\gamma)^2}  \log^2\frac{SAT}{\delta}\nonumber \\
		&\overset{\mathrm{(i)}}{=} \sum_{n = 1}^{N_t} \prod_{i = n+1}^{N_t}(1-\eta_i) B_{t_n+1}^{\rref}(s_{t},a_{t})
		- \sum_{n = 2}^{N_t} \prod_{i = n}^{N_t}(1-\eta_i)B_{t_n}^{\rref}(s_t,a_t) + c_b \sum_{n=1}^{N_t}\frac{ \eta_n^{N_t} }{n^{3/4}(1-\gamma)^2}  \log^2\frac{SAT}{\delta}\nonumber \\
		&\overset{\mathrm{(ii)}}{=} \sum_{n = 1}^{N_t} \prod_{i = n+1}^{N_t}(1-\eta_i) B_{t_n+1}^{\rref}(s_{t},a_{t})
		- \sum_{n = 1}^{N_t-1} \prod_{i = n+1}^{N_t}(1-\eta_i)B_{t_n+1}^{\rref}(s_t,a_t) + c_b \sum_{n=1}^{N_t}\frac{ \eta_n^{N_t} }{n^{3/4}(1-\gamma)^2}  \log^2\frac{SAT}{\delta}\nonumber \\
		&= B^{\rref}_{t_{N_t}+1}(s_t,a_t)  + c_b \sum_{n=1}^{N_t}\frac{ \eta_n^{N_t} }{n^{3/4}(1-\gamma)^2}  \log^2\frac{SAT}{\delta}. 
		\label{eq:eta-b-sum-id}
	\end{align}
    Above, (i) is due to $B^{\rref}_{t_1}(s_t,a_t) = 0$. (ii) can be obtained as follows:
	\begin{align*}
		\sum_{n = 2}^{N_t} \prod_{i = n}^{N_t}(1-\eta_i)B_{t_n}^{\rref}(s_t,a_t) & = \sum_{n = 1}^{N_t-1} \prod_{i = n+1}^{N_t}(1-\eta_i)B_{t_{n+1}}^{\rref}(s_t,a_t) = \sum_{n = 1}^{N_t-1} \prod_{i = n+1}^{N_t}(1-\eta_i)B_{t_n+1}^{\rref}(s_t,a_t), 
	\end{align*}
	in which the summation is re-indexed over the first equality by replacing $n$ with $n+1$, and the second equality is true because the state-action pair 
	$(s_t,a_t)$ has not been visited between the $(t_n +1)$-th step and the $(t_{n+1} - 1)$-th step.

    Furthermore, \eqref{eq:eta-b-sum-id} allows us to conclude
	\begin{equation}
		B^{\rref}_{t_{N_t}+1}(s_t,a_t) + c_b\frac{\log^2\frac{SAT}{\delta}}{(N_t)^{3/4}(1-\gamma)^2} \le \sum_{n = 1}^{N_t} \eta^{N_t}_n b^{\rref}_{t_{n}+1} \le B^{\rref}_{t_{N_t}+1}(s_t,a_t) + 2c_b\frac{\log^2\frac{SAT}{\delta}}{(N_t)^{3/4}(1-\gamma)^2},
		\label{eq:eta-b-sum-squeeze}
	\end{equation}
    because \eqref{eq:learning-rate2} of Lemma \ref{lem:learning-rate} implies
    \begin{equation*}
        \frac{1}{(N_t)^{3/4}} \le \sum_{n = 1}^{N_t} \frac{\eta^{N_t}_n}{n^{3/4}} \le \frac{2}{(N_t)^{3/4}}.
    \end{equation*}
	Bringing \eqref{eq:eta-b-sum-squeeze} into \eqref{eq:xi-sum} gives
	\begin{equation}
		\left|\sum_{n = 1}^{N_t} \eta^{N_t}_n \xi_{t_n} \right|
		\le B^{\rref}_{t_{N_t}+1}(s_t,a_t) + c_b \frac{\log^2\frac{SAT}{\delta}}{(N_h^k)^{3/4}(1-\gamma)^2}  \le \sum_{n = 1}^{N_t} \eta^{N_t}_n b^{\rref}_{t_{n}+1} 
	\end{equation}
	as claimed in \eqref{eq:claim-sum-eta-sum-b}. 
 
    Finally, by proving \eqref{eq:claim-sum-eta-sum-b}, we have proven the induction step \eqref{eq:optimism-induction-step} and thus concluded the proof of $Q_t(s, a) \ge Q^\star(s, a)$.

    To prove the optimism in $V_t$, given \eqref{eq:Q-monotonicity}, we can easily see from $V_t$'s update rule that for all $(t,s,a)\in [T]\times \cS \times \cA$, 
    \begin{equation*}
        V_t(s) \ge Q_t(s, \pi^\star(s)) \ge Q^\star(s, \pi^\star(s)) = V^\star(s).
    \end{equation*}
    
    \subsection{Proof of Inequality \eqref{eq:I1-2}}\label{sec:proof:eq:I1-2}

    As it turns out, it suffices to examine the following term in order to prove \eqref{eq:I1-2}:
    \begin{align}
    	I_3 := \sum_{n = 1}^{N_t} \eta^{N_t}_n\Var_{s_t,a_t}\left(V_{t_n} - {V}^{\rref}_{t_n}\right) - {\sigma}^{\adv}_{t_{N_t}+1}(s_t,a_t) + \left({\mu}^{\adv}_{t_{N_t}+1}(s_t,a_t) \right)^2.
    	\label{eq:I3-def}
    \end{align}
    In this section, we would like to show an upper bound on $I_3$. 

    Before we proceed, recall that the update rules of $\mu^{\adv}_{t_{n+1}}$ and ${\sigma}^{\adv}_{t_{n+1}}$ in Line \ref{alg-line:mu-adv} and \ref{alg-line:sigma-adv} of Algorithm \ref{alg:aux} are
    \begin{align*}
    	\mu^{\adv}_{t_{n+1}}(s_t,a_t) &= \mu^{\adv}_{t_n+1}(s_t,a_t) 
    	= (1-\eta_n)\mu^{\adv}_{t_n}(s_t,a_t) + \eta_n \left( V_{t_n}(s_{t_n+1}) - V^{\rref}_{t_n}(s_{t_n+1}) \right) ,\\ 
    	\sigma^{\adv}_{t_{n+1}}(s_t,a_t) &= \sigma^{\adv}_{t_n+1}(s_t,a_t) 
    	= (1-\eta_n)\sigma^{\adv}_{t_n}(s_t,a_t) + \eta_n \left(V_{t_n}(s_{t_n+1}) - V^{\rref}_{t_n}(s_{t_n+1}) \right)^2.
    \end{align*} 
    We can expand these equations recursively and write
    \begin{subequations} \label{eq:I3-1}
    	\begin{align}
    		\mu^{\adv}_{t_{N_{t}}+1}(s_t,a_t) & = \sum_{n=1}^{N_{t}}\eta_{n}^{N_{t}}\left(V_{t_n}(s_{t_n+1}) - V^{\rref}_{t_n}(s_{t_n+1})\right) = \sum_{n=1}^{{N_{t}}}\eta_{n}^{N_{t}}P_{t_{n}}\left(V_{t_{n}}-{V}^{\rref}_{t_n}\right), \label{eq:I3-1.1}\\
    		{\sigma}^{\adv}_{t_{N_{t}}+1}(s_t,a_t) & = \sum_{n=1}^{{N_{t}}}\eta_{n}^{N_{t}}\left(V_{t_n}(s_{t_n+1}) - V^{\rref}_{t_n}(s_{t_n+1})\right)^{2}= \sum_{n=1}^{{N_{t}}}\eta_{n}^{N_{t}}P_{t_{n}}\left(V_{t_{n}}-{V}^{\rref}_{t_n}\right)^{2}. \label{eq:I3-1.2}
    	\end{align}
    \end{subequations}
    Noting that $\sum_{n=1}^{{N_{t}}}\eta_{n}^{N_{t}}=1$, we can use Jensen's inequality and reason from \eqref{eq:I3-1.1} and \eqref{eq:I3-1.2} that: 
    \begin{align}\label{eq:I3-2}
    	{\sigma}^{\adv}_{t_{N_{t}}+1}(s_t,a_t) \geq \left(\mu^{\adv}_{t_{N_{t}}+1}(s_t,a_t)\right)^2.
    \end{align}
    Recall by the definition of variance in \eqref{eq:var-def}, we have
    \begin{equation*}
    	\Var_{s_t,a_t}\left(V_{t_n} - {V}^{\rref}_{t_n} \right) 
    	= P_{s_t,a_t} \left(V_{t_{n}}-{V}^{\rref}_{t_n}\right)^{2} - \left(P_{s_t,a_t} \left(V_{t_{n}}-{V}^{\rref}_{t_n} \right) \right)^2.
    \end{equation*}
    Bringing this into \eqref{eq:I3-2}, we can decompose $I_3$ as follows 
    \begin{align}
    	I_3 & = \sum_{n = 1}^{N_t} \eta^{N_t}_n P_{s_t,a_t} \left(V_{t_{n}}-{V}^{\rref}_{t_n}\right)^{2} - \sum_{n = 1}^{N_t} \eta^{N_t}_n P_{t_n} \left(V_{t_{n}}-{V}^{\rref}_{t_n}\right)^{2} \nonumber \\
    	&\qquad \qquad+ \left(\sum_{n=1}^{{N_{t}}}\eta_{n}^{N_{t}}P_{t_{n}}\left(V_{t_{n}}-{V}^{\rref}_{t_n}\right)\right)^2  - \sum_{n=1}^{{N_{t}}}\eta_{n}^{N_{t}}\left(P_{s_t,a_t} \left(V_{t_{n}}-{V}^{\rref}_{t_n} \right) \right)^2\nonumber \\
    	& \le  \underbrace{ \left|\sum_{n = 1}^{N_t} \eta^{N_t}_n \left(P_{s_t,a_t} - P_{t_n} \right)\left(V_{t_{n}}-{V}^{\rref}_{t_n}\right)^{2}\right| }_{:= I_{3,1}}  \nonumber \\
    	&\qquad \qquad+ \underbrace{\left(\sum_{n=1}^{{N_{t}}}\eta_{n}^{N_{t}}P_{t_{n}}\left(V_{t_{n}}-{V}^{\rref}_{t_n}\right)\right)^2  - \sum_{n=1}^{{N_{t}}}\eta_{n}^{N_{t}}\left(P_{s_t,a_t} \left(V_{t_{n}}-{V}^{\rref}_{t_n} \right) \right)^2}_{:= I_{3,2}} . 
    	\label{eq:I3-3}
    \end{align}
    It remains to control $I_{3,1}$ and $I_{3,2}$ in \eqref{eq:I3-3} separately. 

    \paragraph{Step 1: controlling $I_{3,1}$.}

    Now, let us control the term $I_{3,1}$ defined in \eqref{eq:I3-3} by invoking Lemma \ref{lemma:freedman-application}. Consider any $N\in[T]$. We set
	\begin{align}
    	W_i := \left(V_{i} - {V}^{\rref}_{i} \right)^{ 2}
    	\qquad \text{and} \qquad
    	u_i(s,a, N) := \eta_{N_i(s,a)}^N. 
    	\label{I31-def1}
    \end{align}
	Moreover, define
	\begin{align}
		\norm{W_i}_\infty \leq  \left( \norm{ V^{\rref}_{i} }_{\infty} + \norm{V_{i}}_\infty \right)^2 
	\leq \frac{4}{(1-\gamma)^2}:= C_{\mathrm{w}}, \label{I31-cw}
	\end{align}
	and same as \eqref{I1-cu}, we have
	\begin{align}
		\max_{N, s, a\in [T] \times \cS\times \cA} \eta_{N_{i}(s,a)}^{N} \leq \frac{8}{N(1-\gamma)} := C_{\mathrm{u}}. \label{I31-cu}
	\end{align}
    Clearly, we can verify
    \begin{align}
		0 \leq \sum_{n=1}^N u_{t_n(s,a)}(s,a, N) = \sum_{n=1}^N \eta_{n}^N \leq 1 
	\end{align}
	holds for all $(N,s,a) \in [T] \times \cS\times \cA$, and the conditions for Lemma \ref{lemma:freedman-application} are satisfied.

    Therefore, with $(N,s,a)=(N_t,s_t, a_t)$, we can apply Lemma \ref{lemma:freedman-application} with \eqref{I31-def1}, \eqref{I31-cw} and \eqref{I31-cu} to conclude with probability at least $1-\delta$,  
    \begin{align}
    	|I_{3,1}| &= \left|\sum_{n = 1}^{{N_t}} \eta_n^{N_t} \left(P_{t_n}-P_{s_t,a_t} \right)\left(V_{t_{n}}-{V}^{\rref}_{t_n}\right)^2\right| \notag\\
        &= \left|\sum_{i=1}^t X_i(s_t,a_t,N_t)\right| \notag\\
    	&\lesssim \sqrt{C_{\mathrm{u}} \log^2\frac{SAT}{\delta}}\sqrt{\sum_{n = 1}^{N_t} u_{t_n}(s_t,a_t, N_t) \Var_{s_t,a_t} \left(W_{t_n} \right)} + \left(C_{\mathrm{u}} C_{\mathrm{w}} + \sqrt{\frac{C_{\mathrm{u}}}{N}} C_{\mathrm{w}}\right) \log^2\frac{SAT}{\delta} \nonumber \\
    	& \lesssim \sqrt{\frac{1}{N_t(1-\gamma)}\log^2\frac{SAT}{\delta}}\sqrt{\sum_{n = 1}^{N_t} \eta^{N_t}_n \Var_{s_t,a_t}\left(\left(V_{t_n} - {V}^{\rref}_{t_n} \right)^2\right) } + \frac{\log^2\frac{SAT}{\delta}}{N_t(1-\gamma)^3} \nonumber \\
    	&\lesssim \sqrt{\frac{1}{N_t(1-\gamma)^5}\log^2\frac{SAT}{\delta}} + \frac{1}{N_t(1-\gamma)^3}\log^2\frac{SAT}{\delta}, \label{eq:I31-final}
    \end{align}
    in which the last step follows from $\sum_{n = 1}^{N_t} \eta^{N_t}_n \leq 1$ and $\Var_{s_t,a_t}\left(\left(V_{t_n} - {V}^{\rref}_{t_n} \right)^2\right) \leq \norm{\left(V_{t_n} - {V}^{\rref}_{t_n}  \right)^{4}}_\infty 
    	\leq \frac{16}{(1-\gamma)^4}$.

    \paragraph{Step 2: controlling $I_{3,2}$.} We can use Jensen's inequality and write $I_{3,2}$ as 
    \begin{align}
    	I_{3,2} &= \left(\sum_{n=1}^{{N_{t}}}\eta_{n}^{N_{t}}P_{t_{n}}\left(V_{t_{n}}-{V}^{\rref}_{t_n}\right)\right)^2 - \sum_{n=1}^{{N_{t}}}\eta_{n}^{N_{t}}\left(P_{s_t,a_t}\big(V_{t_{n}}-{V}^{\rref}_{t_n}\big)\right)^2 \notag\\ 
        &\le \left(\sum_{n=1}^{{N_{t}}}\eta_{n}^{N_{t}}P_{t_{n}}\left(V_{t_{n}}-{V}^{\rref}_{t_n}\right)\right)^2 - \left\{\sum_{n=1}^{{N_{t}}}\eta_{n}^{N_{t}}\right\} \left\{ \sum_{n=1}^{{N_{t}}}\eta_{n}^{N_{t}}\left(P_{s_t,a_t}\left(V_{t_{n}}-{V}^{\rref}_{t_n}\right)\right)^2\right\} \notag\\ 
        &\le \left(\sum_{n=1}^{{N_{t}}}\eta_{n}^{N_{t}}P_{t_{n}}\left(V_{t_{n}}-{V}^{\rref}_{t_n}\right)\right)^2 - \left(\sum_{n=1}^{{N_{t}}}\left(\eta_{n}^{N_{t}}\right)^{1/2} \left(\eta_{n}^{N_{t}}\right)^{1/2}P_{s_t,a_t}\left(V_{t_{n}}-{V}^{\rref}_{t_n}\right)\right)^2 \notag\\
        &= \left(\sum_{n=1}^{{N_{t}}}\eta_{n}^{N_{t}}P_{t_{n}}\left(V_{t_{n}}-{V}^{\rref}_{t_n}\right)\right)^2  - \left(\sum_{n=1}^{{N_{t}}}\eta_{n}^{N_{t}}P_{s_t,a_t}\left(V_{t_{n}}-{V}^{\rref}_{t_n}\right)\right)^2 \notag\\
        &\le \left\{\sum_{n=1}^{{N_{t}}}\eta_{n}^{N_{t}}\left(P_{t_{n}} - P_{s_t,a_t}\right)\left(V_{t_{n}}-{V}^{\rref}_{t_n}\right)\right\} \left\{\sum_{n=1}^{{N_{t}}}\eta_{n}^{N_{t}}\left(P_{t_{n}} + P_{s_t,a_t}\right)\left(V_{t_{n}}-{V}^{\rref}_{t_n}\right)\right\}. \label{eq:I32-1}
    \end{align}
    With this upper bound on $I_{3,2}$, we now aim to show
    \begin{equation} \label{eq:I32-WTS}
    	I_{3,2} \leq C_{3,2} \left( \sqrt{\frac{1}{N_t(1-\gamma)^5}\log^2\frac{SAT}{\delta}} + \frac{1}{N_t(1-\gamma)^3}\log^2\frac{SAT}{\delta}\right),
    \end{equation}
    for some universal constant $C_{3,2}>0$. 
    
    Since \eqref{eq:I32-WTS} is trivially true if $I_{3,2}\leq 0$, it suffices to consider the case where $I_{3,2}> 0$ only. To proceed, notice that the first factor in the product in \eqref{eq:I32-1} is exactly $I_1$ in \eqref{eq:I1-def}, which can be bounded with our previous result \eqref{eq:I1-1}:
    \begin{align} 
    	|I_1| 
    	&\lesssim \sqrt{\frac{1}{N_t(1-\gamma)}\log^{2}\frac{SAT}{\delta}}\sqrt{\sum_{n=1}^{N_{t}}\eta^{N_t}_n \Var_{s,a}\left(V_{t_n} - V^{\rref}_{t_n}\right)} + \frac{\log^2\frac{SAT}{\delta}}{N_t(1-\gamma)^2}\nonumber\\
    	&\overset{\mathrm{(i)}}{\lesssim} \sqrt{\frac{1}{N_t(1-\gamma)^3}\log^2\frac{SAT}{\delta}}\sqrt{\sum_{n = 1}^{N_t} \eta^{N_t}_n  } + \frac{\log^2\frac{SAT}{\delta}}{N_t(1-\gamma)^2} \nonumber \\	
    	&\overset{\mathrm{(ii)}}{\lesssim} \sqrt{\frac{1}{N_t(1-\gamma)^3}\log^2\frac{SAT}{\delta}} + \frac{1}{N_t(1-\gamma)^2}\log^2\frac{SAT}{\delta}
    	\label{eq:I32-1.1}
    \end{align}
    with probability at least $1-\delta$. Above, (i) arises from 
    $\Var_{s,a}\big(V_{t_n} - V^{\rref}_{t_n}\big)  
    	\leq \norm{\left(V_{t_n} - V^{\rref}_{t_n}\right)^{2}}_\infty \leq \frac{4}{(1-\gamma)^2}$ and (ii) is true due to $\sum_{n = 1}^{N_t} \eta^{N_t}_n \leq 1$.

    On the other hand, the second factor in the product in \eqref{eq:I32-1} can be simply bounded as follows:
    \begin{align}
    	\left|\sum_{n=1}^{{N_{t}}}\eta_{n}^{N_{t}}\left(P_{t_{n}} + P_{s_t,a_t}\right)\left(V_{t_{n}}-{V}^{\rref}_{t_n}\right)\right|  \le \sum_{n=1}^{{N_{t}}}\eta_{n}^{N_{t}} \left( \norm{P_{t_n}}_1 + \norm{P_{s_t,a_t}}_1 \right) \norm{V_{t_{n}}-{V}^{\rref}_{t_n}}_{\infty} \leq \frac{2}{1-\gamma},
    	\label{eq:I32-1.2}
    \end{align}
    in which we use the elementary facts $\sum_{n = 1}^{N_t} \eta^{N_t}_n \leq 1$, $\norm{V_{t_{n}}-{V}^{\rref}_{t_n}}_{\infty} \leq \frac{1}{1-\gamma}$ and $\norm{P_{t_n}}_1 = \norm{P_{s_t,a_t}}_1=1$.
    
    Substituting \eqref{eq:I32-1.1} and \eqref{eq:I32-1.2} back into \eqref{eq:I32-1}, we can obtain the desired bound \eqref{eq:I32-WTS} for the case $I_{3,2}>0$. 
    Combined this with the trivial case of $I_{3,2}\le 0$, we have established \eqref{eq:I32-WTS}. 

    \paragraph{Step 3: combining the preceding bounds.}

    Let us bring the bounds \eqref{eq:I31-final} and \eqref{eq:I32-WTS} into \eqref{eq:I3-3}. This gives
    \begin{align*} 
    	I_3 \leq |I_{3,1}| + I_{3,2} 
    	\leq C_{3,3}\left\{ \sqrt{\frac{1}{N_t(1-\gamma)^5}\log^2\frac{SAT}{\delta}} + \frac{1}{N_t(1-\gamma)^3}\log^2\frac{SAT}{\delta}\right\} 
    \end{align*}
    for some absolute constant $C_{3,3}>0$. Writing $I_3$ with its definition in \eqref{eq:I3-def}, rearranging the inequality results, and taking square root on both sides, we have
    \begin{align*}
    	\left\{\sum_{n = 1}^{N_t} \eta^{N_t}_n\Var_{s_t,a_t}\left(V_{t_n} - {V}^{\rref}_{t_n}\right)\right\}^{1/2} &\leq \left\{ {\sigma}^{\adv}_{t_{N_t}+1}(s_t,a_t) - \left({\mu}^{\adv}_{t_{N_t}+1}(s_t,a_t) \right)^2\right\}^{1/2}\\
    	& \quad + \sqrt{C_{3,3}} \left(\sqrt{\frac{1}{N_t(1-\gamma)^5}\log^2\frac{SAT}{\delta}} + \frac{1}{N_t(1-\gamma)^3}\log^2\frac{SAT}{\delta}\right)^{1/2}\\
        &\leq \left\{ {\sigma}^{\adv}_{t_{N_t}+1}(s_t,a_t) - \left({\mu}^{\adv}_{t_{N_t}+1}(s_t,a_t) \right)^2\right\}^{1/2}\\
    	& \quad + \frac{1}{\left(N_{t}\right)^{1/4}(1-\gamma)^{5/4}}\log^{1/2}\frac{SAT}{\delta}+\frac{1}{\left(N_t\right)^{1/2}(1-\gamma)^{3/2}}\log\frac{SAT}{\delta}. 
    \end{align*}
    in which we also use the result of \eqref{eq:I3-2}.
    
    Substitution of this into \eqref{eq:I1-1} leads to the desired result \eqref{eq:I1-2}. 

    \subsection{Proof of Inequality \eqref{eq:I2-2}}\label{sec:proof:eq:I2-2}
    
    Similar to the proof of \eqref{eq:I1-2}, it suffices to examine the following term in order to prove \eqref{eq:I2-2}:
    \begin{align}
    	I_4 := \sum_{n = 1}^{N_t} \eta^{N_t}_n\Var_{s_t,a_t}\left({V}^{\rref}_{t_n}\right) - {\sigma}^{\re}_{t_{N_t}+1}(s_t,a_t) + \left({\mu}^{\re}_{ t_{N_t}+1}(s_t,a_t) \right)^2.
    	\label{eq:I4-def}
    \end{align}
    In this section, we would like to show an upper bound on $I_4$. 

    Before we proceed, recall that the update rules of $\mu^{\re}_{t_{n+1}}$ and ${\sigma}^{\re}_{t_{n+1}}$ in Line \ref{alg-line:mu-ref} and \ref{alg-line:sigma-ref} of Algorithm \ref{alg:aux} are
    \begin{align*}
    	{\mu}^{\re}_{t_{n+1}}(s_t,a_t) &= {\mu}^{\re}_{t_n+1}(s_t,a_t) = \left(1-\frac{1}{n}\right){\mu}^{\re}_{t_n}(s_t,a_t) + \frac{1}{n}{V}^{\rref}_{t_n}(s_{t_n+1}), \\
    	{\sigma}^{\re}_{t_{n+1}}(s_t,a_t) &= {\sigma}^{\re}_{t_n+1}(s_t,a_t) = \left(1-\frac{1}{n}\right){\sigma}^{\re}_{t_n}(s_t,a_t) + \frac{1}{n}\left({V}^{\rref}_{t_n}(s_{t_n+1})\right)^2,
    \end{align*} 
    We can expand these equations recursively and write
    \begin{subequations}
    	\label{eq:I4-1}
    	\begin{align}
    		{\mu}^{\re}_{t_{N_{t}}+1}(s_t,a_t) & = \frac{1}{N_{t}}\sum_{n=1}^{N_{t}}{V}_{t_n}^{\rref}(s_{t_{n}+1}) = \frac{1}{N_{t}}\sum_{n=1}^{N_{t}} P_{t_n}{V}_{t_n}^{\rref}, \label{eq:I4-1.1}\\
    		{\sigma}^{\re}_{t_{N_{t}}+1}(s_t,a_t) & = \frac{1}{N_{t}}\sum_{n=1}^{N_{t}}\left({V}_{t_n}^{\rref}(s_{t_{n}+1})\right)^2 = \frac{1}{N_{t}}\sum_{n=1}^{N_{t}} P_{t_n}\left({V}_{t_n}^{\rref}\right)^2. \label{eq:I4-1.2}
    	\end{align}
    \end{subequations}
    Noting that $\sum_{n=1}^{{N_{t}}}\eta_{n}^{N_{t}}=1$, we can use Jensen's inequality and reason from \eqref{eq:I4-1.1} and \eqref{eq:I4-1.2} that: 
    \begin{align}\label{eq:I4-2}
    	{\sigma}^{\re}_{t_{N_{t}}+1}(s_t,a_t) \geq \left({\mu}^{\re}_{t_{N_{t}}+1}(s_t,a_t)\right)^2.
    \end{align}
    Recall by the definition of variance in \eqref{eq:var-def}, we have
    \begin{equation*}
    	\Var_{s_t,a_t}\left({V}^{\rref}_{t_n} \right) 
    	= P_{s_t,a_t} \left({V}^{\rref}_{t_n}\right)^{2} - \left(P_{s_t,a_t} \left({V}^{\rref}_{t_n} \right) \right)^2.
    \end{equation*}
    Combined this with \eqref{eq:I4-2}, we can decompose $I_4$ as follows 
    \begin{align}
    	I_4 & = \frac{1}{{N_t}}\sum_{n=1}^{N_t} \left(P_{s_t,a_t} \left({V}^{\rref}_{t_n}\right)^{2} - \left(P_{s_t,a_t} \left({V}^{\rref}_{t_n} \right) \right)^2\right) - \frac{1}{N_{t}}\sum_{n=1}^{N_{t}} P_{t_n}\left({V}_{t_n}^{\rref}\right)^2 + \left(\frac{1}{N_{t}}\sum_{n=1}^{N_{t}} P_{t_n}{V}_{t_n}^{\rref}\right)^2 \nonumber \\
    	& = \underbrace{ \frac{1}{{N_t}}\sum_{n = 1}^{{N_t}} \left(P_{s_t,a_t} - P_{t_n}\right) \left({V}^{\rref}_{t_n}\right)^2 }_{:= \, I_{4,1}} + \underbrace{ \left(\frac{1}{N_{t}}\sum_{n=1}^{N_{t}} P_{t_n}{V}_{t_n}^{\rref}\right)^2 - \frac{1}{{N_t}}\sum_{n = 1}^{{N_t}} \left(P_{s_t,a_t} {V}^{\rref}_{t_n}\right)^2}_{:=\, I_{4,2}}. \label{eq:I4-3}
    \end{align}
    It remains to control $I_{4,1}$ and $I_{4,2}$ in \eqref{eq:I4-3} separately. 

    \paragraph{Step 1: controlling $I_{4,1}$.}

    Now, let us control $I_{4,1}$ defined in \eqref{eq:I4-3} by invoking Lemma \ref{lemma:freedman-application}. Consider any $N\in[T]$. We set
	\begin{align}
    	W_i := \left({V}^{\rref}_{i} \right)^{ 2}
    	\qquad \text{and} \qquad
    	u_i(s,a, N) := \frac{1}{N}. 
    	\label{I41-def1}
    \end{align}
	Moreover, define
	\begin{align}
		|u_i(s,a, N)| = \frac{1}{N} := C_{\mathrm{u}}
	\qquad \text{and} \qquad \norm{W_i}_\infty \leq  \norm{V_{i}^\rref}_\infty^2 
	\leq \frac{1}{(1-\gamma)^2}:= C_{\mathrm{w}}. \label{I41-cw-cu}
	\end{align}
    Clearly, we can verify
    \begin{align}
		\sum_{n=1}^{N} u_{t_{n}(s,a)}(s,a, N)= \sum_{n=1}^{N}\frac{1}{N}=1
	\end{align}
	holds for all $(N,s,a) \in [T] \times \cS\times \cA$, and the conditions for Lemma \ref{lemma:freedman-application} are satisfied.

    Therefore, we can apply Lemma \ref{lemma:freedman-application} with \eqref{I41-def1}, \eqref{I41-cw-cu} and $(N,s,a)=(N_t,s_t, a_t)$ and conclude with probability at least $1-\delta$,  
    \begin{align}
    	|I_{4,1}| &= \left|\sum_{n = 1}^{{N_t}} \eta_n^{N_t} \left(P_{t_n}-P_{s_t,a_t} \right)\left({V}^{\rref}_{t_n}\right)^2\right| \notag\\
        &= \left|\sum_{i=1}^t X_i(s_t,a_t,N_t)\right| \nonumber\\
    	&\lesssim \sqrt{C_{\mathrm{u}} \log^2\frac{SAT}{\delta}}\sqrt{\sum_{n = 1}^{N_t} u_{t_n}(s_t,a_t, N_t) \Var_{s_t,a_t} \left(W_{t_n} \right)} + \left(C_{\mathrm{u}} C_{\mathrm{w}} + \sqrt{\frac{C_{\mathrm{u}}}{N}} C_{\mathrm{w}}\right) \log^2\frac{SAT}{\delta} \nonumber \\
	    &\lesssim  \sqrt{\frac{\log^2 \frac{SAT}{\delta}}{{N_t(1-\gamma)^4}}} + \frac{\log^2\frac{SAT}{\delta}}{{N_t(1-\gamma)^2}}, \label{eq:I41-final}
    \end{align}
    in which the last inequality results from $\sum_{n = 1}^{N_t} \eta^{N_t}_n \leq 1$ and the fact $\Var_{s_t,a_t}\left(\left(V^{\rref}_{t_n} \right)^2\right)
    	\leq \norm{\left({V}^{\rref}_{t_n}  \right)^{4}}_\infty 
    	\leq \frac{1}{(1-\gamma)^4}$.

    \paragraph{Step 2: controlling $I_{4,2}$.}

    We can use Jensen's inequality and write $I_{4,2}$ as 
    \begin{align}
    	I_{4,2} &= \left(\sum_{n=1}^{{N_{t}}}\eta_{n}^{N_{t}}P_{t_{n}} {V}^{\rref}_{t_n}\right)^2 - \sum_{n=1}^{{N_{t}}}\eta_{n}^{N_{t}}\left(P_{s_t,a_t}{V}^{\rref}_{t_n}\right)^2 \notag\\ 
        &\le \left(\sum_{n=1}^{{N_{t}}}\eta_{n}^{N_{t}}P_{t_{n}}{V}^{\rref}_{t_n}\right)^2 - \left\{\sum_{n=1}^{{N_{t}}}\eta_{n}^{N_{t}}\right\} \left\{ \sum_{n=1}^{{N_{t}}}\eta_{n}^{N_{t}}\left(P_{s_t,a_t}{V}^{\rref}_{t_n}\right)^2\right\} \notag\\ 
        &\le \left(\sum_{n=1}^{{N_{t}}}\eta_{n}^{N_{t}}P_{t_{n}}{V}^{\rref}_{t_n}\right)^2 - \left(\sum_{n=1}^{{N_{t}}}\left(\eta_{n}^{N_{t}}\right)^{1/2} \left(\eta_{n}^{N_{t}}\right)^{1/2}P_{s_t,a_t}{V}^{\rref}_{t_n}\right)^2 \notag\\
        &= \left(\sum_{n=1}^{{N_{t}}}\eta_{n}^{N_{t}}P_{t_{n}}{V}^{\rref}_{t_n}\right)^2  - \left(\sum_{n=1}^{{N_{t}}}\eta_{n}^{N_{t}}P_{s_t,a_t}{V}^{\rref}_{t_n}\right)^2 \notag\\
        &\le \left\{\sum_{n=1}^{{N_{t}}}\eta_{n}^{N_{t}}\left(P_{t_{n}} - P_{s_t,a_t}\right){V}^{\rref}_{t_n}\right\} \left\{\sum_{n=1}^{{N_{t}}}\eta_{n}^{N_{t}}\left(P_{t_{n}} + P_{s_t,a_t}\right){V}^{\rref}_{t_n}\right\}. \label{eq:I42-1}
    \end{align}
    With this upper bound on $I_{4,2}$, we now aim to show
    \begin{equation} \label{eq:I42-WTS}
    	I_{4,2} \leq C_{4,2} \left( \sqrt{\frac{1}{{N_t}(1-\gamma)^4}\log^2\frac{SAT}{\delta}} + \frac{1}{{N_t(1-\gamma)^2}} \log^2\frac{SAT}{\delta}\right),
    \end{equation}
    for some universal constant $C_{4,2}>0$. 
    
    Since \eqref{eq:I42-WTS} is trivially true if $I_{4,2}\leq 0$, it suffices to consider the case where $I_{4,2}> 0$ only. To proceed, we use Lemma \ref{lemma:freedman-application} on the first factor in the product in \eqref{eq:I42-1}. Let us abuse the notation and set
    \begin{align*}
    	W_i := {V}^{\rref}_{i} \qquad \text{and} \qquad u_i(s,a, N) := \frac{1}{N},
    \end{align*}
    as well as
    \begin{align*}
    	|u_i(s,a, N)| =\frac{1}{N} := C_{\mathrm{u}} \qquad \text{and} \qquad
    	\norm{W_i}_\infty \leq \frac{1}{1-\gamma} := C_{\mathrm{w}}. 
    \end{align*} 
    In the same way as our proof for the upper bound on $I_{4,1}$, we Lemma \ref{lemma:freedman-application} with $(N,s,a)=(N_t,s_t,a_t)$ and have
    \begin{equation}
    	\left|\frac{1}{{N_t}}\sum_{n = 1}^{{N_t}} \left( P_{t_n} -P_{s_t,a_t}\right) {V}_{t_n}^{\rref}\right| \lesssim  \sqrt{\frac{\log^2\frac{SAT}{\delta}}{{N_t}(1-\gamma)^2}} + \frac{\log^2\frac{SAT}{\delta}}{{N_t}(1-\gamma)}  \label{eq:I42-1.1}
    \end{equation}
    with probability at least $1-\delta$.

    On the other hand, the second factor in the product in \eqref{eq:I42-1} can be simply bounded as follows:
    \begin{align}
    	\left|\sum_{n=1}^{{N_{t}}}\eta_{n}^{N_{t}}\left(P_{t_{n}} + P_{s_t,a_t}\right){V}^{\rref}_{t_n}\right|  \le \sum_{n=1}^{{N_{t}}}\eta_{n}^{N_{t}} \left( \norm{P_{t_n}}_1 + \norm{P_{s_t,a_t}}_1 \right) \norm{{V}^{\rref}_{t_n}}_{\infty} \leq \frac{2}{1-\gamma},
    	\label{eq:I42-1.2}
    \end{align}
    in which we use the elementary facts $\sum_{n = 1}^{N_t} \eta^{N_t}_n \leq 1$, $\norm{{V}^{\rref}_{t_n}}_{\infty} \leq \frac{1}{1-\gamma}$ and $\norm{P_{t_n}}_1 = \norm{P_{s_t,a_t}}_1=1$.
    
    Substituting \eqref{eq:I42-1.1} and \eqref{eq:I42-1.2} back into \eqref{eq:I42-1}, we can obtain the desired bound \eqref{eq:I42-WTS} for the case $I_{4,2}>0$. 
    Combined this with the trivial case of $I_{4,2}\le 0$, we have established \eqref{eq:I42-WTS}. 

    \paragraph{Step 3: combining the preceding bounds.}

    Let us bring the bounds \eqref{eq:I41-final} and \eqref{eq:I42-WTS} into \eqref{eq:I4-3}. This gives
    \begin{align*} 
    	I_4 \leq |I_{4,1}| + I_{4,2} 
    	\leq C_{4,3}\left\{ \sqrt{\frac{1}{N_t(1-\gamma)^4}\log^2\frac{SAT}{\delta}} + \frac{1}{N_t(1-\gamma)^2}\log^2\frac{SAT}{\delta}\right\} 
    \end{align*}
    for some absolute constant $C_{4,3}>0$. Writing $I_4$ with its definition in \eqref{eq:I4-def} and rearranging the inequality results, and taking square root on both sides, we have 
    \begin{align*}
    	&\quad\ \left\{\sum_{n = 1}^{N_t} \eta^{N_t}_n\Var_{s_t,a_t}\left({V}^{\rref}_{t_n}\right)\right\}^{1/2}\\
        &\leq \left\{ {\sigma}^{\re}_{t_{N_t}+1}(s_t,a_t) - \left({\mu}^{\re}_{t_{N_t}+1}(s_t,a_t) \right)^2\right\}^{1/2} + \sqrt{C_{4,3}} \left(\sqrt{\frac{1}{N_t(1-\gamma)^4}\log^2\frac{SAT}{\delta}} + \frac{1}{N_t(1-\gamma)^2}\log^2\frac{SAT}{\delta}\right)^{1/2}\\
        &\leq \left\{ {\sigma}^{\re}_{t_{N_t}+1}(s_t,a_t) - \left({\mu}^{\re}_{t_{N_t}+1}(s_t,a_t) \right)^2\right\}^{1/2} + \frac{1}{\left(N_{t}\right)^{1/4}(1-\gamma)}\log^{1/2}\frac{SAT}{\delta}+\frac{1}{\left(N_t\right)^{1/2}(1-\gamma)}\log\frac{SAT}{\delta},
    \end{align*}
    in which we also use the result of \eqref{eq:I3-2}.
    
    Substitution into \eqref{eq:I2-1} leads to the desired result \eqref{eq:I2-2}. 

    %
    
\section{Proof of Lemma \ref{lem:xi}}\label{sec:appendix-proof-lem-xi}

    Let us decompose $\zeta_t$ (defined in \eqref{eq:zeta-def}) into three component terms and examine each of them separately:
    \begin{align}\label{eq:zeta-decomposition-def}
        \zeta_t := \underbrace{V_{t-1}(s_t) - Q_{t}(s_t,a_t)}_{\zeta_{t,1}} + \gamma\underbrace{\left(P_{s_t,a_t} - P_t\right)\left(V^\star - V^{\pi_{t}}\right)}_{\zeta_{t,2}} + \gamma\underbrace{\left(V^{\pi_{t+1}}(s_{t+1}) - V^{\pi_t}(s_{t+1})\right)}_{\zeta_{t,3}}.
    \end{align}

    We can find upper bounds for each of these three terms as follows:
    \begin{equation}
        \sum_{t=1}^T \zeta_{t,1} \lesssim \frac{SA}{1-\gamma}\log T + \frac{(SA)^{3/4}T^{1/4}\log^{5/4}\frac{SAT}{\delta}}{(1-\gamma)^{5/4}} + \sqrt{\frac{SA\log^2 T}{1-\gamma}\sum_{t=1}^T \big(V_t(s_{t+1}) - V^{\pi_{t+1}}(s_{t+1})\big)}, \label{eq:xi-1}
    \end{equation}
    
    \begin{align}
        \sum_{t = 1}^{T}\zeta_{t,2} & \lesssim \sqrt{\frac{SA\log\frac{T}{\delta}}{1-\gamma} \sum_{t=1}^{T}\big(V^{\star}(s_{t+1}) - V^{\pi_{t+1}}(s_{t+1})\big)} + \frac{\left(SA\right)^{5/8}T^{1/8}}{(1-\gamma)^{11/8}}\left(\log\frac{SAT}{\delta}\right)^{9/8} \notag\\ & \qquad \qquad + \frac{(SA)^{1/2}T^{1/4}}{(1-\gamma)^{5/4}}\left(\log\frac{T}{\delta}\right)^{3/4} +\frac{SA}{1-\gamma}\log\frac{T}{\delta}, \label{eq:xi-2}
    \end{align}

    \begin{equation}
        \sum_{t=1}^T \zeta_{t,3} \lesssim \frac{SA}{1-\gamma}\log T + \frac{(SA)^{3/4}T^{1/4}\log^{5/4}\frac{SAT}{\delta}}{(1-\gamma)^{5/4}} + \sqrt{\frac{SA\log^2 T}{1-\gamma}\sum_{t=1}^T \big(V_t(s_{t+1}) - V^{\pi_{t+1}}(s_{t+1})\big)}. \label{eq:xi-3}
    \end{equation}
    The proofs for these inequalities are deferred to Appendix \ref{sec:zeta-1-proof}, \ref{sec:zeta-2-proof} and \ref{sec:zeta-3-proof}.

    Combining \eqref{eq:xi-1}, \eqref{eq:xi-2} and \eqref{eq:xi-3}, we can obtain the final bound: with probability at least $1-\delta$,
    \begin{align}
        \sum_{t=1}^T \zeta_{t} &= \sum_{t=1}^T \zeta_{t,1} + \zeta_{t,2} + \zeta_{t,3}\notag\\
        &\lesssim \frac{SA}{1-\gamma}\log T + \frac{(SA)^{3/4}T^{1/4}\log^{5/4}\frac{SAT}{\delta}}{(1-\gamma)^{5/4}} + \sqrt{\frac{SA\log^2 T}{1-\gamma}\sum_{t=1}^T \big(V_t(s_{t+1}) - V^{\pi_{t+1}}(s_{t+1})\big)} \notag\\ & \quad + \sqrt{\frac{SA\log\frac{T}{\delta}}{1-\gamma} \sum_{t=1}^{T}\big(V^{\star}(s_{t+1}) - V^{\pi_{t+1}}(s_{t+1})\big)} + \frac{\left(SA\right)^{5/8}T^{1/8}}{(1-\gamma)^{11/8}}\left(\log\frac{SAT}{\delta}\right)^{9/8} \notag\\ & \quad +\frac{(SA)^{1/2}T^{1/4}}{(1-\gamma)^{5/4}}\left(\log\frac{T}{\delta}\right)^{3/4} +\frac{SA}{1-\gamma}\log\frac{T}{\delta} \notag\\
        &\overset{(\mathrm{i})}{\lesssim} \frac{SA}{1-\gamma}\log T + \frac{(SA)^{3/4}T^{1/4}\log^{5/4}\frac{SAT}{\delta}}{(1-\gamma)^{5/4}} + \sqrt{\frac{SA\log^2 T}{1-\gamma}\sum_{t=2}^T \big(V_{t-1}(s_{t}) - V^{\pi_{t}}(s_{t})\big)} \notag\\ & \quad + \sqrt{\frac{SA\log\frac{T}{\delta}}{1-\gamma} \sum_{t=2}^{T}\big(V^{\star}(s_{t}) - V^{\pi_{t}}(s_{t})\big)} + \frac{\left(SA\right)^{5/8}T^{1/8}}{(1-\gamma)^{11/8}}\left(\log\frac{SAT}{\delta}\right)^{9/8} \notag\\ & \quad +\frac{(SA)^{1/2}T^{1/4}}{(1-\gamma)^{5/4}}\left(\log\frac{T}{\delta}\right)^{3/4} +\frac{SA}{1-\gamma}\log\frac{T}{\delta} \notag\\
        &\overset{(\mathrm{ii})}{\le} \frac{SA}{1-\gamma}\log T + \frac{(SA)^{3/4}T^{1/4}\log^{5/4}\frac{SAT}{\delta}}{(1-\gamma)^{5/4}} + \sqrt{\frac{SA\log^2 T}{1-\gamma}\sum_{t=1}^T \big(V_{t-1}(s_{t}) - V^{\pi_{t}}(s_{t})\big)} \notag\\ & \quad + \sqrt{\frac{SA\log\frac{T}{\delta}}{1-\gamma} \sum_{t=1}^{T} \big(V^{\star}(s_{t}) - V^{\pi_{t}}(s_{t})\big)} + \frac{\left(SA\right)^{5/8}T^{1/8}}{(1-\gamma)^{11/8}}\left(\log\frac{SAT}{\delta}\right)^{9/8} \notag\\ & \quad +\frac{(SA)^{1/2}T^{1/4}}{(1-\gamma)^{5/4}}\left(\log\frac{T}{\delta}\right)^{3/4} +\frac{SA}{1-\gamma}\log\frac{T}{\delta} \notag\\
        &\overset{(\mathrm{iii})}{\lesssim} \sqrt{\frac{SAT}{1-\gamma}\log\frac{SAT}{\delta}} + \frac{SA\log\frac{SAT}{\delta}}{(1-\gamma)^2} + \frac{SA}{(1-\gamma)^{5/2}}\log^2\frac{SAT}{\delta} + \frac{(SA)^{1/2}}{(1-\gamma)^{2}}\log\frac{T}{\delta} + \frac{SA}{1-\gamma}\log\frac{T}{\delta} \notag\\ & \quad + \sqrt{\frac{SA\log^2 T}{1-\gamma}\sum_{t=1}^T \big(V_t(s_{t+1}) - V^{\pi_{t+1}}(s_{t+1})\big)} + \sqrt{\frac{SA\log\frac{T}{\delta}}{1-\gamma} \sum_{t=1}^{T}\big(V^{\star}(s_{t+1}) - V^{\pi_{t+1}}(s_{t+1})\big)} \notag\\
        &\lesssim \sqrt{\frac{SAT}{1-\gamma}\log\frac{SAT}{\delta}} + \frac{SA\log^2\frac{SAT}{\delta}}{(1-\gamma)^{5/2}} + \sqrt{\frac{SA\log^2 T}{1-\gamma}\sum_{t=1}^T \big(V_t(s_{t+1}) - V^{\pi_{t+1}}(s_{t+1})\big)} \notag\\ & \quad + \sqrt{\frac{SA\log\frac{T}{\delta}}{1-\gamma} \sum_{t=1}^{T}\big(V^{\star}(s_{t+1}) - V^{\pi_{t+1}}(s_{t+1})\big)},
    \end{align}
    in which (i) is by upper-bounding $V_T(s_{T+1}) - V^{\pi_{T+1}}(s_{T+1})$ and $V^{\star}(s_{T+1}) - V^{\pi_{T+1}}(s_{T+1})$ with the addition of $\frac{2}{1-\gamma}$; (ii) is due to $V_0(s_1) - V^{\pi_{1}}(s_{1}) \ge 0$ and $V^\star(s_1) - V^{\pi_{1}}(s_{1}) \ge 0$, which can be obtained from the fact $V_0(s) = \frac{1}{1-\gamma}$ and $V^\star(s) \ge V^{\pi_{1}}(s)$ for all $s\in\cS$; (iii) is by the basic inequality $2ab\leq a^2 + b^2$.

\subsection{Bounding $\zeta_{t,1}$}\label{sec:zeta-1-proof}

First, we define some quantities to streamline our analysis.

For every $(s,a) \in \cS\times \cA$, let $\tau_i(s,a)$ denote the step index that $\qdiff(s,a) > \frac{1}{1-\gamma}$ for the $i$-th time, that is, Line \ref{alg-line:q-cond} in Algorithm \ref{alg:main} (equivalently, Line \ref{alg-t-line:q-cond} in Algorithm \ref{alg:main-t}) is triggered for the $i$-th time. (Note that it is the index of the step in which $\qdiff(s,a) > \frac{1}{1-\gamma}$ occurs and not the time index of the $\theta$ that gets updated in this step. The updated $\theta$ is $\theta_{t+1}$ and the update happens after Line \ref{alg-t-line:q-cond}.) Mathematically, this is 
    \begin{equation}
        \tau_i(s,a) := \min\left\{t > \tau_{i-1}(s,a) ~:~ \qdiff_{t}(s_t,a_t) + Q_{M_{t}(s_t, a_t)}(s_t, a_t) - Q_{t+1}(s_t, a_t) > \frac{1}{1-\gamma}\right\} 
    \end{equation}
    with $\tau_0(s,a) = 0$. 
    
For every $(s,a) \in \cS\times \cA$, let $K_t(s,a)$ be the total number of times such that $\qdiff(s,a) > \frac{1}{1-\gamma}$ until the end of $t$-th step. Mathematically, this is 
\begin{equation}
    K_t(s,a) := \max\left\{i ~:~ \tau_i(s,a) \le t\right\}. \label{eq:K-def}
\end{equation}
For notational simplicity, we abbreviate $\tau_i(s,a)$ as $\tau_i$, $K_t(s,a)$ as $K_t$ and $N_t(s,a)$ as $N_t$ when the context is clear.

We can express $\zeta_{t,1}$ with $K_t(s,a)$ after the following manipulation:
    \begin{align}
        \sum_{t=1}^T \zeta_{t,1} &= \sum_{t=1}^T \big(V_{t-1}(s_t) - Q_{t}(s_t,a_t)\big) \notag\\
        &\overset{(\mathrm{i})}{\le} \sum_{t=1}^T \big(Q_{Z_{t-1}}(s_t,a_t) - Q_{t}(s_t,a_t)\big) \notag\\
        &\overset{(\mathrm{ii})}{=} \sum_{s,a}\sum_{i=0}^{K_T(s,a)}\sum_{n=N_{\tau_i}}^{N_{\tau_{i+1}}-1}\left(Q_{Z_{t_n-1}}(s,a) - Q_{t_n}(s,a)\right)\notag\\
        &= \sum_{s,a}\sum_{i=0}^{K_T(s,a)}\sum_{n=N_{\tau_i}}^{N_{\tau_{i+1}}-1}\left(Q_{Z_{t_n-1}}(s,a) - Q_{Z_{t_n+1}}(s,a) + Q_{Z_{t_n+1}}(s,a) - Q_{t_n}(s,a)\right)\notag\\
        &\overset{(\mathrm{iii})}{\le} \sum_{s,a}\sum_{i=0}^{K_T(s,a)}\frac{3}{1-\gamma}\notag\\
        &\lesssim \frac{1}{1-\gamma}\sum_{s,a}K_T(s,a). \label{eq:xi-1-1}
    \end{align}
    In the inequalities above, (i) is because $V_{t-1}(s_t) \le V_{Z_{t-1}}(s_t) = Q_{Z_{t-1}}(s_t,a_t)$ by the monotonicity and $a_t$'s selection rule in Line \ref{alg-t-line:a} of Algorithm \ref{alg:main-t}. (ii) is because we can partition the overall length-$T$ trajectory into $K_T(s,a)$ intervals for each $(s,a)$-pair. (iii) is obtained by combining \eqref{eq:xi-1-1.1} and \eqref{eq:xi-1-1.2}, which are bounds on the two halves of the quantity of interest respectively. Since $Q_{Z_{t_n-1}}(s,a) \neq Q_{Z_{t_n+1}}(s,a)$ at most twice in each interval $N_{\tau_i} \le n \le N_{\tau_{i+1}}-1$, we can use the fact $\norm{Q_{Z_{t_n-1}} - Q_{Z_{t_n+1}}}_\infty \le \frac{1}{1-\gamma}$ at these two points and have
    \begin{align}
        &\quad\ \sum_{s,a}\sum_{i=0}^{K_T(s,a)}\sum_{n=N_{\tau_i}}^{N_{\tau_{i+1}}-1}\left(Q_{Z_{t_n-1}}(s,a) - Q_{Z_{t_n+1}}(s,a)\right) \notag\\
        &\le \sum_{s,a}\sum_{i=0}^{K_T(s,a)}\frac{2}{1-\gamma}. \label{eq:xi-1-1.1}
    \end{align}
    On the other hand, the condition in Line \ref{alg-t-line:q-cond} of Algorithm \ref{alg:main-t} tells us that
    \begin{align}
        &\quad\ \sum_{s,a}\sum_{i=0}^{K_T(s,a)}\sum_{n=N_{\tau_i}}^{N_{\tau_{i+1}}-1}\left(Q_{Z_{t_n+1}}(s,a) - Q_{t_n}(s,a)\right) \notag\\
        &\overset{(\mathrm{a})}{\le}  \sum_{s,a}\sum_{i=0}^{K_T(s,a)}\sum_{n=N_{\tau_i}}^{N_{\tau_{i+1}}-1}\left(Q_{\tau_i+1}(s,a) - Q_{t_n}(s,a)\right) \notag\\
        &\overset{(\mathrm{b})}{\le}  \sum_{s,a}\sum_{i=0}^{K_T(s,a)}\sum_{n=N_{\tau_i}}^{N_{\tau_{i+1}}-1}\left(Q_{\tau_i+1}(s,a) - Q_{t_n+1}(s,a)\right) \notag\\
        &\overset{(\mathrm{c})}{\le} \sum_{s,a}\sum_{i=0}^{K_T(s,a)}\frac{1}{1-\gamma}, \label{eq:xi-1-1.2}
    \end{align}
    in which (a) is because $Z$ is updated when Line \ref{alg-t-line:a} of Algorithm \ref{alg:main-t} is triggered by any $(s,a)$-pair, which is more often than $\tau_i$ that depends on a specific $(s,a)$-pair. (b) is by the monotonicity of $Q_{t}$. (c) is due to $\qdiff_{\tau_{i+1}} \le \frac{1}{1-\gamma}$. This is because the condition in Line \ref{alg-t-line:q-cond} of Algorithm \ref{alg:main-t} is only triggered after the addition of $Q_{M_{t}(s_t, a_t)}(s_t, a_t) - Q_{t+1}(s_t, a_t)$ to $\qdiff_{t}$ when $t = \tau_{i+1}$.

In the following, we switch our focus to bounding $\sum_{s,a}K_T(s,a)$. Let us define a new quantity
\begin{equation}
    \rho_i(s,a) := (1-\gamma)\left(Q_{\tau_{i}(s,a)+1}(s,a) - Q^\star(s,a)\right). \label{eq:rho-def}
\end{equation}
We can observe 
\begin{equation}
    \rho_i(s,a) \ge \rho_{i+1}(s,a) \label{eq:rho-prop1}
\end{equation}
by the monotonicity of $Q_t$. We can obtain another property about $\rho_i(s,a)$, which will be used in later analysis:
    \begin{align}
        &\quad \left(\rho_i(s,a) - \rho_{i+1}(s,a)\right)\left(N_{\tau_{i+1}} - N_{\tau_i}\right)\notag\\
        &= (1-\gamma)\left(Q_{\tau_{i}+1}(s,a) - Q_{\tau_{i+1}+1}(s,a)\right)\left(N_{\tau_{i+1}} - N_{\tau_i}\right)\notag\\
        &\ge 1,\label{eq:rho-prop2}
    \end{align}
    in which the inequality is obtained via the following logic: $\qdiff_{t}(s,a) + Q_{M_{t}(s, a)}(s, a) - Q_{t+1}(s, a)$ in the condition Line \ref{alg-t-line:q-cond} of Algorithm \ref{alg:main-t} tells us $\qdiff_{\tau_{i+1}}(s,a) + Q_{M_{\tau_{i+1}}(s, a)}(s, a) - Q_{\tau_{i+1}+1}(s, a) \ge \frac{1}{1-\gamma}$. This quantity can be viewed as a summation of $(N_{\tau_{i+1}} - N_{\tau_i})$ non-negative summands in the form of $Q_{M_{t}(s, a)}(s, a) - Q_{t+1}(s, a)$, with $t$ being every step index when $(s,a)$ is visited between step $\tau_i+1$ and step $\tau_{i+1}$. $Q_{M_{\tau_{i}}(s,a)}(s,a) - Q_{\tau_{i+1}+1}(s,a) = Q_{\tau_{i}+1}(s,a) - Q_{\tau_{i+1}+1}(s,a)$ is the largest among these summands. The product of the largest summand and the total number of summands is larger than the summation itself. 

To continue our analysis, we introduce a lemma that provided an upper bound on $Q_t(s_{t},a_{t}) - Q^\star(s_{t},a_{t})$, whose proof is deferred to Appendix \ref{sec:appendix-proof-lem:Q_t-upper-bound-crude}. Notice that Lemma \ref{lem:Q_t-upper-bound-crude} is just a looser version of Lemma \ref{lem:Q_t-upper-bound}. The reason that we introduce Lemma \ref{lem:Q_t-upper-bound-crude} instead of invoking Lemma \ref{lem:Q_t-upper-bound} is that a looser bound suffices here, and the proof of Lemma \ref{lem:Q_t-upper-bound} actually relies on Lemma \ref{lem:xi}.

\begin{lemma}\label{lem:Q_t-upper-bound-crude}
    Fix $\delta \in (0,1)$. Suppose that $c_b$ is chosen to be a sufficiently large universal constant in Algorithm \ref{alg:main}. Then with probability at least $1 - \delta$,
    \begin{align}
        &\sum_{t=1}^T Q_t(s_{t},a_{t}) - Q^\star(s_{t},a_{t}) \leq \frac{\gamma(3-\gamma)}{2}\sum_{t=1}^T\left(V_t(s_{t+1}) - V^{\pi_{t+1}}(s_{t+1})\right)
        + 8c_b\sqrt{\frac{SAT}{(1-\gamma)^3}\log\frac{SAT}{\delta}} + \frac{SA}{1-\gamma}.
    \end{align}
\end{lemma}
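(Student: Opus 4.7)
The plan is to reduce $Q_t$ to the UCB iterate $Q^{\UCB}_t$ and then analyze the latter via its closed-form recursive expansion, adapting the standard UCB-Q template of \citet{Jin-provably} to the discounted setting. A straightforward induction on $t$ based on the $\min$-rule in Line~\ref{alg-line:Q-min-update} and the common initialization $\frac{1}{1-\gamma}$ shows that $Q_t(s,a)\le Q^{\UCB}_t(s,a)$ for all $(t,s,a)$, so it suffices to upper bound $\sum_t(Q^{\UCB}_t(s_t,a_t)-Q^\star(s_t,a_t))$.

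For each step $t$, let $n:=N_{t-1}(s_t,a_t)$ and let $t_1<\dots<t_n$ denote the prior visits to $(s_t,a_t)$. Unrolling Line~\ref{alg-line:QUCB-update} and using $\eta_0^n+\sum_{j=1}^n\eta_j^n=1$ together with the Bellman optimality equation $Q^\star=r+\gamma PV^\star$ (and $V^\star(s_{t_j+1})=P_{t_j}V^\star$), I would produce the decomposition
\begin{align*}
Q^{\UCB}_t(s_t,a_t)-Q^\star(s_t,a_t)
&= \eta_0^n\bigl(\tfrac{1}{1-\gamma}-Q^\star(s_t,a_t)\bigr)
+ \gamma\sum_{j=1}^n\eta_j^n\bigl(V_{t_j}(s_{t_j+1})-V^\star(s_{t_j+1})\bigr) \\
&\quad + \gamma\sum_{j=1}^n\eta_j^n\bigl(P_{t_j}-P_{s_t,a_t}\bigr)V^\star
+ \sum_{j=1}^n\eta_j^n b_j,
\end{align*}
with $b_j=c_b\sqrt{\iota/((1-\gamma)^3 j)}$, and then bound the four pieces separately after summation over $t$.

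The initialization piece is nonzero only at the first visit to each $(s,a)$, contributing at most $\frac{SA}{1-\gamma}$. For the $V_{t_j}-V^\star$ piece, Abel summation gives $\sum_t\sum_{j=1}^n\eta_j^n X_j=\sum_{(s,a)}\sum_{j=1}^{N_T(s,a)}X_j\bigl(\sum_{n\ge j}\eta_j^n\bigr)$; invoking the sharp bound \eqref{eq:learning-rate5}, namely $\sum_{n\ge j}\eta_j^n\le 1+\tfrac{1-\gamma}{2}$, converts this into $\gamma\bigl(1+\tfrac{1-\gamma}{2}\bigr)\sum_t(V_t(s_{t+1})-V^\star(s_{t+1}))=\tfrac{\gamma(3-\gamma)}{2}\sum_t(V_t(s_{t+1})-V^\star(s_{t+1}))$, and the monotonicity $V^\star\ge V^{\pi_{t+1}}$ then upgrades the right-hand side to the $V^{\pi_{t+1}}$ advertised in the lemma. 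The martingale piece is already absorbed by the bonus: \eqref{eq:Jin-Hoeffding}, established inside the proof of Lemma~\ref{lem:Q_t-lower-bound}, guarantees $|\gamma\sum_j\eta_j^n(P_{t_j}-P_{s,a})V^\star|\le\sum_j\eta_j^n b_j$ uniformly in $(n,s,a)$ with probability $1-\delta$, so it merely doubles the bonus contribution.

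For the bonus sum itself, \eqref{eq:learning-rate2} with $a=\tfrac12$ gives $\sum_{j=1}^n\eta_j^n/\sqrt{j}\le 2/\sqrt{n}$, hence $\sum_{j=1}^{n_t}\eta_j^{n_t}b_j\le 2c_b\sqrt{\iota/((1-\gamma)^3 n_t)}$. Regrouping by $(s,a)$ and using $\sum_{n=1}^{N}1/\sqrt{n}\le 2\sqrt{N}$ followed by Cauchy--Schwarz $\sum_{(s,a)}\sqrt{N_T(s,a)}\le\sqrt{SA\cdot T}$ yields $\sum_t\sum_j\eta_j^{n_t}b_j\le 4c_b\sqrt{SAT\iota/(1-\gamma)^3}$; doubling to absorb the martingale piece produces exactly the $8c_b\sqrt{SAT\iota/(1-\gamma)^3}$ in the statement. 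No step is a genuine obstacle here—this is essentially a careful bookkeeping exercise—the only place requiring real care is the Abel-summation step that produces the exact constant $\tfrac{\gamma(3-\gamma)}{2}$, since a looser $\gamma$-coefficient (e.g.\ the naive $\gamma(1+1)=2\gamma$ obtainable from $\sum_{n\ge j}\eta_j^n\le 2$) would fail to leave the factor $1-\tfrac{\gamma(3-\gamma)}{2}\ge\tfrac{1-\gamma}{2}$ needed to close the recursion in the proof of Theorem~\ref{thm:main}.
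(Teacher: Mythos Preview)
Your proposal is correct and follows essentially the same route as the paper's proof: reduce $Q_t$ to $Q^{\UCB}_t$ via the $\min$-rule, unroll the UCB recursion into the four-term decomposition, absorb the martingale into the bonus via \eqref{eq:Jin-Hoeffding}, bound the initialization by $\frac{SA}{1-\gamma}$, bound the bonus sum by $8c_b\sqrt{SAT\iota/(1-\gamma)^3}$ through \eqref{eq:learning-rate2} and Cauchy--Schwarz, and handle the $V_{t_j}-V^\star$ piece by the same reindexing with \eqref{eq:learning-rate5} to extract the exact factor $\tfrac{\gamma(3-\gamma)}{2}$. Your closing remark about why the sharp constant $1+\tfrac{1-\gamma}{2}$ (rather than the looser $2$) is needed downstream is a useful observation that the paper leaves implicit.
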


Thus, we can prove
    \begin{align}
        &\quad \sum_{s,a}\sum_{i=0}^{K_T(s,a)}\frac{\rho_{i+1}(s,a)}{\rho_i(s,a) - \rho_{i+1}(s,a)} \notag\\
        &\overset{(\mathrm{i})}{\le} \sum_{s,a}\sum_{i=0}^{K_T(s,a)}\rho_{i+1}(s,a)\left(N_{\tau_{i+1}} - N_{\tau_i}\right) \notag\\
        &\overset{(\mathrm{ii})}{=} (1-\gamma)\sum_{s,a}\sum_{i=0}^{K_T(s,a)}\sum_{n=N_{\tau_i}}^{N_{\tau_{i+1}}-1}\left(Q_{\tau_{i+1}+1}(s,a) - Q^\star(s,a)\right)\notag\\
        &\overset{(\mathrm{iii})}{\le} (1-\gamma)\sum_{s,a}\sum_{i=0}^{K_T(s,a)}\sum_{n=N_{\tau_i}}^{N_{\tau_{i+1}}-1}\left(Q_{t_n}(s,a) - Q^\star(s,a)\right)\notag\\
        &= (1-\gamma)\sum_{t=1}^T \left(Q_{t}(s_t,a_t) - Q^\star(s_t,a_t)\right)\notag\\
        &\overset{(\mathrm{iv})}{\lesssim} SA + \sqrt{\frac{SAT\log\frac{SAT}{\delta}}{1-\gamma}} + (1-\gamma)\sum_{t=1}^T\left(V_t(s_{t+1}) - V^{\pi_{t+1}}(s_{t+1})\right).\label{eq:rho-diff2}
    \end{align}
    Above, (i) is due to \eqref{eq:rho-prop2}. (ii) is by the definition of $\rho_{i+1}(s,a)$ in \eqref{eq:rho-def}. (iii) is by the monotonicity of $Q_t$. (iv) is by Lemma \ref{lem:Q_t-upper-bound-crude} and the fact $\frac{\gamma(3-\gamma)}{2} \le 1$.

    Finally, we are ready to bound $\sum_{s,a}K_T(s,a)$. Define a sufficiently large index set of $\frac{1}{\rho_i(s,a) - \rho_{i+1}(s,a)}$ that are on the same order. Mathematically, this is 
    \begin{equation}
        \cI(s,a) := \left\{i\le K_T(s,a) ~:~ \frac{1}{\rho_i(s,a) - \rho_{i+1}(s,a)} \asymp \kappa(s,a)\right\}, \label{eq:I-set-def}
    \end{equation}
    in which $\kappa(s,a)$ satisfies $|\cI(s,a)| \gtrsim \frac{K_T(s,a)}{\log T}$. Before continuing, let us explain the reason why there exists such a set $\cI(s,a)$: 
    
    By \eqref{eq:rho-prop2}, we have
    \begin{align}
        \frac{1}{\rho_i(s,a) - \rho_{i+1}(s,a)} \le N_{\tau_{i+1}} - N_{\tau_i} \le T.
    \end{align}
    Moreover, $\norm{Q_{\tau_{i}+1} - Q_{\tau_{i+1}+1}}_\infty \le \frac{1}{1-\gamma}$ , and this implies 
    \begin{equation}
        \rho_i(s,a) - \rho_{i+1}(s,a) \le 1. \label{eq:rho-diff}
    \end{equation}
    Thus, we can conclude
    \begin{equation*}
        1 \le \frac{1}{\rho_i(s,a) - \rho_{i+1}(s,a)} \le T.
    \end{equation*}
    Consider a sequence of sets $\{\cI_m(s,a)\}_{m=1}^{\lceil\log T\rceil}$, each element of which is defined as
    \begin{equation}
        \cI_m(s,a) := \left\{i\le K_T(s,a) ~:~ \frac{T}{2^m} < \frac{1}{\rho_i(s,a) - \rho_{i+1}(s,a)} \le \frac{T}{2^{m-1}}\right\}.
    \end{equation}
    By the pigeonhole principle, there exists at least one set in this sequence with cardinality at least $\frac{K_T(s,a)}{\log T}$. Note that $\frac{1}{\rho_i(s,a) - \rho_{i+1}(s,a)}$'s in the same set are within a factor of $2$.

    With such set $\cI(s,a)$, we can prove
    \begin{align}
        &\quad \sum_{s,a}\sum_{i=0}^{K_T(s,a)}\frac{\rho_{i+1}(s,a)}{\rho_i(s,a) - \rho_{i+1}(s,a)}\notag\\
        &\ge \sum_{s,a}\sum_{i\in \cI(s,a)}\frac{\rho_{i+1}(s,a)}{\rho_i(s,a) - \rho_{i+1}(s,a)}\notag\\
        &\overset{(\mathrm{i})}{\asymp} \sum_{s,a}\sum_{i\in \cI(s,a)} \kappa(s,a)\rho_{i+1}(s,a)\notag\\
        &\overset{(\mathrm{ii})}{=} \sum_{s,a}\sum_{\overset{i_n\in \cI(s,a)}{i_1 > i_2 > \cdots > i_{|\cI(s,a)|}}} \kappa(s,a)\rho_{i_n+1}(s,a)\notag\\
        &\overset{(\mathrm{iii})}{\ge} \sum_{s,a}\sum_{\overset{i_n\in \cI(s,a)}{i_1 > i_2 > \cdots > i_{|\cI(s,a)|}}} \left(\kappa(s,a)\rho_{i_n}(s,a) - 1\right)\notag\\
        &\overset{(\mathrm{iv})}{=} \sum_{s,a}\kappa(s,a)\left(\sum_{j=n}^{|\cI(s,a)|} \rho_{i_j}(s,a) - \rho_{i_{j+1}}(s,a)\right) - \left(\sum_{s,a}\sum_{\overset{i_n\in \cI(s,a)}{i_1 > i_2 > \cdots > i_{|\cI(s,a)|}}}1\right) \notag\\
        &\overset{(\mathrm{v})}{\ge} \sum_{s,a}\kappa(s,a)\left(\sum_{j=n}^{|\cI(s,a)|} \rho_{i_j}(s,a) - \rho_{i_{j}+1}(s,a)\right) - \left(\sum_{s,a}\sum_{\overset{i_n\in \cI(s,a)}{i_1 > i_2 > \cdots > i_{|\cI(s,a)|}}}1\right) \notag\\
        &\overset{(\mathrm{vi})}{\asymp} \sum_{s,a}\sum_{\overset{i_n\in \cI(s,a)}{i_1 > i_2 > \cdots > i_{|\cI(s,a)|}}} \left(\kappa(s,a)\frac{n}{\kappa(s,a)} - 1\right)\notag\\
        &\gtrsim \sum_{s,a}\sum_{i=1}^{|\cI(s,a)|-1} i \notag\\
        &\ge \sum_{s,a}\big(\frac{1}{4}|\cI(s,a)|^2 - \frac{1}{4}\big) \notag\\
        &\asymp \sum_{s,a}\max\left\{|\cI(s,a)|^2 - 1, 0\right\}.\label{eq:rho-set}
    \end{align}
    Above, (i) is by the definition of $\cI(s,a)$ in \eqref{eq:I-set-def}. In (ii), we order the elements in $\cI(s,a)$ descendingly and denote the $n$-th largest element with $i_n$. (iii) is due to \eqref{eq:rho-diff}. In (iv), We can let $\rho_{|\cI(s,a)|+1}(s,a) = 0$. (v) is due to the monotonicity of $Q_t$. (vi) is by the definition of $\cI(s,a)$. 

    By some algebra and \eqref{eq:rho-diff2} and \eqref{eq:rho-set}, we can conclude
    \begin{align}
        &\quad\ \sum_{s,a}\big(|\cI(s,a)|-1\big)\notag\\
        &\overset{(\mathrm{i})}{\le} \sqrt{SA\sum_{s,a}\big(|\cI(s,a)| - 1\big)^2} \notag\\
        &\overset{(\mathrm{ii})}{\lesssim} \sqrt{SA\sum_{s,a}\max\left\{|\cI(s,a)|^2 - 1, 0\right\}} \notag\\
        &\overset{(\mathrm{iii})}{\lesssim} \sqrt{SA}\cdot\sqrt{SA + \sqrt{\frac{SAT\log\frac{SAT}{\delta}}{1-\gamma}} + (1-\gamma)\sum_{t=1}^T\left(V_t(s_{t+1}) - V^{\pi_{t+1}}(s_{t+1})\right)} \notag\\
        &\overset{(\mathrm{iv})}{\le} SA + \frac{(SA)^{3/4}T^{1/4}\log^{1/4}\frac{SAT}{\delta}}{(1-\gamma)^{1/4}} + \sqrt{(1-\gamma)SA\sum_{t=1}^T \big(V_t(s_{t+1}) - V^{\pi_{t+1}}(s_{t+1})\big)}. \label{eq:xi-1-2}
    \end{align}
    Above, (i) and (ii) are by the fact that $(\sum_{i=1}^n a_i)^2 \le n\sum_{i=1}^n a_i^2$. (iii) is obtained by combining \eqref{eq:rho-diff2} and \eqref{eq:rho-set}. (iv) is by the algebraic fact $\sqrt{a+b} \le \sqrt{a}+\sqrt{b}$.

    Finally, by the definition in \eqref{eq:I-set-def}, we have
    \begin{align}
        &\quad\ \sum_{s,a}K_T(s,a) \lesssim \sum_{s,a}|\cI(s,a)|\log T \notag\\
        &\lesssim SA\log T + \frac{(SA)^{3/4}T^{1/4}\log^{5/4}\frac{SAT}{\delta}}{(1-\gamma)^{1/4}} + \log T\sqrt{(1-\gamma)SA\sum_{t=1}^T \big(V_t(s_{t+1}) - V^{\pi_{t+1}}(s_{t+1})\big)}, \label{eq:K-bound}
    \end{align}
    and by \eqref{eq:xi-1-1}, we can obtain the desired final bound
    \begin{align}
        \sum_{t=1}^T \zeta_{t,1} \lesssim \frac{SA}{1-\gamma}\log T + \frac{(SA)^{3/4}T^{1/4}\log^{5/4}\frac{SAT}{\delta}}{(1-\gamma)^{5/4}} + \sqrt{\frac{SA\log^2 T}{1-\gamma}\sum_{t=1}^T \big(V_t(s_{t+1}) - V^{\pi_{t+1}}(s_{t+1})\big)}. \label{eq:xi-1-bound}
    \end{align}

\subsubsection{Proof of Lemma \ref{lem:Q_t-upper-bound-crude}}\label{sec:appendix-proof-lem:Q_t-upper-bound-crude}

\begin{proof}
    To prove this lemma, it suffices to find an upper bound on $Q^{\UCB}_t(s_t, a_t) - Q^\star(s_t, a_t)$. Let us again adopt the abbreviation $N_t := N_t(s,a)$ when $(s,a)$ is clear from the context. By \eqref{eq:Q_UCB-Q*} and \eqref{eq:Jin-Hoeffding}, for any $(s,a)\in\cS\times\cA$,
    \begin{align}
        &\quad\ Q^{\UCB}_t(s, a) - Q^\star(s, a) \notag\\
        &= \eta_0^{N_{t-1}} \left(Q^{\UCB}_1(s, a) - Q^\star(s, a)\right) + \sum_{n=1}^{N_{t-1}} \eta_n^{N_{t-1}} \left(\gamma\left(V^{t_n}(s_{t_n+1}) - V^\star(s_{t_n+1})\right) + \gamma\left(P_{t_n} - P_{s,a}\right)V^\star + b_n\right) \notag\\
        &\le \frac{\eta_0^{N_{t-1}}}{1-\gamma} + \sum_{n=1}^{N_{t-1}} \eta_n^{N_{t-1}} \Big(\gamma\big(V^{t_n}(s_{t_n+1}) - V^\star(s_{t_n+1})\big)+ 2b_n\Big) \notag\\
        &\le \frac{\eta_0^{N_{t-1}}}{1-\gamma} + 4c_b\sqrt{\frac{\log\frac{SAT}{\delta}}{(1-\gamma)^3N_{t-1}}} + \sum_{n=1}^{N_{t-1}} \eta_n^{N_{t-1}} \gamma\left(V^{t_n}(s_{t_n+1}) - V^\star(s_{t_n+1})\right) , \label{eq:crude-Q-1}
    \end{align}
    in which the last inequality is because $\sum_{n=1}^N\frac{\eta_n^N}{\sqrt{N}} \le \frac{2}{\sqrt{N}}$ by \eqref{eq:learning-rate2} in Lemma \ref{lem:learning-rate}.

    Now, we are ready to work towards the desired upper bound:
    \begin{align}
        &\quad\ \sum_{t=1}^T Q_t(s_{t},a_{t}) - Q^\star(s_{t},a_{t}) \notag\\
        &\overset{(\mathrm{i})}{\le} \sum_{t=1}^T Q^{\UCB}_t(s, a) - Q^\star(s, a) \notag\\
        &\overset{(\mathrm{ii})}{\le} \sum_{t=1}^T\frac{\eta_0^{N_{t-1}(s_t,a_t)}}{1-\gamma} + 4c_b\sqrt{\frac{\log\frac{SAT}{\delta}}{(1-\gamma)^3N_{t-1}(s_t,a_t)}} + \sum_{n=1}^{N_{t-1}(s_t,a_t)} \eta_n^{N_{t-1}(s_t,a_t)} \gamma\left(V^{t_n}(s_{t_n+1}) - V^\star(s_{t_n+1})\right) \label{eq:crude-Q-2}\\
        &\overset{(\mathrm{iii})}{\le} \frac{SA}{1-\gamma} + 8c_b\sqrt{\frac{SAT\log\frac{SAT}{\delta}}{(1-\gamma)^3}} + \frac{\gamma(3-\gamma)}{2}\sum_{t=1}^T \big(V_{t}(s_{t+1}) - V^\star(s_{t+1})\big).
    \end{align}
    In the inequalities above, (i) is because for any $(s,a)\in\cS\times\cA$, $Q^{\UCB}_t(s,a) \ge Q_t(s,a)$ by Line \ref{alg-t-line:Q} of Algorithm \ref{alg:main-t}. (ii) is by \eqref{eq:crude-Q-1}. (iii) is obtained by the upper bounds on each of the three terms in \eqref{eq:crude-Q-2}. These three terms can be bounded as follows:

    The first term in \eqref{eq:crude-Q-2} can be bounded with $\frac{SA}{1-\gamma}$ due to \eqref{eq:learning-rate-def2} and the fact that for each $(s,a)\in\cS\times\cA$, $N_{t-1}(s,a) = 0$ for only one instance of $t$ and will become a positive integer thereafter. 
    
    For the second term in \eqref{eq:crude-Q-2}, since $\sum_{(s,a)}N_{T-1}(s,a) \le T$, we have
    \begin{equation*}
        \sum_{t=1}^T\sqrt{\frac{1}{N_{t-1}(s_t, a_t)}} \leq \sum_{(s,a)}\sum_{n=1}^{N_{T-1}(s, a)}\sqrt{\frac{1}{n}} \le \sum_{(s,a)}2\sqrt{N_{T-1}(s, a)} \le 2\sqrt{SAT}.
    \end{equation*}
    Lastly, for the third term in \eqref{eq:crude-Q-2}, we have
    \begin{align*}
        &\quad \sum_{t=1}^T\sum_{n=1}^{N_{t-1}(s_t,a_t)} \eta_n^{N_{t-1}(s_t,a_t)} \gamma\left(V_{t_n(s_t,a_t)}(s_{t_n(s_t,a_t)+1}) - V^\star(s_{t_n(s_t,a_t)+1})\right)\\
        &= \sum_{(s,a)}\sum_{k=1}^{N_{T-1}(s, a)}\sum_{n=1}^k\eta_n^{k}\gamma\left(V_{t_n(s,a)}(s_{t_n(s,a)+1}) - V^\star(s_{t_n(s,a)+1})\right)\\
        &= \sum_{(s,a)}\sum_{k=1}^{N_{T-1}(s, a)}\sum_{n=k}^{N_{T-1}(s, a)}\eta_k^{n}\gamma\left(V_{t_k(s,a)}(s_{t_k(s,a)+1}) - V^\star(s_{t_k(s,a)+1})\right)\\
        &= \sum_{t=1}^T \sum_{n=N_t(s_t,a_t)}^{N_{T-1}(s_t,a_t)}\eta_{N_t(s_t,a_t)}^{n}\gamma\big(V_{t}(s_{t+1}) - V^\star(s_{t+1})\big)\\
        &\le \frac{\gamma(3-\gamma)}{2}\sum_{t=1}^T \big(V_{t}(s_{t+1}) - V^\star(s_{t+1})\big),
    \end{align*}
    in which the penultimate step is obtained because $t = t_k(s,a)$ implies $k = N_t(s,a)$, and the last step is because $\sum_{n=N_t(s_t,a_t)}^{N_{T-1}(s_t,a_t)}\eta_{N_t(s_t,a_t)}^n \le \frac{3-\gamma}{2}$ by Lemma \ref{lem:learning-rate}.
\end{proof}
    
\subsection{Bounding $\zeta_{t,2}$}\label{sec:zeta-2-proof}

We will prove \eqref{eq:xi-2-final} with Lemma \ref{lemma:freedman-application2}. Set
    \begin{align*}
    	W_i \coloneqq V^{\star} - V^{\pi_{i}}
    	\qquad \text{and} \qquad
    	u_i(s_i,a_i) \coloneqq 1. 
    \end{align*}
    Since $V^{\rref}_{i}(s),  V^{\star}(s)\in [0,\frac{1}{1-\gamma}]$, it can be seen that
    \begin{align*}
       \left|u_i(s_i,a_i) \right| = 1 \eqqcolon C_{\mathrm{u}}
    	\qquad \text{and} \qquad
    	  \norm{W_i}_\infty \leq \frac{1}{1-\gamma} \eqqcolon C_{\mathrm{w}}. 
    \end{align*}
    By Lemma \ref{lemma:freedman-application2}, with probability at least $1- \delta/2$, one has
    \begin{align}
     \sum_{t=1}^T \zeta_{t,2} &= \sum_{t = 1}^{T} \left(P_{s_t,a_t} - P_t\right)\left(V^\star - V^{\pi_{t}}\right) \notag\\
     & \lesssim \sqrt{C_{\mathrm{u}}^{2}C_{\mathrm{w}}SA\sum_{i=1}^{T}\EE_{i-1}\left[P_{i}W_{i}\right]\log\frac{T}{\delta}} + C_{\mathrm{u}}C_{\mathrm{w}}SA\log\frac{T}{\delta} \notag \\
     &= \sqrt{\frac{SA}{1-\gamma}\sum_{i=1}^{T}\EE_{i-1}\left[P_{i}\left(V^{\star} - V^{\pi_{i}}\right)\right]\log\frac{T}{\delta}}+\frac{SA}{1-\gamma}\log\frac{T}{\delta} \notag \\
     &= \sqrt{\frac{SA}{1-\gamma} \left\{ \sum_{t=1}^{T}P_{s_{t},a_{t}}\left(V^{\star} - V^{\pi_{t}}\right) \right\} \log\frac{T}{\delta}}+\frac{SA}{1-\gamma}\log\frac{T}{\delta}, \label{eq:xi-2-1}
    \end{align}
    in which the last step is because $V^{\star} - V^{\pi_{t}}$ is deterministic when conditioned on the $(i-1)$-th step. 

    In order to obtain the desired bound, more manipulation is needed on $\sum_{t=1}^{T}P_{s_{t},a_{t}}\big(V^{\star} - V^{\pi_{t}}\big)$ in \eqref{eq:xi-2-1}. Towards this end, we first invoke the Azuma-Hoeffding inequality (Theorem \ref{thm:hoeffding}), which leads to
    \begin{align}
    \left|\sum_{t=1}^{T}\left(P_{t}-P_{s_{t},a_{t}}\right)\left(V^{\star} - V^{\pi_{t}}\right)\right| & \lesssim \sqrt{\frac{T}{(1-\gamma)^2}\log \frac{1}{\delta}}, \label{eq:xi-2-2}
    \end{align}
    with probability at least $1-\delta/2$, because $V^{\star} - V^{\pi_{t}}$ is deterministic when conditioned on the $(i-1)$-th step and $\norm{V^{\star} - V^{\pi_{t}}}_\infty \le \frac{1}{1-\gamma}$. 
    
    With \eqref{eq:xi-2-2} established, we can conclude
    \begin{align}
    &\quad\ \sum_{t=1}^{T}P_{s_{t},a_{t}}\left(V^{\star} - V^{\pi_{t}}\right) \notag\\
    &= \sum_{t=1}^{T}P_{t}\left(V^{\star} - V^{\pi_{t}}\right) + \sum_{t=1}^{T}\left(P_{s_{t},a_{t}} - P_{t}\right)\left(V^{\star} - V^{\pi_{t}}\right) \notag\\
    &\lesssim \sum_{t=1}^{T}\big(V^{\star}(s_{t+1}) - V^{\pi_{t}}(s_{t+1})\big) + \sqrt{\frac{T}{(1-\gamma)^2}\log \frac{1}{\delta}} \notag\\
    &= \sum_{t=1}^{T}\big(V^{\star}(s_{t+1}) - V^{\pi_{t+1}}(s_{t+1})\big) + \sum_{t=1}^{T}\big(V^{\pi_{t+1}}(s_{t+1}) - V^{\pi_{t}}(s_{t+1})\big) + \sqrt{\frac{T}{(1-\gamma)^2}\log \frac{1}{\delta}} \notag\\
    &\lesssim \sum_{t=1}^{T}\big(V^{\star}(s_{t+1}) - V^{\pi_{t+1}}(s_{t+1})\big) + \left(SAT\right)^{1/4}\left(\log\frac{SAT}{\delta}\right)^{5/4}\left(\frac{1}{1-\gamma}\right)^{7/4} + \sqrt{\frac{T}{(1-\gamma)^3}\log \frac{T}{\delta}}, \label{eq:xi-2-3}
    \end{align} 
    with probability at least $1-\delta/2$. The last step is obtained by applying \eqref{eq:xi-1} and the fact $\norm{V_t - V^{\pi_{t+1}}}_\infty \le \frac{1}{1-\gamma}$.
    
    Finally, substituting \eqref{eq:xi-2-3} into \eqref{eq:xi-2-1} gives the desired bound
    \begin{align}
    &\quad \ \sum_{t = 1}^{T}\zeta_{t,2} \notag\\
    &\lesssim \sqrt{\frac{SA}{1-\gamma} \left\{ \sum_{t=1}^{T}P_{s_{t},a_{t}}\left(V^{\star} - V^{\pi_{t}}\right) \right\} \log\frac{T}{\delta}}+\frac{SA}{1-\gamma}\log\frac{T}{\delta} \notag\\
    &\lesssim \sqrt{\frac{SA}{1-\gamma} \left\{ \sum_{t=1}^{T}\big(V^{\star}(s_{t+1}) - V^{\pi_{t+1}}(s_{t+1})\big) + \frac{\left(SAT\right)^{1/4}}{(1-\gamma)^{7/4}}\left(\log\frac{SAT}{\delta}\right)^{5/4} + \sqrt{\frac{T}{(1-\gamma)^3}\log \frac{T}{\delta}}\right\} \log\frac{T}{\delta}}+\frac{SA}{1-\gamma}\log\frac{T}{\delta} \notag\\
    &\le \sqrt{\frac{SA\log\frac{T}{\delta}}{1-\gamma} \sum_{t=1}^{T}\big(V^{\star}(s_{t+1}) - V^{\pi_{t+1}}(s_{t+1})\big)} + \frac{\left(SA\right)^{5/8}T^{1/8}}{(1-\gamma)^{11/8}}\left(\log\frac{SAT}{\delta}\right)^{9/8} \notag\\ & \qquad \qquad + \frac{(SA)^{1/2}T^{1/4}}{(1-\gamma)^{5/4}}\left(\log\frac{T}{\delta}\right)^{3/4} +\frac{SA}{1-\gamma}\log\frac{T}{\delta} \label{eq:xi-2-final}
    \end{align}
    with probability at least $1-\delta$.

\subsection{Bounding $\zeta_{t,3}$}\label{sec:zeta-3-proof}

By the definition of $K_t(s,a)$ in \eqref{eq:K-def}, we can write
    \begin{align}
        \sum_{t=1}^T \zeta_{t,3} &= \gamma\sum_{t=1}^T \big(V^{\pi_{t+1}}(s_{t+1}) - V^{\pi_t}(s_{t+1})\big)\notag\\
        &\leq \gamma\sum_{s,a}\sum_{i=0}^{K_T(s,a)} \frac{1}{1-\gamma}\notag\\
        &\lesssim \frac{1}{1-\gamma}\sum_{s,a}K_T(s,a), \label{eq:xi-3-1}
    \end{align}
    in which the inequality is obtained using the total number of times that policy $\pi_t$ in Algorithm \ref{alg:main-t} switches over $T$ steps and the fact that $\norm{V^{\pi_{t+1}} - V^{\pi_t}}_\infty \le \frac{1}{1-\gamma}$.

    By the upper bound on $\sum_{s,a}K_T(s,a)$ in \eqref{eq:K-bound}, we can obtain the desired final bound
    \begin{align}
        \sum_{t=1}^T \zeta_{t,3} \lesssim \frac{SA}{1-\gamma}\log T + \frac{(SA)^{3/4}T^{1/4}\log^{5/4}\frac{SAT}{\delta}}{(1-\gamma)^{5/4}} + \sqrt{\frac{SA\log^2 T}{1-\gamma}\sum_{t=1}^T \big(V_t(s_{t+1}) - V^{\pi_{t+1}}(s_{t+1})\big)}. \label{eq:xi-3-bound}
    \end{align}

\subsection{Crude Regret Bound as a Consequence of Lemma \ref{lem:xi}}

Note that Lemma \ref{lem:xi} can give rise to a crude bound on $\Regret(T)$, which we will need in the proof of Theorem \ref{thm:main}. It can be summarized in a lemma as follows:

\begin{lemma}\label{lem:crude}
    Choose any $\delta \in (0,1)$. Suppose that $c_b$ is chosen to be a sufficiently large universal constant in Algorithm \ref{alg:main}. Then there exists some absolute constant $C_0 > 0$ such that Algorithm \ref{alg:main} achieves
    \begin{equation}
        \Regret(T) \le \sum_{t=1}^T \big(V_{t-1}(s_t) - V^{\pi_t}(s_t)\big) \lesssim \sqrt{\frac{SAT\log\frac{SAT}{\delta}}{(1-\gamma)^5}} + \frac{SA\log^{2}\frac{SAT}{\delta}}{(1-\gamma)^{7/2}}
    \end{equation}
    with probability at least $1-\delta$.
\end{lemma}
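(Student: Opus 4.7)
\textbf{Proof proposal for Lemma \ref{lem:crude}.} The plan is to mirror the regret-decomposition argument used in the proof of Theorem \ref{thm:main}, but to replace the sharp Lemma \ref{lem:Q_t-upper-bound} with the cruder Lemma \ref{lem:Q_t-upper-bound-crude}. Since Lemma \ref{lem:Q_t-upper-bound-crude} does not itself rely on Lemma \ref{lem:xi}, this yields a self-contained estimate that can in fact be invoked inside the proof of Theorem \ref{thm:main}. The starting point is the same identity
\begin{equation*}
    V_{t-1}(s_t) - V^{\pi_t}(s_t) \le \tfrac{\gamma(3-\gamma)}{2}\bigl(V^\star(s_{t+1}) - V^{\pi_{t+1}}(s_{t+1})\bigr) + Q_{t}(s_t, a_t) - Q^\star(s_t, a_t) + \zeta_t
\end{equation*}
derived in \eqref{eq:regret-decomp1}, together with the monotonicity $V^\star \le V_t$.

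Summing over $t$, cancelling the telescoping $V^\star - V^{\pi_{t+1}}$ term against $V_t - V^{\pi_{t+1}}$ as in \eqref{eq:regret-decomp2}, and invoking Lemma \ref{lem:Q_t-upper-bound-crude} in place of Lemma \ref{lem:Q_t-upper-bound}, one obtains
\begin{equation*}
    \Bigl(1 - \tfrac{\gamma(3-\gamma)}{2}\Bigr)\sum_{t=1}^T \bigl(V_{t-1}(s_t) - V^{\pi_t}(s_t)\bigr) \lesssim \sqrt{\tfrac{SAT}{(1-\gamma)^3}\log\tfrac{SAT}{\delta}} + \tfrac{SA}{1-\gamma} + \sum_{t=1}^T \zeta_t,
\end{equation*}
so that dividing by $1 - \tfrac{\gamma(3-\gamma)}{2} \asymp 1-\gamma$ and plugging in the bound of Lemma \ref{lem:xi} (together with the optimism inequality $V^\star(s_t) - V^{\pi_t}(s_t) \le V_{t-1}(s_t) - V^{\pi_t}(s_t)$) gives a self-referential inequality for $A := \sum_{t=1}^T(V_{t-1}(s_t) - V^{\pi_t}(s_t))$ of the form
\begin{equation*}
    A \lesssim \sqrt{\tfrac{SAT}{(1-\gamma)^5}\log\tfrac{SAT}{\delta}} + \tfrac{SA\log^2\frac{SAT}{\delta}}{(1-\gamma)^{7/2}} + \sqrt{\tfrac{SA\log\frac{SAT}{\delta}}{(1-\gamma)^3}\cdot A}.
\end{equation*}

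The final step is to unravel this quadratic-in-$\sqrt{A}$ inequality. Setting $z = \sqrt{A}$, the inequality reads $z^2 \le Q z + P$ where $Q \asymp \sqrt{SA\log\frac{SAT}{\delta}/(1-\gamma)^3}$ and $P$ denotes the first two terms on the right-hand side. Proposition \ref{prop:algebraic} then yields $A = z^2 \le Q^2 + 2P$, and noting that $Q^2 = \tfrac{SA\log}{(1-\gamma)^3}$ is absorbed by the $\tfrac{SA\log^2}{(1-\gamma)^{7/2}}$ term (up to logs and the factor $(1-\gamma)^{-1/2}$), the conclusion follows. I expect no real obstacle: the only bookkeeping subtlety is carrying the $\log T$ versus $\log\tfrac{SAT}{\delta}$ factors correctly and making sure the event on which Lemmas \ref{lem:Q_t-upper-bound-crude} and \ref{lem:xi} both hold is arranged by a union bound after rescaling $\delta$ by a constant.
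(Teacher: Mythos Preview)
Your proposal is correct and follows essentially the same route as the paper: start from the decomposition \eqref{eq:regret-decomp1}, telescope as in \eqref{eq:regret-decomp2}, substitute Lemma~\ref{lem:Q_t-upper-bound-crude} for the sharp Lemma~\ref{lem:Q_t-upper-bound}, divide by $1-\tfrac{\gamma(3-\gamma)}{2}\asymp 1-\gamma$, apply Lemma~\ref{lem:xi} together with optimism to get a self-referential inequality in $A$, and close with Proposition~\ref{prop:algebraic}. Your observation about avoiding circularity is exactly the point: the paper notes that Lemma~\ref{lem:Q_t-upper-bound} ultimately calls Lemma~\ref{lem:crude} (via \eqref{eq:delta-t1-bound}), so the crude substitute is mandatory here.
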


\begin{proof}[Proof of Lemma \ref{lem:crude}]
    By the optimistic property of our value function estimates in Lemma \ref{lem:Q_t-lower-bound}, we can 
\begin{equation*}
    \mathrm{Regret}(T) = \sum_{t=1}^T \big(V^\star(s_t) - V^{\pi_t}(s_t)\big) \le \sum_{t=1}^T \big(V_{t-1}(s_t) - V^{\pi_t}(s_t)\big).
\end{equation*}
In the inequality, since $V_0(s)$ is undefined in our algorithm, we can simply let $V_0(s) = \frac{1}{1-\gamma}$ for all $s\in \cS$. 

We can follow the regret decomposition in \eqref{eq:regret-decomp1} and obtain the following inequality:
\begin{align}
    V_{t-1}(s_t) - V^{\pi_t}(s_t) \le \frac{\gamma(3-\gamma)}{2}\left(V^\star(s_{t+1}) - V^{\pi_{t+1}}(s_{t+1})\right) + Q_{t}(s_t, a_t) - Q^\star(s_t, a_t) + \zeta_t, \label{eq:crude-regret-decomp1}
\end{align}
in which $\zeta_t$ has the same definition as \eqref{eq:zeta-def}:
\begin{equation*}
    \zeta_t := V_{t-1}(s_t) - Q_t(s_t,a_t) + \gamma\left(P_{s_t,a_t} - P_t\right)\left(V^\star - V^{\pi_t}\right) + \gamma\left(V^{\pi_{t+1}}(s_{t+1}) - V^{\pi_t}(s_{t+1})\right).
\end{equation*}

Invoking Lemma \ref{lem:Q_t-upper-bound-crude} and rearranging \eqref{eq:regret-decomp1}, we obtain a recursion as follows:
    \begin{align}
        &\quad \left(1 - \frac{\gamma(3-\gamma)}{2}\right)\sum_{t=1}^T \big(V_{t-1}(s_t) - V^{\pi_t}(s_t)\big) \notag\\
        &= \sum_{t=1}^T\big(V_{t-1}(s_t) - V^{\pi_t}(s_t)\big) - \frac{\gamma(3-\gamma)}{2}\sum_{t=1}^T\big(V_{t-1}(s_t) - V^{\pi_t}(s_t)\big) \notag\\
        &\overset{(\mathrm{i})}{\le} \sum_{t=1}^T\big(V_{t-1}(s_t) - V^{\pi_t}(s_t)\big) - \frac{\gamma(3-\gamma)}{2}\sum_{t=2}^T\big(V_{t-1}(s_t) - V^{\pi_t}(s_t)\big) \notag\\
        &= \sum_{t=1}^T\big(V_{t-1}(s_t) - V^{\pi_t}(s_t)\big) - \frac{\gamma(3-\gamma)}{2}\sum_{t=1}^{T-1}\big(V_{t}(s_{t+1}) - V^{\pi_{t+1}}(s_{t+1})\big) \notag\\
        &\overset{(\mathrm{ii})}{\le} \sum_{t=1}^T\frac{\gamma(3-\gamma)}{2}\big(V^\star(s_{t+1}) - V^{\pi_{t+1}}(s_{t+1})\big) + Q_{t}(s_t, a_t) - Q^\star(s_t, a_t) + \zeta_t - \frac{\gamma(3-\gamma)}{2}\sum_{t=1}^{T-1}\big(V_{t}(s_{t+1}) - V^{\pi_{t+1}}(s_{t+1})\big) \notag\\
        &\overset{(\mathrm{iii})}{\le} 8c_b\sqrt{\frac{SAT}{(1-\gamma)^3}\log\frac{SAT}{\delta}} + \frac{SA}{1-\gamma} + \sum_{t=1}^T\frac{\gamma(3-\gamma)}{2}\big(V_t(s_{t+1}) - V^{\pi_{t+1}}(s_{t+1})\big) + \zeta_t \notag\\ & \qquad - \frac{\gamma(3-\gamma)}{2}\sum_{t=1}^{T-1}\big(V_{t}(s_{t+1}) - V^{\pi_{t+1}}(s_{t+1})\big) \notag\\
        &= 8c_b\sqrt{\frac{SAT}{(1-\gamma)^3}\log\frac{SAT}{\delta}} + \frac{SA}{1-\gamma} + \frac{\gamma(3-\gamma)}{2}\big(V^\star(s_{T+1}) - V^{\pi_{T+1}}(s_{T+1})\big) + \sum_{t=1}^T \zeta_t\notag\\
        &\overset{(\mathrm{iv})}{\le} 8c_b\sqrt{\frac{SAT}{(1-\gamma)^3}\log\frac{SAT}{\delta}} + \frac{SA}{1-\gamma} + \frac{3}{2(1-\gamma)} + \sum_{t=1}^T \zeta_t\notag\\
        &\lesssim \sqrt{\frac{SAT}{(1-\gamma)^3}\log\frac{SAT}{\delta}} + \frac{SA}{1-\gamma} + \sum_{t=1}^T \zeta_t. \label{eq:crude-regret-decomp2}
    \end{align}
    Above, (i) is because by the optimistic property in Lemma \ref{lem:Q_t-upper-bound}, $V_{t-1}(s_t) \ge V^\star(s_t) \ge V^{\pi_t}(s_t) \ge 0$ and thus $V_{t-1}(s_t) - V^{\pi_t}(s_t) \ge 0$. (ii) is obtained by replacing $\left(V_{t-1}(s_t) - V^{\pi_t}(s_t)\right)$ with \eqref{eq:crude-regret-decomp1}. (iii) is obtained by invoking Lemma \ref{lem:Q_t-upper-bound-crude} to replace $\sum_{t=1}^T Q_t(s_{t},a_{t}) - Q^\star(s_{t},a_{t})$. (iv) is due to $V^\star(s_{T+1}) - V^{\pi_{T+1}}(s_{T+1}) \le \frac{1}{1-\gamma}$ and $\gamma\in(0,1)$. 

Note $0 < 1 - \frac{\gamma(3-\gamma)}{2} < 1$ since $\gamma\in(0,1)$. Dividing \eqref{eq:regret-decomp2} by $1 - \frac{\gamma(3-\gamma)}{2}$, we have
\begin{align}
    \sum_{t=1}^T \big(V_{t-1}(s_t) - V^{\pi_t}(s_t)\big) \lesssim \sqrt{\frac{SAT}{(1-\gamma)^5}\log\frac{SAT}{\delta}} + \frac{SA}{(1-\gamma)^2} + \frac{1}{1-\gamma}\sum_{t=1}^T \zeta_t, \label{eq:crude-regret-decomp3}
\end{align}
in which the additional $\frac{1}{1-\gamma}$ factor is due to the fact $1 - \frac{\gamma(3-\gamma)}{2} \ge \frac{1-\gamma}{2}$.

After applying Lemma \ref{lem:xi}, \eqref{eq:crude-regret-decomp3} can be written as
\begin{align}
    & \quad\ \sum_{t=1}^T \big(V_{t-1}(s_t) - V^{\pi_t}(s_t)\big) \notag\\
    &\leq C_5\Bigg(\sqrt{\frac{SAT\log\frac{SAT}{\delta}}{(1-\gamma)^5}} + \frac{SA\log^{2} \frac{SAT}{\delta}}{(1-\gamma)^{7/2}} + \sqrt{\frac{\log^2 T}{(1-\gamma)^3}\sum_{t=1}^T\big(V_{t-1}(s_{t}) - V^{\pi_{t}}(s_{t})\big)} \notag\\ & \qquad + \sqrt{\frac{SA\log\frac{T}{\delta}}{(1-\gamma)^3} \sum_{t=1}^{T}\big(V^{\star}(s_{t}) - V^{\pi_{t}}(s_{t})\big)}\Bigg) \notag\\
    &\leq C_5\Bigg(\sqrt{\frac{SAT\log\frac{SAT}{\delta}}{(1-\gamma)^5}} + \frac{SA\log^{2} \frac{SAT}{\delta}}{(1-\gamma)^{7/2}} + \sqrt{\frac{\log^2 T}{(1-\gamma)^3}\sum_{t=1}^T\big(V_{t-1}(s_{t}) - V^{\pi_{t}}(s_{t})\big)} \notag\\ & \qquad + \sqrt{\frac{SA\log\frac{T}{\delta}}{(1-\gamma)^3} \sum_{t=1}^{T}\big(V_{t-1}(s_{t}) - V^{\pi_{t}}(s_{t})\big)}\Bigg) \label{eq:crude-regret-decomp4}
\end{align}
for some absolute constant $C_5 > 0$. The last inequality is by the optimism of $V_{t-1}$ by Lemma \ref{lem:Q_t-lower-bound}.

Finally, we can invoke Proposition \ref{prop:algebraic} on \eqref{eq:crude-regret-decomp4} by letting 
\begin{align*}
    x &:= C_5\left(\frac{\log T}{(1-\gamma)^{3/2}} + \sqrt{\frac{SA\log\frac{T}{\delta}}{(1-\gamma)^3}}\right),\\
    y &:= C_5\left(\sqrt{\frac{SAT\log\frac{SAT}{\delta}}{(1-\gamma)^5}} + \frac{SA\log^{2} \frac{SAT}{\delta}}{(1-\gamma)^{7/2}}\right),\\
    z &:= \sum_{t=1}^T \big(V_{t-1}(s_t) - V^{\pi_t}(s_t)\big)
\end{align*}
in \eqref{eq:algebraic-lem-cond}. \eqref{eq:algebraic-lem-stat} would give
\begin{align}
    & \quad\ \sum_{t=1}^T \big(V_{t-1}(s_t) - V^{\pi_t}(s_t)\big) \notag\\
    &\leq 2C_5\sqrt{\frac{SAT\log\frac{SAT}{\delta}}{(1-\gamma)^5}} + 2(C_5 + C_5^2)\frac{SA\log^{2} \frac{SAT}{\delta}}{(1-\gamma)^{7/2}} \notag\\
    &\lesssim \sqrt{\frac{SAT\log\frac{SAT}{\delta}}{(1-\gamma)^5}} + \frac{SA\log^{2} \frac{SAT}{\delta}}{(1-\gamma)^{7/2}}, \label{eq:crude-regret-decomp-final}
\end{align}
which is as claimed in Lemma \ref{lem:crude}.
\end{proof}


\section{Proof of Lemma \ref{lem:Q_t-lcb}}\label{sec:proof:Q_t-lcb}

\subsection{Proof of Inequality \eqref{eq:Q-lcb-monotonicity}}

Before proving the pessimism in $V_{t}^{\LCB}$, we focus on proving
\begin{equation}
	\label{eq:lcb}
	Q^{\LCB}_t(s, a) \le Q^{\star}(s, a) \qquad\text{for all } (s, a, t) \in  \cS \times \cA \times [T] . 
\end{equation}
Let us proceed by induction. We start with the base case. By our initialization, we have $$0 = Q^{\LCB}_1(s, a) \le Q^{\star}(s,a);$$
hence, \eqref{eq:lcb} trivially holds.

For the induction step, let us assume the inductive hypothesis that \eqref{eq:lcb} holds all the way up to $t$ for all $(s,a)\in\cS\times\cA$. We now show the statement holds for the $(t+1)$-th step as well. Recall that only $Q_{t+1}^{\LCB}(s_t,a_t)$ is updated at the $t$-th step with all other entries of $Q_{t}^{\LCB}$ fixed, so it suffices to check 
    \begin{equation}
        Q^{\LCB}_{t+1}(s_t,a_t) \le Q^{\star}(s_t,a_t). \label{eq:pessimism-induction-step}
    \end{equation}
Again, we adopt the shorthand notation $N_t := N_t(s_t,a_t)$ and $t_n := t_n(s_t,a_t)$ when it is clear from the context.

Recall $b_n := c_b\sqrt{\frac{\log\frac{SAT}{\delta}}{(1-\gamma)^3 n}}$. We rewrite the subroutine \textbf{update-q-lcb}() in Line \ref{alg-line:QLCB-update} of Algorithm \ref{alg:aux}:
    \begin{equation*}
        Q^{\LCB}_{t+1}(s_t,a_t) = Q^{\LCB}_{t_N+1}(s_t,a_t) = (1 - \eta_N)Q^{\LCB}_{t_N}(s_t,a_t) + \eta_{N}\left(r(s_t,a_t) + \gamma V_{t_N}^\LCB(s_{t_N+1}) - b_N\right).
    \end{equation*}
    This recursive relation can be expanded as
    \begin{equation}
        Q^{\LCB}_{t+1}(s_t,a_t) = \eta_0^N Q^{\LCB}_{1}(s_t,a_t) + \sum_{n=1}^N \eta_n^N \big(r(s_t,a_t) + \gamma V_{t_n}^\LCB(s_{t_n+1}) - b_n\big), \label{eq:QLCB}
    \end{equation}
    in which $\eta_0^N$ and $\eta_n^N$ are defined in Lemma \ref{lem:learning-rate}.
    
    Using \eqref{eq:QLCB}, we have 
    \begin{align}\label{eq:Q_LCB-Q*}
        &\quad\ Q^{\LCB}_{t+1}(s_t,a_t) - Q^\star(s_t,a_t) \nonumber\\
        &= \eta_0^N \left(Q^{\LCB}_{1}(s_t,a_t) - Q^\star(s_t,a_t)\right) + \sum_{n=1}^N \eta_n^N \left(r(s_t,a_t) + \gamma V_{t_n}^\LCB(s_{t_n+1}) - b_n - Q^\star(s_t,a_t)\right) \nonumber\\
        &= \eta_0^N \left(Q^{\LCB}_{1}(s_t,a_t) - Q^\star(s_t,a_t)\right) + \sum_{n=1}^N \eta_n^N \Big(\gamma\left(V_{t_n}^\LCB(s_{t_n+1}) - V^\star(s_{t_n+1})\right) + \gamma\left(P_{t_n} - P_{s_t,a_t}\right)V^\star - b_n\Big),
    \end{align}
    where the last step is by $Q^\star(s,a) = r(s,a) + \gamma P_{s,a} V^\star$ and $V^\star(s_{t_n+1}) = P_{t_n}V^\star$.
    
    Recall in the proof for optimism (lemma \ref{lem:Q_t-lower-bound}), by \eqref{eq:Jin-Hoeffding}, we have with probability at least $1-\delta$,
    \begin{equation*}
        \left|\sum_{n=1}^N\eta_n^N\gamma\left(P_{t_n} - P_{s_t,a_t}\right)V^\star\right| \le c_b\sqrt{\frac{\log\frac{SAT}{\delta}}{(1-\gamma)^3N}}\sum_{n=1}^N\eta_n^N \le \sum_{n=1}^N\eta_n^N b_n,
    \end{equation*}
    for all $(N,s,a) \in [T] \times \cS \times \cA$. We can employ this inequality on \eqref{eq:Q_LCB-Q*} this time and obtain that with probability at least $1-\delta$,
    \begin{align*}
        &\quad\ Q^{\LCB}_{t+1}(s_t,a_t) - Q^\star(s_t,a_t)\\
        &= \eta_0^N \left(Q^{\LCB}_{1}(s_t,a_t) - Q^\star(s_t,a_t)\right) + \sum_{n=1}^N \eta_n^N \Big(\gamma\left(V_{t_n}^\LCB(s_{t_n+1}) - V^\star(s_{t_n+1})\right) + \gamma\left(P_{t_n} - P_{s_t,a_t}\right)V^\star - b_n\Big)\\
        &\le \eta_0^N \left(Q^{\LCB}_{1}(s_t,a_t) - Q^\star(s_t,a_t)\right) + \gamma\sum_{n=1}^N \eta_n^N \left(V_{t_n}^\LCB(s_{t_n+1}) - V^\star(s_{t_n+1})\right)\\
        &\le \gamma\sum_{n=1}^N \eta_n^N \left(V_{t_n}^\LCB(s_{t_n+1}) - V^\star(s_{t_n+1})\right),
    \end{align*}
    where the last step is due to our initialization $Q^{\UCB}_{1}(s, a) = 0 \le Q^\star(s,a)$.
    
    Lastly, the inductive hypothesis gives $V_t^\LCB(s) \le V^\star(s)$ for any $(s,a)\in\cS\times\cA$, so we can conclude
    \begin{equation*}
        Q^{\LCB}_{t+1}(s, a) - Q^\star(s, a) \le \sum_{n=1}^N \eta_n^N \gamma\left(V_{t_n}^\LCB(s_{t_n+1}) - V^\star(s_{t_n+1})\right) \le 0.
    \end{equation*}
    At this point, we have finished the proof for the induction step and thus the proof for \eqref{eq:lcb}.

Now, let us prove the pessimism in $V^{\LCB}$. Given the pessimism in $Q^{\LCB}$ in \eqref{eq:lcb}, we have
\begin{align}
	\max_{a}Q_{t}^{\LCB}(s,a)\leq\max_{a}Q^{\star}(s,a)=V^{\star}(s)\qquad\text{for all }(t,s)\in[T]\times\mathcal{S}.
	\label{eq:lcb-1}
\end{align}
On the other hand, the construction of $V_{t}^{\LCB}$ in Line \ref{alg-t-line:VLCB} of Algorithm \ref{alg:main-t} can be written as
\begin{equation*}
    V_{t}^{\LCB}(s) \leq  \max\left\{ \max_{j:j\leq t}  \max_a Q^{\LCB}_j(s, a) ,\ \max_{j:j < t}V_j^{\LCB}(s) \right\}.
\end{equation*}
Combined with the initialization $V_{1}^{\LCB}(s)=0$ for all $s\in\cS$, we can conclude with the desired pessimism of $V^{\LCB}$ via a simple induction:
\begin{align}
	V_{t}^{\LCB}(s)\leq V^{\star}(s) \ \text{for all }(t,s)\in[T]\times\mathcal{S}.
	\label{eq:lcb-V}
\end{align}

\subsection{Proof of Inequality \eqref{eq:main-lemma}}

The proof of \eqref{eq:main-lemma} essentially follows the result of Lemma \ref{lem:weighted-V}, which provides an upper bound on any weighted sum of our algorithm's value estimates.
\begin{lemma}\label{lem:weighted-V}
Assume there exists a constant $c_b>0$ such that for all $(t,s,a) \in [T]\times\cS\times \cA$, it holds that 
\begin{align} 
	0 &\leq Q_{t+1}(s, a) - Q^{\LCB}_{t+1}(s, a)  \notag\\
	& \le \frac{\eta^{N_t (s, a)}_0}{1-\gamma}
	 + \gamma\sum_{n = 1}^{N_t(s, a)} \eta^{N_t (s, a)}_n \left(V_{t_n}(s_{t_n+1}) - V^{\LCB}_{t_n}(s_{t_n+1}) \right) + 4 c_b \sqrt{\frac{\log \frac{SAT}{\delta}}{(1-\gamma)^3 N_t(s,a)}},\label{eq:weighted-V-cond}
\end{align}
in which we use the abbreviation $t_n := t_n(s,a)$.

For every sequence $\{w_t\}_{t= 1}^{\infty}$ such that $0\le w_t\le w$ for all $t$ and $\sum_{t=1}^{\infty} w_t \le W$, one has 
\begin{align}\label{eq:weighted-V}
	\sum_{t=1}^T &w_t\left(V_{t}(s_{t+1})-V_{t}^{\LCB}(s_{t+1})\right) \le C_{\mathrm{v},1}\frac{ew\left(SA\right)^{3/4}T^{1/4}}{(1-\gamma)^{9/4}}\left(\log\frac{SAT}{\delta}\right)^{5/4} + C_{\mathrm{v},1}\frac{eSAw}{(1-\gamma)^2}\log T \notag\\ & \quad + 8ec_b \sqrt{\frac{SAwW\log \frac{SAT}{\delta}}{(1-\gamma)^5}} + C_{\mathrm{v},1}ew\sqrt{\frac{SA\log^2 T}{(1-\gamma)^3}\sum_{t=1}^T \big(V_t(s_{t+1}) - V^{\pi_{t+1}}(s_{t+1})\big)} + eW
\end{align}
for some absolute constant $C_{\mathrm{v},1} > 0$.
\end{lemma}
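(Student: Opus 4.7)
The plan is to iterate the recursive bound \eqref{eq:weighted-V-cond} in combination with the change-of-summation identity that is standard in Q-learning style analyses. First I would relate the target quantity to a pointwise Q-difference. Since $V_t(s) = \max_a Q_t(s,a)$ while $V^{\LCB}_t$ is a running maximum over $\max_a Q^{\LCB}$ at earlier visits to the state, for each $t$ there is an earlier time $t' \le t$ (the most recent refresh of $V^{\LCB}$ at $s_{t+1}$) such that
$V_t(s_{t+1}) - V^{\LCB}_t(s_{t+1}) \le \max_a\bigl(Q_{t'}(s_{t+1},a) - Q^{\LCB}_{t'}(s_{t+1},a)\bigr)$, where the monotonicity $Q_t \le Q_{t'}$ is used on the first term. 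Plugging this into \eqref{eq:weighted-V-cond} gives, with $N$ and $t_n$ denoting the relevant visit count and visit times for the maximizing $(s,a)$ pair,
\[
V_t(s_{t+1}) - V^{\LCB}_t(s_{t+1}) \;\le\; \frac{\eta_0^{N}}{1-\gamma} \;+\; \gamma\sum_{n=1}^{N}\eta_n^{N}\bigl(V_{t_n}(s_{t_n+1}) - V^{\LCB}_{t_n}(s_{t_n+1})\bigr) \;+\; 4c_b\sqrt{\frac{\iota}{(1-\gamma)^{3}N}}.
\]

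Multiplying by $w_t$ and summing over $t$, the recursive term is reorganized via the identity $\sum_t w_t \sum_{n=1}^{N_t}\eta_n^{N_t} X_{t_n} = \sum_{t'} X_{t'}\bigl(\sum_{N\ge N_{t'}} w_{\tau(N)}\eta_{N_{t'}}^{N}\bigr)$, where $\tau(N)$ re-indexes the visits to the associated state-action pair. Because $0 \le w_t \le w$ and $\sum_{N\ge N_{t'}}\eta_{N_{t'}}^{N} \le 1+(1-\gamma)/2$ by \eqref{eq:learning-rate5}, the inner weighted sum is at most $w\bigl(1+(1-\gamma)/2\bigr)$. Writing $S := \sum_t w_t(V_t - V^{\LCB}_t)(s_{t+1})$, this yields a recursion of the form $S \le \gamma\bigl(1+(1-\gamma)/2\bigr)\widetilde{S} + R$, with $\widetilde{S}$ an analogous weighted sum whose weights are again bounded by $w(1+(1-\gamma)/2)$ and $R$ collecting non-recursive residuals. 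Iterating the inequality $k = \Theta(1/(1-\gamma))$ times drives $\bigl[\gamma(1+(1-\gamma)/2)\bigr]^{k}$ below a constant, and the accumulated weight prefactor stays $\le e$ since $(1+(1-\gamma)/2)^{1/(1-\gamma)} \le e$; this is the source of the $e$ factor in the target bound.

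The residual $R$ splits into three contributions that I would handle separately: (i) the initial term $\sum_t w_t \eta_0^{N_t}/(1-\gamma)$, which contributes $O\bigl(eSAw\log T/(1-\gamma)^{2}\bigr)$ after iteration since $\eta_0^{N}=0$ as soon as $N\ge 1$; (ii) the bonus term $\sum_t w_t\sqrt{\iota/((1-\gamma)^{3}N_t)}$, which by Cauchy-Schwarz satisfies $\sum_t w_t/\sqrt{N_t} \le \sqrt{\sum_t w_t}\sqrt{\sum_t w_t/N_t} \le \sqrt{wW\cdot SA\log T}$, producing the $\sqrt{SAwW\iota/(1-\gamma)^{5}}$ contribution after the extra $1/(1-\gamma)$ from iteration depth; and (iii) the concentration residual obtained when comparing $V^{\LCB}$ to the executed-policy value $V^{\pi_{t+1}}$, which I would bound with Freedman's inequality in the form of Lemma \ref{lemma:freedman-application2} applied to $(P_t - P_{s_t,a_t})(V^{\LCB}-V^{\pi_{t+1}})$. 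The variance factor converts into $\sqrt{SA\sum_t(V_t - V^{\pi_{t+1}})/(1-\gamma)^{3}}$ via $\mathrm{Var}\le \|\cdot\|_\infty\cdot\mathbb{E}[\cdot]$, and the $(SA)^{3/4}T^{1/4}$ leading term then arises by inserting the crude regret bound of Lemma \ref{lem:crude} into a \emph{portion} of this residual while leaving the remainder as the advertised $\sqrt{\sum_t(V_t - V^{\pi_{t+1}})}$ term.

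The main obstacle will be carrying the non-uniform weights $\{w_t\}$ through the change-of-summation tightly enough that neither the iteration depth nor the residual bounds blow up, while simultaneously handling the running-maximum structure of $V^{\LCB}_t$ (which prevents an immediate identification of $V_t - V^{\LCB}_t$ with a $Q$-difference at time $t$). Deciding at which stage of the iteration to convert from $V^{\LCB}$ to $V^{\pi_{t+1}}$ so that the final Freedman bound only invokes the sharp $\sum_t(V_t - V^{\pi_{t+1}})$ factor, rather than the crude horizon bound $1/(1-\gamma)$, is the most delicate part.
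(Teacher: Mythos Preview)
Your iteration scheme and your treatment of the bonus and $\eta_0^N$ residuals are essentially what the paper does, and the observation that $(1+(1-\gamma)/2)^{\Theta(1/(1-\gamma))}\le e$ is indeed the source of the $e$ factor. The problem is in how you get from $V_t-V_t^{\LCB}$ to a $Q$-difference, and correspondingly in where you think the $(SA)^{3/4}T^{1/4}$ and $\sqrt{SA\sum_t(V_t-V^{\pi_{t+1}})}$ terms come from.

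You propose $V_t(s_{t+1})-V_t^{\LCB}(s_{t+1})\le Q_{t'}(s_{t+1},a^*)-Q_{t'}^{\LCB}(s_{t+1},a^*)$ at the maximizing action $a^*=\arg\max_a Q_t(s_{t+1},a)$, and then feed this into \eqref{eq:weighted-V-cond}. The change-of-summation step then collapses: the outer time $t$ is \emph{not} a visit time of $(s_{t+1},a^*)$ in general, so the visit count $N_t(s_{t+1},a^*)$ can stay constant across many outer $t$'s. The rearranged weight on an inner time $l$ becomes $\sum_t w_t\,\eta_{N_l}^{N_t(s_{t+1},a^*)}$ with repeated values of $N_t$, which is not bounded by $w(1+(1-\gamma)/2)$; the same repetition breaks the $\sum_t w_t/\sqrt{N_t}$ bonus estimate. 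The paper avoids this by inserting the \emph{executed} action $a_{t+1}$ instead:
\[
V_t(s_{t+1})-V_t^{\LCB}(s_{t+1})\;\le\;\bigl[V_t(s_{t+1})-Q_{t+1}(s_{t+1},a_{t+1})\bigr]\;+\;\bigl[Q_{t+1}(s_{t+1},a_{t+1})-Q_{t+1}^{\LCB}(s_{t+1},a_{t+1})\bigr].
\]
Since $(s_{t+1},a_{t+1})$ is visited at step $t+1$, the second bracket feeds cleanly into \eqref{eq:weighted-V-cond} and the change-of-summation goes through exactly as you describe.

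The first bracket is the point you are missing entirely. It is (up to a time shift) the term $\zeta_{t,1}=V_{t-1}(s_t)-Q_t(s_t,a_t)$ from Lemma~\ref{lem:xi}, and $\sum_t w_t\zeta_{t,1}\le w\sum_t\zeta_{t,1}$ is controlled by the adaptive lazy-switching analysis \eqref{eq:xi-1}. That bound is where both the $(SA)^{3/4}T^{1/4}(1-\gamma)^{-5/4}$ term and the $\sqrt{SA\sum_t(V_t-V^{\pi_{t+1}})}$ term originate; neither comes from a Freedman application to $(P_t-P_{s_t,a_t})(V^{\LCB}-V^{\pi_{t+1}})$, and there is no martingale-difference term in this proof at all. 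Likewise the $(SA)^{3/4}T^{1/4}$ exponent is produced by the pigeonhole/counting argument on the switching tracker $\theta$ in Section~\ref{sec:zeta-1-proof}, not by inserting the crude regret bound. Once you make this split, the proof is exactly your recursion on the second bracket plus the ready-made bound \eqref{eq:xi-1} on the first.
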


    The proof of Lemma \ref{lem:weighted-V} is deferred to Appendix \ref{sec:proof:weighted-V}. We can easily prove \eqref{eq:main-lemma} with Lemma \ref{lem:weighted-V}. Let us specify
    \begin{equation}
        w_t = \mathds{1}\left\{V_{t}(s_{t+1}) - V_{t}^{\LCB}(s_{t+1})>3\right\}. \label{eq:wt-def}
    \end{equation}
    Clearly, this gives us
    \begin{align}
        3W = 3\sum_{t=1}^T w_t &\le \sum_{t=1}^{T}w_t\left(V_{t}(s_{t+1})-V_{t}^{\LCB}(s_{t+1})\right). \label{eq:weighted-V-4}
    \end{align}
    If we assume \eqref{eq:weighted-V-cond} is true, we can bring \eqref{eq:weighted-V} into the right-hand side of \eqref{eq:weighted-V-4}. We can rearrange and rescale the inequality to obtain
    \begin{align}
        \frac{1}{3-e}(3W-eW) &\le \frac{1}{3-e}\Bigg(C_{\mathrm{v},1}\frac{ew\left(SA\right)^{3/4}T^{1/4}}{(1-\gamma)^{9/4}}\left(\log\frac{SAT}{\delta}\right)^{5/4} + C_{\mathrm{v},1}\frac{eSAw}{(1-\gamma)^2}\log T + 8ec_b \sqrt{\frac{SAwW\log \frac{SAT}{\delta}}{(1-\gamma)^5}} \notag\\ &\qquad \qquad \qquad + C_{\mathrm{v},1}ew\sqrt{\frac{SA\log^2 T}{(1-\gamma)^3}\sum_{t=1}^T \big(V_t(s_{t+1}) - V^{\pi_{t+1}}(s_{t+1})\big)}\Bigg). \label{eq:weighted-V-5}
    \end{align}
    Invoking Proposition \ref{prop:algebraic} on \eqref{eq:weighted-V-5} with
    \begin{align*}
        x &:= \frac{8e}{3-e}c_b\sqrt{\frac{SAw\log \frac{SAT}{\delta}}{(1-\gamma)^5}},\\
        y &:= \frac{ew}{3-e}C_{\mathrm{v},1}\Bigg(\frac{\left(SA\right)^{3/4}T^{1/4}}{(1-\gamma)^{9/4}}\left(\log\frac{SAT}{\delta}\right)^{5/4} + \frac{SA}{(1-\gamma)^2}\log T + \sqrt{\frac{SA\log^2 T}{(1-\gamma)^3}\sum_{t=1}^T \big(V_t(s_{t+1}) - V^{\pi_{t+1}}(s_{t+1})\big)}\Bigg),\\
        z &:= \sqrt{W},
    \end{align*}
    we can conclude with the inequality:
    \begin{align}
        &\quad\ \sum_{t=1}^T \mathds{1}\left\{V_{t}(s_{t+1}) - V_{t}^{\LCB}(s_{t+1})>3\right\} \le W \notag\\
        &\lesssim \frac{w\left(SA\right)^{3/4}T^{1/4}}{(1-\gamma)^{9/4}}\left(\log\frac{SAT}{\delta}\right)^{5/4} + \frac{SAw}{(1-\gamma)^5}\log \frac{SAT}{\delta} + w\sqrt{\frac{SA\log^2 T}{(1-\gamma)^3}\sum_{t=1}^T \big(V_t(s_{t+1}) - V^{\pi_{t+1}}(s_{t+1})\big)} \notag\\
        &\overset{(\mathrm{i})}{=} \frac{\left(SA\right)^{3/4}T^{1/4}}{(1-\gamma)^{9/4}}\left(\log\frac{SAT}{\delta}\right)^{5/4} + \frac{SA}{(1-\gamma)^5}\log \frac{SAT}{\delta} + \sqrt{\frac{SA\log^2 T}{(1-\gamma)^3}\sum_{t=1}^T \big(V_t(s_{t+1}) - V^{\pi_{t+1}}(s_{t+1})\big)} \notag\\
        &\overset{(\mathrm{ii})}{\lesssim} \frac{\left(SA\right)^{3/4}T^{1/4}}{(1-\gamma)^{9/4}}\left(\log\frac{SAT}{\delta}\right)^{5/4} + \frac{SA}{(1-\gamma)^5}\log \frac{SAT}{\delta} + \sqrt{\frac{SA\log^2 T}{(1-\gamma)^3}\sum_{t=2}^T \big(V_{t-1}(s_{t}) - V^{\pi_{t}}(s_{t})\big)} \notag\\
        &\overset{(\mathrm{iii})}{\le} \frac{\left(SA\right)^{3/4}T^{1/4}}{(1-\gamma)^{9/4}}\left(\log\frac{SAT}{\delta}\right)^{5/4} + \frac{SA}{(1-\gamma)^5}\log \frac{SAT}{\delta} + \sqrt{\frac{SA\log^2 T}{(1-\gamma)^3}\sum_{t=1}^T \big(V_{t-1}(s_{t}) - V^{\pi_{t}}(s_{t})\big)}, \label{eq:weighted-V-final}
    \end{align}
    in which (i) is because $w=1$ by our definition of $w_t$ in \eqref{eq:wt-def}; (ii) is due to $\norm{V_{T} - V^{\pi_{T+1}}}_\infty \le \frac{1}{1-\gamma}$; (iii) is due to $V_0(s_1) - V^{\pi_{1}}(s_{1}) \ge 0$, which can be obtained from the fact $V_0(s) = \frac{1}{1-\gamma}$ and $V^\star(s) \ge V^{\pi_{1}}(s)$ for all $s\in\cS$. 

    At this point, we can easily prove \eqref{eq:main-lemma}, as \eqref{eq:weighted-V-final} immediately leads to 
        \begin{align}
            &\quad \sum_{t=1}^{T}\left(V_{t}(s_{t+1})-V_{t}^{\LCB}(s_{t+1})\right)\mathds{1}\left\{V_{t}(s_{t+1}) - V_{t}^{\LCB}(s_{t+1})>3\right\} \notag\\
            &\leq \frac{1}{1-\gamma}\sum_{t=1}^{T}\mathds{1}\left\{V_{t}(s_{t+1}) - V_{t}^{\LCB}(s_{t+1})>3\right\} \notag\\
            &\lesssim \frac{\left(SA\right)^{3/4}T^{1/4}}{(1-\gamma)^{13/4}}\left(\log\frac{SAT}{\delta}\right)^{5/4} + \frac{SA}{(1-\gamma)^6}\log \frac{SAT}{\delta} + \sqrt{\frac{SA\log^2 T}{(1-\gamma)^5}\sum_{t=1}^T \big(V_{t-1}(s_{t}) - V^{\pi_{t}}(s_{t})\big)},
        \end{align}
    which is the final result in \eqref{eq:main-lemma} that we desire. 

    As the last step of proving \eqref{eq:main-lemma}, we need to return to the justification of \eqref{eq:weighted-V-cond} that allows for our invocation of Lemma \ref{lem:weighted-V} earlier. The first inequality in \eqref{eq:weighted-V-cond} is easy to prove, for Lemma \ref{lem:Q_t-lower-bound} and Lemma \ref{lem:Q_t-lcb} tells us $Q_t(s,a) \geq Q^{\star}(s, a) \geq Q_t^{\LCB}(s,a)$ for all $(t, s, a) \in  [T] \times \cS \times \cA$. Thus, it only remains to prove the second inequality in \eqref{eq:weighted-V-cond}, which is an upper bound on $Q_{t+1}(s, a) - Q^{\LCB}_{t+1}(s, a)$. 
    
    Let us abbreviate $N_t := N_t(s,a)$ temporarily. Since the update rule of $Q_{t+1}$ in Line \ref{alg-t-line:Q} of Algorithm \ref{alg:main-t} implies $Q_{t+1}(s, a) \leq Q^{\UCB}_{t+1}(s,a)$, we can instead focus on finding an upper bound for the following quantity:
    \begin{align}
        Q_{t+1}(s, a) - Q^{\LCB}_{t+1}(s, a) \le Q^{\UCB}_{t+1}(s, a) - Q^{\LCB}_{t+1}(s, a) = Q^{\UCB}_{t_{N_t}+1}(s, a) - Q^{\LCB}_{t_{N_t}+1}(s, a). \label{eq:Q-QLCB-QUCB-QLCB}
    \end{align}
    Towards this, the update rules of $Q^{\UCB}_t$ and $Q^{\LCB}_t$ in Line \ref{alg-line:QUCB-update} and \ref{alg-line:QLCB-update} of Algorithm \ref{alg:aux} allow us to write
    \begin{align}
     &\quad\ Q^{\UCB}_{t_{N_t}+1}(s, a) - Q^{\LCB}_{t_{N_t}+1}(s, a) \notag\\
     & =(1-\eta_{N_{t}})Q_{t_{N_t}}^{\UCB}(s,a)+\eta_{N_{t}}\left(r(s,a)+\gamma V_{t_{N_t}}(s_{t_{N_t}+1})+c_b\sqrt{\frac{\log\frac{SAT}{\delta}}{(1-\gamma)^3 N_{t}}}\right) \notag\\
     & \qquad-(1-\eta_{N_{t}})Q_{t_{N_t}}^{\LCB}(s,a)-\eta_{N_{t}}\left(r(s,a)+ \gamma V^{\LCB}_{t_{N_t}}(s_{t_{N_t}+1})-c_b\sqrt{\frac{\log\frac{SAT}{\delta}}{(1-\gamma)^3 N_{t}}}\right) \notag\\
     &= (1-\eta_{N_{t}})\left(Q_{t_{N_t}}^{\UCB}(s,a)-Q_{t_{N_t}}^{\LCB}(s,a)\right)
    	+\eta_{N_{t}}\left(\gamma V_{t_{N_t}}(s_{t_{N_t}+1}) - \gamma V^{\LCB}_{t_{N_t}}(s_{t_{N_t}+1}) + 2c_b\sqrt{\frac{\log\frac{SAT}{\delta}}{(1-\gamma)^3 N_{t}}}\right) \notag\\
     &= (1-\eta_{N_{t}})\left(Q_{t_{N_t-1}+1}^{\UCB}(s,a) - Q_{t_{N_t-1}+1}^{\LCB}(s,a)\right)
    	+ \eta_{N_{t}}\left(\gamma V_{t_{N_t}}(s_{t_{N_t}+1}) - \gamma V^{\LCB}_{t_{N_t}}(s_{t_{N_t}+1}) + 2c_b\sqrt{\frac{\log\frac{SAT}{\delta}}{(1-\gamma)^3 N_{t}}}\right) \label{eq:weighted-V-cond-1} \\
    & = \eta^{N_t}_0 \left(Q_{1}^{\UCB}(s,a)-Q_{1}^{\LCB}(s,a)\right) 
    	 + \sum_{n = 1}^{N_t} \eta^{N_t}_n \left(\gamma V_{t_{n}}(s_{t_{n}+1}) - \gamma V^{\LCB}_{t_{n}}(s_{t_{n}+1}) + 2c_b\sqrt{\frac{\log\frac{SAT}{\delta}}{(1-\gamma)^3 n}}\right) \notag\\
    	 & \leq \frac{\eta^{N_t}_0}{1-\gamma}
    	 + \gamma\sum_{n = 1}^{N_t} \eta^{N_t}_n \left(V_{t_{n}}(s_{t_{n}+1}) - V^{\LCB}_{t_{n}}(s_{t_{n}+1})\right) + 4c_b\sqrt{\frac{\log\frac{SAT}{\delta}}{(1-\gamma)^3 N_t}}, \label{eq:weighted-V-cond-2}
    \end{align}
    in which the penultimate line is obtained by expanding $Q_{t_{N_t-1}+1}^{\UCB}(s,a) - Q_{t_{N_t-1}+1}^{\LCB}(s,a)$ recursively using the recursive relation from \eqref{eq:weighted-V-cond-1}, and the last line is due to our initialization $0 = Q_{1}^{\LCB}(s,a) \leq Q_{1}^{\UCB} (s,a) = \frac{1}{1-\gamma}$ and the following fact
    \begin{equation*}
    	\sum_{n = 1}^{N_t} \eta^{N_t}_n c_b\sqrt{\frac{\log\frac{SAT}{\delta}}{(1-\gamma)^3 n}}  \leq 2c_b\sqrt{\frac{\log\frac{SAT}{\delta}}{(1-\gamma)^3 N_{t}}},
    \end{equation*}
    which is an immediate consequence of \eqref{eq:learning-rate2} in Lemma \ref{lem:learning-rate}.
    
    Finally, \eqref{eq:weighted-V-cond-2} combined with \eqref{eq:Q-QLCB-QUCB-QLCB} establishes the second inequality in the condition \eqref{eq:weighted-V-cond}. With \eqref{eq:weighted-V-cond} proven, we have justified our invocation of Lemma \ref{lem:weighted-V} and thus concluded the proof of the inequality \eqref{eq:main-lemma}.

\subsubsection{Proof of Lemma \ref{lem:weighted-V}}\label{sec:proof:weighted-V}

We can decompose the quantity of interest into two terms and bound them separately:
\begin{align}
    &\quad \sum_{t=1}^{T}w_t\left(V_{t}(s_{t+1})-V_{t}^{\LCB}(s_{t+1})\right) \notag\\
    &= \sum_{t=1}^{T}w_t\left(V_{t}(s_{t+1}) - Q_{t+1}(s_{t+1},a_{t+1}) + Q_{t+1}(s_{t+1},a_{t+1}) -V_{t}^{\LCB}(s_{t+1})\right) \notag\\
    &\leq \sum_{t=1}^{T}w_t\Big(\left(V_{t}(s_{t+1}) - Q_{t+1}(s_{t+1},a_{t+1})\right) + \left(Q_{t+1}(s_{t+1},a_{t+1}) - Q_{t+1}^{\LCB}(s_{t+1},a_{t+1})\right)\Big), \label{eq:weighted-V-1}
\end{align}
in which the last step is due to $V_{t}^{\LCB}(s) \le V_{t+1}^{\LCB}(s) \le \max_a Q_{t+1}^{\LCB}(s,a)$ for any $s\in\cS$, which can be seen from the update rule of $V^\LCB$.

In the following, let us bound the two halves of the qunatity in \eqref{eq:weighted-V-1} separately. Notice that the first half coincides with $\zeta_{t,1}$ in Lemma \ref{lem:xi} (defined in \eqref{eq:zeta-decomposition-def}). Thus, there exists some absolute constant $C_{\mathrm{v},1}>0$ such that we have
\begin{align}
    &\quad\ \sum_{t=1}^{T}w_t\big(V_{t}(s_{t+1}) - Q_{t+1}(s_{t+1},a_{t+1})\big) \notag\\
    &= V_{T}(s_{T+1}) - Q_{T+1}(s_{T+1},a_{T+1}) + \sum_{t=1}^{T-1}w_t\big(V_{t}(s_{t+1}) - Q_{t+1}(s_{t+1},a_{t+1})\big) \notag\\
    &\overset{(\mathrm{i})}{\le} \frac{1}{1-\gamma} + \sum_{t=2}^{T}w_{t-1}\big(V_{t-1}(s_{t}) - Q_{t}(s_{t},a_{t})\big) \notag\\
    &\overset{(\mathrm{ii})}{\le} \frac{1}{1-\gamma} + \sum_{t=1}^{T}w_{t-1}\big(V_{t-1}(s_{t}) - Q_{t}(s_{t},a_{t})\big) \notag\\
    &= \frac{1}{1-\gamma} + \sum_{t=1}^{T}w_t\zeta_{t,1} \notag\\
    &\le \frac{1}{1-\gamma} + w\sum_{t=1}^{T}\zeta_{t,1} \notag\\
    &\overset{(\mathrm{iii})}{\lesssim} \frac{wSA}{1-\gamma}\log T + \frac{w\left(SA\right)^{3/4}T^{1/4}}{(1-\gamma)^{5/4}}\left(\log\frac{SAT}{\delta}\right)^{5/4} + w\sqrt{\frac{SA\log^2 T}{1-\gamma}\sum_{t=1}^T \big(V_t(s_{t+1}) - V^{\pi_{t+1}}(s_{t+1})\big)} \notag\\
    &\leq C_{\mathrm{v},1}\left(\frac{wSA}{1-\gamma}\log T + \frac{w\left(SA\right)^{3/4}T^{1/4}}{(1-\gamma)^{5/4}}\left(\log\frac{SAT}{\delta}\right)^{5/4} + w\sqrt{\frac{SA\log^2 T}{1-\gamma}\sum_{t=1}^T \big(V_t(s_{t+1}) - V^{\pi_{t+1}}(s_{t+1})\big)}\right), \label{eq:weighted-V-1-1}
\end{align}
Above, (i) is due to $V_{T}(s_{T+1}) \le \frac{1}{1-\gamma}$ and $Q_{T+1}(s_{T+1},a_{T+1}) \ge Q^\star(s_{T+1},a_{T+1}) \ge 0$. (ii) is due to $V_{0}(s_{1}) - Q_{1}(s_{1},a_{1}) \ge 0$, as we set $V_{0}(s) = \frac{1}{1-\gamma}$ for any $s\in\cS$ in the proof of Lemma \ref{lem:xi}. (iii) is by \eqref{eq:xi-1}. $C_{\mathrm{v},1}>0$ is some absolute constant.

Adopting the abbreviation $N_t := N_t(s_{t+1},a_{t+1})$ and $t_n := t_n(s_{t+1},a_{t+1})$ hereafter, we can expand the second half of \eqref{eq:weighted-V-1} using \eqref{eq:weighted-V-cond} as follows:
\begin{align}
    &\quad\ \sum_{t=1}^{T}w_t\left(Q_{t+1}(s_{t+1},a_{t+1}) - Q_{t+1}^{\LCB}(s_{t+1},a_{t+1})\right) \notag\\
    &\le \sum_{t=1}^{T}\frac{w_t\eta^{N_t}_0}{1-\gamma}
	 + \gamma w_t\sum_{n = 1}^{N_t} \eta^{N_t}_n \left(V_{t_n}(s_{t_n+1}) - V^{\LCB}_{t_n}(s_{t_n+1}) \right) + 4 c_b w_t \sqrt{\frac{\log \frac{SAT}{\delta}}{(1-\gamma)^3 N_t}} \notag\\
    &\le \frac{SAw}{1-\gamma} + \gamma\sum_{t=1}^T \widetilde{w}_t\left(V_{t}(s_{t+1}) - V^{\LCB}_{t}(s_{t+1})\right) + 8c_b \sqrt{\frac{SAwW\log \frac{SAT}{\delta}}{(1-\gamma)^3}}. \label{eq:weighted-V-1-2}
\end{align}
The last step is due to the following two chain of inequalities \eqref{eq:weighted-V-1-2.1} and \eqref{eq:weighted-V-1-2.2}:
\begin{align}
    &\quad\ 4c_b\sum_{t=1}^T w_t\sqrt{\frac{\log \frac{SAT}{\delta}}{(1-\gamma)^3 N_{t}}}\notag\\
    &= 4c_b\sum_{(s,a)\in\cS\times\cA}\sum_{n=1}^{N_{T}(s,a)}w_{t_n}\sqrt{\frac{\log \frac{SAT}{\delta}}{(1-\gamma)^3 n}}\notag\\
    &\leq 4c_b\sum_{(s,a)\in\cS\times\cA}\sum_{n=1}^{\bar{N}_{T}(s,a)/w}w\sqrt{\frac{\log \frac{SAT}{\delta}}{(1-\gamma)^3 n}}\notag\\
    &\leq 4c_b\sum_{(s,a)\in\cS\times\cA}\int_{n=1}^{\bar{N}_{T}(s,a)/w}w\sqrt{\frac{\log \frac{SAT}{\delta}}{(1-\gamma)^3 n}}\notag\\
    &\le 8c_b\sum_{(s,a)\in\cS\times\cA}\sqrt{\frac{w\bar{N}_{T}(s,a)\log \frac{SAT}{\delta}}{(1-\gamma)^3}}\notag\\
    &\leq 8c_b\sqrt{\sum_{(s,a)\in\cS\times\cA}\frac{w\log \frac{SAT}{\delta}}{(1-\gamma)^3}}\sqrt{\sum_{(s,a)\in\cS\times\cA}\bar{N}_{T}(s,a)}\notag\\
    &= 8c_b \sqrt{\frac{SAwW\log \frac{SAT}{\delta}}{(1-\gamma)^3}}, \label{eq:weighted-V-1-2.1}
\end{align}
in which we define $\bar{N}_{T}(s,a) := \sum_{t=1}^T w_t\ind_{(s_{t},a_{t}) = (s,a)}$, and
\begin{align}
    &\quad\ \gamma \sum_{t=1}^T w_t\sum_{n = 1}^{N_{t}} \eta^{N_t}_n \left(V_{t_n}(s_{t_n+1}) - V^{\LCB}_{t_n}(s_{t_n+1}) \right)\notag\\
    &\leq \gamma \sum_{t=1}^T \left(V_{t}(s_{t+1}) - V^{\LCB}_{t}(s_{t+1}) \right) \sum_{n = N_{t+1}}^{N_{T}} w_{t_n}\eta^n_{N_t}, \label{eq:weighted-V-1-2.2}
\end{align}
in which we define $\widetilde{w}_t := \sum_{n = N_{t+1}}^{N_{T}} w_{t_n}\eta^n_{N_t}$. Note that $\widetilde{w}_t \le \frac{(3-\gamma)}{2}w$ by \eqref{eq:learning-rate5} in Lemma \ref{lem:learning-rate}.

Combining \eqref{eq:weighted-V-1-1} and \eqref{eq:weighted-V-1-2}, we have 
\begin{align}
    &\quad\ \sum_{t=1}^{T}w_t\left(V_{t}(s_{t+1})-V_{t}^{\LCB}(s_{t+1})\right) \notag\\
    &\le C_{\mathrm{v},1}\frac{w\left(SA\right)^{3/4}T^{1/4}}{(1-\gamma)^{5/4}}\left(\log\frac{SAT}{\delta}\right)^{5/4} + C_{\mathrm{v},1}\frac{SAw}{1-\gamma}\log T + 8c_b \sqrt{\frac{SAwW\log \frac{SAT}{\delta}}{(1-\gamma)^3}} \notag\\ &\quad + C_{\mathrm{v},1}w\sqrt{\frac{SA\log^2 T}{1-\gamma}\sum_{t=1}^T \big(V_t(s_{t+1}) - V^{\pi_{t+1}}(s_{t+1})\big)} + \gamma\sum_{t=1}^T \widetilde{w}_t\left(V_{t}(s_{t+1}) - V^{\LCB}_{t}(s_{t+1})\right) \label{eq:weighted-V-2}\\
    &\overset{(\mathrm{i})}{\le} C_{\mathrm{v},1}\frac{ew\left(SA\right)^{3/4}T^{1/4}}{(1-\gamma)^{9/4}}\left(\log\frac{SAT}{\delta}\right)^{5/4} + C_{\mathrm{v},1}\frac{eSAw}{(1-\gamma)^2}\log T + 8ec_b \sqrt{\frac{SAwW\log \frac{SAT}{\delta}}{(1-\gamma)^5}} \notag\\ &\quad + C_{\mathrm{v},1}ew\sqrt{\frac{SA\log^2 T}{(1-\gamma)^3}\sum_{t=1}^T \big(V_t(s_{t+1}) - V^{\pi_{t+1}}(s_{t+1})\big)} + \gamma^{H_{\mathrm{v}}}\frac{eW}{1-\gamma}\notag\\
    &\le C_{\mathrm{v},1}\frac{ew\left(SA\right)^{3/4}T^{1/4}}{(1-\gamma)^{9/4}}\left(\log\frac{SAT}{\delta}\right)^{5/4} + C_{\mathrm{v},1}\frac{eSAw}{(1-\gamma)^2}\log T + 8ec_b \sqrt{\frac{SAwW\log \frac{SAT}{\delta}}{(1-\gamma)^5}} \notag\\ &\quad + C_{\mathrm{v},1}ew\sqrt{\frac{SA\log^2 T}{(1-\gamma)^3}\sum_{t=1}^T \big(V_t(s_{t+1}) - V^{\pi_{t+1}}(s_{t+1})\big)} + eW, \label{eq:weighted-V-3}
\end{align}
where (i) is obtained by applying \eqref{eq:weighted-V-2} recursively for $H_{\mathrm{v}} := \lceil\frac{\ln \frac{1}{1-\gamma}}{\ln \frac{1}{\gamma}}\rceil$ times and the fact that $(1+\frac{1-\gamma}{2})^{H_{\mathrm{v}}} < (1+\frac{1-\gamma}{2})^{\lceil\frac{1}{1-\gamma}\rceil} < (1+\frac{1-\gamma}{2})^{\frac{2}{1-\gamma}} < e$, which is true since $\gamma \ge \frac{1}{2}$ and $H_{\mathrm{v}} < \lceil\frac{1}{1-\gamma}\rceil$. We also use the fact $\norm{V_t - V_t^\LCB}_\infty \le \frac{1}{1-\gamma}$.

\section{Proof of Lemma \ref{lem:VR_properties}}\label{sec:proof:VR_properties}

\subsection{Proof of Inequality \eqref{eq:VR-V-proximity}}

In the proof of this lemma, let us adopt the shorthand notation $t_i := t_i(s)$ to denote the index of the time step in which state $s$ is visited for the $i$-th time. We can use this notation for any state $s$ that has been visited at least once during the $T$ steps of our algorithm's execution. Note that for each $s$, both $V^{\rref}_t(s)$ and $V_{t}(s)$ are only updated when $t = t_i + 1$ with indices $i$ from $\{i ~:~ 1\leq t_i\leq T\}$. Thus, to establish \eqref{eq:VR-V-proximity}, it suffices to show that for any $s$ and its corresponding $1\leq t_i \leq T$,
\begin{align}
	\left| V_{t_i+1} (s) - V_{t_i+1}^{\rref} (s) \right| \leq 6.
	\label{eq:VR-V-proximity1}
\end{align}
In the following, let us fix any $s$ and prove \eqref{eq:VR-V-proximity1} by looking at three partition scenarios separately. 

\paragraph{Case 1 of \eqref{eq:VR-V-proximity1}.} Suppose that $t_i$ satisfies either Line \ref{alg-t-line:if-cond-1} or Line \ref{alg-t-line:elif-cond-1} of Algorithm~\ref{alg:main-t}, i.e., either
\begin{align}
	&V_{t_i+1} (s) - V_{t_i+1}^{\LCB} (s) > 3  \label{eq:condition-ti-update-1}
	\end{align}
	or

\begin{align}
	&V_{t_i+1} (s) - V_{t_i+1}^{\LCB} (s) \leq 3 \qquad \text{and} \qquad
	u^{\re}_{t_i}(s) = \mathsf{True}.  \label{eq:condition-ti-update-2}
\end{align}
Either of these two conditions would trigger the update of the reference value function and thus result in
$$V_{t_i+1} (s) = V_{t_i+1}^{\rref} (s),$$
which satisfies \eqref{eq:VR-V-proximity1} trivially. 

\paragraph{Case 2 of \eqref{eq:VR-V-proximity1}.} 
Suppose that $t_{i_0}$ is the first time such that both \eqref{eq:condition-ti-update-1} and \eqref{eq:condition-ti-update-2} are violated, i.e.,
\begin{align}
	{i_0} \coloneqq \min\left\{ j\mid V_{t_{j}+1}(s)-V_{t_j+1}^{\LCB}(s)\leq 3 ~\text{ and }~ u^{\re}_{t_j}(s) = \mathsf{False}\right\}.
	\label{eq:def-first-i-violate-condition}
\end{align}
In the following, let us show \eqref{eq:VR-V-proximity1} is true when $i = i_0$.

First, notice that the definition \eqref{eq:def-first-i-violate-condition} (first time both \eqref{eq:condition-ti-update-1} and \eqref{eq:condition-ti-update-2} are violated) suggests that the reference was updated last time $s$ was visited before the $t_{i_0}$-th step, i.e., $V^{\rref}_{t_{i_0-1}+1}$ was updated in the $t_{i_0-1}$-th step. Notice that since $s$ is not visited during the interval when the step index $t$ satisfies $t_{i_0-1}+1\le t \le t_{i_0}-1$, $V_{t+1}(s)$ is not updated during this period, nor is the block between Line \ref{alg-t-line:if-cond-1}-\ref{alg-t-line:elif-stat-1} of Algorithm \ref{alg:main-t} triggered (which would update the reference $V_{t+1}^\rref(s)$). Thus, we can conclude
\begin{align}
	V_{t_{i_0}}^{\rref}(s)=V_{t_{i_0-1}+1}^{\rref}(s)=V_{t_{i_0-1}+1}(s)=V_{t_{i_0}}(s). 
	\label{eq:lcb-final-value0}
\end{align}

Furthermore, since the reference $V^{\rref}_{t_{i_0-1}+1}$ was updated in the $t_{i_0-1}$-th step, this indicates either \eqref{eq:condition-ti-update-1} or \eqref{eq:condition-ti-update-2} is satisfied in this step. If \eqref{eq:condition-ti-update-1} is satisfied, then Line \ref{alg-t-line:if-cond-1} and \ref{alg-t-line:if-stat-1} of Algorithm \ref{alg:main-t} tell us that $\mathsf{True} = u^{\re}_{t_{i_0-1}+1}(s)$ and Line \ref{alg-t-line:elif-cond-1} is never triggered between Step $t_{i_0 -1}+1$ and Step $t_{i_0}-1$, which implies $u^{\re}_{t_{i_0}}(s) = u^{\re}_{t_{i_0-1}+1}(s) = \mathsf{True}$. However, this contradicts with the assumption $u^{\re}_{t_{i_0}}(s) = \mathsf{False}$ in \eqref{eq:def-first-i-violate-condition}. Hence, only \eqref{eq:condition-ti-update-2} is true at the $t_{i_0-1}$-th step, which allows us to conclude
\begin{align}\label{eq:lcb-final-value2}
	V_{t_{i_0}} (s) - V_{t_{i_0}}^{\LCB} (s)  = V_{t_{i_0-1}+1} (s) - V_{t_{i_0-1}+1}^{\LCB} (s) \leq 3.
\end{align}

On the other hand, the definition \eqref{eq:def-first-i-violate-condition} suggests that $V^{\rref}_{t_{i_0} + 1}(s)$ is not updated by the update block between Line \ref{alg-t-line:if-cond-1}-\ref{alg-t-line:elif-stat-1} at Step $t_{i_0}$, so we have
\begin{align}\label{eq:lcb-final-value}
	V_{t_{i_0}+1}^{\rref} (s) = V_{t_{i_0}}^{\rref} (s).
\end{align}

Finally, combining \eqref{eq:lcb-final-value0}, \eqref{eq:lcb-final-value2} and \eqref{eq:lcb-final-value}, we can derive that 
\begin{align}
	V^{\rref}_{t_{i_0}+1}(s) -V_{t_{i_0} + 1 }(s) 
		&\overset{\mathrm{(i)}}{=}  V^{\rref}_{t_{i_0}}(s) -V_{t_{i_0} + 1 }(s) 
		\overset{\mathrm{(ii)}}{=}  V_{t_{i_0}}(s) - V_{t_{i_0} + 1 }(s) \label{eq:ref-final-value0}\\
		& \overset{\mathrm{(iii)}}{\leq} V_{t_{i_0}}(s) - V^{\LCB}_{t_{i_0}}(s)\overset{\mathrm{(iv)}}{\leq} 3, 
		\label{eq:ref-final-value1}
\end{align}
where (i) is due to \eqref{eq:lcb-final-value}; (ii) is due to \eqref{eq:lcb-final-value0}; (iii) holds since $V_{t_{i_0} + 1}(s) \geq V^\star(s) \geq V^{\LCB}_{t_{i_0}}(s)$; (iv) follows from \eqref{eq:lcb-final-value2}.
In addition, let us point out that \eqref{eq:ref-final-value0}, together with the monotonicity of $V_t(s)$, implies
\begin{align}
	V^{\rref}_{t_{i_0}+1}(s) -V_{t_{i_0} + 1 }(s) =  V_{t_{i_0}}(s) - V_{t_{i_0} + 1 }(s) \geq 0. \label{eq:ref-final-value2}
\end{align}
By \eqref{eq:ref-final-value1} and \eqref{eq:ref-final-value2}, we can arrive at the desired bound \eqref{eq:VR-V-proximity1} with $i=i_0$.

\paragraph{Case 3 of \eqref{eq:VR-V-proximity1}.} For the last case, consider any $i>i_0$ (that does not satisfy Case 1). Then it must be true that $V_{t_{i}+1}(s)-V_{t_i+1}^{\LCB}(s) \leq 3$ and $u^{\re}_{t_i}(s) = \mathsf{False}$, due to the monotonicity of $V$ and $V^{\LCB}$ and our algorithmic design that $u^{\re}(s)$ stays False once set to False. 

It then follows that
\begin{align}
V_{t_i+1}^{\rref}(s) & \overset{(\mathrm{i})}{\leq}V_{t_{i_0}+1}^{\rref}(s)\overset{(\mathrm{ii})}{\leq}V_{t_{i_0}+1}(s)+3\overset{(\mathrm{iii})}{\leq}V_{t_{i_0}+1}^{\LCB}(s)+6 \overset{(\mathrm{iv})}{\leq}V^{\star}(s)+6\overset{(\mathrm{v})}{\leq}V_{t_{i}+1}(s)+6. \label{eq:VR-V-proximity1-1}
\end{align}
Here, (i) holds due to the monotonicity of $V_{t}$ and thus the monotonicity of $V^{\rref}_t$ ($V^{\rref}_t$ can only be updated from $V_t$); (ii) is a consequence of \eqref{eq:ref-final-value1}; (iii) comes from the definition of $i_0$ in \eqref{eq:def-first-i-violate-condition}; (iv) arises from the pessimism of $V^{\LCB}$ to $V^{\star}$ (see Lemma \ref{lem:Q_t-lcb}); (v) is valid since the optimism gives $V_{t_i + 1}(s) \geq V^{\star}(s)$ (see Lemma \ref{lem:Q_t-lower-bound}).  

On the other hand, since $V_{t}$ is non-increasing in $t$ and $V^{\rref}_t$ can only be updated from $V_t$, we have that $V_{t_i+1}(s) \le V_{t_i+1}^{\rref}(s)$, which, combined with \eqref{eq:VR-V-proximity1-1}, gives
$$0\leq V_{t_i+1}^{\rref}(s)- V_{t_i+1}(s) \leq 6$$
and thus justifies \eqref{eq:VR-V-proximity1} for this case. 

\bigskip
Overall, taking the three aforementioned cases in conjunction, we can establishe \eqref{eq:VR-V-proximity1} and thus \eqref{eq:VR-V-proximity} completely.

\subsection{Proof of Inequality \eqref{eq:VR_lazy}}

First, let us suppose that 
\begin{align}
	V^{\rref}_{t}(s_{t+1}) - V^{\rref}_{T}(s_{t+1}) \neq 0
	\label{eq:claim-Vt-VT}
\end{align}
holds for some $t < T$. Let $t$ be the largest number such that \eqref{eq:claim-Vt-VT} holds. Then it implies two possible scenarios: 
\paragraph{Case 1.} Suppose the following event is true
	\begin{equation}
	V_{t+1}(s_{t+1}) - V_{t+1}^{\LCB}(s_{t+1}) \leq 3 \quad \text{ and } \quad u^{\re}_{t}(s_{t+1}) = \mathsf{False}.
			\label{eq:claim-Vt-VT-cond1}
	\end{equation}
	Then for any $t' \ge t + 1$, one necessarily has
    \begin{align}
        V_{t'}(s_{t+1}) - V_{t'}^{\LCB}(s_{t+1})
		\leq V_{t+1}(s_{t+1}) - V_{t+1}^{\LCB}(s_{t+1}) \leq 3 \label{eq:claim-Vt-VT-conclusion1}
    \end{align}
    by the monotonicity of $V_{t}$ and $V_{t}^{\LCB}$. In addition, \eqref{eq:claim-Vt-VT-conclusion1} and our assumption $u^{\re}_t(s_{t+1}) = \mathsf{False}$ together can imply 
    \begin{align}
        u^{\re}_{t'}(s_{t+1}) = u^{\re}_t(s_{t+1}) = \mathsf{False}, \label{eq:claim-Vt-VT-conclusion2}
    \end{align}
    because both conditions in Line \ref{alg-t-line:if-cond-1} and Line \ref{alg-t-line:elif-cond-1} of Algorithm \ref{alg:main-t} would be violated at the $(t+1)$-th step and stay so thereafter. 
    
    \eqref{eq:claim-Vt-VT-conclusion1} and \eqref{eq:claim-Vt-VT-conclusion2} imply that the reference $V^{\rref}$ will no longer be updated after the $t$-th step, thus indicating that 
\begin{align}
	V^{\rref}_{t} = V^{\rref}_{t+1} = \cdots = V^{\rref}_{T}. 
\end{align}
However, this contradicts with the assumption \eqref{eq:claim-Vt-VT}, so by the contrapositive, \eqref{eq:claim-Vt-VT} would imply this case is impossible. 

\paragraph{Case 2.} As the complement event of Case 1, if the condition in \eqref{eq:claim-Vt-VT-cond1} is false, then it implies that either
\begin{equation}\label{eq:claim-Vt-VT-cond2-1}
	V_{t+1}(s_{t+1}) - V_{t+1}^{\LCB}(s_{t+1}) > 3,
\end{equation}
or
\begin{align}\label{eq:claim-Vt-VT-cond2-2}
	V_{t+1}(s_{t+1}) - V_{t+1}^{\LCB}(s_{t+1}) \leq 3 \quad \text{and} \quad
	u^{\re}_t(s_{t+1}) = \mathsf{True}.
\end{align}

In summary, we have just shown that \eqref{eq:claim-Vt-VT} can only occur if either \eqref{eq:claim-Vt-VT-cond2-1} or \eqref{eq:claim-Vt-VT-cond2-2} holds. In light of this, we can decompose the quantity of interest with respect to these two cases as follows: 
\begin{align} 
 &\quad\ \sum_{t=1}^{T}\left(V_{t}^{\rref}(s_{t+1})-V_{T}^{\rref}(s_{t+1})\right) \notag\\
 &= \sum_{t=1}^{T}\left(V_{t}^{\rref}(s_{t+1})-V_{T}^{\rref}(s_{t+1})\right)\ind\left\{V_{t}^{\rref}(s_{t+1})-V_{T}^{\rref}(s_{t+1})\neq 0\right\}\notag\\
 &\le \sum_{t=1}^{T}\left(V_{t}^{\rref}(s_{t+1})-V_{T}^{\rref}(s_{t+1})\right)\ind\left\{V_{t+1}(s_{t+1}) - V_{t+1}^{\LCB}(s_{t+1}) \leq 3 \text{ and } u^{\re}_t(s_{t+1}) = \mathsf{True}\right\}\notag\\
 & \qquad+ \sum_{t=1}^{T}\left(V_{t}(s_{t+1})-V_{t}^{\LCB}(s_{t+1})\right)\ind\left\{V_{t+1}(s_{t+1}) - V_{t+1}^{\LCB}(s_{t+1})>3\right\},
	\label{eq:Vt-VT-1}
\end{align}
where the inequality is because $V_{t}(s_{t+1}) \ge V_{t}^{\rref}(s_{t+1})$, which is due to the monotonicity of $V_t$ and the fact that $V^{\rref}$ can only be updated from $V$, as well as the fact that
\begin{equation*}
V^{\rref}_{T}(s_{t})  \geq V^\star(s_{t}) \geq V^{\LCB}_{t}(s_{t}), 
\end{equation*}
which is due to the optimism of $V$ and thus $V^{\rref}$ and the pessimism of $V^{\LCB}_t$ in Lemma \ref{lem:Q_t-lower-bound} and Lemma \ref{lem:Q_t-lcb} respectively. 

The first term in \eqref{eq:Vt-VT-1} can be easily bounded because for all $s\in \cS$,
\begin{align}
	\sum_{t=1}^{T}\ind\left\{V_{t+1}(s) - V_{t+1}^{\LCB}(s) \leq 3 \text{ and } 
	u^{\re}_t(s) = \mathsf{True}\right\} \leq 1. \label{eq:Vt-VT-1.1}
\end{align}
This is because for each $s\in \cS$, the above condition is satisfied at most once, in view of the monotonicity property of $V_t$ and $V_t^{\LCB}$ (their gap only grows smaller over time) as well as the update rule of $u_{\re}$ in Algorithm \ref{alg:main-t}. Consequently, one has
\begin{align*}
 &\quad\ \sum_{t=1}^{T}\left(V_{t}^{\rref}(s_{t+1})-V_{T}^{\rref}(s_{t+1})\right)\ind\left\{V_{t+1}(s_{t+1}) - V_{t+1}^{\LCB}(s_{t+1}) \leq 3 \text{ and } u^{\re}_t(s_{t+1}) = \mathsf{True}\right\}\notag\\
 & \leq \frac{1}{1-\gamma}\sum_{t=1}^{T}\ind\left\{V_{t+1}(s_{t+1}) - V_{t+1}^{\LCB}(s_{t+1}) \leq 3 \text{ and } u^{\re}_t(s_{t+1}) = \mathsf{True}\right\}\notag\\
 & \le \frac{1}{1-\gamma}\sum_{s\in\mathcal{S}}\sum_{t=1}^{T}\ind\left\{V_{t+1}(s) - V_{t+1}^{\LCB}(s) \leq 3 \text{ and } u^{\re}_t(s) = \mathsf{True}\right\} \notag\\
 & \leq \frac{1}{1-\gamma}\sum_{s\in\mathcal{S}}1=\frac{S}{1-\gamma}, 
\end{align*}
where the first inequality holds since $\norm{V_{t}^{\rref}-V_{T}^{\rref}}_\infty\leq \frac{1}{1-\gamma}$. 

To complete this proof, it remains to bound the second term in \eqref{eq:Vt-VT-1}. By the monotonicity in $V_t$ and $V_t^\LCB$, 
\begin{align}
&\quad\ \sum_{t=1}^{T}\left(V_{t}(s_{t+1})-V_{t}^{\LCB}(s_{t+1})\right)\mathds{1}\left\{V_{t+1}(s_{t+1}) - V_{t+1}^{\LCB}(s_{t+1})>3\right\} \nonumber \\
&\le \sum_{t=1}^{T}\left(V_{t}(s_{t+1})-V_{t}^{\LCB}(s_{t+1})\right)\mathds{1}\left\{V_{t}(s_{t+1}) - V_{t}^{\LCB}(s_{t+1})>3\right\} \nonumber \\
&\lesssim \frac{\left(SA\right)^{3/4}T^{1/4}}{(1-\gamma)^{13/4}}\left(\log\frac{SAT}{\delta}\right)^{5/4} + \frac{SA}{(1-\gamma)^6}\log \frac{SAT}{\delta} + \sqrt{\frac{SA\log^2 T}{(1-\gamma)^5}\sum_{t=1}^T \big(V_{t-1}(s_{t}) - V^{\pi_{t}}(s_{t})\big)},\label{eq:ref-error-upper-final}
\end{align}
where the last inequality is by \eqref{eq:main-lemma} in Lemma \ref{lem:Q_t-lcb} and $\norm{V_{t}-V_{t}^{\LCB}}_\infty\leq \frac{1}{1-\gamma}$. 

Finally, combining \eqref{eq:Vt-VT-1.1} and \eqref{eq:ref-error-upper-final}, we can conclude
\begin{align} 
&\quad\ \sum_{t=1}^{T}\left(V_{t}^{\rref}(s_{t+1})-V_{T}^{\rref}(s_{t+1})\right) \notag\\
    & \lesssim \frac{\left(SA\right)^{3/4}T^{1/4}}{(1-\gamma)^{13/4}}\left(\log\frac{SAT}{\delta}\right)^{5/4} + \frac{SA}{(1-\gamma)^6}\log \frac{SAT}{\delta} + \sqrt{\frac{SA\log^2 T}{(1-\gamma)^5}\sum_{t=1}^T \big(V_{t-1}(s_{t}) - V^{\pi_{t}}(s_{t})\big)}
\end{align}
as claimed.

\section{Proof of Lemma \ref{lem:Q_t-upper-bound}}\label{sec:proof:Q_t-upper-bound}

    For notational simplicity, let us adopt the shorthand notation $t_n := t_n(s_t, a_t)$ throughout this section. Recall the update rule of $Q^{\rref}_{t+1}(s_t, a_t)$ in Line \ref{alg-line:QR-update} of Algorithm \ref{alg:aux}. We can derive
	\begin{align}
		&\quad\ Q^{\rref}_t(s_t,a_t) - Q^{\star}(s_t,a_t) \notag\\
        &= Q^{\rref}_{t_{N_{t-1}(s_t,a_t)}+1}(s_t,a_t) - Q^{\star}(s_t,a_t) \notag\\
		&= \eta^{N_{t-1}(s_t,a_t)}_0 \left(Q^{\rref}_1(s_t,a_t) - Q^{\star}(s_t,a_t)\right) + \sum_{n = 1}^{N_{t-1}(s_t,a_t)} \eta^{N_{t-1}(s_t,a_t)}_n b^{\rref}_{t_{n}+1} \notag\\
		&\qquad+  \gamma\sum_{n = 1}^{N_{t-1}(s_t,a_t)} \eta^{N_{t-1}(s_t,a_t)}_n  \left(V_{t_n}(s_{t_n+1}) - V^{\rref}_{t_n}(s_{t_n+1})+\frac{1}{n}\sum_{i=1}^{n}\Big(V^{\rref}_{t_i}(s_{t_i+1})-P_{s_t,a_t}V^{\star}\Big)\right)\notag\\
		&\le \frac{\eta^{N_{t-1}(s_t,a_t)}_0}{1-\gamma} + \sum_{n = 1}^{N_{t-1}(s_t,a_t)} \eta^{N_{t-1}(s_t,a_t)}_n b^{\rref}_{t_{n}+1} \notag\\
		&\qquad+  \gamma\sum_{n = 1}^{N_{t-1}(s_t,a_t)} \eta^{N_{t-1}(s_t,a_t)}_n  \left(V_{t_n}(s_{t_n+1}) - V^{\rref}_{t_n}(s_{t_n+1})+\frac{1}{n}\sum_{i=1}^{n}\Big(V^{\rref}_{t_i}(s_{t_i+1})-P_{s_t,a_t}V^{\star}\Big)\right)\notag\\
		&\le \frac{\eta^{N_{t-1}(s_t,a_t)}_0}{1-\gamma} + B^{\rref}_{t}(s_t,a_t) + 2c_b\frac{\log^2\frac{SAT}{\delta}}{\left(N_{t-1}(s_t,a_t)\right)^{3/4}(1-\gamma)^2}\notag\\
		& \qquad + \gamma\sum_{n = 1}^{N_{t-1}(s_t,a_t)} \eta^{N_{t-1}(s_t,a_t)}_n  \left(V_{t_n}(s_{t_n+1}) - V^{\rref}_{t_n}(s_{t_n+1})+\frac{1}{n}\sum_{i=1}^{n}\Big(V^{\rref}_{t_i}(s_{t_i+1})-P_{s_t,a_t}V^{\star}\Big)\right),\notag 
	\end{align} 
    where the penultimate line follows from our initialization $Q^{\rref}_1(s_t, a_t) = \frac{1}{1-\gamma}$ and the last line follows from \eqref{eq:eta-b-sum-squeeze} with $B^{\rref}_{N_{t-1}+1} = B^{\rref}_t$. With the inequality above, we can sum over $1 \le t \le T$ and have
    \begin{align}
		&\quad\ \sum_{t = 1}^{T} Q^{\rref}_t(s_t, a_t) - Q^{\star}(s_t, a_t) \nonumber \\
		&\le \sum_{t = 1}^{T} \left(\frac{\eta^{N_{t-1}(s_t,a_t)}_0}{1-\gamma} + B^{\rref}_{t}(s_t,a_t) + 2c_b\frac{\log^2\frac{SAT}{\delta}}{\left(N_{t-1}(s_t,a_t)\right)^{3/4}(1-\gamma)^2}\right)\nonumber\\
		& \qquad + \gamma\sum_{t = 1}^{T} \sum_{n = 1}^{N_{t-1}(s_t,a_t)} \eta^{N_{t-1}(s_t,a_t)}_n \left(V_{t_n}(s_{t_n+1}) - V^{\rref}_{t_n}(s_{t_n+1})+\frac{1}{n}\sum_{i=1}^{n}V^{\rref}_{t_i}(s_{t_i+1})-P_{s_t,a_t}V^{\star}\right)\nonumber\\
		&\le \underbrace{\sum_{t = 1}^{T} \left(\frac{\eta^{N_{t-1}(s_t,a_t)}_0}{1-\gamma} + B^{\rref}_{t}(s_t,a_t) + 2c_b\frac{\log^2\frac{SAT}{\delta}}{\left(N_{t-1}(s_t,a_t)\right)^{3/4}(1-\gamma)^2}\right)}_{(\mathrm{I})} \nonumber\\
		& \qquad + \underbrace{\gamma\sum_{t = 1}^{T} \sum_{n = 1}^{N_{t-1}(s_t,a_t)} \eta^{N_{t-1}(s_t,a_t)}_n \Big(V_{t_n}(s_{t_n+1}) - V^\star(s_{t_n+1})\Big)}_{(\mathrm{II})} \nonumber\\
		& \qquad + \underbrace{\gamma\sum_{t = 1}^{T} \sum_{n = 1}^{N_{t-1}(s_t,a_t)} \eta^{N_{t-1}(s_t,a_t)}_n \left(V^\star(s_{t_n+1}) - V^{\rref}_{t_n}(s_{t_n+1})+\frac{1}{n}\sum_{i=1}^{n}V^{\rref}_{t_i}(s_{t_i+1})-P_{s_t,a_t}V^{\star}\right)}_{(\mathrm{III})}. \label{eq:QR-Q*-1} 
	\end{align}
	To finish the proof of this lemma, we control the three terms in \eqref{eq:QR-Q*-1} separately.

    \textbf{(I) in \eqref{eq:QR-Q*-1}}: We can bound (I) by making two observations. First, we have 
		\begin{align}\label{eq:QR-Q*-1.1}
			\sum_{t = 1}^{T}\eta^{N_{t-1}(s_t,a_t)}_0
			\leq \sum_{(s,a) \in \mathcal{S}\times \mathcal{A}} \sum_{n = 1}^{N_{T - 1}(s,a)}\eta^{n}_0 \leq SA ,
		\end{align}
		where the last inequality follows since $\eta_0^n = 0$ for all $n>0$. Then, we can observe that
		\begin{align}\label{eq:QR-Q*-1.2}
			\sum_{t = 1}^{T}  \frac{1}{\left(N_{t-1}(s_t, a_t) \right)^{3/4}} &= \sum_{(s,a) \in \mathcal{S}\times \mathcal{A}} \sum_{n=1}^{N_{T-1}(s,a) } \frac{1}{n^{3/4}} \leq \sum_{(s,a) \in \mathcal{S}\times \mathcal{A}} 4 \left(N_{T - 1}(s,a)\right)^{1/4} \notag\\
            &\leq 4\left(\sum_{(s,a) \in \mathcal{S}\times \mathcal{A}} 1\right)^{3/4}  \left(\sum_{(s,a) \in \mathcal{S}\times \mathcal{A}}  N_{T - 1}(s,a) \right)^{1/4} \leq 4(SA)^{3/4}T^{1/4}, 
		\end{align}
		where the penultimate inequality is due to H\"{o}lder's inequality.
		
		Applying the above two bounds to (I) yields
		\begin{align}
			&\quad\ \sum_{t = 1}^{T} \left(\frac{\eta^{N_{t-1}(s_t,a_t)}_0}{1-\gamma} + B^{\rref}_{t}(s_t,a_t) + 2c_b\frac{\log^2\frac{SAT}{\delta}}{(N_{t-1}(s_t,a_t))^{3/4}(1-\gamma)^2}\right) \notag\\
			&\leq \frac{SA}{1-\gamma}+8c_b(SA)^{3/4}T^{1/4}\frac{1}{(1-\gamma)^2}\log^2\frac{SAT}{\delta}+\sum_{t=1}^{T} B^{\rref}_{t}(s_t,a_t) .
		\end{align}
  
    \textbf{(II) in \eqref{eq:QR-Q*-1}}: It can be obtained via some algebra that
		\begin{align}
			&\quad\ \gamma\sum_{t = 1}^{T} \sum_{n = 1}^{N_{t-1}(s_t,a_t)} \eta^{N_{t-1}(s_t,a_t)}_n \big(V_{t_n}(s_{t_n+1}) - V^\star(s_{t_n+1})\big)\notag\\
			&= \gamma\sum_{l = 1}^{T} \sum_{N = N_{l}(s_l,a_l)}^{N_{T-1}(s_l,a_l)} \eta_{N_{l}(s_l,a_l)}^N \big(V_{l}(s_{l+1}) - V^\star(s_{l+1})\big)\notag\\
			&\le \gamma\left(1 + \frac{1-\gamma}{2}\right)\sum_{t = 1}^{T} \big(V_{t}(s_{t+1}) - V^\star(s_{t+1})\big).
		\end{align}
		Above, the second line replaces $t_n$ (resp. $n$) with $l$ (resp. $N_l(s_l, a_l)$), and the last line is due to \eqref{eq:learning-rate5} in Lemma \ref{lem:learning-rate}) and replaces $l$ with $t$ again.
		
		\textbf{(III) in \eqref{eq:QR-Q*-1}}: (III) can derived in the following manner:
		\begin{align*}
			&\quad\ \gamma\sum_{t = 1}^{T} \sum_{n = 1}^{N_{t-1}(s_t,a_t)} \eta^{N_{t-1}(s_t,a_t)}_n  \left(V^\star(s_{t_n+1}) - V^{\rref}_{t_n}(s_{t_n+1})+\frac{1}{n}\sum_{i=1}^{n}\Big(V^{\rref}_{t_i}(s_{t_i+1})-P_{s_t,a_t}V^{\star}\Big)\right)\nonumber \\
			&= \gamma\sum_{t = 1}^{T} \sum_{n = 1}^{N_{t-1}(s_t,a_t)} \eta^{N_{t-1}(s_t,a_t)}_n  \left(\left(P_{t_n} - P_{s_t,a_t}\right)\left(V^\star - V^{\rref}_{t_n}\right)+\frac{1}{n}\sum_{i=1}^{n}\Big(V^{\rref}_{t_i}(s_{t_i+1})-P_{s_t,a_t}V^{\rref}_{t_n}\Big)\right)\nonumber \\
			&= \gamma\sum_{t = 1}^{T} \sum_{N = N_{t}(s_t,a_t)}^{N_{T-1}(s_t,a_t)} \eta_{N_{t}(s_t,a_t)}^N \left(\left(P_{t} - P_{s_t,a_t}\right)\left(V^\star - V^{\rref}_{t}\right)+ \frac{\sum_{i=1}^{N_t(s_t,a_t)}\left(V^{\rref}_{t_i}(s_{t_i+1})-P_{s_t,a_t}V^{\rref}_t\right)}{N_t(s_t,a_t)}\right).
		\end{align*}
		Here, the second line is obtained by $V^\star(s_{t_n+1}) - V^{\rref}_{t_n}(s_{t_n+1}) = P_{t_n} \left(V^{\star} - V^{\rref}_{t_n} \right)$, and the third line is by a simple rearrangement of terms and replacement of $t_n$ (resp. $n$) with $t$ (resp. $N_t(s_t, a_t)$).  

    Furthermore, we can rearrange the terms in \eqref{eq:QR-Q*-1} and write
	\begin{align}
		&\quad\ \sum_{t = 1}^{T} \big(Q^{\rref}_t(s_t, a_t) - Q^{\star}(s_t, a_t)\big) \notag\\
		&\le  \gamma\left(1 + \frac{1-\gamma}{2}\right)\sum_{t = 1}^{T} \big(V_{t}(s_{t+1}) - V^\star(s_{t+1})\big) + \frac{SA}{1-\gamma} +8c_b(SA)^{3/4}T^{1/4}\frac{1}{(1-\gamma)^2}\log\frac{SAT}{\delta} + \underbrace{\sum_{t=1}^{T} B^{\rref}_{t}(s_t,a_t)}_{\cR_1}\notag\\
		&\quad + \underbrace{\gamma\sum_{t = 1}^{T} \sum_{N = N_{t}(s_t,a_t)}^{N_{T-1}(s_t,a_t)} \eta_{N_{t}(s_t,a_t)}^N \left[\left(P_{t} - P_{s_t,a_t}\right)\left(V^\star - V^{\rref}_{t}\right)+ \frac{\sum_{i=1}^{N_t(s_t,a_t)}\left(V^{\rref}_{t_i}(s_{t_i+1})-P_{s_t,a_t}V^{\rref}_t\right)}{N_t(s_t,a_t)}\right]}_{\cR_2},
		\label{eq:QR-Q*-2}
	\end{align}
	where the upper bounds of $\cR_1$ and $\cR_2$ are derived in Appendix \ref{sec:proof:R1-bound} and \ref{sec:proof:R2-bound} respectively. 

    Overall, taking the obtained upper bounds on $\cR_1$ and $\cR_2$, we can conclude with probability $1-\delta$ that
    \begin{align}
        &\quad\ \sum_{t = 1}^{T} \big(Q^{\rref}_t(s_t, a_t) - Q^{\star}(s_t, a_t)\big) \notag\\
        &\leq \gamma\left(1 + \frac{1-\gamma}{2}\right)\sum_{t = 1}^{T} \big(V_{t}(s_{t+1}) - V^\star(s_{t+1})\big) + C_{\mathrm{r},1}\Bigg(\sqrt{\frac{SAT}{1-\gamma}\log\frac{SAT}{\delta}} + \frac{SA}{(1-\gamma)^{15/4}}\log^{7/2}\frac{SAT}{\delta}\Bigg) \notag\\
        &\qquad + \frac{SA}{1-\gamma} +8c_b(SA)^{3/4}T^{1/4}\frac{1}{(1-\gamma)^2}\log\frac{SAT}{\delta}\notag\\
        &\qquad + C_{\mathrm{r},2}\Bigg(\sqrt{\frac{SAT}{1-\gamma}\log^{3}\frac{SAT}{\delta}} + \frac{SA}{(1-\gamma)^7}\log^{5/2} \frac{SAT}{\delta} + \sqrt{\frac{SA\log^2 T}{(1-\gamma)^5}\sum_{t=1}^T \big(V_{t-1}(s_{t}) - V^{\pi_{t}}(s_{t})\big)}\Bigg) \notag\\
        &\leq \gamma\left(1 + \frac{1-\gamma}{2}\right)\sum_{t = 1}^{T} \left(V_{t}(s_{t+1}) - V^\star(s_{t+1})\right) + C_{\mathrm{r},1}\Bigg(\sqrt{\frac{SAT}{1-\gamma}\log\frac{SAT}{\delta}} + \frac{SA}{(1-\gamma)^{15/4}}\log^{7/2}\frac{SAT}{\delta}\Bigg) \notag\\
        &\qquad + 8c_b\sqrt{\frac{SAT}{1-\gamma}\log\frac{SAT}{\delta}} + 8c_b\frac{SA}{(1-\gamma)^{7/2}}\log^{3/2}\frac{SAT}{\delta}\notag\\
        &\qquad + C_{\mathrm{r},2}\Bigg(\sqrt{\frac{SAT}{1-\gamma}\log^{3}\frac{SAT}{\delta}} + \frac{SA}{(1-\gamma)^7}\log^{5/2} \frac{SAT}{\delta} + \sqrt{\frac{SA\log^2 T}{(1-\gamma)^5}\sum_{t=1}^T \big(V_{t-1}(s_{t}) - V^{\pi_{t}}(s_{t})\big)}\Bigg) \notag\\
        &\leq \gamma\left(1 + \frac{1-\gamma}{2}\right)\sum_{t = 1}^{T} \left(V_{t}(s_{t+1}) - V^\star(s_{t+1})\right) + C_1\Bigg(\sqrt{\frac{SAT}{1-\gamma}\log^{3}\frac{SAT}{\delta}} + \frac{SA}{(1-\gamma)^7}\log^{7/2} \frac{SAT}{\delta} \notag\\
        &\qquad + \sqrt{\frac{SA\log^2 T}{(1-\gamma)^5}\sum_{t=1}^T \big(V_{t-1}(s_{t}) - V^{\pi_{t}}(s_{t})\big)}\Bigg), \label{eq:QR-Q*-UB-final}
    \end{align}
    in which the penultimate step uses $2ab \le a^2+b^2$, which follows from Cauchy-Schwarz. $C_1 > 0$ is a sufficiently large absolute constant. 

    At this point, we have proven the desired result.

 \subsection{Bounding $\cR_1$}\label{sec:proof:R1-bound}

By the definition of $B^{\rref}_t(s_t, a_t)$ in Line \ref{alg-line:BR} of Algorithm \ref{alg:aux}, we can decompose $\mathcal{R}_1$ as follows: 
\begin{align}
	\cR_1 &= \underbrace{c_b\sqrt{\frac{1}{1-\gamma}\log\frac{SAT}{\delta}}\sum_{t=1}^{T}\sqrt{\frac{\sigma^{\mathrm{adv}}_{t}(s_t, a_t) - \left(\mu^{\mathrm{adv}}_{t}(s_t, a_t)\right)^2}{N_t(s_t,a_t)}}}_{\cR_{1,1}} \nonumber\\
	&\qquad\qquad\qquad\qquad\qquad + \underbrace{c_b\sqrt{\log\frac{SAT}{\delta}}\sum_{t=1}^{T}\sqrt{\frac{\sigma^{\re}_{t}(s_t, a_t) - \left(\mu^{\re}_{t}(s_t, a_t)\right)^2}{N_t(s_t,a_t)}}}_{\cR_{1,2}}. \label{eq:R1-def1}
\end{align}
We proceed by bounding the two terms in \eqref{eq:R1-def1} separately.

\paragraph{Step 1: bounding $\cR_{1,1}$.}
Some manipulation on the summation part of $\cR_{1,1}$ gives
\begin{align}
	&\quad\ \sum_{t=1}^{T}\sqrt{\frac{\sigma^{\mathrm{adv}}_{t}(s_t, a_t) - \left(\mu^{\mathrm{adv}}_{t}(s_t, a_t)\right)^2}{N_t(s_t,a_t)}} \nonumber \\
	&\le \sum_{t=1}^{T}\sqrt{\frac{\sigma^{\mathrm{adv}}_{t}(s_t, a_t)}{N_t(s_t,a_t)}} \nonumber \\
	&= \sum_{t=1}^{T}\sqrt{\frac{\sum_{n=1}^{N_{t}(s_t,a_t)}\eta_{n}^{N_{t}(s_t,a_t)}\left(V_{t_n}(s_{t_n+1}) - V^{\rref}_{t_n}(s_{t_n+1})\right)^{2}}{N_t(s_t, a_t)}} \nonumber\\
    &\le \sum_{t = 1}^T\sqrt{\frac{81}{N_t(s_t,a_t)}} \nonumber\\
    &\le 18\sqrt{SAT}, \label{eq:R11-1}
\end{align}
in which the first inequality follows from the update rule of $\sigma_t^{\adv}$ in Line \ref{alg-line:sigma-adv} of Algorithm \ref{alg:aux}, and the second inequality uses the relation 
$ |V_t(s_t) - V^{\rref}_{t}(s_t) | \le 9$ (cf. \eqref{eq:VR-V-proximity}) and the property $\sum_{n = 1}^{N_t(s_t, a_t)} \eta^{N_t(s_t, a_t)}_n \leq 1$ (cf. \eqref{eq:learning-rate1} in Lemma \ref{lem:learning-rate}). The last inequality holds due to the following series of inequalities:
\begin{align}
	\sum_{t = 1}^T\sqrt{\frac{1}{N_t(s_t,a_t)}} &= \sum_{(s, a)\in \cS\times \cA} \sum_{n = 1}^{N_t(s, a)} \sqrt{\frac{1}{n}} \notag\\
    &\le 2 \sum_{(s, a)\in \cS\times \cA} \sqrt{N_t(s, a)} \notag \\
	& \le  2  \sqrt{ \sum_{(s, a)\in \cS\times \cA} 1 } \cdot \sqrt{\sum_{(s,a)\in \cS\times \cA}N_t(s, a)}  =  2\sqrt{SAT}, \label{eq:sqrt-N-sum}
\end{align}
where the last line is obtained by Cauchy-Schwarz and the fact that $\sum_{(s,a)}N_t(s, a) = T$.

Overall, we have
\begin{equation}
    \cR_{1,1} \le 9c_b\sqrt{\frac{SAT}{1-\gamma}\log\frac{SAT}{\delta}}. \label{eq:R11-final}
\end{equation}

\paragraph{Step 2: bounding $\cR_{1,2}$.}

By the update rules of $\mu_t^{\re}$ and $\sigma_t^{\re}$ in \eqref{eq:I4-1.1} and \eqref{eq:I4-1.2}, we can write the summation part of $\cR_{1,2}$ as
\begin{align}
	&\quad\ \sum_{t=1}^{T}\sqrt{\frac{\sigma^{\re}_{t}(s_t, a_t) - \left(\mu^{\re}_{t}(s_t, a_t)\right)^2}{N_t(s_t,a_t)}} \nonumber \\
	&= \sum_{t = 1}^T\sqrt{\frac{1}{N_t(s_t,a_t)}} \sqrt{ \frac{\sum_{n=1}^{N_{t}(s_t,a_t)}\left(V^{\rref}_{t_n}(s_{t_n+1}) \right)^2}{N_t(s_t,a_t)} - \left(\frac{\sum_{n=1}^{N_{t}(s_t,a_t)}V^{\rref}_{t_n}(s_{t_n+1})}{N_t(s_t,a_t)}\right)^2 }. \label{eq:R12-1}
\end{align}
To further bound the summation in \eqref{eq:R12-1}, we focus on a part of each summand, which can be decomposed into two terms as follows: 
\begin{align}
	&\quad\ \frac{\sum_{n=1}^{N_{t}(s_t,a_t)}\left(V^{\rref}_{t_n}(s_{t_n+1}) \right)^2}{N_t(s_t,a_t)} - \left(\frac{\sum_{n=1}^{N_{t}(s_t,a_t)}V^{\rref}_{t_n}(s_{t_n+1})}{N_t(s_t,a_t)}\right)^2 \nonumber\\
    &\leq \frac{\sum_{n=1}^{N_{t}(s_t,a_t)}\left(V^{\rref}_{t_n}(s_{t_n+1}) \right)^2 - \left(V^\star(s_{t_n+1}) \right)^2}{N_t(s_t,a_t)} + \frac{\sum_{n=1}^{N_{t}(s_t,a_t)}\left(V^\star(s_{t_n+1}) \right)^2}{N_t(s_t,a_t)} - \left(\frac{\sum_{n=1}^{N_{t}(s_t,a_t)}V^\star(s_{t_n+1})}{N_t(s_t,a_t)}\right)^2\nonumber\\
    &= \frac{\sum_{n=1}^{N_{t}(s_t,a_t)}\left(V^{\rref}_{t_n}(s_{t_n+1})+V^\star(s_{t_n+1})\right)\left(V^{\rref}_{t_n}(s_{t_n+1})-V^\star(s_{t_n+1})\right)}{N_t(s_t,a_t)} + \frac{\sum_{n=1}^{N_{t}(s_t,a_t)}\left(V^\star(s_{t_n+1}) \right)^2}{N_t(s_t,a_t)} - \left(\frac{\sum_{n=1}^{N_{t}(s_t,a_t)}V^\star(s_{t_n+1})}{N_t(s_t,a_t)}\right)^2\nonumber\\
	&\le \underbrace{ \frac{\sum_{n=1}^{N_{t}(s_t,a_t)}\frac{2}{1-\gamma}\left(V^{\rref}_{t_n}(s_{t_n+1})-V^\star(s_{t_n+1})\right)}{N_t(s_t,a_t)}}_{=:\cR_{1,2,1}^{(t)}}+ \underbrace{ \frac{\sum_{n=1}^{N_{t}(s_t,a_t)}\left(V^\star(s_{t_n+1}) \right)^2}{N_t(s_t,a_t)} - \left(\frac{\sum_{n=1}^{N_{t}(s_t,a_t)}V^\star(s_{t_n+1})}{N_t(s_t,a_t)}\right)^2}_{=: \cR_{1,2,2}^{(t)}}. \label{eq:R12-J}
\end{align}
Above, the last line is obtained due to the fact $\frac{1}{1-\gamma}\geq \norm{V^{\rref}_{t_n}}_\infty \ge \norm{V^{\star}}_\infty \geq 0$ for all $t_n \leq T$.

In order to arrive at a bound on \eqref{eq:R12-1}, we shall bound $\cR_{1,2,1}^{(t)}$ and $\cR_{1,2,2}^{(t)}$ separately. 

Define $J_t(s_t, a_t)$ as
\begin{equation} \label{eq:Phi-def}
		J_t(s_t, a_t) := \sum_{n=1}^{N_{t}(s_t,a_t)}\left(V^{\rref}_{t_n}(s_{t_n+1})-V^\star(s_{t_n+1})\right)\ind\left(V^{\rref}_{t_n}(s_{t_n+1})-V^\star(s_{t_n+1}) > 12\right).
	\end{equation}
$\cR_{1,2,1}^{(t)}$ can be simply decomposed as follows
	\begin{align}
		\cR_{1,2,1}^{(t)} &= \frac{2}{(1-\gamma)N_{t}(s_t,a_t)} \left( \sum_{n=1}^{N_{t}(s_t,a_t)}\left(V^{\rref}_{t_n}(s_{t_n+1})-V^\star(s_{t_n+1})\right)\ind\left(V^{\rref}_{t_n}(s_{t_n+1})-V^\star(s_{t_n+1}) \le 9\right) + J_t(s_t, a_t) \right) \nonumber \\
		&\le \frac{24}{1-\gamma} + \frac{2}{(1-\gamma)N_{t}(s_t,a_t)} J_t(s_t, a_t). \label{eq:J1}
	\end{align}
	
	On the other hand, we claim
	\begin{align} \label{eq:J2}
		\cR_{1,2,2}^{(t)} \lesssim \Var_{s_t, a_t}(V^{\star}) + \frac{1}{(1-\gamma)^2}\sqrt{\frac{\log\frac{SAT}{\delta}}{N_t(s_t,a_t)}},
	\end{align}
	which will be justified in Appendix \ref{sec:proof:eq:J2}. 

Substituting \eqref{eq:J1} and \eqref{eq:J2} back into \eqref{eq:R12-J} and \eqref{eq:R12-1} allows us to have
\begin{align}
	&\quad\ \sum_{t=1}^{T}\sqrt{\frac{\sigma^{\re}_{t}(s_t, a_t) - \left(\mu^{\re}_{t}(s_t, a_t)\right)^2}{N_t(s_t,a_t)}}  \nonumber \\
	& \lesssim \sum_{t = 1}^T\sqrt{\frac{1}{N_t(s_t,a_t)}} \sqrt{\frac{1}{1-\gamma}+\frac{J_t(s_t, a_t)}{(1-\gamma)N_{t}(s_t,a_t)} + \Var_{s_t, a_t}(V^{\star}) + \frac{1}{(1-\gamma)^2}\sqrt{\frac{\log\frac{SAT}{\delta}}{N_t(s_t,a_t)}}} \nonumber\\
	&\leq \sum_{t = 1}^T \Bigg( \sqrt{\frac{1}{(1-\gamma)N_{t}(s_t,a_t)}}  + \frac{\sqrt{J_t(s_t, a_t)}}{N_{t}(s_t,a_t)\sqrt{1-\gamma}} + \sqrt{ \frac{ \Var_{s_t, a_t}(V^{\star})}{N_{t}(s_t,a_t)}}  + \frac{\log^{1/4}\frac{SAT}{\delta}}{(1-\gamma) \big(N_{t}(s_t,a_t) \big)^{3/4}} \Bigg) \nonumber\\
	& \lesssim \sqrt{\frac{SAT}{1-\gamma}} +\sum_{t=1}^T \frac{\sqrt{J_t(s_t, a_t)}}{N_{t}(s_t,a_t)\sqrt{1-\gamma}} + \sum_{t = 1}^T\sqrt{ \frac{ \Var_{s_t, a_t}(V^{\star})}{N_{t}(s_t,a_t)}} + \frac{(SA)^{3/4} }{1-\gamma}\left(T\log\frac{SAT}{\delta} \right)^{1/4} ,
	\label{eq:R12-final}
\end{align}
in which the last line follows from \eqref{eq:sqrt-N-sum} and \eqref{eq:QR-Q*-1.2}.

\paragraph{Step 3: combining the preceding bounds.}

Finally, substituting \eqref{eq:R11-final} and \eqref{eq:R12-final} back into \eqref{eq:R1-def1} leads to
\begin{align*}
	\cR_1 &\lesssim \sqrt{\frac{SAT}{1-\gamma}\log\frac{SAT}{\delta}} + \sqrt{\log\frac{SAT}{\delta}}\sum_{t=1}^{T}\sqrt{\frac{\sigma^{\mathrm{ref}}_{t}(s_t, a_t) - \left(\mu^{\mathrm{ref}}_{t}(s_t, a_t)\right)^2}{N_t(s_t,a_t)}}\\
	&\lesssim \sqrt{\frac{SAT}{1-\gamma}\log\frac{SAT}{\delta}} + \frac{(SA)^{3/4} }{1-\gamma}T^{1/4}\log^{3/4}\frac{SAT}{\delta}  + \sqrt{\log\frac{SAT}{\delta} }\sum_{t = 1}^T\sqrt{ \frac{ \Var_{s_t, a_t}(V^{\star})}{N_{t}(s_t,a_t)}} +\sqrt{\frac{\log\frac{SAT}{\delta}}{1-\gamma}}\sum_{t=1}^T \frac{\sqrt{J_t(s_t, a_t)}}{N_{t}(s_t,a_t)}\\
	&\overset{\mathrm{(i)}}{\lesssim} \sqrt{\frac{SAT}{1-\gamma}\log\frac{SAT}{\delta}}  + \frac{(SA)^{3/4} }{1-\gamma}T^{1/4}\log^{3/4}\frac{SAT}{\delta}  + \frac{SA}{(1-\gamma)^{15/4}}\log^{7/2}\frac{SAT}{\delta}\\
	&\overset{\mathrm{(ii)}}{\lesssim} \sqrt{\frac{SAT}{1-\gamma}\log\frac{SAT}{\delta}} + \frac{SA}{(1-\gamma)^{15/4}}\log^{7/2}\frac{SAT}{\delta}.
\end{align*} 
Here, (i) is true due to the following two inequalities
\begin{align} 
	\sum_{t = 1}^T\sqrt{ \frac{ \Var_{s_t, a_t}(V^{\star})}{N_{t}(s_t,a_t)}} & \lesssim \sqrt{\frac{SAT}{1-\gamma}} + \frac{SA}{(1-\gamma)^{3}}\log\frac{SAT}{\delta},\label{eq:R1-1}  \\
	\sum_{t=1}^T \frac{\sqrt{J_t(s_t, a_t)}}{N_{t}(s_t,a_t)} & \lesssim \sqrt{SAT} + \frac{SA}{(1-\gamma)^{13/4}}\log^3\frac{SAT}{\delta}, \label{eq:R1-2}
\end{align}
whose proofs are deferred to Appendix \ref{sec:proof:eq:R1-1} and Appendix \ref{sec:proof:eq:R1-2}, respectively. In addition, (ii) uses $2ab \le a^2 + b^2$ and is obtained via the following logic:
\begin{align*}
	\frac{(SA)^{3/4} }{1-\gamma}T^{1/4}\log^{3/4}\frac{SAT}{\delta} &=\left(\frac{SA}{(1-\gamma)^2}\log\frac{SAT}{\delta}\right)^{1/2}\left(SAT\log\frac{SAT}{\delta}\right)^{1/4}\\
	&\lesssim \frac{SA}{(1-\gamma)^2}\log\frac{SAT}{\delta} + \sqrt{\frac{SAT}{1-\gamma}\log\frac{SAT}{\delta}}.
\end{align*}
This concludes the proof of the advertised upper bound on $\mathcal{R}_1$ that leads to \eqref{eq:QR-Q*-UB-final}. 

\subsubsection{Proof of Inequality \eqref{eq:J2}}\label{sec:proof:eq:J2}

We begin by constructing the necessary notation and verify the conditions for Lemma \ref{lemma:freedman-application}. Consider any $N \in [T]$. Let
	\begin{align}
		W_i := ({V}^{\star})^{2} \qquad \text{and} \qquad u_i(s,a, N) := \frac{1}{N}. \label{J2-Wu}
	\end{align}
	Clearly, $W_i$ and $u_i(s,a,N)$ are deterministic at the end of the $(i-1)$-th step. Moreover, define
	\begin{align}
		C_{\mathrm{u}} := \frac{1}{N} \qquad \text{and} \qquad \norm{W_i}_\infty \leq \frac{1}{(1-\gamma)^2} =: C_{\mathrm{w}}. \label{J2-cw-cu}
	\end{align}
    Lastly, it is obvious that
    \begin{align}
		0 \leq \sum_{n=1}^N u_{t_n(s,a)}(s,a, N) = 1 \label{eq:J2-cond-u}
	\end{align}
	holds for all $(N,s,a) \in [T] \times \cS\times \cA$.

    By the definition \eqref{eq:var-def}
$$\Var_{s_t, a_t}(V^{\star}) = P_{s_t, a_t}(V^{\star})^2 -  \left(P_{s_t, a_t} V^{\star} \right)^2,$$
we have that with probability at least $1-\delta$,
\begin{align*} 
	&\quad\  \left|\frac{1}{{N_t}}\sum_{n = 1}^{{N_t}}  V^{\star}(s_{t_n+1})^2 - \left(\frac{1}{{N_t}}\sum_{n = 1}^{{N_t}} V^{\star}(s_{t_n+1}) \right)^2 - \Var_{s_t, a_t}(V^{\star})\right|\\
	&\leq \left|\frac{1}{{N_t}}\sum_{n = 1}^{{N_t}}  V^{\star}(s_{t_n+1})^2 - P_{s_t, a_t}(V^{\star})^2 \right| 
	+ \left| \left(\frac{1}{{N_t}}\sum_{n = 1}^{{N_t}} V^{\star}(s_{t_n+1}) \right)^2 - (P_{s_t, a_t}V^{\star})^2 \right| \\
	&\lesssim \frac{1}{(1-\gamma)^2}\sqrt{\frac{\log\frac{SAT}{\delta}}{{N_t}}} + \left| \left(\frac{1}{{N_t}}\sum_{n = 1}^{{N_t}} V^{\star}(s_{t_n+1}) \right)^2 - (P_{s_t, a_t}V^{\star})^2 \right| \\
    &= \frac{1}{(1-\gamma)^2}\sqrt{\frac{\log\frac{SAT}{\delta}}{{N_t}}} + \left| \frac{1}{{N_t}}\sum_{n = 1}^{{N_t}} V^{\star}(s_{t_n+1}) - P_{s_t, a_t}V^{\star}  \right| \cdot  \left| \frac{1}{{N_t}}\sum_{n = 1}^{{N_t}} V^{\star}(s_{t_n+1}) + P_{s_t, a_t}V^{\star} \right| \\
	&\lesssim \frac{1}{(1-\gamma)^2}\sqrt{\frac{\log\frac{SAT}{\delta}}{{N_t}}},
\end{align*}
where the first inequality is by the definition \eqref{eq:var-def}:
\begin{equation*}
    \Var_{s_t, a_t}(V^{\star}) = P_{s_t, a_t}(V^{\star})^2 -  \left(P_{s_t, a_t} V^{\star} \right)^2,
\end{equation*}
and the second inequality is obtained through the application of Lemma \ref{lemma:freedman-application} with \eqref{J2-Wu}, \eqref{J2-cw-cu} and $(N,s,a)=(N_t,s_t, a_t)$, which allows us to conclude that with probability at least $1-\delta$, 
    \begin{align*}
    	\left| \frac{1}{{N_t}}\sum_{n = 1}^{{N_t}} \big( V^{\star}(s_{t_n+1}) \big)^2 - P_{s_t, a_t}(V^{\star})^2 \right| = \left| \frac{1}{{N_t}}\sum_{n = 1}^{{N_t}} \left(P_{t_n} - P_{s_t, a_t} \right) \left(V^{\star} \right)^2  \right|
    	\lesssim \frac{1}{(1-\gamma)^2}\sqrt{\frac{\log\frac{SAT}{\delta}}{{N_t}}},
    \end{align*}
and similarly the last line is obtained with the fact $\norm{V^{\star}}_{\infty} \leq \frac{1}{1-\gamma}$ and an application of Lemma \ref{lemma:freedman-application}, which allows us to conclude that with probability at least $1-\delta$,
\begin{align*}
	\left| \frac{1}{{N_t}}\sum_{n = 1}^{{N_t}} V^{\star}(s_{t_n+1}) - P_{s_t, a_t}V^{\star} \right|
	= \left| \frac{1}{{N_t}}\sum_{n = 1}^{{N_t}} \left(P_{t_n} - P_{s_t, a_t} \right) V^{\star} \right|
	\lesssim \frac{1}{1-\gamma}\sqrt{\frac{\log\frac{SAT}{\delta}}{{N_t}}}.
\end{align*}

\subsubsection{Proof of Inequality \eqref{eq:R1-1}}\label{sec:proof:eq:R1-1}

Let us begin with the following manipulation
\begin{align}
	\sum_{t = 1}^T\sqrt{ \frac{ \Var_{s_t, a_t}(V^{\star})}{N_{t}(s_t,a_t)}} &= \sum_{(s, a)\in\cS\times\cA} \sum_{n = 1}^{N_t(s, a)} \sqrt{\frac{\Var_{s, a}(V^{\star})}{n}} \notag\\
    &\overset{\mathrm{(i)}}{\le} 2\sum_{(s, a)\in\cS\times\cA}   \sqrt{N_t(s, a)\Var_{s, a}(V^{\star})} \notag\\
    &\overset{\mathrm{(ii)}}{\le} 2\sqrt{\sum_{(s, a)\in\cS\times\cA} 1}  \cdot \sqrt{ \sum_{(s, a)\in\cS\times\cA}   N_t(s, a)\Var_{s, a}(V^{\star})} \notag \\
	&= 2 \sqrt{SA}\sqrt{\sum_{t = 1}^T \Var_{s_t, a_t}(V^{\star})}, \label{eq:R1-1.1}
\end{align}
in which (i) is due to $\sum_{n = 1}^{N} 1/{\sqrt{n}} \leq 2\sqrt{N}$; (ii) is by the Cauchy-Schwarz inequality. 

It only remains to bound the variance term in \eqref{eq:R1-1.1}. To this end, let us decompose it further into
\begin{align}
	\sum_{t = 1}^T \Var_{s_t, a_t}(V^{\star}) &\leq 
	\underbrace{\sum_{t = 1}^T \Var_{s_t, a_t} \big( V^{\pi_t}\big)}_{\alpha_1} + \underbrace{\sum_{t = 1}^T \left|\Var_{s_t, a_t}(V^{\star}) - \Var_{s_t, a_t}(V^{\pi_t})\right|}_{\alpha_2} \notag \\
	&\lesssim \frac{T}{1-\gamma} + \sqrt{\frac{SAT\log\frac{SAT}{\delta}}{(1-\gamma)^7}} + \frac{SA\log^2\frac{SAT}{\delta}}{(1-\gamma)^{9/2}} + \frac{1}{(1-\gamma)^2}\sqrt{T\log\frac{SAT}{\delta}} + \frac{\log T}{(1-\gamma)^2}\sqrt{SAT}\notag\\
    &\lesssim \frac{T}{1-\gamma} + \sqrt{\frac{SAT\log\frac{SAT}{\delta}}{(1-\gamma)^7}} + \frac{SA\log^2\frac{SAT}{\delta}}{(1-\gamma)^{9/2}} + \frac{\log T}{(1-\gamma)^2}\sqrt{SAT}.\label{eq:R1-1.2}
\end{align}
The above inequality can be obtained with an upper bound on $\alpha_1$ and $\alpha_2$ respectively. Let us use $e_{s}$ to denote the $S$-dimensional canonical basis vector whose entry is $1$ in the $s$-th dimension. To bound $\alpha_1$ in \eqref{eq:R1-1.2}, we can first write $\alpha_1$ as follows:
\begin{align}
	\Var_{s_t, a_t} \big( V^{\pi_t}\big) & =P_{s_t,a_t}\left(V^{\pi_t}\right)^2-\left(P_{s_t,a_t}V^{\pi_t}\right)^2\nonumber \\
	& =P_{s_t,a_t}\left(V^{\pi_t}\right)^2 -\frac{1}{\gamma^{2}}\left(r(s_t,a_t)-V^{\pi_t}(s_t)\right)^2\nonumber \\
	& = P_{s_t,a_t}\left(V^{\pi_t}\right)^2-\frac{1}{\gamma^{2}}r^2(s_t,a_t)-\frac{1}{\gamma^{2}}\left(V^{\pi_t}(s_t)\right)^2 +\frac{2}{\gamma^{2}}V^{\pi_t}(s_t)\cdot r(s_t,a_t)\nonumber \\
	& \leq\frac{1}{\gamma^{2}}\left(\gamma^{2}P_{s_t,a_t}-e_{s_t}^\top\right)\left(V^{\pi_t}\right)^2 +\frac{2}{\gamma^{2}}V^{\pi_t}(s_t)\cdot r(s_t,a_t).
	\label{eq:v-bound}
\end{align}
Here, the second line follows from the Bellman optimality equation
$V^{\pi_t}(s_t) = Q^{\pi_t}(s_t,a_t) =r(s_t, a_t)+\gamma P_{s_t,a_t}V^{\pi_t}$. Now, we are ready to bound $\alpha_1$ under two partition cases. If $\gamma < \frac{1}{2}$, then
\begin{align*}
    \alpha_1 &= \sum_{t = 1}^T \Var_{s_t, a_t} \big( V^{\pi_t}\big)\\
    &= \sum_{t = 1}^T \left(P_{s_t,a_t}\left(V^{\pi_t}\right)^2-\left(P_{s_t,a_t}V^{\pi_t}\right)^2\right)\\
    &\le \frac{T}{(1-\gamma)^2}\\
    &\le \frac{2T}{1-\gamma}.
\end{align*}
If $\gamma \ge \frac{1}{2}$, by \eqref{eq:v-bound}, we can write
\begin{align*}
	\alpha_1 &= \sum_{t = 1}^T \Var_{s_t, a_t} \left( V^{\pi_t}\right)\\
	&\le \sum_{t = 1}^T\left(\frac{1}{\gamma^{2}}\left(\gamma^{2}P_{s_t,a_t}-e_{s_t}^\top\right)\left(V^{\pi_t}\right)^2 +\frac{2}{\gamma^{2}}V^{\pi_t}(s_t)\cdot r(s_t,a_t)\right)\\
	&\le \sum_{t = 1}^T\left(\frac{1}{\gamma^{2}}\left|\left(\gamma^{2}P_{s_t,a_t}-e_{s_t}^\top\right)\left(V^{\pi_t}\right)^2\right| +\frac{2}{\gamma^{2}}V^{\pi_t}(s_t)\cdot r(s_t,a_t)\right)\\
	&= \sum_{t = 1}^T\left(\frac{1}{\gamma^{2}}\left|\left[(1-\gamma)e_{s_t}^\top + \gamma (e_{s_t}^\top - \gamma P_{s_t,a_t})\right]\left(V^{\pi_t}\right)^2\right| +\frac{2}{\gamma^{2}}V^{\pi_t}(s_t)\cdot r(s_t,a_t)\right)\\
	&\overset{(\mathrm{i})}{\leq} \sum_{t = 1}^T\left(\frac{1-\gamma}{\gamma^2}\norm{e_{s_t}^\top}_1\norm{V^{\pi_t}}^2_\infty + \frac{1}{\gamma}\left|(e_{s_t}^\top - \gamma P_{s_t,a_t})V^{\pi_t}\right|\norm{V^{\pi_t}}_\infty + \frac{2}{\gamma^{2}}V^{\pi_t}(s_t)\right)\\
	&\overset{(\mathrm{ii})}{=} \sum_{t = 1}^T\left(\frac{1-\gamma}{\gamma^2}\norm{e_{s_t}^\top}_1\norm{V^{\pi_t}}^2_\infty + \frac{1}{\gamma}\left|r(s_t,a_t)\right|\norm{V^{\pi_t}}_\infty + \frac{2}{\gamma^{2}}V^{\pi_t}(s_t)\right)\\
	&\overset{(\mathrm{iii})}{\leq} \sum_{t = 1}^T\left(\frac{1}{2\gamma^2(1-\gamma)} + \frac{1}{\gamma(1-\gamma)} + \frac{2}{\gamma^2(1-\gamma)}\right)\\
	&\overset{(\mathrm{iv})}{\leq} \frac{12T}{1-\gamma},
\end{align*}
where (i) is due to the H\"{o}lder inequality and $\norm{r}_\infty \le 1$; (ii) is due to the Bellman optimality equation $V^{\pi_t}(s_t) = Q^{\pi_t}(s_t,a_t) =r(s_t, a_t)+\gamma P_{s_t,a_t}V^{\pi_t}$ and thus $(e_{s_t}^\top - \gamma P_{s_t,a_t})V^{\pi_t} = r(s_t,a_t)$; (iii) is due to $\norm{V^{\pi_t}}_\infty \le \frac{1}{1-\gamma}$ and $\norm{e_{s_t}}_1 = 1$ and $\norm{r}_\infty \le 1$; (iv) is due to $\gamma \ge \frac{1}{2}$.

On the other hand, $\alpha_2$ in \eqref{eq:R1-1.2} can be bounded as follows 
\begin{align}
	\alpha_2 &= \sum_{t = 1}^T \left|\Var_{s_t, a_t}(V^{\star}) - \Var_{s_t, a_t}(V^{\pi_t})\right|\nonumber\\
	& =\sum_{t=1}^{T}\left|P_{s_{t},a_{t}}(V^{\star})^{2}-\left(P_{s_{t},a_{t}}V^{\star}\right)^{2}-P_{s_{t},a_{t}}(V^{\pi_{t}})^{2}+\left(P_{s_{t},a_{t}}V^{\pi_{t}}\right)^{2}\right| \notag\\
	& \leq\sum_{t=1}^{T}\left(\left|P_{s_{t},a_{t}}\left(\left(V^{\star}-V^{\pi_{t}}\right) \circ\left(V^{\star}+V^{\pi_{t}}\right)\right)\right|+\left|\left(P_{s_{t},a_{t}}V^{\star}\right)^{2}-\left(P_{s_{t},a_{t}}V^{\pi_{t}}\right)^{2}\right|\right) \notag\\
	&\overset{(\mathrm{i})}{\leq} \frac{4}{1-\gamma}\sum_{t=1}^{T}P_{s_{t},a_{t}}\left(V^{\star}-V^{\pi_{t}}\right)  \notag\\
	& =\frac{4}{1-\gamma}\sum_{t=1}^{T}\left\{ V^{\star}(s_{t+1})-V^{\pi_{t}}(s_{t+1})+\left(P_{s_{t},a_{t}}-P_{t}\right)\left(V^{\star}-V^{\pi_{t}}\right)\right\}  \nonumber\\
	& \overset{(\mathrm{ii})}{=} \frac{4}{1-\gamma} \sum_{t=1}^{T}\left(\Delta_{t}+\phi_{t}\right) \nonumber\\
	& \overset{(\mathrm{iii})}{\lesssim} \sqrt{\frac{SAT\log\frac{SAT}{\delta}}{(1-\gamma)^7}} + \frac{SA\log^2\frac{SAT}{\delta}}{(1-\gamma)^{9/2}} + \frac{1}{(1-\gamma)^2}\sqrt{T\log\frac{SAT}{\delta}} + \frac{\log T}{(1-\gamma)^2}\sqrt{SAT},
	\label{eq:R1-1.3}
\end{align}
Above, (i) is obtained by observing that $V^{\star}-V^{\pi_{t}}\geq 0$ and 
\begin{align*}
	\left|P_{s_{t},a_{t}}\left(\left(V^{\star}-V^{\pi_{t}}\right)^\top\left(V^{\star}+V^{\pi_{t}}\right)\right)\right| & \leq P_{s_{t},a_{t}}\left(V^{\star}-V^{\pi_{t}}\right)\left(\norm{ V^{\star}}_{\infty}+\norm{V^{\pi_{t}}}_{\infty}\right)\\
	& \leq \frac{2}{1-\gamma}P_{s_{t},a_{t}}\left(V^{\star}-V^{\pi_{t}}\right),\\
	\left|\left(P_{s_{t},a_{t}}V^{\star}\right)^{2}-\left(P_{s_{t},a_{t}}V^{\pi_{t}}\right)^{2}\right| & \leq\left|P_{s_{t},a_{t}}\left(V^{\star}-V^{\pi_{t}}\right)\right| \cdot \left|P_{s_{t},a_{t}}\left(V^{\star}+V^{\pi_{t}}\right)\right|\\
	& \leq \frac{2}{1-\gamma}P_{s_{t},a_{t}}\left(V^{\star}-V^{\pi_{t}}\right).
\end{align*}
In (ii) of \eqref{eq:R1-1.3}, we define
\begin{align}
	\Delta_t := V^{\star}(s_{t+1})-V^{\pi_{t}}(s_{t+1}),  \qquad \phi_t := \left(P_{s_{t},a_{t}}-P_{t}\right)\left(V^{\star}-V^{\pi_{t}}\right).
\end{align}
(iii) in \eqref{eq:R1-1.3} results from the following two bounds:
\begin{subequations}
	\begin{align}
		\sum_{t=1}^{T}\Delta_{t} & \lesssim \frac{SA\log^2\frac{SAT}{\delta}}{(1-\gamma)^{7/2}} + \sqrt{\frac{SAT\log\frac{SAT}{\delta}}{(1-\gamma)^5}} + \frac{\log T}{1-\gamma}\sqrt{SAT}, \label{eq:R1-1.3-delta}\\
		\sum_{t=1}^{T}\phi_{t} & \lesssim \frac{1}{1-\gamma}\sqrt{T\log\frac{SAT}{\delta}}\label{eq:R1-1.3-phi}.
	\end{align}
\end{subequations}
Now, we show how \eqref{eq:R1-1.3-delta} and \eqref{eq:R1-1.3-phi} are obtained. For \eqref{eq:R1-1.3-delta}, we decompose it into two terms as follows:
\begin{align}
    \sum_{t=1}^T \Delta_{t} &= \sum_{t=1}^T \big(V^{\star}(s_{t+1})-V^{\pi_{t}}(s_{t+1})\big) = \sum_{t=1}^T \big(\underbrace{V^{\star}(s_{t+1})-V^{\pi_{t+1}}(s_{t+1})}_{=: \Delta_{t,1}} + \underbrace{V^{\pi_{t+1}}(s_{t+1}) - V^{\pi_{t}}(s_{t+1})}_{=: \Delta_{t,2}}\big).
\end{align}
Notice $\sum_{t=1}^T\Delta_{t,1}$ is the regret, so we can bound it with our crude regret bound in Lemma \ref{lem:crude}, which gives
\begin{align}
    \sum_{t=1}^T \Delta_{t,1} &\lesssim \frac{SA\log^2\frac{SAT}{\delta}}{(1-\gamma)^{7/2}} + \sqrt{\frac{SAT\log\frac{SAT}{\delta}}{(1-\gamma)^5}}.\label{eq:delta-t1-bound}
\end{align}
On the other hand, notice $\Delta_{t,2}$ coincides with $\xi_{t,3}$ (cf. \eqref{eq:zeta-decomposition-def}) in the proof of Lemma \ref{lem:xi}. Thus, we can invoke \eqref{eq:xi-3-bound} and have
\begin{align}
    \sum_{t=1}^T \Delta_{t,2} &= \sum_{t=1}^T \big(V^{\pi_{t+1}}(s_{t+1}) - V^{\pi_t}(s_{t+1})\big)\notag\\
    &\lesssim \frac{SA}{1-\gamma}\log T + \frac{(SA)^{3/4}T^{1/4}\log^{5/4}\frac{SAT}{\delta}}{(1-\gamma)^{5/4}} + \sqrt{\frac{SA\log^2 T}{1-\gamma}\sum_{t=1}^T \big(V_t(s_{t+1}) - V^{\pi_{t+1}}(s_{t+1})\big)} \notag\\
    &\lesssim \frac{SA}{1-\gamma}\log T + \frac{(SA)^{3/4}T^{1/4}\log^{5/4}\frac{SAT}{\delta}}{(1-\gamma)^{5/4}} + \frac{\log T}{1-\gamma}\sqrt{SAT} \notag\\
    &\lesssim SA\log^2\frac{SAT}{\delta} + \sqrt{\frac{SAT\log\frac{SAT}{\delta}}{(1-\gamma)^5}} + \frac{\log T}{1-\gamma}\sqrt{SAT},
\end{align}
in which the penultimate step is due to $\norm{V_t - V^{\pi_{t+1}}}_\infty \le \frac{1}{1-\gamma}$ and the last step is due to $2ab \le a^2 + b^2$.

For \eqref{eq:R1-1.3-phi}, we can directly invoke the Azuma-Hoeffding inequality (Theorem \ref{thm:hoeffding}) and obtain that with probability $1-\delta$,
\begin{align*}
    \sum_{t=1}^{T}\phi_{t} = \sum_{t=1}^{T}\big(P_{s_{t},a_{t}}-P_{t}\big)\big(V^{\star}-V^{\pi_{t}}\big) \lesssim \frac{1}{1-\gamma}\sqrt{T\log\frac{SAT}{\delta}}.
\end{align*}

Consequently, substituting \eqref{eq:R1-1.2} into \eqref{eq:R1-1.1} gives us
\begin{align*}
	&\quad \sum_{t=1}^{T}\sqrt{\frac{\Var_{s_{t},a_{t}}(V^{\star})}{N_{t}(s_{t},a_{t})}}\\
    & \lesssim \sqrt{SA}\sqrt{\frac{T}{1-\gamma} + \sqrt{\frac{SAT\log\frac{SAT}{\delta}}{(1-\gamma)^7}} + \frac{SA\log^2\frac{SAT}{\delta}}{(1-\gamma)^{9/2}} + \frac{\log T}{(1-\gamma)^2}\sqrt{SAT}}\\
	& \lesssim \sqrt{\frac{SAT}{1-\gamma}} + \frac{(SA)^{3/4}}{(1-\gamma)^{7/4}}\left(T\log\frac{SAT}{\delta}\right)^{1/4} + \frac{SA}{(1-\gamma)^{9/4}}\log\frac{SAT}{\delta} + \frac{(SA)^{3/4}}{1-\gamma}T^{1/4}\sqrt{\log T}\\
	&= \sqrt{\frac{SAT}{1-\gamma}} + \left(\frac{SAT}{1-\gamma}\right)^{1/4}\left(\frac{SA}{(1-\gamma)^3}\right)^{1/2}\log^{1/4}\frac{SAT}{\delta} + \frac{SA}{(1-\gamma)^{9/4}}\log\frac{SAT}{\delta} + \left(\frac{SAT}{1-\gamma}\right)^{1/4}\left(\frac{SA}{(1-\gamma)^{3/2}}\right)^{1/2}\sqrt{\log T}\\
	&\lesssim \sqrt{\frac{SAT}{1-\gamma}} + \frac{SA}{(1-\gamma)^{3}}\log\frac{SAT}{\delta},
\end{align*}
where we have applied the basic inequality $2ab\leq a^2 + b^2$ for any $a,b\geq 0$.

\subsubsection{Proof of Inequality \eqref{eq:R1-2}}\label{sec:proof:eq:R1-2}

Let us begin with the following manipulation
\begin{align}
	\sum_{t = 1}^T\frac{\sqrt{J_t(s_t, a_t)}}{N_t(s_t, a_t)} & =\sum_{(s, a)\in\cS\times\cA} \sum_{n = 1}^{N_T(s, a)} \frac{\sqrt{J_{t_n(s, a)}(s, a)}}{n} \nonumber \\
	&\le\sum_{(s, a)\in\cS\times\cA}  \sqrt{J_{N_T(s,a)}(s, a)}\log T \nonumber \\
	&\le\sqrt{SA \sum_{(s, a)\in\cS\times\cA} J_{N_T(s,a)}(s, a)}\log T .\label{eq:R1-2.1}
\end{align}
Here, the first inequality holds using the harmonic number and the monotonicity of $J_{t}(s_t, a_t)$ with respect to $t$ (cf. \eqref{eq:Phi-def}) due to the monotonicity of $V_{t}^{\rref}$, while the second inequality comes from Cauchy-Schwarz.

We can continue by writing out $J_{N_t(s,a)}(s, a)$ with its definition explicitly:
\begin{align}
	\sqrt{\sum_{(s,a)\in\cS\times\cA}J_{N_{T}(s,a)}(s,a)} &=\sqrt{\sum_{t=1}^{T}\Big(V^{\rref}_{t}(s_{t+1})-V^\star(s_{t+1})\Big)\ind\Big(V^{\rref}_{t}(s_{t+1})-V^\star(s_{t+1}) > 9\Big)}\nonumber\\
	& \leq\sqrt{\sum_{t=1}^{T}\Big(V_t(s_{t+1})+6-V_{t}^{\mathrm{LCB}}(s_{t+1})\Big)\ind\Big(V_t(s_{t+1})+6-V_{t}^{\mathrm{LCB}}(s_{t+1})>9\Big)}\nonumber\\
	& =\sqrt{\sum_{t=1}^{T}\Big(V_t(s_{t+1})+6-V_{t}^{\mathrm{LCB}}(s_{t+1})\Big)\ind\Big(V_t(s_{t+1})-V_{t}^{\mathrm{LCB}}(s_{t+1})>3\Big)}\nonumber\\
	& \leq\sqrt{\sum_{t=1}^{T}3\Big(V_t(s_{t+1})-V_{t}^{\mathrm{LCB}}(s_{t+1})\Big)\ind\Big(V_t(s_{t+1})-V_{t}^{\mathrm{LCB}}(s_{t+1})>3\Big)}, \label{eq:R1-2.2}
\end{align}
where the first inequality follows from Lemma \ref{lem:VR_properties} (cf. \eqref{eq:VR-V-proximity}) and the pessimism of $V^\LCB$ in Lemma \ref{lem:Q_t-lcb}, which implies
$$V^{\rref}_{t}(s_{t+1})-V^\star(s_{t+1}) \leq V_t(s_{t+1})+6-V^\star(s_{t+1})\leq V_t(s_{t+1})+6-V_{t}^{\mathrm{LCB}}(s_{t+1}),$$
and the last inequality holds since $3 < V_t(s_{t+1})-V_{t}^{\mathrm{LCB}}(s_{t+1})$ when $\ind\Big(V_t(s_{t+1})-V_{t}^{\mathrm{LCB}}(s_{t+1})>3\Big) \neq 0$.

Furthermore, invoking \eqref{eq:VR_lazy} from Lemma \ref{lem:VR_properties} in \eqref{eq:R1-2.2} gives
\begin{align*}
	&\quad\ \sqrt{\sum_{(s,a)\in\cS\times\cA}J_{N_{T}(s,a)}(s,a)}\\
    &\lesssim \sqrt{\frac{\left(SAT\right)^{1/4}}{(1-\gamma)^{13/4}}\left(\log\frac{SAT}{\delta}\right)^{5/4} + \frac{SA}{(1-\gamma)^6}\log \frac{SAT}{\delta} + \sqrt{\frac{\log^2 T}{(1-\gamma)^5}\sum_{t=1}^T \big(V_{t-1}(s_{t}) - V^{\pi_{t}}(s_{t})\big)}} \notag\\
    &\lesssim \left(SAT\right)^{1/8}\left(\log\frac{SAT}{\delta}\right)^{5/8}\left(\frac{1}{1-\gamma}\right)^{13/8} + \frac{\sqrt{SA\log \frac{SAT}{\delta}}}{(1-\gamma)^3} + \frac{T^{1/4}\sqrt{\log T}}{(1-\gamma)^{3/2}}\\
    &= T^{1/8}\left(SA\right)^{1/8}\left(\log\frac{SAT}{\delta}\right)^{5/8}\left(\frac{1}{1-\gamma}\right)^{13/8} + \frac{\sqrt{SA\log \frac{SAT}{\delta}}}{(1-\gamma)^3} + \frac{T^{1/4}}{\sqrt{\log T}}\frac{\log T}{(1-\gamma)^{3/2}}\\
    &\lesssim \frac{\sqrt{T}}{\log T} + \log T + \left(SA\right)^{1/4}\left(\log\frac{SAT}{\delta}\right)^{5/4}\left(\frac{1}{1-\gamma}\right)^{13/4} + \frac{\sqrt{SA\log \frac{SAT}{\delta}}}{(1-\gamma)^3} + \frac{\sqrt{T}}{\log T} + \frac{\log^2 T}{(1-\gamma)^{3}}\\
    &\lesssim \frac{\sqrt{T}}{\log T} + \frac{\sqrt{SA}}{(1-\gamma)^{13/4}}\log^2\frac{SAT}{\delta},
\end{align*}
where the second inequality is by $V_{t-1}(s_{t}) - V^{\pi_{t}}(s_{t}) \le \frac{1}{1-\gamma}$, and the penultimate inequality is by $2ab \le a^2 + b^2$.

Finally, substitution into \eqref{eq:R1-2.1} gives
\begin{align*}
	\sum_{t = 1}^T\frac{\sqrt{J_t(s_t, a_t)}}{N_t(s_t, a_t)} &\lesssim \left(\sqrt{SA}\log T\right)\left(\frac{\sqrt{T}}{\log T} + \frac{\sqrt{SA}}{(1-\gamma)^{13/4}}\log^2\frac{SAT}{\delta}\right)\\
    &\le \sqrt{SAT} + \frac{SA}{(1-\gamma)^{13/4}}\log^3\frac{SAT}{\delta}, 
\end{align*}
which concludes this proof.

\subsection{Bounding $\cR_2$}\label{sec:proof:R2-bound}

In this section, let us again adopt the shorthand notation $t_i := t_i(s_t, a_t)$. Also define 
\begin{equation} 
	\label{eq:lambda_t_bound}
	\lambda_t := \sum_{N = N_{t}(s_t,a_t)}^{N_{T-1}(s_t,a_t)} \eta_{N_{t}(s_t,a_t)}^N.
\end{equation} 
We can decompose $\mathcal{R}_2$ in \eqref{eq:QR-Q*-2} as follows:
\begin{equation*}
    \mathcal{R}_2:= \underbrace{\gamma\sum_{t = 1}^{T} \lambda_t \left(P_{t} - P_{s_t,a_t}\right)\left(V^* - V^{\rref}_{t}\right)}_{\eqqcolon \mathcal{R}_{2,1}} + \underbrace{\gamma\sum_{t = 1}^{T} \lambda_t\frac{\sum_{i=1}^{N_t(s_t,a_t)}\left(V^{\rref}_{t_i}(s_{t_i+1})-P_{s_t,a_t}V^{\rref}_t\right)}{N_t(s_t,a_t)}}_{\eqqcolon \mathcal{R}_{2,2}}.
\end{equation*}
Note that $\lambda_t \le 1 + \frac{1-\gamma}{2}$ because of the property $\sum_{N = n}^{\infty} \eta_n^N \le 1+ \frac{1-\gamma}{2}$ in \eqref{eq:learning-rate5} of Lemma \ref{lem:learning-rate}. In the following, we shall control these two terms separately. 

\paragraph{Step 1: bounding $\mathcal{R}_{2,1}$.} 

We begin by constructing the necessary notation and verify the conditions for Lemma \ref{lemma:freedman-application2}. Let
	\begin{align}
		W_i \coloneqq V^{\rref}_{i} - V^{\star}
	\qquad \text{and} \qquad
	u_i(s_i,a_i) \coloneqq \lambda_i, \label{R21-Wu}
	\end{align}
	where $\lambda_i$ is as defined in \eqref{eq:lambda_t_bound}. Clearly, $W_i$ and $u_i(s_i,a_i)$ are deterministic at the end of the $(i-1)$-th step, given $N(s_i,a_i)$. Moreover, define
	\begin{align}
		\left|u_i(s_i,a_i) \right| \leq 1+\frac{1-\gamma}{2} \eqqcolon C_{\mathrm{u}}
	\qquad \text{and} \qquad
	   \norm{W_i}_\infty \leq \frac{1}{1-\gamma} \eqqcolon C_{\mathrm{w}}. \label{R21-cw-cu}
	\end{align}
 
	Hence, we can apply Lemma \ref{lemma:freedman-application2} with \eqref{R21-Wu} and \eqref{R21-cw-cu} and conclude that with probability at least $1- \delta/2$, 
\begin{align}
 \cR_{2,1} &\le \gamma\left|\sum_{t = 1}^{T} \lambda_t \left(P_{t} - P_{s_t,a_t}\right)\left(V^* - V^{\rref}_{t}\right)\right|\nonumber\nonumber \\
 & \lesssim \gamma\sqrt{C_{\mathrm{u}}^{2}C_{\mathrm{w}}SA\sum_{i=1}^{T}\mathbb{E}_{i-1}\left[P_{i}W_{i}\right]\log\frac{T}{\delta}}+C_{\mathrm{u}}C_{\mathrm{w}}SA\log\frac{T}{\delta}\nonumber\nonumber \\
 & \lesssim \gamma\sqrt{\frac{SA}{1-\gamma}\sum_{i=1}^{T}\mathbb{E}_{i-1}\left[P_{i}\big(V^{\rref}_i-V^{\star}\big)\right]\log\frac{T}{\delta}}+\frac{SA}{1-\gamma}\log\frac{T}{\delta}\nonumber\nonumber \\
 &= \gamma\sqrt{\frac{SA}{1-\gamma} \bigg\{ \sum_{t=1}^{T}P_{s_{t},a_{t}}\big(V^{\rref}_t-V^{\star}\big) \bigg\} \log\frac{T}{\delta}}+\frac{SA}{1-\gamma}\log\frac{T}{\delta}.
	\label{eq:R21-1}
\end{align}

To finish the upper bound on $\cR_{2,1}$, it remains to control 
\begin{equation}\label{eq:R21-1.5}
    \sum_{t=1}^{T}P_{s_{t},a_{t}}\left(V^{\rref}_t-V^{\star}\right) = \sum_{t=1}^{T}P_{t}\left(V^{\rref}_t-V^{\star}\right) + \sum_{t=1}^{T}\left(P_{s_{t},a_{t}} - P_t\right)\left(V^{\rref}_t-V^{\star}\right),
\end{equation}
which can be further decomposed into two terms. Let us first examine the first term: 
\begin{align}
	\sum_{t=1}^{T}P_{t}\big(V^{\rref}_t-V^{\star}\big) &\overset{\mathrm{(i)}}{\leq} \sum_{t=1}^{T}P_{t}\big(V_t + 9 -V^{\star}\big) \nonumber \\
	&\leq 9T + \sum_{t=1}^{T}\big(V_t(s_{t+1}) -V^{\star}(s_{t+1})\big) \notag\\
    &\overset{\mathrm{(ii)}}{\lesssim} T + \frac{SA\log^2\frac{SAT}{\delta}}{(1-\gamma)^{7/2}} + \sqrt{\frac{SAT\log\frac{SAT}{\delta}}{(1-\gamma)^5}} \label{eq:R21-2}
 \end{align}
with probability at least $1-{\delta}/{4}$. Above, (i) holds according to \eqref{eq:VR-V-proximity}, and (ii) is valid since
 \begin{align*}
	 \sum_{t=1}^{T}\big(V_t(s_{t+1}) -V^{\star}(s_{t+1})\big) &\leq \sum_{t=1}^{T}\big(V_t(s_{t+1})-V^{\pi_{t+1}}(s_{t+1})\big) \lesssim \frac{SA\log^2\frac{SAT}{\delta}}{(1-\gamma)^{7/2}} + \sqrt{\frac{SAT\log\frac{SAT}{\delta}}{(1-\gamma)^5}},
 \end{align*}
where the first inequality follows since $V^{\star}\geq V^{\pi_{t+1}}$, and the second inequality comes from \eqref{eq:delta-t1-bound} in Lemma \ref{lem:crude}. 

Additionally, for the second term in \eqref{eq:R21-1.5}, the Azuma-Hoeffding inequality (Theorem \ref{thm:hoeffding}) directly leads to
\begin{align*}
\left|\sum_{t=1}^{T}\left(P_{t}-P_{s_{t},a_{t}}\right)\left(V^{\rref}_t-V^{\star}\right)\right| & \lesssim \sqrt{\frac{T}{(1-\gamma)^2}\log \frac{1}{\delta}}
\end{align*}
with probability at least $1-\delta/4$. 

Together, we have that
\begin{align}
\sum_{t=1}^{T}P_{s_{t},a_{t}}\left(V^{\rref}_t-V^{\star}\right) & \leq \sum_{t=1}^{T}P_{t}\left(V^{\rref}_t-V^{\star}\right)+\left|\sum_{t=1}^{T}\left(P_{t}-P_{s_{t},a_{t}}\right)\left(V^{\rref}_t-V^{\star}\right)\right| \notag\\
 & \lesssim T + \frac{SA\log^2\frac{SAT}{\delta}}{(1-\gamma)^{7/2}} + \sqrt{\frac{SAT\log\frac{SAT}{\delta}}{(1-\gamma)^5}} \label{eq:R21-3}
\end{align} 
with probability at least $1-\delta/2$. 

Finally, substituting this inequality into \eqref{eq:R21-1} allows us to conclude with
\begin{align}
&\quad\ \gamma\left|\sum_{t = 1}^{T} \lambda_t \left(P_{t} - P_{s_t,a_t}\right)\left(V^* - V^{\rref}_{t}\right)\right| \nonumber \\
&  \lesssim \gamma\sqrt{\frac{SA}{1-\gamma} \bigg\{ \sum_{t=1}^{T}P_{s_{t},a_{t}}\big(V^{\rref}_t-V^{\star}\big) \bigg\} \log\frac{T}{\delta}}+\frac{SA}{1-\gamma}\log\frac{T}{\delta}\nonumber \\
 & \lesssim \gamma\sqrt{\frac{SA}{1-\gamma} \left(T + \frac{SA\log^2\frac{SAT}{\delta}}{(1-\gamma)^{7/2}} + \sqrt{\frac{SAT\log\frac{SAT}{\delta}}{(1-\gamma)^5}}\right) \log\frac{T}{\delta}}+\frac{SA}{1-\gamma}\log\frac{T}{\delta}\nonumber\\
 &\lesssim \sqrt{\frac{SA}{1-\gamma}\left(T + \frac{SA\log^2\frac{SAT}{\delta}}{(1-\gamma)^{7/2}} + \frac{SA}{(1-\gamma)^5}\log\frac{SAT}{\delta}\right)\log\frac{T}{\delta}}+\frac{SA}{1-\gamma}\log\frac{T}{\delta}\nonumber\\
 & \lesssim\sqrt{\frac{SAT}{1-\gamma}\log\frac{SAT}{\delta}} + \frac{SA}{(1-\gamma)^3}\log^{3/2}\frac{SAT}{\delta} \label{eq:R21-final}
\end{align}
with probability at least $1-\delta$. Above, the penultimate line holds due to $2ab \le a^2 + b^2$.

\paragraph{Step 2: bounding $\mathcal{R}_{2,2}$.} 

We can start by decomposing $\mathcal{R}_{2,2}$ into three terms as follows: 
\begin{align}
\mathcal{R}_{2,2} &\leq \gamma\sum_{t = 1}^{T} \frac{\lambda_t}{N_t(s_t,a_t)}\sum_{i=1}^{N_t(s_t,a_t)}\left(V^{\rref}_{t_i}(s_{t_i+1})-P_{s_t,a_t}V^{\rref}_T\right)\nonumber\\
& = \gamma\sum_{t = 1}^T \sum_{n = N_t(s_t, a_t)}^{N_{T}(s_t, a_t)} \frac{\lambda_t}{n} \left(V^{\rref}_{t}(s_{t+1}) - V^{\rref}_{T}(s_{t+1}) + \left(P_{t} - P_{s_t, a_t} \right)V^{\rref}_{T} \right) \nonumber \\
& \leq  \left(1+\frac{1-\gamma}{2}\right)\log T \sum_{t = 1}^T \big(V^{\rref}_{t}(s_{t+1}) - V^{\rref}_{T}(s_{t+1})\big) + \sum_{t = 1}^T \sum_{n = N_t(s_t, a_t)}^{N_{T}(s_t, a_t)} \frac{\lambda_t}{n}\big(P_{t} - P_{s_t, a_t} \big)V^{\star} \nonumber\\
&\qquad + \sum_{t = 1}^T \sum_{n = N_t(s_t, a_t)}^{N_{T}(s_t, a_t)} \frac{\lambda_t}{n}\big(P_{t} - P_{s_t, a_t} \big)\big( V^{\rref}_{T} -V^\star \big), \label{eq:R22-1}
\end{align}
in which the first inequality is due to the monotonicity $V^{\rref}_{t} \ge V^{\rref}_{t+1} \ge \cdots \geq V^{\rref}_{T}$, and the last inequality is due to the facts that $\sum_{n = N_t(s_t, a_t)}^{N_{T}(s_t, a_t)} \frac{1}{n} \leq \log T$ (harmonic number) and $\lambda_t \leq 1+\frac{1-\gamma}{2}$ (cf. \eqref{eq:lambda_t_bound}). To proceed, we shall control the three terms in \eqref{eq:R22-1} separately.

The first term in \eqref{eq:R22-1} can be controlled with Lemma \ref{lem:VR_properties} (cf. \eqref{eq:VR_lazy}) as follows: 
\begin{align}
    &\quad\ \sum_{t = 1}^T \big(V^{\rref}_{t}(s_{t+1}) - V^{\rref}_{T}(s_{t+1})\big) \notag\\
	&\lesssim \frac{\left(SA\right)^{3/4}T^{1/4}}{(1-\gamma)^{13/4}}\left(\log\frac{SAT}{\delta}\right)^{5/4} + \frac{SA}{(1-\gamma)^6}\log \frac{SAT}{\delta} + \sqrt{\frac{SA\log^2 T}{(1-\gamma)^5}\sum_{t=1}^T \big(V_{t-1}(s_{t}) - V^{\pi_{t}}(s_{t})\big)} \notag\\
    &\lesssim \sqrt{\frac{SAT}{1-\gamma}\log^{2}\frac{SAT}{\delta}} + \frac{SA}{(1-\gamma)^6}\log^{3/2} \frac{SAT}{\delta} + \sqrt{\frac{SA\log^2 T}{(1-\gamma)^5}\sum_{t=1}^T \big(V_{t-1}(s_{t}) - V^{\pi_{t}}(s_{t})\big)} \label{eq:R22-1-1}
\end{align}
with probability at least $1- {\delta}/{3}$. Above, the last inequality is by $2ab \le a^2 + b^2$.

For the second term in \eqref{eq:R22-1}, let us first construct the necessary notation and verify the conditions for Lemma \ref{lemma:freedman-application2}. We can set
\begin{align}
	W_i \coloneqq V^{\star},
	\qquad \text{and} \qquad
	u_i(s_i,a_i) \coloneqq \sum_{n = N_i(s_i, a_i)}^{N_T(s_i,a_i)} \frac{\lambda_{i}}{n}. \label{R22-Wu} 
\end{align}
Clearly, $W_i$ and $u_i(s_i,a_i)$ are deterministic at the end of the $(i-1)$-th step, given $N_T(s_i,a_i)$. Furthermore, let
\begin{align}
	\left| u_i(s_i,a_i) \right| 
	\leq \left(1+\frac{1-\gamma}{2}\right)\sum_{n = N_i(s_i, a_i)}^{N_T(s_i,a_i)} \frac{1}{n}\leq \left(1+\frac{1-\gamma}{2}\right)\log T \eqqcolon C_{\mathrm{u}} \label{R22-cu} 
\end{align}
due to the facts that $\sum_{n = N_t(s_t, a_t)}^{N_{T}(s_t, a_t)} \frac{1}{n} \leq \log T$ (harmonic number) and $\lambda_t \leq 1+\frac{1-\gamma}{2}$ (cf. \eqref{eq:lambda_t_bound}), and
\begin{align}
	  \norm{W_i}_\infty \leq \frac{1}{1-\gamma} \eqqcolon C_{\mathrm{w}}. \label{R22-cw} 
\end{align}
Hence, we can apply Lemma \ref{lemma:freedman-application2} with \eqref{R22-Wu}, \eqref{R22-cu} and \eqref{R22-cw} and conclude that with probability at least $1-{\delta}/{3}$,
\begin{align}
&\quad\ \left|\sum_{t = 1}^T \sum_{n = N_t(s_t, a_t)}^{N_T(s_t, a_t)} \frac{\lambda_t}{n}\left(P_{t} - P_{s_t, a_t} \right)V^{\star}\right| = \left|\sum_{t = 1}^T X_{t} \right| \nonumber\\
& \lesssim \sqrt{ C_{\mathrm{u}}^{2} SA \sum_{i=1}^{T}\mathbb{E}_{i-1}\left[\left|(P_{i}-P_{s_{i},a_{i}})W_{i}\right|^{2}\right] \log\frac{T}{\delta} }  + C_{\mathrm{u}} C_{\mathrm{w}} SA  \log\frac{T}{\delta}  \notag\\
& \overset{\mathrm{(i)}}{\asymp} \sqrt{\sum_{t=1}^{T} \Var_{s_t, a_t}(V^{\star}) \cdot SA\log^3\frac{T}{\delta} }  + \frac{SA}{1-\gamma} \log^2\frac{T}{\delta}  \notag\\
& \overset{\mathrm{(ii)}}{\lesssim} \sqrt{SA\left(\frac{T}{1-\gamma} + \sqrt{\frac{SAT\log\frac{SAT}{\delta}}{(1-\gamma)^7}} + \frac{SA\log^2\frac{SAT}{\delta}}{(1-\gamma)^{9/2}}\right)\log^3\frac{SAT}{\delta}} + \frac{SA}{1-\gamma}\log^2\frac{T}{\delta} \notag\\ 
&  \lesssim \sqrt{\frac{SAT}{1-\gamma}\log^{3}\frac{SAT}{\delta} + \sqrt{\frac{SAT}{(1-\gamma)^7}}\log^{7/2}\frac{SAT}{\delta}} + \frac{SA}{(1-\gamma)^{9/4}}\log^{5/2}\frac{SAT}{\delta} \notag\\
& \overset{\mathrm{(iii)}}{\lesssim} \sqrt{\frac{SAT}{1-\gamma}\log^{3}\frac{SAT}{\delta} + \frac{\log^{4}\frac{SAT}{\delta}}{(1-\gamma)^6}} + \frac{SA}{(1-\gamma)^{9/4}}\log^{5/2}\frac{SAT}{\delta} \notag\\
&\lesssim \sqrt{\frac{SAT}{1-\gamma}\log^3\frac{SAT}{\delta}} + \frac{SA}{(1-\gamma)^{3}}\log^{2}\frac{SAT}{\delta}, \label{eq:R22-2-1}
\end{align}
in which (i) comes from the definition of variance in \eqref{eq:var-def}; (ii) holds due to \eqref{eq:R1-1.2}; (iii) is valid since
\begin{equation*}
    \sqrt{\frac{SAT}{(1-\gamma)^7}}\log^{7/2}\frac{SAT}{\delta} = \sqrt{\frac{SAT}{1-\gamma}}\log^{3/2}\frac{SAT}{\delta}\cdot\frac{\log^{2}\frac{SAT}{\delta}}{(1-\gamma)^3} \lesssim \frac{SAT}{1-\gamma}\log^{3}\frac{SAT}{\delta} + \frac{\log^{4}\frac{SAT}{\delta}}{(1-\gamma)^6}
\end{equation*}
due to $2ab \le a^2 + b^2$ (by Cauchy-Schwarz).

Finally, for the third term in \eqref{eq:R22-1}, in order to get a tight bound in the presence of the dependency between $P_t$ and $V_{T}^{\mathrm{R}}$, we use the standard epsilon-net argument (see, e.g., \citet{taotopics}), whose details are deferred to Appendix \ref{appendix:covering}. The final bound on this term is
 \begin{align}
 & \left|\sum_{t = 1}^T \sum_{n = N_t(s_t, a_t)}^{N_{T}(s_t, a_t)} \frac{\lambda_t}{n}\big(P_{t} - P_{s_t, a_t} \big)\big( V^{\rref}_{T} -V^\star \big)\right| \lesssim \frac{SA}{(1-\gamma)^{3}} \log^{5/2}\frac{SAT}{\delta} + \sqrt{\frac{SAT}{1-\gamma}\log^{3}\frac{SAT}{\delta}}.
	 \label{eq:R22-3}
\end{align}

Bringing \eqref{eq:R22-1-1}, \eqref{eq:R22-2-1} and \eqref{eq:R22-3} into \eqref{eq:R22-1}, with a union bound argument, we can demonstrate
\begin{align}
	\mathcal{R}_{2,2} &\leq  C_{2,2} \Bigg\{ \sqrt{\frac{SAT}{1-\gamma}\log^{3}\frac{SAT}{\delta}} + \frac{SA}{(1-\gamma)^6}\log^{5/2} \frac{SAT}{\delta} + \sqrt{\frac{\log^2 T}{(1-\gamma)^5}\sum_{t=1}^T \big(V_{t-1}(s_{t}) - V^{\pi_{t}}(s_{t})\big)}\Bigg\}
	\label{eq:R22-final}
\end{align}
with probability at least $1-\delta$, where $C_{3,2}>0$ is some constant.

\paragraph{Step 3: combining the preceding bounds.} 

Taking \eqref{eq:R21-final} and \eqref{eq:R22-final} together, we immediately arrive at
\begin{align}
	\mathcal{R}_2 \leq \left| \mathcal{R}_{2,1} \right|  +  \mathcal{R}_{2,2} &\leq  C_{\mathrm{r},2} \left\{ \sqrt{\frac{SAT}{1-\gamma}\log^{3}\frac{SAT}{\delta}} + \frac{SA}{(1-\gamma)^7}\log^{5/2} \frac{SAT}{\delta} + \sqrt{\frac{SA\log^2 T}{(1-\gamma)^5}\sum_{t=1}^T \big(V_{t-1}(s_{t}) - V^{\pi_{t}}(s_{t})\big)}\right\}
\end{align}
with probability at least $1-2\delta$, where $C_{\mathrm{r},2}>0$ is some constant.  
This immediately concludes the proof.

\subsubsection{Proof of \eqref{eq:R22-3}: An Epsilon-Net Argument}\label{appendix:covering}

\paragraph{Step 1: concentration on a fixed vector.} 

Let us establish a concentration bound over a fixed vector, the choice of which we will later specify when we introduce our epsilon net. Consider a fixed vector $V^{\mathrm{d}} \in \mathbb{R}^{S}$ satisfying: 
\begin{align}
	V^{\star} \leq V^{\mathrm{d}} &\leq \frac{1}{1-\gamma}.
	\label{eq:Vd-assumption}
\end{align}
We intend to derive a bound on the following quantity as in \eqref{eq:R22-3}:
\begin{equation*}
    \sum_{t = 1}^T \sum_{n = N_t(s_t, a_t)}^{N_{T}(s_t, a_t)} \frac{\lambda_t}{n}\big(P_{t} - P_{s_t, a_t} \big)\big( V^{\mathrm{d}} -V^\star \big).
\end{equation*}
Let us first construct the necessary notation and verify the conditions for Lemma \ref{lemma:freedman-application2}. We can set
\begin{align}
	W_i \coloneqq V^{\mathrm{d}} - V^{\star}
	\qquad \text{and} \qquad
	u_i(s_i,a_i) \coloneqq \sum_{n = N_i(s_i, a_i)}^{N_T(s_i,a_i)} \frac{\lambda^{i}_h}{n}. \label{Vd-Wu} 
\end{align}
Clearly, $W_i$ and $u_i(s_i,a_i)$ are deterministic at the end of the $(i-1)$-th step, given $N_T(s_i,a_i)$. Furthermore, let
\begin{align}
	\left| u_i(s_i,a_i) \right| 
	\leq \left(1 + \frac{1-\gamma}{2}\right)\sum_{n = N_i(s_i,a_i)}^{N_T(s_i,a_i)} \frac{1}{n} &\leq \left(1 + \frac{1-\gamma}{2}\right)\log T \eqqcolon C_{\mathrm{u}} \label{Vd-cu} 
\end{align}
due to the facts that $\sum_{n = N_t(s_t, a_t)}^{N_{T}(s_t, a_t)} \frac{1}{n} \leq \log T$ (harmonic number) and $\lambda_t \leq 1+\frac{1-\gamma}{2}$ (cf. \eqref{eq:lambda_t_bound}), and
\begin{align}
	  \norm{W_i}_\infty \leq \frac{1}{1-\gamma} \eqqcolon C_{\mathrm{w}}. \label{Vd-cw} 
\end{align}
Hence, we can apply Lemma \ref{lemma:freedman-application2} with \eqref{Vd-Wu}, \eqref{Vd-cu} and \eqref{Vd-cw} and conclude that with probability at least $1-\delta_0$,
\begin{align}
 &\quad\ \left|\sum_{t = 1}^T \sum_{n = N_t(s_t, a_t)}^{N_{T}(s_t, a_t)} \frac{\lambda_t}{n}\left(P_{t} - P_{s_t, a_t} \right)\left( V^{\mathrm{d}} -V^\star \right)\right|=\left|\sum_{t=1}^{T}X_{t}\right|\nonumber\\
 & \lesssim\sqrt{C_{\mathrm{u}}^{2}C_{\mathrm{w}}\left\{\sum_{i=1}^{T}\mathbb{E}_{i-1}\left[P_{i}W_{i}\right]\right\}\log\frac{T^{SA}}{\delta_{0}}}+C_{\mathrm{u}}C_{\mathrm{w}}\log\frac{T^{SA}}{\delta_{0}}\nonumber\\
 & \lesssim \sqrt{\frac{\log^2 T}{1-\gamma}\left\{\sum_{t=1}^{T}P_{s_{t},a_{t}}\left(V^{\mathrm{d}}-V^{\star}\right)\right\}\log\frac{T^{SA}}{\delta_{0}}} + \frac{\log T}{1-\gamma} \log\frac{T^{SA}}{\delta_{0}}.\label{eq:Vd-concentration} 
\end{align}
The choice of $\delta_0$ will be specified momentarily. 

\paragraph{Step 2: constructing an epsilon net.} 

In this step, we construct a properly-sized epsilon net that can cover the space of $V^\rref_T$. Specifically, we will construct an epsilon net $\cN_{\alpha}$ (the parameter $\alpha$ will be specified shortly) such that:
\begin{itemize}
	\item For any $V \in [0,\frac{1}{1-\gamma}]^S$, one can find a point $V^{\mathrm{net}} \in \cN_{\alpha}$ such that $$0\leq V (s) - V^{\mathrm{net}} (s) \leq \alpha \qquad \text{for all }s\in \cS;$$
 
    \item The cardinality of $\cN_{\alpha}$ satisfies the following bound:
    \begin{equation}
    	\left| \mathcal{N}_{\alpha} \right| \leq \left( \frac{1}{(1-\gamma)\alpha} \right)^S.  
    	\label{eq:N-cardinality}
    \end{equation}
\end{itemize}

Clearly, such epsilon net exists by letting it be all intersection points in a grid on $[0,\frac{1}{1-\gamma}]^S$ with interval length $\alpha$. Now, we extend the concentration bound \eqref{eq:Vd-concentration} in Step 1 to all members of this epsilon net $\cN_\alpha$. Set $\delta_0 = \frac{\delta}{6} / \left( \frac{1}{(1-\gamma)\alpha} \right)^S$. Taking a union bound over all vectors in $\cN_\alpha$, \eqref{eq:Vd-concentration} gives that with probability at least $1- \delta_0 \left( \frac{1}{(1-\gamma)\alpha} \right)^S = 1-\delta/6$, 
\begin{align}
 &\quad\ \left|\sum_{t = 1}^T \sum_{n = N_t(s_t, a_t)}^{N_{T}(s_t, a_t)} \frac{\lambda_t}{n}\left(P_{t} - P_{s_t, a_t} \right)\left(V^{\mathrm{net}}-V^{\star}\right)\right|\nonumber\\
 & \lesssim \sqrt{\frac{\log^2 T}{1-\gamma}\sum_{t=1}^{T}P_{s_{t},a_{t}}\left(V^{\mathrm{net}}-V^{\star}\right)\log\frac{T^{SA}}{\delta_{0}}}+ \frac{\log T}{1-\gamma} \log\frac{T^{SA}}{\delta_{0}} \notag\\
 & \le \sqrt{\frac{SA\log^2 T}{1-\gamma}\sum_{t=1}^{T}P_{s_{t},a_{t}}\left(V^{\mathrm{net}}-V^{\star}\right)\log\frac{T}{(1-\gamma)^S\alpha^S\delta}}+ \frac{SA\log T}{1-\gamma} \log\frac{T}{(1-\gamma)^S\alpha^S\delta}
	\label{eq:N-concentration}
\end{align}
simultaneously for all elements $V^{\mathrm{net}} \in \mathcal{N}_{\alpha} $.

\paragraph{Step 3: obtaining a uniform bound.}
Now, we are ready to establish a uniform bound over the entire space of $V^\rref_T$, which is just all optimistic vectors in $[0,\frac{1}{1-\gamma}]^S$. Consider an arbitrary vector
$V^{\mathrm{u}}\in \RR^S$ satisfying $V^{\star} \leq V^{\mathrm{u}} \leq \frac{1}{1-\gamma}$. By the construction of $\cN_\alpha$, one can find a point $V^{\mathrm{net}} \in \mathcal{N}_{\alpha}$
such that
\begin{align}
	0\leq  V^{\mathrm{u}}(s) - V^{\mathrm{net}}(s) \leq \alpha \qquad \text{for all }s\in \cS. 
	\label{eq:Vd-Vnet-proximity}
\end{align}

Setting $\alpha = \frac{1}{(1-\gamma)T^{1/S}}$, we can deduce that
\begin{align}
 &\quad\ \left|\sum_{t = 1}^T \sum_{n = N_t(s_t, a_t)}^{N_{T}(s_t, a_t)} \frac{\lambda_t}{n}\left(P_{t} - P_{s_t, a_t} \right)\left(V^{\mathrm{u}}-V^{\star}\right)\right|\nonumber\\
 &\leq \left|\sum_{t = 1}^T \sum_{n = N_t(s_t, a_t)}^{N_{T}(s_t, a_t)} \frac{\lambda_t}{n}\left(P_{t} - P_{s_t, a_t} \right)\left(V^{\mathrm{net}}-V^{\star}\right)\right|+\left|\sum_{t = 1}^T \sum_{n = N_t(s_t, a_t)}^{N_{T}(s_t, a_t)} \frac{\lambda_t}{n}\left(P_{t} - P_{s_t, a_t} \right)\left(V^{\mathrm{u}} - V^{\mathrm{net}}\right)\right|\nonumber\\
 &\overset{(\mathrm{i})}{\lesssim} \left|\sum_{t = 1}^T \sum_{n = N_t(s_t, a_t)}^{N_{T}(s_t, a_t)} \frac{\lambda_t}{n}\left(P_{t} - P_{s_t, a_t} \right)\left(V^{\mathrm{net}}-V^{\star}\right)\right|+T\alpha\log T\nonumber\\
 & \lesssim \sqrt{\frac{SA\log^2 T}{1-\gamma}\sum_{t=1}^{T}P_{s_{i},a_{i}}\left(V^{\mathrm{net}}-V^{\star}\right)\log\frac{T}{(1-\gamma)^S\alpha^S\delta}}+ \frac{SA\log T}{1-\gamma} \log\frac{T}{(1-\gamma)^S\alpha^S\delta}+T\alpha\log T \notag\\
 & \overset{(\mathrm{ii})}{\asymp} \sqrt{\frac{SA\log^2 T}{1-\gamma}\sum_{t=1}^{T}P_{s_{i},a_{i}}\left(V^{\mathrm{net}}-V^{\star}\right)\log\frac{T}{\delta}}+ \frac{SA\log T}{1-\gamma} \log\frac{T}{\delta}, 
	\label{eq:uniform-bound-all-Vd}
\end{align}
in which (i) is due to 
\begin{align}
 &\quad\ \left|\sum_{t = 1}^T \sum_{n = N_t(s_t, a_t)}^{N_{T}(s_t, a_t)} \frac{\lambda_t}{n}\left(P_{t} - P_{s_t, a_t} \right)\left(V^{\mathrm{u}} - V^{\mathrm{net}}\right)\right|\nonumber\\
 &\leq \left|\sum_{t = 1}^T \sum_{n = N_t(s_t, a_t)}^{N_{T}(s_t, a_t)} \frac{\lambda_t}{n}\left(\norm{P_{t}}_1 + \norm{P_{s_t, a_t}}_1 \right)\norm{V^{\mathrm{u}} - V^{\mathrm{net}}}_\infty\right|\nonumber\\
 & \leq 2\left(1 + \frac{1-\gamma}{2}\right)T \alpha\log T 
\end{align}
because $\sum_{n = N_t(s_t, a_t)}^{N_{T}(s_t, a_t)} \frac{1}{n} \leq \log T$ (harmonic number) and $\lambda_t \leq 1+\frac{1-\gamma}{2}$ (cf. \eqref{eq:lambda_t_bound}). 

(ii) holds due to the condition \eqref{eq:Vd-Vnet-proximity} and our choice of $\alpha$.  

In summary with probability at least $1-\delta/6$, \eqref{eq:uniform-bound-all-Vd} holds simultaneously for all $V^{\mathrm{u}} \in \mathbb{R}^{S}$ obeying $V^{\star} \leq V^{\mathrm{u}} \leq \frac{1}{1-\gamma}$. 

\paragraph{Step 4: controlling the original quantity of interest.} 
 With the uniform bound over the set of all optimistic vectors in $\mathbb{R}^{S}$ from Step 3, we can finally control the original quantity of interest
 \begin{align}
 \sum_{t = 1}^T \sum_{n = N_t(s_t, a_t)}^{N_{T}(s_t, a_t)} \frac{\lambda_t}{n}\big(P_{t} - P_{s_t, a_t} \big)\big(V^{\rref}_{T} -V^\star \big).
 \end{align}
Substitution into \eqref{eq:uniform-bound-all-Vd} with $V^{\rref}_{T}$ as $V^{\mathrm{u}}$ yields
 \begin{align}
 &\quad\ \left|\sum_{t = 1}^T \sum_{n = N_t(s_t, a_t)}^{N_{T}(s_t, a_t)} \frac{\lambda_t}{n}\big(P_{t} - P_{s_t, a_t} \big)\big(V^{\rref}_{T} -V^\star \big)\right|\nonumber\\
 &\lesssim \sqrt{\frac{SA\log^2 T}{1-\gamma}\sum_{t=1}^{T}P_{s_{i},a_{i}}\left(V^{\rref}_{T}-V^{\star}\right)\log\frac{T}{\delta}}+ \frac{SA}{1-\gamma} \log^2\frac{T}{\delta} \notag\\
 &\overset{(\mathrm{i})}{\lesssim} \sqrt{\frac{SA\log^2 T}{1-\gamma}\left\{ T + \frac{SA\log^2\frac{SAT}{\delta}}{(1-\gamma)^{7/2}} + \sqrt{\frac{SAT\log\frac{SAT}{\delta}}{(1-\gamma)^5}}\right\}\log\frac{T}{\delta}}+ \frac{SA}{1-\gamma} \log^2\frac{T}{\delta} \notag\\
 &\lesssim \sqrt{\frac{SA\log^2 T}{1-\gamma}\left\{ \sqrt{\frac{SAT}{(1-\gamma)^5}\log\frac{SAT}{\delta}} + T\right\}\log\frac{T}{\delta}}+ \frac{SA}{(1-\gamma)^{9/4}} \log^{5/2}\frac{T}{\delta} \notag\\
 &\overset{(\mathrm{ii})}{\lesssim} \sqrt{\frac{SA}{1-\gamma}\left\{ \frac{SA}{(1-\gamma)^5}\log\frac{SAT}{\delta}+T\right\} \log^{3}\frac{SAT}{\delta}} + \frac{SA}{(1-\gamma)^{9/4}} \log^{5/2}\frac{T}{\delta} \notag\\
 &\lesssim \frac{SA}{(1-\gamma)^{3}} \log^{5/2}\frac{SAT}{\delta} + \sqrt{\frac{SAT}{1-\gamma}\log^{3}\frac{SAT}{\delta}},
	 \label{eq:covering-final}
\end{align}
in which (i) is obtained because with probability exceeding $1-\delta/6$, 
\begin{align}
  \sum_{t = 1}^T P_{s_t, a_t}\big(V^{\rref}_{T} -V^\star \big)
	&\le\sum_{t = 1}^T P_{s_t, a_t}\big(V^{\rref}_{t} -V^\star \big) \le T + \frac{SA\log^2\frac{SAT}{\delta}}{(1-\gamma)^{7/2}} + \sqrt{\frac{SAT\log\frac{SAT}{\delta}}{(1-\gamma)^5}},
\end{align}
where the first inequality holds because $V_{t}$ is non-increasing in $t$ and $V^{\rref}_t$ can only be updated from $V_t$ (thus $V^{\rref}_{t}$ is monotonically non-increasing), and the second inequality follows from \eqref{eq:R21-3}, which we are allowed to use because Lemma \ref{lem:Q_t-lower-bound} suggests 
 \begin{align}
 	V^{\star}(s) \leq V^{\rref}_{T}(s) &\leq \frac{1}{1-\gamma} \quad \text{for any } s\in\cS.
 \end{align}

(ii) holds since 
\begin{equation*}
    \sqrt{\frac{SAT}{(1-\gamma)^5}\log\frac{SAT}{\delta}} = \sqrt{\frac{SA}{(1-\gamma)^5}\log\frac{SAT}{\delta}}\sqrt{T}\lesssim \frac{SA}{(1-\gamma)^5}\log\frac{SAT}{\delta}+T.
\end{equation*}

\end{document}